\numberwithin{equation}{section}
\DeclarePairedDelimiter{\dotp}{\langle}{\rangle}
\DeclareMathOperator*{\argmin}{arg\,min}
\newcommand{\C}{{\mathbb C}}
\newcommand{\E}{{\mathbb E}}
\newcommand{\Prob}{{\mathbb P}}
\newcommand{\R}{{\mathbb R}}
\newcommand{\Z}{{\mathbb Z}}
\newcommand{\N}{{\mathbb N}}
\newcommand{\cB}{\mathcal{B}}
\newcommand{\cF}{\mathcal{F}}
\newcommand{\cN}{\mathcal{N}}
\newcommand{\cP}{\mathcal{P}}
\newcommand{\cX}{\mathcal{X}}
\newcommand{\dd}{\textup{d}}
\newcommand{\Padj}{P^\ast}
\newcommand{\indic}{\operatorname{\mathbbm{1}}}%
\newcommand{\sampleOne}{{[1]}}
\newcommand{\sampleTwo}{{[2]}}
\newcommand{\law}{\mathcal{L}}
\newcommand{\bigO}{\mathcal{O}}
\newcommand{\indep}{\perp \!\!\! \perp}
\newcommand{\var}{\operatorname{Var}}
\newcommand{\TV}{\operatorname{TV}}
\newcommand{\tr}{\operatorname{Tr}}
\newcommand{\ltwopi}{L^2(\pi)}
\newcommand{\ball}[1]{B(#1)}
\newcommand{\proj}{\Pi}
\newcommand{\norm}[1]{\|#1\|}
\newtheorem{thm}{Theorem}[section]
\newtheorem{definition}[thm]{Definition}
\newtheorem{proposition}[thm]{Proposition}
\newtheorem{lem}[thm]{Lemma}
\newtheorem{cor}[thm]{Corollary}
\newtheorem{assumption}{Assumption}
\newtheorem*{question*}{Question}
\newtheorem{claim}{Claim}
\newtheorem*{claim*}{Claim}
\global\long\def\Upsilon{\upsilon}%
\global\long\def\indic{\operatorname{\mathbbm{1}}}%
\begin{document}
\title{The Collusion of Memory and Nonlinearity in Stochastic Approximation With Constant Stepsize}

\author{Dongyan (Lucy) Huo,\texorpdfstring{$^\mathsection$}{} Yixuan Zhang,\texorpdfstring{$^\dagger$}{} Yudong Chen,\texorpdfstring{$^\ddagger$}{} Qiaomin Xie,\texorpdfstring{$^\dagger$}{}%
    \texorpdfstring{\footnote{Emails:  
    \texttt{dh622@cornell.edu}, \texttt{yzhang2554@wisc.edu}, \texttt{yudong.chen@wisc.edu}, \texttt{qiaomin.xie@wisc.edu}}\\~\\
    \normalsize $^\mathsection$School of Operations Research and Information Engineering, Cornell University\\
	 \normalsize $^\dagger$Department of Industrial and Systems Engineering, University of Wisconsin-Madison\\
	\normalsize $^\ddagger$Department of Computer Sciences, University of Wisconsin-Madison}{}
}

\date{}

\maketitle

\begin{abstract}
In this work, we investigate stochastic approximation (SA) with Markovian data and nonlinear updates under constant stepsize $\alpha>0$. 
Existing work has primarily focused on either i.i.d.\ data or linear update rules. We take a new perspective and carefully examine the simultaneous presence of Markovian dependency of data and nonlinear update rules, delineating how the interplay between these two structures leads to complications that are not captured by prior techniques.
By leveraging the smoothness and recurrence properties of the SA updates, we develop a fine-grained analysis of the correlation between the SA iterates $\theta_k$ and Markovian data $x_k$. This enables us to overcome the obstacles in existing analysis and establish for the first time the weak convergence of the joint process $(x_k, \theta_k)_{k\geq0}$.
Furthermore, we present a precise characterization of the asymptotic bias of the SA iterates, given by $\E[\theta_\infty]-\theta^\ast=\alpha(b_\textup{m}+b_\textup{n}+b_\textup{c})+\bigO(\alpha^{3/2})$. Here, $b_\textup{m}$ is associated with the Markovian noise, $b_\textup{n}$ is tied to the nonlinearity, and notably, $b_\textup{c}$ represents a multiplicative interaction between the Markovian noise and nonlinearity, which is absent in previous works.
As a by-product of our analysis, we derive finite-time bounds on higher moment $\E[\|\theta_k-\theta^\ast\|^{2p}]$ and present non-asymptotic geometric convergence rates for the iterates, along with a Central Limit Theorem. 
\end{abstract}

\section{Introduction}

Stochastic Approximation (SA) is an iterative scheme for solving fixed-point equations using noisy observations. Its application spans various domains, including stochastic control~\cite{kushner2003-yin-sa-book, borkar08-SA-book}, reinforcement learning (RL)~\cite{Sutton18-RL-book,Bertsekas19-RL-book} and stochastic optimization~\cite{lan2020first}.
A typical SA algorithm takes the form $ \theta_{k+1}=\theta_k+\alpha g(\theta_k,x_k),$
where $(x_k)_{k\geq0}$ represents the underlying noisy data sequence and $\alpha>0$ is the constant stepsize. The goal of SA is to approximate the target solution $\theta^\ast$ that solves $\E_{x\sim\pi}[g(\theta^\ast,x)]=0$, with $\pi$ being the stationary distribution of the process $(x_k)_{k\geq0}$.

SA subsumes many important algorithms. A prime example is stochastic gradient descent (SGD) for minimizing a function $J(\theta)$ given a noisy estimate $g(\theta,x)$ of its gradient. Linear SA schemes include SGD for quadratic objective functions,
as well as various RL algorithms such as linear TD-Learning (in which $g$ is not the gradient of any function and standard SGD results do not apply).

Of particular interest to us are SA updates given by a \emph{nonlinear} function $g(\theta,x)$ of $\theta$. One motivating example is learning a Generalized Linear Model (GLM) $y\approx u(z^\top \theta)$ with a nonlinear mean function $u:\R \rightarrow \R$. A power approach, developed in \cite{wang2023robustly,kakade2011GLM,kalai2009isotron,diakonikolas2020approximation}, considers an appropriate surrogate loss function, for which the corresponding SGD update takes the form $\theta_{k+1} = \theta_k + \alpha( \sigma(w_k^\top \theta_k) - y_k ) w_k$, where $x_k=(w_k,y_k)$ is the observed covariate-response pair. Common choices of $\sigma$ include the identity map for linear regression, the Sigmoid function for logistic regression, as well as Rectified Linear Unit (ReLU) and its various smoothed versions (e.g., ELU and SoftPlus) for ReLU regression~\cite{clevert2015ELU,diakonikolas2020approximation,softplus-paper,biswas-smooth-relu,kakade2011GLM}.

Furthermore, we are interested in the setting where the data sequence $(x_k)_{k\geq0}$ forms a \emph{Markov chain}, going beyond the common i.i.d.\ data setting. The Markovian model captures a wide range of SA problems in machine learning where stochastic data exhibit serial dependence \cite{kim2022VI_markov,beznosikov2023VI_markov,huo23-td,nagaraj2020least}.

Classical work on SA focuses on diminishing stepsizes~\cite{Robbins51-Monro-SA, borkar2000-ode-sa}. Constant stepsize schemes have recently gained popularity due to easy parameter tuning, fast initial convergence, and robust empirical performance. Non-asymptotic error bounds have been obtained for constant stepsize SA \cite{srikant-ying19-finite-LSA, chen-nonlinear-sa}. Recent work further provides fine-grained characterization of the distributional and steady-state behaviors of the iterates \cite{Dieuleveut20-bach-SGD,Yu21-stan-SGD, huo23-td,  meyn-sa-convergence,zhang2024constant}. 
Two recurring themes in these results are weak convergence of the distribution of $\theta_t$ and the presence of an asymptotic bias $\E[\theta_\infty]- \theta^\ast \propto \alpha$, both having important implications for iterate averaging, bias reduction and statistical inference~\cite{huo2023effectiveness}.

Note that most previous work studied the nonlinear update setting and  Markovian data setting \emph{separately}---e.g., in~\cite{Dieuleveut20-bach-SGD,Yu21-stan-SGD} for nonlinear SGD with i.i.d.\ data, and in~\cite{huo23-td,huo2023effectiveness} for Markovian linear SA. The linearity or i.i.d.\ assumptions imposed in these prior works are restrictive, especially in the face of modern machine/reinforcement learning paradigms where nonlinear models are the norm and dependent data is common. 
Moreover, the absence of prior work dealing with Markovian nonlinear SA is not merely an overlook---as argued below, this setting is significantly more challenging.

\textbf{Our Contributions:~~}
In this work, we study constant-stepsize SA with \emph{both} Markovian data and nonlinear update. 
In Section~\ref{sec:challenges}, we elucidate the new challenges that arise from the simultaneous presence of these two structures, which break key steps in previous analysis of the i.i.d.\ or linear setting. Due to the interaction between these two structures, establishing weak convergence is far from obvious, and the asymptotic bias exhibits new behaviors. 
Consequently, analyzing the nonlinear Markovian setting requires more than simply combining previous techniques.

To address the above confounding complication, we exploit the smoothness and recurrence structures of the SA update, thereby developing a fine-grained analysis of the correlation of the parameter $\theta_k$ and data $x_k$. This allows us to establish for the first time the weak convergence of the joint process $(x_k,\theta_k)_{k\geq0}$ to a unique invariant distribution, represented by the limiting random variable $(x_\infty, \theta_\infty)$. 
As a by-product of our analysis, we derive finite-time bounds on $\E[\|\theta_{k}-\theta^\ast\|^{2p}]$, the $2p$-th moments of the errors, generalizing the results in~\cite{Dieuleveut20-bach-SGD, chen-nonlinear-sa,srikant-ying19-finite-LSA} to higher moments and to the nonlinear Markovian setting.
In addition, we prove a Central Limit Theorem (CLT) for averaged iterates.

We further show that nonlinearity and Markovian structure contribute in a \emph{multiplicative} way to the asymptotic bias of the SA iterates. In particular, we obtain the following bias characterization:
$\E[\theta_\infty^{(\alpha)}]-\theta^\ast=\alpha(b_\textup{m}+b_\textup{n}+b_\textup{c})+\bigO((\alpha\tau_\alpha)^{3/2}).$
We provide explicit expressions for the vectors $b_\textup{m},b_\textup{n},b_\textup{c}$, which are independent of $\alpha$. 
Here, $b_\textup{m}$ represents the bias component due to Markovian data (quantified by the mixing property of $x_k$), and $b_\textup{n}$ the bias due to the nonlinearity of $g$ (quantified by the second derivative $g''$). Importantly, we identify the additional compound term $b_\textup{c}$, which is absent in both nonlinear SA with i.i.d.\ data and linear SA with Markovian data.    
We explore the algorithmic implications of the above results on Polyak-Ruppert (PR) averaging \cite{Ruppert88-Avg, polyak90_average, jain2018parallelizing} and Richardson-Romberg (RR) extrapolation \cite{hildebrand1987introduction}. We show that PR averaging reduces the variance but not the bias, whereas RR extrapolation eliminates the leading bias term $\alpha(b_\textup{m}+b_\textup{n}+b_\textup{c})$, reducing the asymptotic bias to a higher order of $\alpha$.

\paragraph*{Related work} 
Postponing a detailed literature review to Section~\ref{sec:lit-review}, 
here we remark on the very recent work~\cite{meyn-sa-convergence}, which also studies Markovian nonlinear SA using coupling. They prove weak convergence of $(x_t,\theta_t)$ only in the linear setting, \emph{not} for nonlinear SA. 
In the latter setting, their weak convergence analysis is thwarted by challenges similar to what we elucidate in Section~\ref{sec:challenges}, due to the interplay between nonlinearity and Markovian data leading to ``double recursions''.
The coupling techniques in~\cite{meyn-sa-convergence} and ours are also different. We couple two processes by sharing data $x_t=x_t'$, and construct $\theta,\theta'$ such that $(x_t,\theta_t)\overset{\text{d}}{=}(x'_{t+1},\theta'_{t+1})$. In~\cite{meyn-sa-convergence} they initialize two processes with different $x_0$ and $x'_0$, and analyzes the stopping time $\tau$ when $\theta_\tau=\theta'_\tau$.
Moreover, the work~\cite{meyn-sa-convergence}  and only presents an \emph{upper bound} for asymptotic bias, while ours presents a fine-grained characterization in Theorem~\ref{thm:bias} \emph{necessary} for justifying RR-extrapolation.

\paragraph*{Notations}
\label{sec:notation}

The Euclidean norm is denoted by $\|\cdot\|$. 
We use $\ball{\beta}:=\{\theta \in \R^d: \|\theta\|\leq \beta\}$ to denote the ball with radius $\beta$.
$\law(z)$ denotes the distribution of a random vector $z$ and $\var(z)$ its covariance matrix. 
Let $\cP_2(\R^d)$ be the space of square-integrable distributions on $\R^d$ and $\cP_2(\cX\times\R^d)$ be the space of distributions $\bar\nu$ on $\cX\times\R^d$ with square-integrable second marginal on $\R^d$. The Wasserstein-2 between two probability measures $\mu$ and $\nu$ in $\cP_2(\R^d)$ is defined as $W_{2}(\mu,\nu)  
  =\inf_{\psi\in\Pi(\mu,\nu)}\big\{\left(\E[\|\theta-\theta'\|^{2}]\right)^\frac{1}{2}:\law(\theta)=\mu,\law(\theta')=\nu\big\},$
where $\Pi(\mu,\nu)$ denotes the set of all couplings between $\mu$
and $\nu$. 
Extending to the space $\cX\times\R^d$,
we define the metric $\Bar d\big((x,\theta),(x',\theta')\big):=\sqrt{\indic\{x\neq x'\}+\|\theta-\theta'\|^{2}},$ and denote by $\Bar W_2$ the extended Wasserstein-2 distance w.r.t.\ $\Bar d.$

The lowercase letter $c$ and its derivatives $c', c_0$, etc.\ denote universal numerical constants, whose value may change from line to line. 
We use $s\equiv s(\theta_0,\theta^*,\mu,L,R)$ and its derivatives to denote quantities (scalars, vectors, or matrices) that are independent of the stepsize $\alpha$ and the iteration index $k$, but may depend on the initialization $\theta_0,$ SA primitives $\theta^*$, $\mu$ and $L$, and the coefficient $R$ for the geometric mixing rate of $(x_k)$ in Assumption~\ref{assumption:uniform-ergodic}.
As we are primarily interested in dependence on $\alpha$ and  $k$, we adopt the following big-O notation: $\|f\|=\bigO(h(\alpha,k))$ if it holds that $\|f\|\leq s\cdot \|h(\alpha,k)\|$.

\section{Problem Setup and Preliminaries}
\label{sec:preliminaries}

Let $(x_k)_{k\geq0}$ be a Markov chain on a general state space $\cX$. Consider the following projected stochastic approximation (SA) iteration:
\begin{equation}
\label{eq:sa-iterate}   \theta^{(\alpha)}_{k+1}=\proj_{\ball{\beta}}\Big[\theta^{(\alpha)}_k+\alpha\big(g(\theta^{(\alpha)}_k,x_k)+\xi_{k+1}(\theta^{(\alpha)}_k)\big)\Big],
\end{equation}
where $g:\R^d\times \cX\to\R^d$ is a deterministic function, $\{\xi_{k}\}_{k\geq 1}$ are i.i.d.\ zero-mean random fields, $\alpha>0$ is a constant stepsize, and 
$\proj_{\ball{\beta}}(\theta):= \argmin_{z: \norm{z} \leq \beta} \norm{z-\theta}$ is the projection operator. 
We shall omit the superscript $^{(\alpha)}$ in $\theta_k$ when the dependence on $\alpha$ is clear from the context. 
In this work, we also consider the projection-free variant of the iteration \eqref{eq:sa-iterate} with $\beta=\infty$. 

We denote by $\pi$ the stationary distribution of the Markov chain $(x_k)_{k\geq1}$ and define the shorthand $\Bar{g}(\theta):=\E_\pi[g(\theta,x)],$
where $\E_\pi[\cdot]$ denotes the expectation with respect to $x\sim\pi$.
The algorithm~\eqref{eq:sa-iterate} computes an estimation of the target vector $\theta^\ast$ that solves the steady-state equation $\E_\pi[g(\theta,x)]=0.$
Our general goal is to characterize the relationship between the iterate $\theta_k$ and the target solution $\theta^\ast$.

In the following, we state the assumptions needed for our main results.
\begin{assumption}[Uniform Ergodicity]
\label{assumption:uniform-ergodic} 
$(x_{k})_{k\geq0}$ is a uniformly ergodic Markov chain on a Borel state space $(\cX,\cB(\cX))$ with transition kernel $P$ and a unique stationary distribution $\pi$. That is, there exist constants $r\in[0,1)$ and $R>0$ such that 
$
  \|P^k(x,\cdot)-\pi\|_{\TV}\leq Rr^k, \forall x\in \cX.
$
\end{assumption} 
All irreducible, aperiodic, and finite state space Markov chains are uniformly ergodic. The uniform ergodicity assumption is common in prior work on SA with Markovian noise~\cite{Bhandari21-linear-td, dong22-sgd,durmus22-LSA,huo23-td,li2023online-pku}.
Relaxing this uniform ergodicity assumption, in the style of~\cite{Mou21-optimal-linearSA,srikant-ying19-finite-LSA,meyn-sa-convergence} is possible but orthogonal to our focus, and thus we do not pursue this direction in this work.

We allow the chain $(x_k)_{k\geq0}$ to be arbitrarily initialized rather than from the stationary distribution $\pi$. An important quantity is the mixing time of the Markov chain, defined as follows. 
\begin{definition}\label{def: mixing time}
    For $\epsilon\in(0,1)$, the $\epsilon$-mixing time of $(x_k)_{k\geq0}$, denoted by $\tau_\epsilon\geq1$, is defined as
    $
    \tau_\epsilon := \min\big\{k \geq 1: \sup_{x \in X} \|P^k(x,\cdot) -  \pi\|_{\TV} \leq \epsilon \big\}.
    $
\end{definition}
Under Assumption~\ref{assumption:uniform-ergodic}, the $\epsilon$-mixing time satisfies $\tau_\epsilon\leq K\log\frac{1}{\epsilon}$ for all $\epsilon\in(0,1)$, where $K\geq1$ is independent of $\epsilon$. In the sequel, unless otherwise specified, we always choose $\epsilon=\alpha$ and let $\tau\equiv\tau_\alpha$.

The following assumptions on the nonlinear function $g$ in~\eqref{eq:sa-iterate} is standard in the literature~\cite{Dieuleveut20-bach-SGD, chen-nonlinear-sa, meyn-sa-convergence, li2023online-pku,hsieh2019convergence}. A wide family of $g$ functions satisfies these assumptions, with the $L_2$-regularized logistic regression of GLM being a standard example.

\begin{assumption}[Differentiability and Linear Growth]
\label{assumption:linear-growth}
    For each $x\in\cX$, the function $g(\theta,x)$ is three times continuously differentiable in $\theta$ with uniformly bounded first to third derivatives, i.e., $\sup_{\theta\in\R^d}\|g^{(i)}(\theta,x)\|<+\infty$ for $i=1,2,3$, $x\in\cX$. Moreover, there exists a constant $L_1>0$ such that (1)$\|g^{(i)}(\theta, x) - g^{(i)}(\theta',x)\|\leq L_1,$ for all $\theta,\theta'\in\R^d$, $i=0,1,2$ and $x\in\cX$, and (2) $\|g(0,x)\|\leq L_1$ for all $x\in\cX$.
\end{assumption}

\begin{assumption}[Strong Monotonicity]
\label{assumption:strong-convexity}
    There exists $\mu>0$ such that $\dotp{\theta-\theta',\Bar{g}(\theta)-\Bar{g}(\theta')}\leq-\mu\|\theta-\theta'\|^2,$ $\forall\theta,\theta'\in\R^d$. Consequently, the target equation $\Bar{g}(\theta)=0$ has a unique solution $\theta^\ast.$
\end{assumption}

Assumption~\ref{assumption:linear-growth} implies that $g(\theta,x)$ is $L_1$-Lipschitz w.r.t.\ $\theta$ uniformly in $x$. When $g$ is a linear function, i.e., $g(\theta,x)=A(x)\theta+b(x)$, this assumption is satisfied with $\sup_{x\in\cX}\|A(x)\|<\infty$ and $\sup_{x\in\cX}\|b(x)\|<\infty$, which are commonly assumed for linear SA.
The above assumption immediately implies that the growth rate of $\|g\|$ and $\|\Bar{g}\|$ will be at most linear in $\theta$, i.e.,
$\|g(\theta,x)\|\leq L_1(\|\theta-\theta^\ast\|+1)$ and $ \|\bar{g}(\theta)\|\leq L_1(\|\theta-\theta^\ast\|+1).$
When $g$ is a gradient field, Assumption~\ref{assumption:strong-convexity} is equivalent to strong convexity. 
For notational simplicity, we assume the strong monotonicity parameter satisfies $\mu \leq 1-r$, where $r$ is the convergence factor in Assumption \ref{assumption:uniform-ergodic}. For general $\mu$, our results remain valid with $\mu$ replaced by $\min\{\mu, 1-r\}$.

We next consider the noise. Denote by $\cF_k$ the filtration generated by $\{x_t,\theta_t,\xi_{t+1}\}_{t=0}^{k-1}\cup\{x_k,\theta_k\}$.  
\begin{assumption}[Noise Sequence]
\label{assumption:noise}
Let $p\in\Z_+$ be given. The noise sequence $(\xi_k)_{k\geq1}$ is a collection of i.i.d.\ random fields satisfying the following conditions with $L_{2,p}>0$:
\begin{equation}
\label{eq:noise-linear}
    \E[\xi_{k+1}(\theta)|\cF_k]=0\quad\text{and}\quad
    \E^{1/(2p)}[\|\xi_1(\theta)\|^{2p}]\leq L_{2,p}(\|\theta-\theta^\ast\|+1),\quad\forall\theta\in\R^d.
\end{equation}
Define $C(\theta)=\E[\xi_1(\theta)^{\otimes2}]$ and assume that $C(\theta)$ is at least twice differentiable. There also exist $M_\epsilon, k_\epsilon\geq0$ such that for $\theta\in\R^d$, we have $\max_{i=1,2}\big\|C^{(i)}(\theta)\big\|\leq M_\epsilon\big\{1+\|\theta-\theta^\ast\|^{k_\epsilon}\big\}.$
\end{assumption}
In the sequel, we set $L:=L_1+L_2$, and without loss of generality, we assume $L\geq1$.

When $p=1$, the second inequality in~\eqref{eq:noise-linear} only requires linear growth \emph{in expectation}, which relaxes the almost sure linear growth condition in~\cite{chen-nonlinear-sa}. The constraint on the covariance matrix $C(\theta)$ is lenient and satisfied in most regular enough settings, as shown in~\cite{Dieuleveut20-bach-SGD}.

\section{Analytical Challenges and Techniques}
\label{sec:challenges}

In this section, we elaborate on the challenges and techniques used to prove the above results.

Previous work has established weak convergence of $(x_k,\theta_k)$ separately for nonlinear SA with i.i.d. data, and for Markovian linear SA. The high-level approaches used in two representative prior works can be summarized as follows. The work \cite{Dieuleveut20-bach-SGD} on nonlinear SGD leverages \emph{local linearization} of $g$ through Taylor expansion. The work \cite{huo23-td} on Markovian linear SA exploits the mixing property of the Markovian noise to regain approximate independence, particularly between $x_k$ and $\theta_{k-\tau}$ for sufficiently large $\tau.$ It is tempting to expect that nonlinear SA can be analyzed by combining these two approaches. Perhaps surprisingly, such a simple combination would not work due to the interplay between nonlinearity and Markovian structures.

To demonstrate this challenge, let us seek to establish weak convergence in the Wasserstein distance $W_2$ via forward coupling~\cite{foss-coupling}, an approach employed by both~\cite{Dieuleveut20-bach-SGD,huo23-td} as well as others \cite{durmus2021-LSA}.  
Specifically, we consider two SA iterate sequences  $(\theta_k^\sampleOne)_{k \geq 0}$ and $(\theta_k^\sampleTwo)_{k \geq 0}$ from different initializations $\theta_0^\sampleOne$ and $\theta_0^\sampleTwo$ coupled by sharing the data sequence $(x_k)_{k\geq 0}$: 
$\theta_{k+1}^\sampleOne=\theta_k^\sampleOne+\alpha g(\theta_k^\sampleOne,x_k)$ and $\theta_{k+1}^\sampleTwo=\theta_k^\sampleTwo+\alpha g(\theta_k^\sampleTwo,x_k).$
To establish convergence in $W_2$, we consider the difference sequence 
\begin{equation}\label{eq:difference dynamic}
w_{k+1} 
:=\theta_{k+1}^\sampleOne-\theta_{k+1}^\sampleTwo =w_k+\alpha\big(g(\theta_k^\sampleOne,x_k)-g(\theta_k^\sampleTwo,x_k)\big),
\end{equation}
and it suffices to prove $w_k$ converges to $0$ in mean square:  $\E[\|w_{k+1}\|^2]\lesssim \rho^k\E[\|w_{0}\|^2]$ for $\rho < 1.$

With this goal in mind and following the idea from~\cite{Dieuleveut20-bach-SGD}, one may first linearize the right-hand side of the difference dynamic~\eqref{eq:difference dynamic} and obtain the approximation
\begin{equation} \label{eq:difference_taylor}
    w_{k+1} \approx w_k + \alpha g'(\theta_k^\sampleTwo,x_k) w_k. 
\end{equation}
Next, to analyze the drift of the Lyapunov function $\E[\norm{w_k}^2]$ and handle the Markovian noise $(x_k),$ we use the conditioning technique from \cite{huo23-td}. We condition on the information of $\tau$ steps before, denoted by $\mathcal{F}_{k-\tau} := \sigma\big((\theta_t^{[1]},\theta_t^{[2]},x_t): t\leq k-\tau\big).$ 
Ignoring higher-order terms and assuming a one-dimensional problem for simplicity, we obtain that 
\begin{align}
\E[\|w_{k+1}\|^{2}] & \approx \E\Big[ \E\big[\|w_{k}\|^{2}\big(1+2\alpha g'(\theta_{k}^{\sampleTwo},x_{k})\big)\mid\cF_{k-\tau}\big]\Big] \nonumber \\
&\approx\E\Big[\|w_{k-\tau}\|^{2}\big(1+2\alpha\E\big[g'(\theta_{k}^{\sampleTwo},x_{k})\mid \cF_{k-\tau}\big]\big)\Big],\label{eq:wk_simple}
\end{align}
where we use $w_k \approx w_{k-\tau}$ for small $\alpha$ (this argument, which is made precise in~\cite{huo23-td,srikant-ying19-finite-LSA}, essentially exploits the fact that $x_k$ evolves faster than $\theta_k$).

To prove dynamic~\eqref{eq:wk_simple} converges, it boils down to showing the ``gain matrix'' 
$\E\big[g'(\theta_{k}^{\sampleTwo},x_{k})\mid \cF_{k-\tau}\big]$ is negative/Hurwitz. To further simplify, we assume $k$ is large so that the chain $(x_k)$ is distributed per its stationary distribution $\pi$, in which case the gain matrix simplifies to 
$\E_{x_\infty\sim\pi}[g'(\theta_{\infty}^{\sampleTwo},x_{\infty})].$ 

Analyzing this gain matrix is where our analysis diverges from previous work.
If the SA update were \emph{linear}, i.e., $g(\theta,x)=A(x)\theta,$ then the gain 
$\E[g'(\theta_{\infty}^{\sampleTwo},x_{\infty})]=\E_{\pi}[A(x_{\infty})]$ would be independent of $\theta_\infty^{\sampleTwo}$, and its Hurwitz property is a standard and necessary condition for proving convergence of linear SA.  
If the data sequence $(x_k)$ were \emph{i.i.d.}, then $\theta_{k}$ would be \emph{independent} of $x_k$ and hence the gain becomes
$\E[g'(\theta_{\infty}^{\sampleTwo},x_{\infty})]=\E[\E[g'(\theta_{\infty}^{\sampleTwo},x_{\infty})|\theta_{\infty}^\sampleTwo]]=\E[\Bar{g}'(\theta_{\infty}^\sampleTwo)]$ with $\Bar{g}(\cdot):= \E_{x\sim\pi}[g(\cdot,x)]$, where the Hurwitz property again follows from standard assumptions on $\Bar{g}.$

However, both arguments fail for the \emph{Markovian nonlinear} setting. Common assumptions for nonlinear SA only ensure 
Hurwitz $\E_{x\sim\pi}[g'(\theta,x)|\theta]$ given $\theta$. This does not imply the desired Hurwitz 
$\E[g'(\theta_{\infty}^{\sampleTwo},x_{\infty})]$, precisely owing to the simultaneous presence of (i) the \emph{dependence} of $g'$ on both $\theta_\infty$ and $x_\infty$ (due to nonlinearity) and (ii) the \emph{correlation} between $\theta_\infty$ and $x_\infty$ (due to Markovian). 

\textbf{Our approaches:} We overcome this challenge by carefully analyzing the properties of the above dependence and correlation.
Therefore, for sufficiently large $\tau$, we further decompose~\eqref{eq:wk_simple} as
\begin{align*}
&\E[\|w_{k+1}\|^{2}] 
\approx\E\Big[\|w_{k-\tau}\|^{2}\big(1+2\alpha\E\big[g'(\theta_{k}^{\sampleTwo},x_{k})\mid \cF_{k-\tau}\big]\big)\Big]\\
&=\E\Big[\|w_{k-\tau}\|^{2}\Big(1+2\alpha\underbrace{\E\big[g'(\theta_{k-\tau}^{\sampleTwo},x_{k})\mid \cF_{k-\tau}\big]}_{\substack{\approx\E[g'(\theta_{k-\tau}^\sampleTwo,x_\infty)\mid\cF_{k-\tau}] \;\; \text{Hurwitz}}} + 2\alpha\big(\E\big[g'(\theta_{k}^{\sampleTwo},x_{k})-g'(\theta_{k-\tau}^{\sampleTwo},x_{k})\mid \cF_{k-\tau}\big]\big)\Big)\Big]\\
&\lesssim \rho\E[\|w_{k-\tau}\|^{2}]
+ \alpha\E\Big[\E\Big[\underbrace{\dotp{w_{k-\tau},g(\theta_k^\sampleOne,x_k)-g(\theta_k^\sampleTwo,x_k)-g(\theta_{k-\tau}^\sampleOne,x_k)+g(\theta_{k-\tau}^\sampleTwo,x_k)}}_{\spadesuit}\mid \cF_{k-\tau}\Big]\Big],
\end{align*}
where we approximate $w_k\approx w_{k-\tau}$, $ w_tg'(\theta_{t}^{\sampleTwo},x_{k})\approx (g(\theta_t^\sampleOne,x_k)-g(\theta_k^\sampleTwo,x_k))$ for $t=k,k-\tau$ 
and obtain the second term in the last inequality.
Next, we propose employing two different Taylor expansions to prove that $\spadesuit$ is of higher orders of $\alpha$. We first apply the Taylor expansion to $g(\theta_k^\sampleOne,x_k)-g(\theta_k^\sampleTwo,x_k)$ and $g(\theta_{k-\tau}^\sampleOne,x_k)-g(\theta_{k-\tau}^\sampleTwo,x_k)$. 
However, this only achieves $\spadesuit \lesssim \|w_k\|^2\big(\|w_k\|+\alpha\tau T_1\big)$, 
where $T_1=\min(\|\theta_k^\sampleOne\|,\|\theta_k^\sampleTwo\|,\|\theta_{k-\tau}^\sampleOne\|,\|\theta_{k-\tau}^\sampleTwo\|)+1$. When $\theta_k^\sampleOne$ and $\theta_k^\sampleTwo$ are not close to each order, i.e., when $\|w_k\|$ is large, $\spadesuit$ is not necessarily of higher order.
Therefore, we consider a second type of Taylor expansion on $g(\theta_k^\sampleOne,x_k)-g(\theta_{k-\tau}^\sampleOne,x_k) $ and $g(\theta_k^\sampleTwo,x_k)-g(\theta_{k-\tau}^\sampleTwo,x_k)$. The intuition for the second type of Taylor expansion is to analyze and bound $\spadesuit$ by the small distance between $\theta_k^{[j]}$ and $\theta_{k-\tau}^{[j]}$ for $j \in \{1,2\},$ even when $\|w_k\| $ is large. This achieves $\spadesuit \lesssim \|w_k\|\alpha\tau T_1\big(\|w_k\|+\alpha\tau T_1\big)$. Simultaneously applying the two Taylor expansions will yield $\spadesuit \lesssim \alpha\tau\|w_k\|^2 T_1$. Finally, we overcome this challenge by carefully analyzing the boundedness of $T_1$; see Theorem~\ref{thm:weak-convergence-psa} and its proof.

In parallel to the above coupling approach, we also explore an alternative approach by verifying the joint Markov chain $(x_k,\theta_k)$ satisfies certain irreducibility and Lyapunov drift conditions, which in turn imply the chain is ergodic.
To apply this approach, we exploit additional properties of the SA noise, namely minorization, which is satisfied in many applications where additional randomness is injected to the SA update.
While the high level strategy of this approach is well developed \cite{Meyn12_book,Douc2018}, carrying out the analysis of each step is technically involved. In particular, we need to translate the minorization property of the noise to the irreducibility of the joint chain $(x_k,\theta_k)$, which is nontrivial in the presence of Markovian noise and nonlinearity.

\section{Main Results}
\label{sec:main-results}

\subsection{Weak Convergence of Projected SA}
\label{sec:weak-convergence}

Our first main result proves the ergodicity of the joint process $(x_k,\theta_k)_{k\geq0}$ of the projected SA \eqref{eq:sa-iterate}.
\begin{thm}[Ergodicity of Projected SA]
\label{thm:weak-convergence-psa}
    Suppose that Assumption~\ref{assumption:uniform-ergodic}--\ref{assumption:noise} $(p=1)$ hold. The projected SA \eqref{eq:sa-iterate} is applied with radius parameter $2\norm{\theta^*}\leq\beta<\infty.$ For stepsize $\alpha>0$ that satisfies the constraint $\alpha\tau_\alpha \leq \frac{\mu}{(940+96\beta)L^2}$, 
    the Markov chain $(x_k,\theta_k)_{k\geq0}$ converges to a unique stationary distribution $\bar\nu_\alpha\in\cP_2(\cX\times\R^d)$. 
    Let $\nu_\alpha:=\law(\theta_{\infty})$ be the second marginal of $\Bar{\nu}_\alpha$.
    For $k\ge 2\tau_\alpha$, it holds that 
    \begin{equation*}
    \label{eq:wasserstein-rate-psa}
        W_2(\law(\theta_k),\nu_\alpha)\leq \Bar{W}_2(\law(x_k,\theta_k),\bar\nu_\alpha)\leq (1-\alpha\mu)^{k/2}\cdot s(\theta_0,\theta^*,\mu,L,R).
    \end{equation*}
\end{thm}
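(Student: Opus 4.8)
The plan is to realize $\bar\nu_\alpha$ as the unique fixed point of the joint transition kernel of $(x_k,\theta_k)$ acting on the complete metric space $(\cP_2(\cX\times\R^d),\Bar W_2)$, and to extract the geometric rate from a synchronous forward coupling. Given two initializations, I would drive both $\theta$-sequences with the \emph{same} data stream $(x_k)$ and the \emph{same} noise fields $(\xi_k)$, writing $w_k:=\theta_k^\sampleOne-\theta_k^\sampleTwo$. When the two chains start from the same $x_0$, sharing the data forces $x_k^\sampleOne=x_k^\sampleTwo$ for all $k$, so the indicator term $\indic\{x\neq x'\}$ in $\Bar d$ vanishes along this coupling and $\Bar W_2^2\le\E\|w_k\|^2$; for differing $x_0$ I would first apply the uniform-ergodicity coupling of Assumption~\ref{assumption:uniform-ergodic} so that the two data chains coalesce after $\bigO(\log(1/\alpha))$ steps, contributing only $\bigO(r^\tau)=\bigO(\alpha)$ terms. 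Thus the problem reduces to a mean-square contraction $\E\|w_k\|^2\lesssim(1-\alpha\mu)^k\E\|w_0\|^2$; taking square roots produces the exponent $k/2$, and a Cauchy/Banach argument in $\Bar W_2$ then yields existence and uniqueness of $\bar\nu_\alpha$ (whose $x$-marginal is $\pi$) together with the stated rate from the given $(x_0,\theta_0)$. The restriction $k\ge2\tau$ reflects the burn-in needed before the conditioning window below becomes valid.

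For the mean-square contraction I proceed as in the drift computation of Section~\ref{sec:challenges}. Nonexpansiveness of $\proj_{\ball{\beta}}$ lets me bound $\|w_{k+1}\|$ by its pre-projection value, and the conditional zero-mean property $\E[\xi_{k+1}(\cdot)\mid\cF_k]=0$ (with $\theta_k^\sampleOne,\theta_k^\sampleTwo$ being $\cF_k$-measurable) kills the cross term between the drift and the noise increment; the noise then enters $\E\|w_{k+1}\|^2$ only through $\alpha^2\E\|\xi_{k+1}(\theta_k^\sampleOne)-\xi_{k+1}(\theta_k^\sampleTwo)\|^2$, a higher-order contribution under the regularity of Assumption~\ref{assumption:noise}. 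What remains is the deterministic drift $\E[\|w_k\|^2+2\alpha\,\dotp{w_k,g(\theta_k^\sampleOne,x_k)-g(\theta_k^\sampleTwo,x_k)}]+\bigO(\alpha^2\|w_k\|^2)$. Here I condition on $\cF_{k-\tau}$ to recover approximate independence of $x_k$ from the $\tau$-step-old iterates, splitting the gain into $\E[g'(\theta_{k-\tau}^\sampleTwo,x_k)\mid\cF_{k-\tau}]$ — which mixing replaces, up to an $\bigO(r^\tau)$ error, by a stationary average satisfying $w_{k-\tau}^\top\Bar g'(\theta_{k-\tau}^\sampleTwo)w_{k-\tau}\le-\mu\|w_{k-\tau}\|^2$ by strong monotonicity (Assumption~\ref{assumption:strong-convexity}) — plus the residual $\spadesuit$.

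The main obstacle is exactly $\spadesuit=\dotp{w_{k-\tau},\,g(\theta_k^\sampleOne,x_k)-g(\theta_k^\sampleTwo,x_k)-g(\theta_{k-\tau}^\sampleOne,x_k)+g(\theta_{k-\tau}^\sampleTwo,x_k)}$, which must be shown to be higher order \emph{uniformly in} $\|w_k\|$, since the correlation between $\theta_\infty$ and $x_\infty$ forbids invoking the Hurwitz property of $\E_\pi[g'(\theta,\cdot)]$ directly. I would run the two Taylor expansions of Section~\ref{sec:challenges} in tandem: expanding the two $\theta$-differences gives $\spadesuit\lesssim\|w_k\|^2(\|w_k\|+\alpha\tau T_1)$, while expanding the two time-differences gives $\spadesuit\lesssim\|w_k\|\,\alpha\tau T_1(\|w_k\|+\alpha\tau T_1)$, with $T_1=\min(\|\theta_k^\sampleOne\|,\|\theta_k^\sampleTwo\|,\|\theta_{k-\tau}^\sampleOne\|,\|\theta_{k-\tau}^\sampleTwo\|)+1$; combining the two bounds yields $\spadesuit\lesssim\alpha\tau\|w_k\|^2T_1$. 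The decisive point is that projection onto $\ball{\beta}$ with $\beta\ge2\|\theta^*\|$ keeps every iterate in $\ball{\beta}$, so $T_1\le\beta+1=\bigO(1)$ \emph{deterministically} and hence $\spadesuit=\bigO(\alpha\tau\|w_k\|^2)$.

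Collecting terms, the drift becomes $\E\|w_{k+1}\|^2\le\bigl(1-2\alpha\mu+\bigO(\alpha^2\tau)\bigr)\E\|w_k\|^2$, modulo the controlled substitutions $w_k\approx w_{k-\tau}$. The stepsize budget $\alpha\tau\le\mu/((940+96\beta)L^2)$ is precisely what forces the $\bigO(\alpha^2\tau)$ correction below $\alpha\mu$, leaving a net per-step factor of $(1-\alpha\mu)$; iterating and taking square roots delivers the $\Bar W_2$ bound. I expect the genuinely delicate bookkeeping to lie not in the Taylor step itself but in making rigorous the $w_k\approx w_{k-\tau}$ replacements and controlling the accumulation of their errors over the length-$\tau$ window, together with propagating the uniform moment bounds (here furnished for free by the projection) through the conditioning.
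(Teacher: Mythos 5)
Your proof architecture is the paper's own: a synchronous coupling sharing both the data stream and the noise fields, a drift analysis of $\E[\|w_k\|^2]$ via conditioning on $\cF_{k-\tau}$, the two simultaneous Taylor expansions yielding $\spadesuit\lesssim\alpha\tau\|w_k\|^2T_1$, and---the decisive observation, which you identify correctly---that projection onto $\ball{\beta}$ forces $T_1\le\beta+1$ deterministically (in the paper's proof, projected iterates always fall in ``Case 1,'' so the delicate Cases 2--3 needed for unprojected SA never arise). However, your treatment of the noise increment has a genuine gap. Assumption~\ref{assumption:noise} only bounds moments of $\xi_1(\theta)$ at each \emph{fixed} $\theta$; it gives no control on the difference $\xi_{k+1}(\theta_k^\sampleOne)-\xi_{k+1}(\theta_k^\sampleTwo)$ of the shared field evaluated at two points. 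Under projection the best it yields is $\E\|\xi_{k+1}(\theta_k^\sampleOne)-\xi_{k+1}(\theta_k^\sampleTwo)\|^2=\bigO(1)$, so the noise contributes an additive $\bigO(\alpha^2)$ per step rather than $\bigO(\alpha^2)\E\|w_k\|^2$; the recursion then stalls at $\E\|w_k\|^2=\bigO(\alpha/\mu)$, which destroys both the Cauchy argument and the claimed rate $(1-\alpha\mu)^{k/2}$. The paper closes exactly this hole by assuming the random field is pathwise Lipschitz, $\|\xi_1(\theta)-\xi_1(\theta')\|\le L_3\|\theta-\theta'\|$ (Assumption~\ref{assumption:asy-linear}), which is what gives $T_2\le4L^2\E\|w_k\|^2$ in Proposition~\ref{prop: difference second moment}; ``the regularity of Assumption~\ref{assumption:noise}'' cannot substitute for it.

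Two further steps are glossed in ways that would fail as written. First, your mean-square contraction compares only chains driven by the \emph{same} data realization, so it does not make the joint kernel a contraction on $(\cP_2(\cX\times\R^d),\Bar{W}_2)$---the discrete metric on the $x$-coordinate cannot contract in one step---and Banach's fixed-point theorem is not directly available. To make $\{\law(x_k,\theta_k)\}_k$ Cauchy one must compare $\law(x_k,\theta_k)$ with $\law(x_{k+1},\theta_{k+1})$, i.e., couple the chain with its one-step time shift \emph{on a common data stream}; the paper achieves this by drawing $x_{-1}\sim\Padj(x_0,\cdot)$ from the time-reversed kernel, taking $\theta_{-1}^\sampleTwo\overset{\text{d}}{=}\theta_0^\sampleOne$ independent of the data, and verifying $(x_k,\theta_k^\sampleTwo)\overset{\text{d}}{=}(x_{k+1},\theta_{k+1}^\sampleOne)$. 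Without this construction your existence argument has no starting point. Second, for arbitrary $x_0$, a coalescence window of fixed length $\bigO(\log(1/\alpha))$ leaves a coupling-failure probability of order $\alpha$ that does not decay in $k$, hence a permanent $\bigO(\sqrt{\alpha})$ floor in $\Bar{W}_2$, again contradicting the stated geometric bound. The paper instead performs the maximal coupling at a time $t_0=k/2$ growing with $k$, so the TV failure term $Rr^{k/2}$ and the contraction term $(1-\alpha\mu)^{(k-t_0)/2}$ both decay geometrically in $k$, and the normalization $\mu\le1-r$ merges them into the single rate $(1-\alpha\mu)^{k/2}$.
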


Theorem~\ref{thm:weak-convergence-psa} generalizes prior weak convergence results for constant stepsize SA/SGD either under i.i.d.\ noise \cite{Dieuleveut20-bach-SGD,Yu21-stan-SGD} or linear update \cite{huo23-td,meyn-sa-convergence}. 
Our stepsize condition $\alpha\tau_\alpha \lesssim \mu/L^2$ coincides with~\cite{srikant-ying19-finite-LSA,huo23-td} on linear SA, a special case of our setting.

The proof of Theorem~\ref{thm:weak-convergence-psa} highlights the stabilizing effect of the projection operation in~\eqref{eq:sa-iterate}. This effect,  together with the smoothness of update function $g$, controls how the Markovian correlation propagates through the nonlinear update, allowing us to overcome the challenges discussed in  Section~\ref{sec:challenges}.  It is unclear whether our proof, which is based on Markov chain coupling, can be fully generalized to SA without projection. Nevertheless, we show that such a generalization is possible for a sub-family of nonlinear SA where $g$ possesses the additional structure termed  ``asymptotic linearity'', which is satisfied by, e.g., SGD applied to certain settings of logistic regression. For a formal statement of this result and proof, we refer the readers to Appendix~\ref{sec:ergodicity-proof}.

As a by-product of our analysis, we establish the following non-asymptotic $2p$-th moment bound on the error $\theta_k-\theta^*$. Let  $\theta_{t+1/2}:=\theta_t+\alpha (g(\theta_t,x_t)+\xi_{t+1}(\theta_t))$ denote the pre-projection iterate. 

\begin{proposition}
\label{prop:2n-convergence}
    Consider $(\theta_k)_{k\geq0}$ of iteration~\eqref{eq:sa-iterate} with $\beta\in[2\|\theta^\ast\|,\infty]$. Let Assumption~\ref{assumption:uniform-ergodic}--\ref{assumption:noise}$(2p)$ hold. If stepsize $\alpha$ satisfies $\alpha\tau_\alpha L^2\leq c_p\mu,$
    with $c_p\leq 1$,
   the following holds for all $k\geq\tau_{\alpha}$,
    \begin{equation*}\label{eq:2n-convergence}
        \E[\|\theta_{k+1}-\theta^\ast\|^{2p}]
    \le \E[\|\theta_{k+1/2}-\theta^\ast\|^{2p}]\leq c_{p,1}(1-\alpha\mu)^{k+1}\E[\|\theta_0-\theta^\ast\|^{2p}]+ c_{p,2} (\alpha\tau_{\alpha})^p\cdot s(\theta_0,\theta^\ast, L,\mu).
    \end{equation*}
\end{proposition}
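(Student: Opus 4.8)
The plan is to establish the $2p$-th moment bound via a one-step drift analysis of the Lyapunov function $V_k := \E[\|\theta_{k+1/2}-\theta^\ast\|^{2p}]$, exploiting the mixing of the Markov chain through the conditioning-$\tau$-steps-back technique. First I would note that the projection onto $\ball{\beta}$ with $\beta \geq 2\|\theta^\ast\|$ is non-expansive and fixes $\theta^\ast$ (since $\|\theta^\ast\|\leq\beta$), which gives the first inequality $\E[\|\theta_{k+1}-\theta^\ast\|^{2p}]\leq\E[\|\theta_{k+1/2}-\theta^\ast\|^{2p}]$ for free. The substance lies in controlling the pre-projection iterate. Writing $u_k := \theta_k - \theta^\ast$ and expanding $\|\theta_{k+1/2}-\theta^\ast\|^2 = \|u_k\|^2 + 2\alpha\dotp{u_k, g(\theta_k,x_k)+\xi_{k+1}} + \alpha^2\|g(\theta_k,x_k)+\xi_{k+1}\|^2$, I would raise this to the $p$-th power and take expectations, aiming to show a contraction of the form $V_{k+1}\leq(1-\alpha\mu)V_k + c(\alpha\tau_\alpha)^p\cdot s$ once $k\geq\tau_\alpha$.

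The key steps, in order: (1) Use the linear-growth consequence of Assumption~\ref{assumption:linear-growth}, namely $\|g(\theta,x)\|\leq L_1(\|u\|+1)$, together with the $2p$-th moment noise bound of Assumption~\ref{assumption:noise}$(2p)$, to control the quadratic term $\alpha^2\|g+\xi\|^2$ and all higher cross-terms arising from the binomial expansion of the $p$-th power; these contribute $\bigO(\alpha^2)$ relative to the leading term and are absorbed by the stepsize condition $\alpha\tau_\alpha L^2\leq c_p\mu$. (2) Handle the martingale part: since $\E[\xi_{k+1}(\theta_k)\mid\cF_k]=0$, the linear-in-$\xi$ cross term vanishes in conditional expectation at leading order, while its even powers are bounded by the moment assumption. (3) The crux is the drift term $2p\alpha\E[\|u_k\|^{2p-2}\dotp{u_k,g(\theta_k,x_k)}]$. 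Here the strong monotonicity (Assumption~\ref{assumption:strong-convexity}) gives $\dotp{u_k,\Bar g(\theta_k)}\leq-\mu\|u_k\|^2$, but $g(\theta_k,x_k)$ involves the \emph{instantaneous} $x_k$, not the stationary average $\Bar g$. To bridge this gap I would condition on $\cF_{k-\tau}$ and replace $x_k$ by its (near-)stationary behavior: by the mixing estimate $\|P^\tau(x,\cdot)-\pi\|_{\TV}\leq\epsilon=\alpha$ and the Lipschitz/linear-growth control on $g$, one shows $\E[g(\theta_{k-\tau},x_k)\mid\cF_{k-\tau}]\approx\Bar g(\theta_{k-\tau})$ up to an $\bigO(\alpha)$ error. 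Combined with the fact that $\theta$ drifts slowly, $\|\theta_k-\theta_{k-\tau}\|=\bigO(\alpha\tau(\|u_{k-\tau}\|+1))$, one recovers the contraction $-\mu\|u_k\|^{2p}$ at the cost of $\bigO(\alpha\tau_\alpha)$ correction terms.

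The main obstacle, and the step demanding the most care, is step (3): reconciling the Markovian noise with the monotonicity condition over the window of length $\tau$. Unlike the i.i.d.\ case where $x_k\indep\theta_k$, here the correlation between $\theta_k$ and $x_k$ must be controlled, and one must simultaneously track that the $\tau$-step backward substitution $\|u_k\|^{2p}\approx\|u_{k-\tau}\|^{2p}$ introduces errors that remain of order $(\alpha\tau_\alpha)^p$ after accumulation—this is where the specific power $(\alpha\tau_\alpha)^p$ in the bound (rather than $\alpha\tau_\alpha$) emerges, since the $2p$-th moment framework amplifies each $\bigO(\sqrt{\alpha\tau_\alpha})$ deviation. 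I would close the argument by unrolling the one-step contraction $V_{k+1}\leq(1-\alpha\mu)V_k+c_{p,2}(\alpha\tau_\alpha)^p s$ from $k=\tau_\alpha$ forward, yielding the geometric term $c_{p,1}(1-\alpha\mu)^{k+1}\E[\|u_0\|^{2p}]$ plus the summed stationary floor $\frac{c_{p,2}(\alpha\tau_\alpha)^p s}{\alpha\mu}$, which after folding the $1/\mu$ into $s$ gives exactly the claimed form.
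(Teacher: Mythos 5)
Your outer scaffolding (projection non-expansiveness, conditioning on $\cF_{k-\tau}$, mixing plus the slow drift $\|\theta_k-\theta_{k-\tau}\|=\bigO(\alpha\tau_\alpha(\|\theta_{k-\tau}-\theta^\ast\|+1))$, then unrolling a one-step contraction) matches the paper's, and your steps (1)--(2) are sound. The genuine gap is in steps (3)--(4): a single-pass analysis of the $2p$-th moment alone cannot produce the stationary floor $(\alpha\tau_\alpha)^p$. When you expand $\|\theta_{k+1/2}-\theta^\ast\|^{2p}$ and run the $\tau$-step conditioning argument, the surviving error terms are not self-contained ``$\bigO(\alpha\tau_\alpha)$ corrections''; they necessarily involve \emph{lower-order} moments, e.g.\ terms of the form $\alpha^2\tau_\alpha L^2\,\E[\|\theta_k-\theta^\ast\|^{2(p-1)}]$ (this is exactly what survives in the paper's bound on its term $T_1$ in Section~\ref{sec:proof-tech-fourth-moment-ineq}). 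If nothing is yet known about $\E[\|\theta_k-\theta^\ast\|^{2(p-1)}]$, the best one can do is Young/Jensen with constants, which yields per-step forcing of order $\alpha^2\tau_\alpha$ and hence, after unrolling, a floor of order $\alpha\tau_\alpha/\mu$ --- strictly weaker than the claimed $(\alpha\tau_\alpha)^p$ for $p\geq2$. Your stated mechanism (``the $2p$-th moment framework amplifies each $\bigO(\sqrt{\alpha\tau_\alpha})$ deviation'') is circular: asserting that the relevant deviations have $2p$-th powers of size $(\alpha\tau_\alpha)^p$ is precisely the statement being proved. The paper closes this loop by \emph{induction on the moment order}: it first proves the $p=1$ (MSE) case in Section~\ref{sec:proof-mse-pilot}, and then, for the $2n$-th moment, substitutes the induction hypothesis $\E[\|\theta_k-\theta^\ast\|^{2(n-1)}]=\bigO((\alpha\tau_\alpha)^{n-1})$ into the lower-moment term, making the per-step forcing $\bigO(\alpha^{n+1}\tau_\alpha^{n})$, whose unrolled floor is $(\alpha\tau_\alpha)^n$. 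Without this inductive input --- or an equivalent device, e.g.\ bounding $\E[\|\theta_k-\theta^\ast\|^{2(p-1)}]\leq\big(\E[\|\theta_k-\theta^\ast\|^{2p}]\big)^{(p-1)/p}$ and analyzing the resulting \emph{nonlinear} recursion, whose fixed point is indeed $\propto(\alpha\tau_\alpha/\mu)^p$ --- your recursion does not close.

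There is also a related arithmetic inconsistency in your final step: unrolling a contraction with per-step forcing $c_{p,2}(\alpha\tau_\alpha)^p s$ produces the floor $c_{p,2}(\alpha\tau_\alpha)^p s/(\alpha\mu)$, which is of order $\alpha^{p-1}\tau_\alpha^p$, not $(\alpha\tau_\alpha)^p$; the extra $1/\alpha$ cannot be ``folded into $s$'' because $s$ is by definition independent of $\alpha$. For the floor to come out as $(\alpha\tau_\alpha)^p$, the per-step forcing must be of order $\alpha\mu\cdot(\alpha\tau_\alpha)^p$, i.e.\ $\alpha^{p+1}\tau_\alpha^p$ --- which is exactly what the paper's inductive substitution delivers and what your proposal never establishes.
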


Proposition \ref{prop:2n-convergence} implies that $\E[\|\theta_k-\theta^\ast\|^{2p}]\lesssim (\alpha\tau)^p$ for sufficiently large $k$. This result generalizes those in~\cite{Dieuleveut20-bach-SGD, chen-nonlinear-sa,srikant-ying19-finite-LSA} to higher moments and the nonlinear Markovian setting. Note that  Proposition~\ref{prop:2n-convergence} is valid even without the projection operation in the SA update \eqref{eq:sa-iterate}, i.e., $\beta = \infty.$

\subsection{Weak Convergence without Projection}

Parallel to the coupling approach, we consider an alternative approach for establishing weak convergence via verifying irreducibility, positive Harris recurrence, and $V$-uniform ergodicity~\cite{Meyn12_book} of the Markov chain $(x_k,\theta_k)$. This approach applies to nonlinear SA even without projection. To verify irreducibility, we exploit the following additional noise structure.
\begin{assumption}[Noise Minorization]
\label{assumption:additional}
For each $\theta\in\R^d$, the distribution of the random variable $\xi_1(\theta)$, denoted by $\zeta_\theta$, can be decomposed as $\zeta_\theta=\zeta_{1,\theta}+\zeta_{2,\theta}$, where the measure $\zeta_{1,\theta}$ has a density, denoted by $p_\theta$, which satisfies $\inf_{\theta\in C}p_\theta(t)>0$ for any bounded set $C$ and any $t\in\R^d$.
\end{assumption}

A similar assumption is considered in~\cite{Yu21-stan-SGD, beznosikov2023VI_markov}.
This assumption is mild and satisfied by any continuous random field supported on $\R^d$. Introducing such (small) continuous noise is often part of the algorithm design for inducing privacy \cite{balle2018improving,dong2022gaussian} or exploration \cite{plappert2018parameter,fortunato2017noisy}. 
Without Assumption~\ref{assumption:additional}, the chain may fail to be irreducible even when the other assumptions are satisfied; see \cite{huo23-td} for a counterexample.

Under Assumption~\ref{assumption:additional}, we obtain the following ergodicity result paralleling Theorem~\ref{thm:weak-convergence-psa}. 
\begin{thm}[Ergodicity of SA -- Minorization]
\label{thm:weak-convergence}
    Suppose that Assumption~\ref{assumption:uniform-ergodic}--\ref{assumption:strong-convexity}, Assumption~\ref{assumption:noise}$(p=1)$, and Assumption~\ref{assumption:additional} hold. For stepsize $\alpha>0$ that satisfies the constraint $\alpha\tau_\alpha L^2<c_2\mu$, the Markov chain $(x_k,\theta_k)_{k\geq0}$ of~\eqref{eq:sa-iterate} with $\beta=\infty$ is $V$-uniformly ergodic with Lyapunov function $V(x,\theta)=\|\theta-\theta^\ast\|^2+1$ and a unique stationary distribution $\bar\nu_\alpha\in\cP_2(\cX\times\R^d)$.
    Moreover, defining the $V$-norm $\|\nu\|_V:=\int|\nu(\dd x)|V(x)$, we have
    \begin{equation}
    \label{eq:v-norm-geo}
        \big\|\law(x_k,\theta_k)-\bar\nu_\alpha\big\|_V\leq \kappa \rho^k, \qquad \forall (x_0,\theta_0)\in\cX\times\R^d, \forall k\geq0,
    \end{equation}
    where the constants $\rho\in(0,1)$ and $\kappa\in(0,\infty)$ may depend on $\alpha$. 
\end{thm}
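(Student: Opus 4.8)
The plan is to invoke the classical drift-and-minorization criterion for $V$-uniform (geometric) ergodicity of $\psi$-irreducible Markov chains \cite{Meyn12_book}. Write $P$ for the transition kernel of the joint chain; since $x_{k+1}$ is generated by the chain's own transition while $\theta_{k+1}=\theta_k+\alpha(g(\theta_k,x_k)+\xi_{k+1}(\theta_k))$ uses the independent field $\xi_{k+1}$, the two increments are conditionally independent given $\cF_k$, so $P$ factorizes at one step as $P((x,\theta),A\times B')=P(x,A)\,Q_{x,\theta}(B')$, where $Q_{x,\theta}$ is the law of $\theta+\alpha(g(\theta,x)+\xi(\theta))$. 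To apply the criterion I must verify three ingredients: (i) a geometric Foster--Lyapunov drift for $V(x,\theta)=\norm{\theta-\theta^\ast}^2+1$; (ii) $\psi$-irreducibility and aperiodicity; and (iii) that the relevant sublevel set of $V$ is petite. The theorem then yields \eqref{eq:v-norm-geo}, together with uniqueness of $\bar\nu_\alpha$ and, integrating the drift in stationarity, the finite second moment certifying $\bar\nu_\alpha\in\cP_2(\cX\times\R^d)$.

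Step 1 (drift). A one-step drift fails: the one-step computation produces $2\alpha\dotp{\theta-\theta^\ast,g(\theta,x)}$, whose sign is uncontrolled for an individual $x$ because strong monotonicity (Assumption~\ref{assumption:strong-convexity}) bounds only the averaged field $\bar g$. I would therefore establish the drift for the $\tau$-step kernel $P^{\tau}$, over which mixing (Assumption~\ref{assumption:uniform-ergodic}) replaces $g(\theta,x)$ by $\bar g(\theta)$ up to an $\bigO(\alpha\tau)$ error; this is exactly the mechanism behind Proposition~\ref{prop:2n-convergence} with $p=1$, and reusing that conditional estimate gives $P^{\tau}V\leq (1-\alpha\mu)^{\tau}V+b$ for a finite $b$, i.e.\ the geometric drift $P^{\tau}V\leq \lambda V+b\,\indic_{C}$ with $\lambda=(1-\alpha\mu)^{\tau}<1$ and $C=\cX\times\ball{r_0}$ a sublevel set of $V$. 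I then apply the criterion to the $\tau$-skeleton and transfer the conclusion to the full chain via aperiodicity.

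Step 2 (irreducibility, aperiodicity, petiteness) is the crux. The difficulty, flagged in Section~\ref{sec:challenges}, is that the $x$-coordinate moves under its own uncontrolled kernel, so the noise density of Assumption~\ref{assumption:additional} randomizes only $\theta$. The ingredients are: from Assumption~\ref{assumption:additional} and the boundedness of $g$ on bounded sets (Assumption~\ref{assumption:linear-growth}), $Q_{x,\theta}$ dominates an absolutely continuous kernel whose density is bounded below on compacts, uniformly over $x\in\cX$ and $\theta$ in a ball; this continuous component makes the joint chain a $T$-chain and, since from any $(x,\theta)\in C$ one step returns to $C$ with positive probability, forces aperiodicity. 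Uniform ergodicity is equivalent to a Doeblin minorization $P^{m_0}(x,\cdot)\geq\epsilon_0\phi$ of the $x$-marginal; chaining it with the $\theta$-density yields $\psi$-irreducibility with $\psi$ comparable to $\phi\otimes\mathrm{Leb}$, and, using that uniform ergodicity renders the whole space $\cX$ small for the $x$-marginal, the sublevel set $C=\cX\times\ball{r_0}$ is petite for the joint chain. I expect this to be the principal obstacle: coordinating the Doeblin minorization of the uncontrolled $x$-coordinate with the noise-induced $\theta$-density over a common time window---while simultaneously confining the excursions of $\theta$ so that the density lower bound applies at the step where it is needed---is precisely the nontrivial translation of ``noise minorization into joint irreducibility'' noted in Section~\ref{sec:challenges}, and it is where the coupling between nonlinearity and Markovian data must be handled explicitly.

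Step 3 (conclusion). With the geometric drift of Step 1, the petite set of Step 2, and $\psi$-irreducibility and aperiodicity, the geometric-ergodicity theorem of \cite{Meyn12_book} applies to the $\tau$-skeleton and gives $\norm{P^{n\tau}((x,\theta),\cdot)-\bar\nu_\alpha}_V\leq V(x,\theta)\,\kappa_0\rho_0^{\,n}$; aperiodicity upgrades this to the per-step bound \eqref{eq:v-norm-geo}, with $\rho\in(0,1)$ and $\kappa<\infty$ depending on $\alpha$ only through the drift and minorization constants. Uniqueness of $\bar\nu_\alpha$ is immediate from ergodicity, and $\bar\nu_\alpha\in\cP_2(\cX\times\R^d)$ follows by evaluating the drift inequality in stationarity to obtain $\E_{\bar\nu_\alpha}\norm{\theta-\theta^\ast}^2\leq b/(1-\lambda)<\infty$.
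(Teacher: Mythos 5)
Your proposal follows essentially the same route as the paper's proof: a multi-step Foster--Lyapunov drift for $V$ extracted from Proposition~\ref{prop:2n-convergence}, joint irreducibility/aperiodicity/small-set structure obtained by combining the noise density of Assumption~\ref{assumption:additional} (restricted to a bounded sublevel set of $V$, where the linear growth of $g$ keeps the translation bounded) with the minorization of the $x$-marginal implied by Assumption~\ref{assumption:uniform-ergodic}, and finally the Meyn--Tweedie geometric/$V$-uniform ergodicity theorem. The only cosmetic differences are that the paper invokes the state-dependent multi-step drift criterion directly on the joint chain, with the number of steps chosen so that $c_{2,1}(1-\alpha\mu)^k<1$ --- thereby absorbing the constant $c_{2,1}\geq 1$ from Proposition~\ref{prop:2n-convergence} that your $\tau$-step drift statement glosses over --- rather than passing to a $\tau$-skeleton and transferring back.
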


\subsection{Non-Asymptotic Convergence Rate and Central Limit Theorem}

In the sequel, let $(x_\infty,\theta_\infty^{(\alpha)})$ denote the random vector whose law is  the stationary distribution $\bar\nu_\alpha$ given in Theorem~\ref{thm:weak-convergence-psa}. As a corollary, we have geometric convergence for the first 2 moments of~$\theta_k$. 
\begin{cor}[Non-Asymptotic Convergence Rate]
\label{cor:non-asymptotic} Under the setting of Theorem~\ref{thm:weak-convergence-psa}, for any initialization of $\theta_0\in\R^d$,  we have
\begin{align*}
    &\big\|\E[\theta_k]-\E[\theta_\infty^{(\alpha)}]\big\|\leq (1-\alpha\mu)^{k/2}\cdot s'(\theta_0,\theta^*,\mu,L,R),\quad\text{and}\\
    &\big\|\E[\theta_k\theta_k^\top]-\E[\theta_\infty^{(\alpha)}(\theta_\infty^{(\alpha)})^\top]\big\|\leq(1-\alpha\mu)^{k/2}\cdot s''(\theta_0,\theta^*,\mu,L,R).
\end{align*}
\end{cor}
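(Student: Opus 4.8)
The plan is to derive both bounds directly from the Wasserstein-2 convergence established in Theorem~\ref{thm:weak-convergence-psa}, exploiting the elementary fact that $W_2$-closeness controls the first two moments. The key device is an optimal coupling. Since the projection in~\eqref{eq:sa-iterate} forces every iterate into the ball $\ball{\beta}$, both $\law(\theta_k)$ and $\nu_\alpha$ are supported in $\ball{\beta}$ and hence have finite second moments; standard optimal transport theory then guarantees a coupling $(\theta_k,\theta_\infty^{(\alpha)})$ on a common probability space with the correct marginals and $\big(\E[\|\theta_k-\theta_\infty^{(\alpha)}\|^2]\big)^{1/2}=W_2(\law(\theta_k),\nu_\alpha)$. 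I work throughout with this single coupling, so that the randomness of $\theta_k$ and $\theta_\infty^{(\alpha)}$ is jointly defined.

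For the first-moment bound, I would write $\E[\theta_k]-\E[\theta_\infty^{(\alpha)}]=\E[\theta_k-\theta_\infty^{(\alpha)}]$ under the coupling and apply Jensen's inequality twice:
\[
\big\|\E[\theta_k]-\E[\theta_\infty^{(\alpha)}]\big\|\leq\E\big[\|\theta_k-\theta_\infty^{(\alpha)}\|\big]\leq\big(\E[\|\theta_k-\theta_\infty^{(\alpha)}\|^2]\big)^{1/2}=W_2(\law(\theta_k),\nu_\alpha).
\]
Invoking Theorem~\ref{thm:weak-convergence-psa} delivers the geometric rate with $s'=s$.

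For the second-moment bound, the idea is to reduce to the same quantity via the telescoping identity
\[
\theta_k\theta_k^\top-\theta_\infty^{(\alpha)}(\theta_\infty^{(\alpha)})^\top=\theta_k\,(\theta_k-\theta_\infty^{(\alpha)})^\top+(\theta_k-\theta_\infty^{(\alpha)})\,(\theta_\infty^{(\alpha)})^\top.
\]
Taking expectations, applying the triangle inequality together with the rank-one identity $\|uv^\top\|=\|u\|\,\|v\|$, and then Cauchy--Schwarz over the coupling gives
\[
\big\|\E[\theta_k\theta_k^\top]-\E[\theta_\infty^{(\alpha)}(\theta_\infty^{(\alpha)})^\top]\big\|\leq\Big[\big(\E\|\theta_k\|^2\big)^{1/2}+\big(\E\|\theta_\infty^{(\alpha)}\|^2\big)^{1/2}\Big]\cdot W_2(\law(\theta_k),\nu_\alpha).
\]
Because $\|\theta_k\|\leq\beta$ and, by passing to the support of the weak limit, $\|\theta_\infty^{(\alpha)}\|\leq\beta$ almost surely, the bracketed factor is at most $2\beta$, a constant absorbed into $s''$, and Theorem~\ref{thm:weak-convergence-psa} again supplies the geometric decay.

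I expect no serious obstacle, as this is a corollary whose substance resides in Theorem~\ref{thm:weak-convergence-psa}. The only step warranting care is the uniform control of the second moments $\E\|\theta_k\|^2$ and $\E\|\theta_\infty^{(\alpha)}\|^2$ entering the Cauchy--Schwarz estimate. In the projected regime this is immediate from $\|\cdot\|\leq\beta$; should one wish to state the analogue in the projection-free case $\beta=\infty$, this is exactly where Proposition~\ref{prop:2n-convergence} (with $p=1$) would be invoked to bound both the transient and stationary second moments uniformly in $k$.
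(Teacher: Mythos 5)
Your proposal is correct and follows essentially the same route as the paper: both rest on the optimal coupling realizing $W_2(\law(\theta_k),\nu_\alpha)$ (via \cite[Theorem 4.1]{Villani08-ot_book}) together with two applications of Jensen's inequality for the first moment, and a telescoping-plus-Cauchy--Schwarz argument for the second moment. The only (immaterial) differences are in the second-moment step, where you use the two-term decomposition $\theta_k(\theta_k-\theta_\infty^{(\alpha)})^\top+(\theta_k-\theta_\infty^{(\alpha)})(\theta_\infty^{(\alpha)})^\top$ and bound the moments by the projection radius $\beta$, while the paper uses a three-term decomposition and controls $\E[\|\theta_\infty^{(\alpha)}\|^2]=\bigO(1)$ via Proposition~\ref{prop:2n-convergence} and Fatou's lemma.
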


Moreover, the convergence rate established in Theorem~\ref{thm:weak-convergence-psa} is fast enough that we can use it to prove a Central Limit Theorem for the average iterates.

\begin{cor}[Central Limit Theorem]
\label{cor:clt}
Under the setting of Theorem~\ref{thm:weak-convergence-psa},
as $k\to\infty$ we have $\frac{1}{\sqrt{k}}\sum_{t=0}^{k-1}\big(\theta_t-\E[\theta_\infty]\big)\Rightarrow \cN(0,\Sigma^{(a)})$, where $\Sigma^{(\alpha)}:=\lim_{k\to\infty}\frac{1}{k}\E\big[\big(\sum_{t=0}^{k-1}\big(\theta_t-\E[\theta_\infty^{(\alpha)}]\big)\big)^{\otimes2}\big].$
\end{cor}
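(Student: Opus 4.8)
The plan is to prove the CLT by the classical Poisson-equation / martingale-approximation method for additive functionals of a geometrically ergodic Markov chain \cite{Meyn12_book,Douc2018}, specialized to the joint chain $z_t:=(x_t,\theta_t)$ and the vector-valued functional $f(x,\theta):=\theta-\E[\theta_\infty^{(\alpha)}]$, which has mean zero under $\bar\nu_\alpha$ by definition. A useful simplification is that Corollary~\ref{cor:clt} is stated under the projected setting of Theorem~\ref{thm:weak-convergence-psa}, so $\|\theta_t\|\le\beta$ for every $t$; hence $f$ is bounded and all integrability/Lindeberg conditions below become automatic (absent projection, Proposition~\ref{prop:2n-convergence} with $p=2$ would instead supply the needed $(2+\delta)$-th moments). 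Because the rate $(1-\alpha\mu)^{k/2}$ in Theorem~\ref{thm:weak-convergence-psa} is summable, the initial transient will only contribute an $O_P(1)$ term that is washed out after dividing by $\sqrt{k}$; it therefore suffices to establish the limit and identify $\Sigma^{(\alpha)}$, and the conclusion will then hold from an arbitrary initialization.

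First I would solve the Poisson equation $\hat g-P\hat g=f$ by setting $\hat g:=\sum_{j\ge0}P^j f$ and verifying convergence of this series. The key point is that $f$ is $1$-Lipschitz with respect to the metric $\Bar d$ and independent of the $x$-coordinate: indeed $\|f(x,\theta)-f(x',\theta')\|=\|\theta-\theta'\|\le\Bar d\big((x,\theta),(x',\theta')\big)$. Coupling $\delta_z P^j$ with $\bar\nu_\alpha$ optimally and applying Cauchy--Schwarz then gives, for each $z=(x,\theta)$,
\[
\|P^j f(z)\|=\big\|P^j f(z)-\E_{\bar\nu_\alpha}[f]\big\|\le \Bar W_2(\delta_z P^j,\bar\nu_\alpha)\le (1-\alpha\mu)^{j/2}\,s(z),
\]
by Theorem~\ref{thm:weak-convergence-psa}. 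This geometric decay makes $\hat g$ well defined and bounded uniformly on $\ball{\beta}$, and telescoping yields $\hat g-P\hat g=f$. I would then introduce the martingale-difference sequence $D_{t+1}:=\hat g(z_{t+1})-P\hat g(z_t)=\hat g(z_{t+1})-\E[\hat g(z_{t+1})\mid\cF_t]$ and obtain the decomposition
\[
\sum_{t=0}^{k-1}f(z_t)=\sum_{t=0}^{k-1}\big(\hat g(z_t)-P\hat g(z_t)\big)=\hat g(z_0)-\hat g(z_k)+\sum_{t=1}^{k}D_t,
\]
where the telescoping boundary terms $\hat g(z_0)-\hat g(z_k)$ are bounded and hence negligible after normalization by $\sqrt{k}$.

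It then remains to apply a multivariate martingale CLT, via the Cram\'er--Wold device and the scalar martingale CLT \cite{Meyn12_book}, to $\tfrac1{\sqrt{k}}\sum_{t=1}^k D_t$. The two hypotheses are: (i) a Lindeberg condition, immediate since $D_t$ is bounded; and (ii) convergence of the normalized predictable quadratic variation, $\tfrac1k\sum_{t=0}^{k-1}\E[D_{t+1}D_{t+1}^\top\mid\cF_t]=\tfrac1k\sum_{t=0}^{k-1}h(z_t)\to\E_{\bar\nu_\alpha}[h]$, where $h(z):=P(\hat g\hat g^\top)(z)-(P\hat g)(z)(P\hat g)(z)^\top$. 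This last convergence is the Markov-chain law of large numbers, which follows from the ergodicity guaranteed by Theorem~\ref{thm:weak-convergence-psa} and holds from an arbitrary start thanks to the geometric rate. This yields $\tfrac1{\sqrt{k}}\sum_{t=1}^k D_t\Rightarrow\cN(0,\Sigma^{(\alpha)})$ with $\Sigma^{(\alpha)}=\E_{\bar\nu_\alpha}[h]=\E_{\bar\nu_\alpha}[D_1 D_1^\top]$, and combining with the negligible boundary terms transfers the limit to $\tfrac1{\sqrt{k}}\sum_{t=0}^{k-1}f(z_t)$. Finally I would reconcile this $\Sigma^{(\alpha)}$ with the definition in the statement: since the two partial sums differ only by the $O_P(1)$ boundary terms, their normalized second moments share the same limit.

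The main obstacle I anticipate is establishing the Poisson equation under only $\Bar W_2$-geometric ergodicity rather than the total-variation or $V$-uniform ergodicity in which Markov-chain CLTs are usually packaged: the convergence of $\hat g=\sum_j P^j f$ must be extracted purely from the Wasserstein contraction of Theorem~\ref{thm:weak-convergence-psa}, which is \emph{exactly} why it is essential that $f(x,\theta)=\theta-\E[\theta_\infty^{(\alpha)}]$ be Lipschitz in $\theta$ and independent of $x$. A secondary technical point is justifying that the predictable-variation law of large numbers and the negligibility of $\hat g(z_k)/\sqrt{k}$ hold from a non-stationary initialization; both follow from the summable geometric rate, but rely on the uniform-in-$k$ boundedness afforded by the projection.
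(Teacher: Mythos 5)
Your proposal is correct in substance, but it takes a genuinely different route from the paper. The paper does not construct a Poisson solution at all: it verifies the Maxwell--Woodroofe condition $\sum_{n\geq1} n^{-3/2}\big\|\sum_{t=0}^{n-1}Q^t \bar h\big\|_{L^2(\bar\nu)}<\infty$ for the centered test function $\bar h(x,\theta)=\theta-\E[\theta_\infty]$, by splitting the partial sum $\sum_{t=0}^{n-1}Q^t\bar h$ into a pre-mixing block of length $2\tau_\alpha$ (bounded via the variance of $\theta_\infty$) and a post-mixing tail (bounded geometrically via the $\bar W_2$ rate of Theorem~\ref{thm:weak-convergence-psa}), showing the whole quantity is $\bigO(1)$ and then invoking the Maxwell--Woodroofe CLT. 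Your approach instead builds the explicit solution $\hat g=\sum_{j\geq0}P^jf$, telescopes into a martingale plus bounded boundary terms, and applies a martingale CLT. Both arguments hinge on exactly the same two structural facts — $f$ is $1$-Lipschitz in $\theta$ and independent of $x$, and Theorem~\ref{thm:weak-convergence-psa} gives a summable geometric $\bar W_2$ rate — so the engines are cousins; the trade-off is that Maxwell--Woodroofe asks only for sub-root growth of $\big\|\sum_{t<n}Q^t\bar h\big\|_{L^2(\bar\nu)}$ (a strictly weaker requirement than convergence of $\sum_j P^jf$, though here the geometric rate delivers both), while your route is more classical and self-contained and directly exhibits $\Sigma^{(\alpha)}$ as the martingale covariance $\E_{\bar\nu}[D_1D_1^\top]$.

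One step you should not wave through: the law of large numbers for the predictable quadratic variation, $\tfrac1k\sum_{t<k} h(z_t)\to\E_{\bar\nu}[h]$ with $h=P(\hat g\hat g^\top)-(P\hat g)(P\hat g)^\top$. Ergodicity of the laws (Theorem~\ref{thm:weak-convergence-psa}) alone does not give an LLN for an arbitrary bounded measurable $h$ from a non-stationary start; under Wasserstein-type ergodicity you need $h$ (hence $\hat g$) to be Lipschitz with respect to $\bar d$, and establishing that requires the pathwise coupling contraction used inside the proof of Theorem~\ref{thm:weak-convergence-psa} (two trajectories sharing the data sequence, as in Proposition~\ref{prop: difference second moment}), not merely the law-level rate quoted in the theorem statement. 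This is fixable with the paper's own tools — the shared-data coupling shows each $P^jf$ is Lipschitz in $\theta$ uniformly enough to sum, and then a covariance-decay argument gives the $L^2$ LLN — but it is an extra layer of work that your sketch currently attributes to ``the Markov-chain law of large numbers'' without justification, and it is precisely the kind of friction the paper's Maxwell--Woodroofe route avoids, at the price of establishing the CLT only at stationarity (the transfer to arbitrary initialization is handled loosely in both your sketch and the paper).
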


Establishing the CLT sets the stage for using the SA iterates for statistical inference tasks such as confidence interval estimation. We discuss this in greater detail in Section \ref{sec:bias-characterization} below after characterizing the asymptotic bias, another important ingredient for using SA for inference.

\subsection{Bias Characterization}
\label{sec:bias-characterization}
In this subsection, we characterize the asymptotic bias $\E[\theta_\infty^{(\alpha)}]-\theta^\ast$. Understanding the bias structure has important algorithmic implications for bias reduction, which we explore in Section~\ref{sec:algo-implications}, as well as for more efficient statistical inference and confidence interval estimation~\cite{huo2023effectiveness}.

\begin{thm}[Bias Characterization]
\label{thm:bias}
Suppose Assumptions~\ref{assumption:uniform-ergodic}--\ref{assumption:noise}$(p=3)$ hold. 
For each stepsize $\alpha>0$ satisfying $\alpha\tau_\alpha L^2<c_3\mu$, 
the following holds for some vector $b$ independent of $\alpha:$
\begin{equation}
\label{eq:bias-char}
    \E[\theta_\infty^{(\alpha)}]-\theta^\ast = \alpha b+\bigO\big((\alpha\tau_\alpha)^{3/2}\big).
\end{equation}
More specifically, the leading bias can be decomposed as $b=b_\textup{m} + b_\textup{n} + b_\textup{c}$,
where
\begin{align}
    b_\textup{m}&=-(\Bar{g}'(\theta^\ast))^{-1}\E[g'(\theta^\ast,x_\infty)h(\theta^\ast,x_\infty)],\label{eq:bm}\\
    b_\textup{n}&=\frac{1}{2}(\Bar{g}'(\theta^\ast))^{-1}\Bar{g}''(\theta^\ast)A\Big(\E[g(\theta^\ast,x_\infty)^{\otimes2}]+\E[(\xi_1(\theta^\ast))^{\otimes2}]\Big),\label{eq:bn}\\
    b_\textup{c}&=\frac{1}{2}(\Bar{g}'(\theta^\ast))^{-1}\Bar{g}''(\theta^\ast)A\Big(\E[g(\theta^\ast,x_\infty)\otimes h(\theta^\ast, x_\infty)]+\E[h(\theta^\ast, x_\infty)\otimes g(\theta^\ast,x_\infty)]\Big), \label{eq:bc}
\end{align}
with $A=(\Bar{g}'(\theta^\ast)\otimes I+I\otimes \Bar{g}'(\theta^\ast))^{-1}$ and $h(\theta^\ast,x)=\int_{\cX}(I-\Padj+\Pi)^{-1}(\Padj-\Pi)(x,\dd x')g(\theta^\ast,x'),$
with the kernel $\Padj$ being a regular conditional probability on $\cX$ that satisfies $\int_{B}\pi(\dd x)P(x,C)=\int_{C}\pi(\dd y)P^{\ast}(y,B)$, for all
$B,C\in\cB(\cX)$. 
\end{thm}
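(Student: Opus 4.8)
The plan is to extract the bias from the stationary fixed-point identity and then expand it in $\alpha$, tracking the iterate--data correlation explicitly. Since $(x_\infty,\theta_\infty)\sim\bar\nu_\alpha$ is invariant, applying $\E[\cdot]$ to one step of \eqref{eq:sa-iterate} and using $\E[\xi_{k+1}(\theta_k)\mid\cF_k]=0$ yields the exact identity $\E[g(\theta_\infty,x_\infty)]=0$ (for the projected version one first checks the projection boundary is active only with exponentially small probability, so it perturbs this identity at higher order). Writing $\Delta:=\theta_\infty-\theta^\ast$, Taylor-expanding $g(\cdot,x_\infty)$ around $\theta^\ast$ to second order with integral remainder, taking expectations, and using $\E[g(\theta^\ast,x_\infty)]=\Bar{g}(\theta^\ast)=0$, I obtain
\[
0=\E[g'(\theta^\ast,x_\infty)\Delta]+\tfrac12\E\big[g''(\theta^\ast,x_\infty)[\Delta,\Delta]\big]+\bigO\big(\E\|\Delta\|^3\big).
\]
Proposition~\ref{prop:2n-convergence} with $p=3$ gives $\E\|\Delta\|^3\leq(\E\|\Delta\|^{6})^{1/2}=\bigO((\alpha\tau)^{3/2})$, which is exactly why the hypothesis is stated at $p=3$.

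Next I isolate $\E[\Delta]$. Splitting $g'(\theta^\ast,x_\infty)=\Bar{g}'(\theta^\ast)+(g'(\theta^\ast,x_\infty)-\Bar{g}'(\theta^\ast))$ and replacing $g''(\theta^\ast,x_\infty)$ by $\Bar{g}''(\theta^\ast)$ (the fluctuation of $g''$ contracted against $\Delta^{\otimes2}$ is higher order), rearranging gives
\[
\E[\Delta]=-(\Bar{g}'(\theta^\ast))^{-1}\Big(\E\big[(g'(\theta^\ast,x_\infty)-\Bar{g}'(\theta^\ast))\Delta\big]+\tfrac12\Bar{g}''(\theta^\ast)\,\E[\Delta^{\otimes2}]\Big)+\bigO\big((\alpha\tau)^{3/2}\big).
\]
It thus remains to evaluate, to leading order $\bigO(\alpha)$, the iterate--data correlation $\E[(g'(\theta^\ast,x_\infty)-\Bar{g}'(\theta^\ast))\Delta]$, which will produce $b_\textup{m}$, and the stationary covariance $\E[\Delta^{\otimes2}]$, which will produce $b_\textup{n}$ and $b_\textup{c}$.

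For both quantities I linearize the recursion to $\Delta_{k+1}\approx(I+\alpha\Bar{g}'(\theta^\ast))\Delta_k+\alpha N_k$ with $N_k=g(\theta^\ast,x_k)+\xi_{k+1}(\theta^\ast)+(\text{h.o.t.})$, and exploit the time-scale separation guaranteed by $\alpha\tau L^2\lesssim\mu$: the data mixes on scale $\tau$ while $\Delta$ relaxes on scale $1/\alpha\gg\tau$, so over a mixing window the geometric weight $(I+\alpha\Bar{g}'(\theta^\ast))^{j}$ equals $I+\bigO(\alpha\tau)$. Unrolling $\Delta$ over the past and correlating with $g'(\theta^\ast,x_\infty)$, the martingale part of $N$ drops by independence while the Markov part contributes $\alpha\sum_{m\geq1}\E_\pi[(g'(\theta^\ast,\cdot)-\Bar{g}'(\theta^\ast))\,(\Padj)^m g(\theta^\ast,\cdot)]=\alpha\,\E[g'(\theta^\ast,x_\infty)h(\theta^\ast,x_\infty)]$; conditioning past noise on the current state is precisely what turns the forward kernel into the time-reversed $\Padj$ and identifies $h=(I-\Padj+\Pi)^{-1}(\Padj-\Pi)g(\theta^\ast,\cdot)=\sum_{m\geq1}(\Padj)^m g(\theta^\ast,\cdot)$ as the adjoint Poisson solution, giving $b_\textup{m}$. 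For the covariance, imposing stationarity on $\E[\Delta^{\otimes2}]$ and dividing by $\alpha$ produces a Lyapunov equation $\Bar{g}'(\theta^\ast)S+S\Bar{g}'(\theta^\ast)^\top=-\alpha\tilde\Sigma+\bigO(\cdots)$, i.e.\ $\vect(S)=-\alpha A\,\vect(\tilde\Sigma)$ with $A$ as in the statement, where the iterate--noise cross-terms $\E[\Delta_k\,g(\theta^\ast,x_k)^\top]$ (themselves $\bigO(\alpha)$) upgrade the instantaneous noise covariance to the long-run variance $\tilde\Sigma$ of $g(\theta^\ast,x_k)+\xi_{k+1}(\theta^\ast)$. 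Decomposing $\tilde\Sigma$ into its instantaneous part $\E[g(\theta^\ast,x_\infty)^{\otimes2}]+\E[\xi_1(\theta^\ast)^{\otimes2}]$ and its summed lagged autocovariances, which via the same $L^2(\pi)$-adjoint identity equal $\E[g\otimes h]+\E[h\otimes g]$, and feeding $S$ back through $-\tfrac12(\Bar{g}'(\theta^\ast))^{-1}\Bar{g}''(\theta^\ast)$ produces exactly $b_\textup{n}$ and $b_\textup{c}$.

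The main obstacle is the rigorous control of the iterate--data correlation, precisely the obstruction identified in Section~\ref{sec:challenges}: $\Delta$ and $x_\infty$ are genuinely dependent, so neither the i.i.d.\ factorization nor a purely linear argument applies. The resolution rests on the time-scale separation above, which lets me replace $\alpha$-weighted geometric sums by genuine long-run variances at the cost of only $\bigO((\alpha\tau)^{3/2})$, and on the adjoint Poisson representation $h$, which packages all lagged correlations into a single function. The remaining work---bounding the Taylor remainder, the $g''$ and Jacobian-fluctuation replacements, the nonlinear corrections to $\Delta$ beyond the linearization, and the truncation of the Poisson series, each uniformly at order $\bigO((\alpha\tau)^{3/2})$---is where the higher-moment bounds of Proposition~\ref{prop:2n-convergence} (at $p=3$) and the geometric mixing of Assumption~\ref{assumption:uniform-ergodic} are invoked repeatedly.
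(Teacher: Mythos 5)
Your proposal is correct in substance and reproduces the paper's overall skeleton---the stationarity identity $\E[g(\theta_\infty,x_\infty)]=0$, the second-order Taylor expansion with remainder controlled by $\E\|\theta_\infty-\theta^\ast\|^3=\bigO((\alpha\tau_\alpha)^{3/2})$ via Proposition~\ref{prop:2n-convergence} with $p=3$, the isolation of $\E[\theta_\infty-\theta^\ast]$, and a Lyapunov equation for $\E[(\theta_\infty-\theta^\ast)^{\otimes2}]$ inverted by $A$---but it computes the crucial iterate--data correlations by a genuinely different device. The paper works entirely at stationarity: it introduces the conditional moments $z_i(x)=\E[(\theta_\infty-\theta^\ast)^{\otimes i}\mid x_\infty=x]$ and their centered versions $\delta_i$, derives functional fixed-point equations for them from the basic adjoint relationship, e.g.\ $(I-\Padj+\Pi)\delta_1=\alpha(\Padj-\Pi)\big(g(\theta^\ast,\cdot)+\ldots\big)$, and solves these with the resolvent to get $\delta_1=\alpha h+\bigO(\alpha^2\tau_\alpha)$ and $\|\delta_2\|_{\ltwopi}=\bigO(\alpha^2\tau_\alpha)$; the terms $b_\textup{m}$ and $b_\textup{c}$ then follow by substituting $\delta_1,\delta_2$ into the first- and second-moment identities. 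You instead unroll the linearized trajectory backward in time and sum lagged autocovariances, using time-reversal to convert $\E[F(x_k)G(x_{k-m})]$ into $\E_\pi[F\,(\Padj)^mG]$ and recognizing $\sum_{m\geq1}(\Padj)^mg(\theta^\ast,\cdot)=h(\theta^\ast,\cdot)$, which is indeed the same Neumann series that the paper's resolvent $(I-\Padj+\Pi)^{-1}(\Padj-\Pi)$ encodes. Both routes are legitimate and arrive at the same three bias components. What the paper's route buys is that the error control comes for free at stationarity: the $\ltwopi$ bounds on $\delta_1,\delta_2$ quantify exactly how far $\E[\,\cdot\mid x_\infty]$ deviates from its mean, with no trajectory-level argument needed. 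Your unrolling must additionally justify, uniformly over the $\sim 1/(\alpha\mu)$ relevant steps, that replacing the state-dependent Jacobians $g'(\theta^\ast,x_j)$ by $\Bar{g}'(\theta^\ast)$ in the propagation matrices and discarding nonlinear remainders costs only $\bigO((\alpha\tau_\alpha)^{3/2})$; that is precisely the iterate--data coupling that Section~\ref{sec:challenges} identifies as the core difficulty, and your plan names it but defers it---carrying it out rigorously is a comparable amount of work to the paper's $\delta$-system.

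Two smaller corrections. First, the projection step cannot be dismissed as active ``with exponentially small probability'': under Assumption~\ref{assumption:noise} only polynomial moments are available, and the paper instead bounds the projection correction via H\"older's inequality plus Markov's inequality with sixth moments, obtaining $\bigO((\alpha\tau_\alpha)^3)$---still amply sufficient, so your conclusion survives with the corrected (polynomial) rate. Second, replacing $g''(\theta^\ast,x_\infty)$ by $\Bar{g}''(\theta^\ast)$ against $\Delta^{\otimes2}$ is not automatic; in the paper this is the statement $\E[g''(\theta^\ast,x_\infty)\delta_2(x_\infty)]=\bigO(\alpha^2\tau_\alpha)$, which requires the full second-order conditional-moment analysis, so in your framework it needs its own decorrelation argument (the fluctuation of $g''$ correlated against the squared iterate) rather than an assertion.
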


We defer the detailed proof to Appendix~\ref{sec:bias-proof}. 
A few remarks are in order. First, we emphasize that~\eqref{eq:bias-char} is essentially an equality, indicating a non-zero bias of order $\alpha$ whenever $b\neq0$ (up to higher order terms). Notably, the Polyak-Ruppert averaging of the iterates cannot eliminate this bias. 
Note that the bias expansion in~\eqref{eq:bias-char} applies to both weakly converged projected and non-projected SA.
Our analysis shows that compared with the non-projected SA, the projection operator induces an extra bias term of the order $\bigO((\alpha^2\tau_\alpha^3))$, which is negligible relative to the main terms in in~\eqref{eq:bias-char}.

More importantly, Theorem~\ref{thm:bias} provides an explicit expression of the leading bias, which decomposes into three components: the Markovian part, the nonlinearity contribution, and a \emph{compound} term, which is unique in nonlinear Markovian SA. Specifically, 
$b_\textup{m}$ in \eqref{eq:bm} is associated with the Markovian multiplicative noise, where the matrix $\Padj-\Pi$ in the $h$ function determines the mixing time of the data sequence $(x_k)_{k\ge}$.
The term $b_\textup{n}$ in \eqref{eq:bn} is linked to nonlinearity, as reflected by the Hessian term  
$\Bar{g}''(\theta^\ast)=\E[g''(\theta^\ast,x_\infty)]$, 
which quantifies the nonlinearity of $g$ and is equal to zero in the case of a linear $g$.
Lastly, $b_\textup{c}$ in \eqref{eq:bc} is the \emph{compound} term, due to its dependence on both the Markov noise ($h$ function) and the nonlinearity measure 
$\bar{g}''$.
In particular, we note the following two special cases:
\begin{itemize}[leftmargin=*]
    \item  When $g$ is a linear function, 
    $\Bar{g}''(\theta^\ast)=0$. Hence, $b_\textup{n}=b_\textup{c}=0$, and $b_\textup{m}$ recovers the result in~\cite{huo23-td}.
    \item When $(x_k)_{k\geq0}$ is i.i.d.\ sampled from the stationary distribution $\pi$, we have $h(\theta^\ast,x)\equiv0$ $\forall x\in\cX$, for $P=\Padj=\Pi$. As such, $b_\textup{m}=b_\textup{c}=0$, recovering the result in~\cite{Dieuleveut20-bach-SGD}.
\end{itemize}
The presence of the compound term $b_\textup{c}$ suggests that as the SA structure becomes more nonlinear and the underlying Markov chain mixes more slowly, the impact on the bias is \emph{multiplicative} rather than simply additive, a surprising phenomenon not unveiled in previous studies.

\subsection{Algorithmic Implications}
\label{sec:algo-implications}

We examine the practical implications of our weak convergence and bias characterization results, particularly for Polyak-Ruppert (PR) tail averaging and Richardson-Romberg (RR) extrapolation. In this subsection, we focus on the dependence on the stepsize $\alpha$ and iteration index $k$, and make use of the big-O notation from Section~\ref{sec:notation}. Recall that $b$ is the bias vector defined in Theorem~\ref{thm:bias}.

PR averaging~\cite{Ruppert88-Avg,polyak90_average} is a
classical approach for reducing the variance and accelerating the
convergence of SA. Here we consider the tail-averaging variant of PR averaging, defined as $\bar{\theta}_{k_{0},k}:=\frac{1}{k-k_{0}}\sum_{t=k_{0}}^{k-1}\theta_{t},$ for $k\geq k_0$, with a user-specified burn-in period $k_{0}\ge0 $ (a common choice is $k_0 = k/2$).  
The following corollary, proved in Appendix~\ref{sec:rr-proof}, provides a non-asymptotic bound on the mean squared error (MSE) for the averaged iterates $\bar{\theta}_{k_{0},k}$. 

\begin{cor}[Tail Averaging]
\label{cor:ta-mse}
    Under the setting of Theorem~\ref{thm:bias}, the tail-averaged iterates satisfy the following bounds for all $k>k_0+2\tau_\alpha$ and $k_0\geq\tau_\alpha + \frac{1}{\alpha\mu}\log\big(\frac{1}{\alpha\tau_\alpha}\big)$,
    \begin{equation*}
        \E\Big[\|\bar\theta_{k_0,k}-\theta^\ast\|^2\Big]=\underbrace{\alpha^2 \|b\|^2 + \bigO\Big(\alpha\cdot(\alpha\tau)^\frac{3}{2}\Big)}_{T_1: \text{ asymptotic squared bias}}
        + \underbrace{\bigO\Big(\frac{\tau_\alpha}{k-k_0}\Big)}_{T_2: \text{ variance}}
        + \underbrace{\bigO\Big(\frac{(1-\alpha\mu)^{k_0/2}}{\alpha\left(k-k_0\right)^2}\Big)}_{T_3: \text{ optimization error}}.
    \end{equation*}
\end{cor}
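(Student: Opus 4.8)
The plan is to bound the mean-squared error by the standard bias--variance split
\[
\E\big[\|\bar\theta_{k_0,k}-\theta^\ast\|^2\big]=\big\|\E[\bar\theta_{k_0,k}]-\theta^\ast\big\|^2+\E\big[\|\bar\theta_{k_0,k}-\E[\bar\theta_{k_0,k}]\|^2\big],
\]
in which the cross term vanishes after centering, and then to match the squared bias to $T_1$ and the variance to $T_2$ together with the burn-in correction $T_3$.

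For the squared bias I would write $\E[\bar\theta_{k_0,k}]-\theta^\ast=\frac{1}{k-k_0}\sum_{t=k_0}^{k-1}(\E[\theta_t]-\theta^\ast)$ and decompose each summand as $(\E[\theta_\infty^{(\alpha)}]-\theta^\ast)+(\E[\theta_t]-\E[\theta_\infty^{(\alpha)}])$. Theorem~\ref{thm:bias} identifies the $t$-independent part as $\alpha b+\bigO((\alpha\tau_\alpha)^{3/2})$, while Corollary~\ref{cor:non-asymptotic} controls the transient part by $(1-\alpha\mu)^{t/2}s'$, whose tail average sums geometrically to $\bigO\big((1-\alpha\mu)^{k_0/2}/(\alpha(k-k_0))\big)$. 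Squaring produces the leading $\alpha^2\|b\|^2$, a cross term $\bigO(\alpha(\alpha\tau_\alpha)^{3/2})$ that completes $T_1$, and transient residuals; the burn-in hypothesis $k_0\ge\tau_\alpha+\frac{1}{\alpha\mu}\log\frac{1}{\alpha\tau_\alpha}$, which forces $(1-\alpha\mu)^{k_0/2}\le(\alpha\tau_\alpha)^{1/2}$, is exactly what renders these residuals lower order and folds them into $T_2$ and $T_3$.

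The variance is the crux and the step I expect to be the main obstacle, since the iterates $(\theta_t)$ are simultaneously correlated and non-stationary, so neither an i.i.d.\ nor a stationary-process variance bound applies off the shelf. I would expand $\E[\|\bar\theta_{k_0,k}-\E[\bar\theta_{k_0,k}]\|^2]=(k-k_0)^{-2}\sum_{s,t=k_0}^{k-1}\operatorname{Cov}(\theta_s,\theta_t)$ and decouple a stationary contribution from a transient one. For the stationary part, the lag-$j$ autocovariance $\Gamma_j$ of the limiting chain decays geometrically at rate $(1-\alpha\mu)^{j/2}$ by the $\Bar{W}_2$-contraction of Theorem~\ref{thm:weak-convergence-psa}, with $\|\Gamma_0\|=\var(\theta_\infty^{(\alpha)})=\bigO(\alpha\tau_\alpha)$ from Proposition~\ref{prop:2n-convergence} at $p=1$; hence $\sum_j\|\Gamma_j\|=\bigO(\alpha\tau_\alpha/(\alpha\mu))=\bigO(\tau_\alpha)$ and, after the $(k-k_0)^{-2}$ normalization, the leading variance is $\bigO(\tau_\alpha/(k-k_0))=T_2$, consistent with the asymptotic variance $\Sigma^{(\alpha)}$ of Corollary~\ref{cor:clt}. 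A clean way to isolate the remaining transient piece is to couple $(x_t,\theta_t)$ to a stationary copy $(\tilde x_t,\tilde\theta_t)\sim\Bar{\nu}_\alpha$ through the trajectory coupling underlying Theorem~\ref{thm:weak-convergence-psa}: the stationary average reproduces $T_1+T_2$, while the discrepancy $\E\|\bar\theta_{k_0,k}-\bar{\tilde\theta}_{k_0,k}\|^2\le(k-k_0)^{-1}\sum_{t\ge k_0}\E\|\theta_t-\tilde\theta_t\|^2$ contracts like $(1-\alpha\mu)^{k_0}$ and, together with its Cauchy--Schwarz cross term against the stationary average, supplies the $1/(k-k_0)^2$-shaped burn-in contributions gathered in the optimization error $T_3$. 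The delicate part throughout is quantifying how the geometric ergodicity in $\Bar{W}_2$ transfers to decay of the matrix-valued covariances $\operatorname{Cov}(\theta_s,\theta_t)$, and checking that the lower bound on $k_0$ suppresses every transient term below the target orders $\alpha^2\|b\|^2$ and $\tau_\alpha/(k-k_0)$.
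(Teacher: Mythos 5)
Your overall architecture is the same as the paper's: the paper likewise splits the MSE into squared asymptotic bias, variance, and optimization error (it centers at $\E[\theta_\infty^{(\alpha)}]$ rather than at $\E[\bar\theta_{k_0,k}]$, paying for this with two cross terms it bounds separately), and your treatment of the bias and of the transient means via Theorem~\ref{thm:bias} and Corollary~\ref{cor:non-asymptotic} is exactly what the paper does. The problem is the variance step, which you correctly identify as the crux but then resolve by assertion.

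Concretely, you claim the stationary lag-$j$ autocovariance satisfies $\|\Gamma_j\|\lesssim(1-\alpha\mu)^{j/2}\|\Gamma_0\|$ with $\|\Gamma_0\|=\bigO(\alpha\tau_\alpha)$ ``by the $\Bar{W}_2$-contraction of Theorem~\ref{thm:weak-convergence-psa}.'' This does not follow from the contraction alone. The natural route is to condition on $(x_t,\theta_t)$ and bound $\big\|\E[\theta_{t+j}\mid x_t,\theta_t]-\E[\theta_\infty]\big\|\le W_2\big(\law(\theta_{t+j}\mid x_t,\theta_t),\nu_\alpha\big)\le(1-\alpha\mu)^{j/2}\,s(\theta_t,\theta^\ast,\mu,L,R)$, but the constant here depends on the conditioned starting point and is only $\bigO(\|\theta_t-\theta^\ast\|+1)=\bigO(1)$ in expectation; Cauchy--Schwarz then gives $\|\operatorname{Cov}(\theta_t,\theta_{t+j})\|\lesssim(1-\alpha\mu)^{j/2}\sqrt{\E\|\theta_t-\E\theta_t\|^2}=\bigO\big(\sqrt{\alpha\tau_\alpha}\,(1-\alpha\mu)^{j/2}\big)$, i.e., prefactor $\sqrt{\alpha\tau_\alpha}$ rather than $\alpha\tau_\alpha$. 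Summing over lags then yields a variance term of order $\sqrt{\tau_\alpha/\alpha}\,/(k-k_0)$, which exceeds the target $\tau_\alpha/(k-k_0)$ by a factor $1/\sqrt{\alpha\tau_\alpha}$, so your $T_2$ is not recovered. The missing ingredient is precisely the paper's Claim~\ref{claim: TA} (inherited from Claim 4 of \cite{huo23-td}): for $t\geq\tau_\alpha+\frac{1}{\alpha\mu}\log\big(\frac{1}{\alpha\tau_\alpha}\big)$ and $l\geq t+2\tau_\alpha$, one has $\|\E[(\theta_t-\E\theta_\infty)(\theta_l-\E\theta_\infty)^\top]\|=\bigO\big((\alpha\tau_\alpha)(1-\alpha\mu)^{(l-t)/2}\big)$. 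Obtaining the prefactor $\alpha\tau_\alpha$ requires comparing the conditional chain against a second copy driven by the \emph{same} data whose own deviation is $\bigO(\sqrt{\alpha\tau_\alpha})$ --- so that \emph{both} Cauchy--Schwarz factors are $\bigO(\sqrt{\alpha\tau_\alpha})$ --- and additionally using the mixing of $(x_t)$ to decorrelate $\theta_t$ from the shared future data, which is why the bound only holds for gaps of at least $2\tau_\alpha$; the $\bigO(\tau_\alpha(k-k_0))$ near-diagonal pairs must be handled separately by Cauchy--Schwarz at $\bigO(\alpha\tau_\alpha)$ each, contributing $\bigO(\alpha\tau_\alpha^2/(k-k_0))\le\bigO(\tau_\alpha/(k-k_0))$. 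Without this two-sided argument your stationary-autocovariance sum cannot produce the claimed $T_2$, and this is the step where the genuinely Markovian-nonlinear difficulty lives.
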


Corollary~\ref{cor:ta-mse} shows that the MSE can be decomposed into three terms and elucidates how these terms depend on $\alpha,k$, and other problem parameters.
In particular, the term $T_1$ corresponds to the asymptotic squared bias $\|\E[\theta^{(\alpha)}_\infty-\theta^\ast]\|^2$, which is not affected by averaging. The term $T_2$ is associated with the variance $\var(\bar\theta_{k_0,k})$, which decays at rate $1/k$ due to averaging. Lastly, the term $T_3$ represents the optimization error $\|\E\bar\theta_{k_0,k}-\theta_\infty\|^2$, which decays geometrically in $k_0$ thanks to the use of a constant stepsize $\alpha$ and the tail-averaging procedure.

Note that averaging does not affect the bias of order $\alpha$. With the precise bias characterization in Theorem~\ref{thm:bias}, we can order-wise reduce the bias to $\bigO(\alpha^{1.5})$ by employing the RR extrapolation technique~\cite{bulirsch2002numerical_analysis}.
Let $\bar{\theta}_{k_{0},k}^{(\alpha)}$ and $\bar{\theta}_{k_{0},k}^{(2\alpha)}$
denote the tail-averaged iterates using two stepsizes $\alpha$
and $2\alpha$ with the same data $(x_{k})_{k\ge0}$. The RR
extrapolated iterates are defined as $\widetilde{\theta}_{k_{0},k}^{(\alpha)}=2\bar{\theta}_{k_{0},k}^{(\alpha)}-\bar{\theta}_{k_{0},k}^{(2\alpha)}.$

\begin{cor}[RR-Extrapolation]
    Under the setting of Theorem~\ref{thm:bias}, the RR-extrapolated iterates satisfy the following bounds for all $k>k_0+2\tau_\alpha$ and $k_0\geq\tau_\alpha + \frac{1}{\alpha\mu}\log\big(\frac{1}{\alpha\tau_\alpha}\big)$,
    \begin{equation*}
        \mathbb{E}\Big[\|\widetilde{\theta}_{k_0,k}-\theta^*\|^2\Big] =\bigO\Big((\alpha\tau_\alpha)^{3}\Big) + \mathcal{O}\Big(\frac{\tau_\alpha}{k-k_0}\Big) + \bigO\Big(\frac{(1-\alpha\mu)^{k_0/2}}{\alpha\left(k-k_0\right)^2} \Big).
    \end{equation*}
\end{cor}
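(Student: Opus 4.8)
The plan is to reduce the claim to the already-established bias expansion of Theorem~\ref{thm:bias} together with the variance and optimization-error bookkeeping underlying Corollary~\ref{cor:ta-mse}, so that the only genuinely new ingredient is the Richardson--Romberg cancellation. First I would check that the $2\alpha$-process also falls under Theorem~\ref{thm:bias}: since the $\epsilon$-mixing time is nonincreasing in $\epsilon$ we have $\tau_{2\alpha}\le\tau_\alpha$, whence $2\alpha\,\tau_{2\alpha}L^2\le 2\alpha\tau_\alpha L^2$, so the constraint $2\alpha\,\tau_{2\alpha}L^2<c_3\mu$ holds once $\alpha\tau_\alpha L^2<c_3\mu/2$, i.e.\ after shrinking the constant in the hypothesis by a factor of two. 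Consequently both $\E[\theta_\infty^{(\alpha)}]-\theta^\ast=\alpha b+\bigO((\alpha\tau_\alpha)^{3/2})$ and $\E[\theta_\infty^{(2\alpha)}]-\theta^\ast=2\alpha b+\bigO((\alpha\tau_\alpha)^{3/2})$ hold with the \emph{same} vector $b$; the $\alpha$-independence of $b$ asserted in Theorem~\ref{thm:bias} is exactly what makes this step work.

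Next I would split the error via the bias--variance identity $\E[\|\widetilde\theta_{k_0,k}-\theta^\ast\|^2]=\|\E[\widetilde\theta_{k_0,k}]-\theta^\ast\|^2+\tr\var(\widetilde\theta_{k_0,k})$. For the squared-bias term, linearity gives $\E[\widetilde\theta_{k_0,k}]=2\E[\bar\theta_{k_0,k}^{(\alpha)}]-\E[\bar\theta_{k_0,k}^{(2\alpha)}]$; writing each averaged mean as $\E[\theta_\infty^{(\cdot)}]$ plus an optimization error controlled through Corollary~\ref{cor:non-asymptotic} (summing the geometric tail $\sum_{t\ge k_0}(1-\alpha\mu)^{t/2}$ and dividing by $k-k_0$), the leading contributions combine as $2(\alpha b)-(2\alpha b)=0$. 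The $\alpha$-order bias thus cancels exactly, leaving a residual of order $\bigO((\alpha\tau_\alpha)^{3/2})$; squaring produces the advertised $\bigO((\alpha\tau_\alpha)^{3})$, while the surviving optimization contributions reproduce the $T_3$ term exactly as in the proof of Corollary~\ref{cor:ta-mse}.

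For the variance term I would make no independence assumption---the two runs share the data $(x_k)$ and are correlated---and instead use the triangle inequality $\tr\var(2u-v)\le\big(2\sqrt{\tr\var u}+\sqrt{\tr\var v}\big)^2$ with $u=\bar\theta_{k_0,k}^{(\alpha)}$ and $v=\bar\theta_{k_0,k}^{(2\alpha)}$. Each single-chain variance is $\bigO(\tau_\alpha/(k-k_0))$ by the variance analysis behind Corollary~\ref{cor:ta-mse} (again invoking $\tau_{2\alpha}\le\tau_\alpha$), so the combination is still $\bigO(\tau_\alpha/(k-k_0))$, matching $T_2$. Assembling the squared-bias, variance, and optimization-error pieces then yields the stated three-term bound.

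I expect the delicate point, rather than a deep obstacle, to be the uniformity of the hidden constants across the two stepsizes: one must confirm that the $s$-type constants hidden in Theorem~\ref{thm:bias} and Corollary~\ref{cor:non-asymptotic} are independent of $\alpha$, so that the $2\alpha$-residuals are genuinely $\bigO((\alpha\tau_\alpha)^{3/2})$ and not silently inflated to $\bigO((2\alpha\,\tau_{2\alpha})^{3/2})$ with a worse prefactor, and that every occurrence of $\tau_{2\alpha}$ is bounded by $\tau_\alpha$. Because Theorem~\ref{thm:bias} has already done the heavy analytic lifting, the corollary is in essence a bookkeeping consequence of the exact cancellation of the $\alpha b$ term under Richardson--Romberg extrapolation.
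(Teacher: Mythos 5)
Your high-level architecture matches the paper's: both exploit the $\alpha$-independence of $b$ in Theorem~\ref{thm:bias} to cancel the leading bias exactly ($2\alpha b-2\alpha b=0$), both reuse the tail-averaging analysis for the fluctuation terms, and both avoid any independence assumption between the two runs (the paper writes $\widetilde\theta_{k_0,k}-\theta^\ast=2u_1-u_2+v$ with $u_i:=\bar\theta^{(\cdot)}_{k_0,k}-\E[\theta_\infty^{(\cdot)}]$ and $v:=2\E[\theta_\infty^{(\alpha)}]-\E[\theta_\infty^{(2\alpha)}]-\theta^\ast$, and bounds $\E\|2u_1-u_2+v\|^2\le 3\big(4\E\|u_1\|^2+\E\|u_2\|^2+\|v\|^2\big)$, the same Cauchy--Schwarz device as your variance inequality). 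However, there is a concrete gap in your squared-bias bookkeeping. The first-moment bound for each run is $\E[\bar\theta^{(\alpha)}_{k_0,k}]-\theta^\ast=\alpha b+\bigO((\alpha\tau_\alpha)^{3/2})+\bigO\big(\tfrac{(1-\alpha\mu)^{k_0/2}}{\alpha(k-k_0)}\big)$, so after cancellation and squaring, the optimization contribution you are left with is of order $\tfrac{(1-\alpha\mu)^{k_0}}{\alpha^2(k-k_0)^2}$, \emph{not} the $T_3$ term $\tfrac{(1-\alpha\mu)^{k_0/2}}{\alpha(k-k_0)^2}$ that you claim it "reproduces". The ratio between the two is $(1-\alpha\mu)^{k_0/2}/\alpha$, and the hypothesis $k_0\geq\tau_\alpha+\tfrac{1}{\alpha\mu}\log\tfrac{1}{\alpha\tau_\alpha}$ only guarantees $(1-\alpha\mu)^{k_0/2}\leq\sqrt{\alpha\tau_\alpha}$, so this ratio can be as large as $\sqrt{\tau_\alpha/\alpha}$. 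This is not a constant-factor slackness: take $k_0$ at its minimum allowed value (so $(1-\alpha\mu)^{k_0}\asymp\alpha\tau_\alpha$) and $k-k_0\asymp\alpha^{-1/2}$; then your extra term is of order $\tau_\alpha$, while all three terms of the claimed bound tend to $0$ as $\alpha\to0$. As written, your argument therefore proves a strictly weaker statement than the corollary.

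The fix is small but necessary. Either apply Jensen before summing the geometric tail, $\big\|\tfrac{1}{k-k_0}\sum_{t=k_0}^{k-1}(\E[\theta_t]-\E[\theta_\infty])\big\|^2\leq\tfrac{1}{k-k_0}\sum_{t\geq k_0}\|\E[\theta_t]-\E[\theta_\infty]\|^2=\bigO\big(\tfrac{(1-\alpha\mu)^{k_0}}{\alpha(k-k_0)}\big)$, which is $\bigO\big(\tfrac{\tau_\alpha}{k-k_0}\big)$ because the constraint on $k_0$ gives $(1-\alpha\mu)^{k_0}\leq\alpha\tau_\alpha$; or do what the paper does and never square a first-moment bound: keep the optimization error inside the second moments $\E\|u_i\|^2$, whose bound $\bigO\big(\tfrac{(1-\alpha\mu)^{k_0/2}}{\alpha(k-k_0)^2}+\tfrac{\tau_\alpha}{k-k_0}\big)$ is obtained by summing the per-iterate second-moment convergence of Corollary~\ref{cor:non-asymptotic} (first power of $(1-\alpha\mu)^{t/2}$), not by squaring first moments. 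A minor secondary point: your claim that each single-chain variance is $\bigO(\tau_\alpha/(k-k_0))$ silently drops the transient contribution $\tfrac{(1-\alpha\mu)^{k_0/2}}{\alpha(k-k_0)^2}$ that appears in the paper's bound on $\E\|u_i\|^2$; this is harmless here because that term is permitted in the final statement, but it should be carried along rather than asserted away.
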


Backed by the CLT in Corollary~\ref{cor:clt}, the iterates of constant-stepsize SA can be used to construct confidence intervals of $\theta^*$. For i.i.d.\ data or linear SA, this approach has been explored in~\cite{li2017-constantine, Yu21-stan-SGD, XieZhang_SAInference_pku, huo2023effectiveness} along with an appropriate variance estimator~\cite{flegal-batchmeans,XieZhang_SAInference_pku}. 
In our Markovian nonlinear setting, where the iterates are biased, it is crucial to use RR extrapolation for bias reduction. Once the bias is accounted for, the power of using constant stepsizes reveals itself as it leads to rapid mixing and low correlation of the iterates. Together, they lead to efficient confidence interval estimation schemes using nonlinear Markovian SA; see the empirical results in~\cite{huo2023effectiveness} showing its efficacy. In contrast, the classical diminishing stepsize paradigm often suffers from high correlation~\cite{chen2020inference} and in turn inaccurate variance estimation, resulting in unsatisfactory coverage probability with finite data~\cite{huo2023effectiveness}.

\subsection{Implications for Learning GLM}

Generalized linear models (GLM) extend linear regression to the model $\E[Y|W]=\sigma(W^\top\theta^\ast)$, where $W$ is the covariate, $Y$ the response variable, and $\sigma$ is called the \emph{mean function}. For any monotone (and potentially nonlinear) $\sigma$, the powerful framework developed in \cite{wang2023robustly,kakade2011GLM,kalai2009isotron,diakonikolas2020approximation} allows one to formulate the estimation of $\theta^\ast$ as minimizing an appropriate \emph{convex} (surrogate) loss function. Applying SGD to this loss leads to a nonlinear SA update, 
to which our results are applicable. Below we discuss their applications in two concrete examples of GLMs.

\paragraph*{Logistic Regression}

Logistic regression uses a sigmoid mean function $\sigma(x) = \frac{1}{1+\exp(-x)}$.
Suppose the covariate $w_k$ is sequentially sampled from a uniformly ergodic Markov chain with a bounded state space $\mathcal{W} \subset \R^d$, and conditioned on $w_k$ the response $y_k$ is Bernoulli distributed with parameter $(1+\exp(-w_k^\top\theta^\ast))^{-1}$.
SGD applied to the $L_2$-regularized negative log-likelihood function takes the form of the SA update $\theta_{k+1}=\theta_k + \alpha g(\theta_k,x_k)$, where $x_k = (w_k,y_k) \in \mathcal{W} \times \{0,1\}$ and $g(\theta_k, x_k) =  - w_{k}\big(\sigma(-w_k^\top\theta_k)-y_{k}\big) - \lambda\theta_k.$
For simplicity, we do not consider $\xi$-perturbation, i.e., $\xi_{k+1}(\theta_k)\equiv0$. 
It is easy to verify that this $g$ is strongly monotone and sufficiently smooth with at most linear growth in $|\theta|$, hence satisfying Assumption~\ref{assumption:uniform-ergodic}--\ref{assumption:strong-convexity}.
Therefore, all the results in Sections~\ref{sec:weak-convergence}--\ref{sec:algo-implications} apply to logistic regression with constant stepsizes and Markovian data. 

\paragraph*{Smooth ReLU Regression}
The mean function $\sigma$ can be interpreted as playing a similar role as the activation function in neural networks. Widely adopted is ReLU activation $\sigma(x)=\max(0,x)$ as well as its various smooth approximations~\cite{biswas-smooth-relu,softplus-paper}. The problem of learning $\theta^*$ in this setting, sometimes called ReLU Regression, has been studied in the last decade and recently regained attention \cite{wang2023robustly,kakade2011GLM,kalai2009isotron,diakonikolas2020approximation}.
Unlike linear or logistic regression, the least squares and maximum likelihood formulation associated with such nonlinear mean functions $\sigma$ is non-convex. Nevertheless, the convex surrogate loss framework in \cite{wang2023robustly,diakonikolas2020approximation} still applies. As an example, we focus on the SoftPlus activation $\sigma(x)=\log(1+\exp(\iota x))/\iota$ with a  temperature parameter $\iota>0$ \cite{softplus-paper}.  With $L_2$-regularization the resulting SGD iteration is  $\theta_{k+1} =\theta_{k}-\alpha\big( w_{k}\big(\frac{1}{\iota}\log(1+\exp(\iota w_k^\top\theta_k))-y_{k}\big)+\lambda\theta_k\big),$ where the covariate-response pair $(\theta_k, x_k)$ is as before. This problem can again be cast as nonlinear SA with a strongly monotone and smooth $g$, satisfying  Assumptions~\ref{assumption:uniform-ergodic}--\ref{assumption:strong-convexity}. All results in Sections~\ref{sec:weak-convergence}--\ref{sec:algo-implications} apply.

\section{Related Work}
\label{sec:lit-review}

\textbf{General SA and SGD.}
SA and SGD can be traced back to the seminal work of \cite{Robbins51-Monro-SA}. Classical work assumes a diminishing stepsize sequence, and has shown almost sure asymptotic convergence to $\theta^\ast$ \cite{Robbins51-Monro-SA, Blum54-SA}. 
Subsequent works propose the iterate averaging technique, now known as Polyak-Ruppert (PR) averaging, to reduce variance and accelerate convergence~\cite{Ruppert88-Avg, polyak90_average}, and also establish a Central Limit Theorem for the asymptotic normality of the averaged iterates~\cite{Polyak92-Avg}.
The asymptotic convergence theory of SA and SGD is well developed and extensively addressed in many exemplary textbooks, see~\cite{kushner2003-yin-sa-book, Benveniste12-sa-book, WrightRecht2022_OptBook}. 
There are also recent works studying the non-asymptotic convergence with diminishing stepsizes \cite{chen21-finite-td, Chandak22-QLearn}. The recent work \cite{chen2023concentration} establishes the high probability bound on the estimation error of contractive SA with diminishing stepsize.

\textbf{SA and SGD with Constant Stepsizes.}
There has been an increasing interest in studying SA with constant stepsize. 
Many works in this line provide non-asymptotic upper bounds on mean squared error (MSE) $\E[\|\theta_t-\theta^\ast\|^2]$.
Works in \cite{Lakshminarayanan18-LSA-Constant-iid, Mou20-LSA-iid, durmus2021-LSA} study linear SA (LSA) under i.i.d.\ data. Recent works extend the analysis of the MSE to LSA with Markovian data, such as \cite{srikant-ying19-finite-LSA, Mou21-optimal-linearSA, durmus22-LSA}. There are also works providing upper bounds of MSE for general contractive SA with Markovian noise~\cite{chen-nonlinear-sa, chen21-finite-td}. 

In addition to obtaining non-asymptotic guarantees, there are also works focusing on understanding the asymptotic behavior of SA iterates.
Recent works have shown that when using constant stepsize, one loses the almost sure convergence guarantee in the diminishing stepsize sequence regime, and at best can achieve distributional convergence, as demonstrated in~\cite{Dieuleveut20-bach-SGD,durmus2021-LSA,  Yu21-stan-SGD,chen21-siva_asymptotic, XieZhang_SAInference_pku, huo23-td, zhang2024constant}. The presence of asymptotic bias is also a recurring theme in recent literature, with precise characterization given in~\cite{Dieuleveut20-bach-SGD} for strongly-convex SGD with i.i.d.\ data and in~\cite{huo23-td} for LSA with Markovian data.
Works in~\cite{Mou20-LSA-iid, Yu21-stan-SGD, XieZhang_SAInference_pku, huo2023effectiveness, zhang2024constant} also establish Central Limit Theorems for averaged SA iterates with constant stepsizes.

\section{Conclusion}
\label{sec:conclusion}

We provide the first weak convergence and steady-state analysis for constant-stepsize SA with both nonlinear update and Markovian data. Our analysis elucidates the compound effect of nonlinearity and memory, which leads to new analytical challenges and behaviors.
A limitation of our results is the use of a projection step or the noise minorization assumption. Whether they can be removed is worth investigating.
Other future directions include refining the dimension dependence in our results, as well as a theoretical investigation of statistical inference.

\section*{Acknowledgments}

Y.\ Chen is partially supported in part by NSF grant CCF-2233152.
Y.\ Zhang and Q.\ Xie are supported in part by NSF grants CNS-1955997 and ECCS-2339794.

\bibliographystyle{alpha}
\bibliography{cite}


\clearpage
\appendix

\section{Additional Notations}\label{sec:add_notations}

\paragraph*{General Probability}
We write $z_1\indep z_2\mid z_3$ if random variables $z_1$ and $z_2$ are conditionally independent given $z_3$. 
Recall that 
we define the metric $\Bar d\big((x,\theta),(x',\theta')\big):=\sqrt{\indic\{x\neq x'\}+\|\theta-\theta'\|^{2}}$ for the space $\cX\times\R^d$.
Thus, for $\Bar{\mu}$ and $\Bar{\nu}$ in $\cP_{2}(\cX\times\R^{d}),$ the Wasserstein-2 distance w.r.t.\ $\Bar d$ is computed as 
\begin{equation*}
\Bar W_{2}(\Bar{\mu},\Bar{\nu})  =\inf\Big\{\big(\E[\indic\{x\neq x'\}+\|\theta-\theta'\|^{2}]\big)^\frac{1}{2}:\law\big((x,\theta)\big)=\Bar{\mu},\law\big((x',\theta')\big)=\Bar{\nu}\Big\}.
\end{equation*}

\paragraph*{General State Space Markov Chains}
Throughout the paper, we assume that $\cX$ is a Borel space. Let $P$ denote the transition kernel. We call $\pi$ the stationary distribution of $P$ if it satisfies $\int_\cX\pi(\dd x)P(x, B)=\pi(B)$, for $B\in\cB(\cX)$. 
Define the $\pi$-weighted inner product $\dotp{f,g}_{\ltwopi}=\int_\cX\pi(\dd x)f^\top(x)g(x)$ and the induced norm $\|f\|_{\ltwopi}=(\dotp{f,f}_{\ltwopi})^{1/2}$. Let $\ltwopi=\{f:\|f\|_{\ltwopi}<\infty\}$ denote the corresponding Hilbert space of $\R^d$-valued, square-integrable and measurable functions on $\cX$. For an operator $T:\ltwopi\to\ltwopi$, its operator norm is defined as $\|T\|_{\ltwopi}=\sup_{\|f\|_{\ltwopi}=1}\|Tf\|_{\ltwopi}$. The transition kernel is a bounded linear operator on $\ltwopi$, in particular with norm $\|P\|_{\ltwopi=1}$. Also, we define the kernel/operator $\Pi=1\otimes\pi$ by $\Pi(x,\cdot)=\pi$.

Throughout the paper, we assume that $\cX$ is a Borel space. Let $P$ denote the transition kernel. We call $\pi$ the stationary distribution of $P$ if it satisfies $\int_\cX\pi(\dd x)P(x, B)=\pi(B)$, for $B\in\cB(\cX)$. 
There exists a kernel $\Padj$ as a regular conditional probability that satisfies $\int_A\pi(\dd s)P(x,B)=\int_B\pi(\dd y)\Padj(y,A)$, for $A, B\in\cB(\cX)$ \cite[Chapter 21.4, Theorem 19]{Folland1999}, and $\Padj$ defines the probability law for the time-reversed chain of $(x_k)_{k\geq0}.$

\section{Proof of Pilot Results (Proposition~\ref{prop:2n-convergence})}
\label{sec:proof-pilot}

In this section, we prove the pilot result, namely Proposition~\ref{prop:2n-convergence}. We prove the desired moments for $\beta=\infty$, i.e., without any projection. It is easy to see that when the projection radius $\beta\in[2\|\theta^\ast\|,\infty]$, 
\begin{equation*}
    \E[\|\theta_{t+1}-\theta^\ast\|^{2p}]\leq \E[\|\theta_{t+1/2}-\theta^\ast\|^{2p}],
\end{equation*}
where $\theta_{t+1/2}$ denotes the iterate before projection. 
The term on the right hand side can be further bounded by the moment bounds for iteration without projection. Therefore, it suffices for us to prove the respective moment bounds without any projection.

Given Assumption~\ref{assumption:noise} hold for $2p$-th moment, with $p\geq 1$, we prove the moment bound in Proposition~\ref{prop:2n-convergence} for $n$ with $1\leq n\leq p$ by induction.

\subsection{Base Case}
\label{sec:proof-mse-pilot}
In this section, we prove the base case of ~Proposition~\ref{prop:2n-convergence}, i.e., with $n=1$. The base case gives the desired mean squared error (MSE) convergence bound, which will subsequently be used in the proof of weak convergence.

We start by noting the following decomposition,
\begin{align*}
    &\E[\|\theta_{k+1}-\theta^\ast\|^2]-\E[\|\theta_k-\theta^\ast\|^2]\\
    &=2\alpha\E[\dotp{\theta_k-\theta^\ast, g(\theta_k,x_k)}]+\alpha^2\E[\|g(\theta_k,x_k)\|^2]+\alpha^2\E[\|\xi_{k+1}(\theta_k)\|^2]\\
    &=2\alpha\E[\dotp{\theta_k-\theta^\ast, g(\theta_k, x_k)-\Bar{g}(\theta_k)}]+2\alpha\E[\dotp{\theta_k-\theta^\ast,\Bar{g}(\theta_k)}]\\
    &+\alpha^2\E[\|g(\theta_k,x_k)\|^2]+\alpha^2\E[\|\xi_{k+1}(\theta_k)\|^2].
\end{align*}
It is easy to see that under Assumption~\ref{assumption:strong-convexity}, we have
\begin{equation}
    \dotp{\theta_k-\theta^\ast,\Bar{g}(\theta_k)}=\dotp{\theta_k-\theta^\ast,\Bar{g}(\theta_k)-\Bar{g}(\theta^\ast)}\leq -\mu\|\theta_k-\theta^\ast\|^2\label{eq:strongly-convex-ineq}.
\end{equation}
Additionally, under Assumption~\ref{assumption:linear-growth} and~\ref{assumption:noise}, we have the following upper bound
\begin{align}
    &\alpha^2\Big(\E[\|g(\theta_k,x_k)\|^2]+\E[\|\xi_{k+1}(\theta_k)\|^2]\Big)\nonumber\\
    &\leq \alpha^2\Big(L_1^2\E[(\|\theta_k-\theta^\ast\|+1)^2] + L_2^2\E[(\|\theta_k-\theta^\ast\|+1)^2]\Big)\nonumber\\
    &\leq 2\alpha^2L^2\Big(\E[\|\theta_k-\theta^\ast\|^2]+1\Big)\label{eq:noise-upper-bound}.
\end{align}
Therefore, the key to analyze the remaining inner product $ \dotp{\theta_k-\theta^\ast,g(\theta_k,x_k)-\Bar{g}(\theta_k)}$.

Consider the following decomposition
\begin{align}
    \dotp{\theta_k-\theta^\ast,g(\theta_k,x_k)-\Bar{g}(\theta_k)}&=\dotp{\theta_k-\theta_{k-\tau}, g(\theta_k,x_k)-\Bar{g}(\theta_k)}\label{eq:terms-1}\\
    &+\dotp{\theta_{k-\tau}-\theta^\ast, g(\theta_{k-\tau}, x_k)-\Bar{g}(\theta_{k-\tau})}\label{eq:terms-2}\\
    &+\dotp{\theta_{k-\tau}-\theta^\ast, g(\theta_k,x_k)-g(\theta_{k-\tau},x_k)}\label{eq:terms-3}\\
    &+\dotp{\theta_{k-\tau}-\theta^\ast,\Bar{g}(\theta_k)-\Bar{g}(\theta_{k-\tau})}.\label{eq:terms-4}
\end{align}
Hence, we need some upper bound on $\|\theta_k-\theta_{k-\tau}\|$.

We next note the following technical Lemma, which is adapted from~\cite{srikant-ying19-finite-LSA, chen-nonlinear-sa} for the updated unbounded i.i.d.\ noise assumption in Assumption~\ref{assumption:noise}. The proof of the technical Lemma is delayed to Section~\ref{sec:tech-lem-proof}.
\begin{lem}
\label{lem:mse-tech}
    For $16\alpha\tau\leq \mu/(4L^2)$, we have
\begin{align}
    \E[\|\theta_k-\theta_{k-\tau}\||\cF_{k-\tau}]&\leq  2\alpha\tau L\|\theta_{k-\tau}-\theta^\ast\|+2\alpha\tau L\label{eq:tech-1}\\
    \E[\|\theta_k-\theta_{k-\tau}\||\cF_{k-\tau}]&\leq  4\alpha\tau L\E[\|\theta_k-\theta^\ast\||\cF_{k-\tau}]+4\alpha\tau L\label{eq:tech-2}\\
    \E[\|\theta_k-\theta_{k-\tau}\|^2|\cF_{k-\tau}]&\leq 8\alpha^2\tau^2 L^2\|\theta_{k-\tau}-\theta^\ast\|^2+8\alpha^2\tau^2L^2\label{eq:tech-3-prime}\\
    \E[\|\theta_k-\theta_{k-\tau}\|^2|\cF_{k-\tau}]&\leq 32\alpha^2\tau^2 L^2\E[\|\theta_k-\theta^\ast\|^2|\cF_{k-\tau}]+32\alpha^2\tau^2L^2.\label{eq:tech-3}
\end{align}
\end{lem}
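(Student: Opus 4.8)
The plan is to prove Lemma~\ref{lem:mse-tech} by controlling the accumulated change $\theta_k-\theta_{k-\tau}$ over a window of $\tau$ steps, treating the four inequalities in pairs. The two bounds \eqref{eq:tech-1} and \eqref{eq:tech-3-prime} are anchored at the ``past'' iterate $\theta_{k-\tau}$, while \eqref{eq:tech-2} and \eqref{eq:tech-3} are anchored at the ``present'' iterate $\theta_k$; the first type follows directly from a telescoping estimate, and the second type follows from the first by a short self-bounding argument. First I would write the one-step increment without projection as
\begin{equation*}
\theta_{t+1}-\theta_t=\alpha\big(g(\theta_t,x_t)+\xi_{t+1}(\theta_t)\big),
\end{equation*}
so that by the triangle inequality and telescoping over $t=k-\tau,\dots,k-1$,
\begin{equation*}
\|\theta_k-\theta_{k-\tau}\|\leq \alpha\sum_{t=k-\tau}^{k-1}\big(\|g(\theta_t,x_t)\|+\|\xi_{t+1}(\theta_t)\|\big).
\end{equation*}
Taking $\E[\cdot\mid\cF_{k-\tau}]$ and invoking the linear-growth bound $\|g(\theta,x)\|\leq L_1(\|\theta-\theta^\ast\|+1)$ from Assumption~\ref{assumption:linear-growth} together with the moment bound $\E^{1/2}[\|\xi_{t+1}(\theta_t)\|^2\mid\cF_t]\leq L_{2}(\|\theta_t-\theta^\ast\|+1)$ from Assumption~\ref{assumption:noise}, each summand is controlled by $L\,\E[\|\theta_t-\theta^\ast\|\mid\cF_{k-\tau}]+L$ (using $L=L_1+L_2$).

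The key intermediate step is then to propagate the bound back to $\theta_{k-\tau}$. Using the same one-step increment, one shows a crude recursion $\E[\|\theta_{t+1}-\theta^\ast\|\mid\cF_{k-\tau}]\leq(1+2\alpha L)\,\E[\|\theta_t-\theta^\ast\|\mid\cF_{k-\tau}]+2\alpha L$, which iterated over the window of length at most $\tau$ gives, via $(1+2\alpha L)^\tau\leq e^{2\alpha\tau L}\leq 2$ under the stepsize restriction $16\alpha\tau\leq\mu/(4L^2)$ (so that $2\alpha\tau L$ is small, using $\mu\leq 1$ and $L\geq 1$),
\begin{equation*}
\E[\|\theta_t-\theta^\ast\|\mid\cF_{k-\tau}]\leq 2\|\theta_{k-\tau}-\theta^\ast\|+1\quad\text{for all }k-\tau\leq t\leq k.
\end{equation*}
Substituting this into the telescoped sum (which has $\tau$ terms) yields \eqref{eq:tech-1} after absorbing constants; the mean-square version \eqref{eq:tech-3-prime} follows the same pattern but I would square the telescoped sum first and apply Cauchy--Schwarz (or the power-mean inequality) to exchange $\big(\sum_{t}a_t\big)^2\leq\tau\sum_t a_t^2$, picking up the extra factor $\tau$ that produces the $\alpha^2\tau^2$ scaling, and then use the second-moment growth $\E[\|\theta_t-\theta^\ast\|^2\mid\cF_{k-\tau}]\leq 2\|\theta_{k-\tau}-\theta^\ast\|^2+c$ obtained analogously.

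Finally, to obtain the ``present-anchored'' bounds \eqref{eq:tech-2} and \eqref{eq:tech-3}, I would reverse the roles: write $\|\theta_{k-\tau}-\theta^\ast\|\leq\|\theta_k-\theta^\ast\|+\|\theta_k-\theta_{k-\tau}\|$, substitute into \eqref{eq:tech-1} or \eqref{eq:tech-3-prime}, and solve for $\E[\|\theta_k-\theta_{k-\tau}\|\mid\cF_{k-\tau}]$. The stepsize condition $16\alpha\tau\leq\mu/(4L^2)$ guarantees the self-referential coefficient $2\alpha\tau L$ (respectively $8\alpha^2\tau^2L^2$) is strictly less than $1/2$, so the term can be absorbed to the left-hand side at the cost of doubling the constant, explaining the jump from the factor $2$ in \eqref{eq:tech-1} to $4$ in \eqref{eq:tech-2} and from $8$ in \eqref{eq:tech-3-prime} to $32$ in \eqref{eq:tech-3}. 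I expect the main obstacle to be bookkeeping the constants carefully enough that the claimed numerical factors come out exactly, and in particular verifying that the stepsize threshold is tight enough to make both the iteration factor $(1+2\alpha L)^\tau$ bounded and the absorption step valid simultaneously; the conditional-expectation manipulations with the noise term $\xi_{t+1}(\theta_t)$ require mild care since $\theta_t$ is $\cF_t$-measurable but not $\cF_{k-\tau}$-measurable, so I would condition in stages (tower property) rather than all at once.
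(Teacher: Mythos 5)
Your proposal is correct and follows essentially the same route as the paper's proof: telescoping the one-step increments with the linear-growth bounds on $g$ and $\xi$, iterating the crude recursion $\E[\|\theta_{t+1}-\theta^\ast\|\mid\cF_{k-\tau}]\leq(1+c\alpha L)\E[\|\theta_t-\theta^\ast\|\mid\cF_{k-\tau}]+c\alpha L$ with $(1+c\alpha L)^\tau\leq 2$ to obtain the past-anchored bounds \eqref{eq:tech-1} and \eqref{eq:tech-3-prime} (the latter via $(\sum_t a_t)^2\leq\tau\sum_t a_t^2$), and then deriving \eqref{eq:tech-2} and \eqref{eq:tech-3} by the same triangle-inequality substitution and absorption argument, with the tower property handling the conditioning on $\xi_{t+1}(\theta_t)$ exactly as the paper does. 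The constants and the role of the stepsize restriction match the paper's bookkeeping, so no gap remains.
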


Given \eqref{eq:tech-3}, we additionally note that
\begin{align}
    \|\theta_{k-\tau}-\theta^\ast\|^2+1&=\E[\|\theta_{k-\tau}-\theta^\ast\|^2|\cF_{k-\tau}]+1\nonumber\\
    &\leq 2\Big(\E[\|\theta_k-\theta_{k-\tau}\|^2|\cF_{k-\tau}]+\E[\|\theta_k-\theta^\ast\|^2|\cF_{k-\tau}]\Big)+1\nonumber\\
    &\leq 2\Big(32\alpha^2\tau^2 L^2(\E[\|\theta_k-\theta^\ast\|^2|\cF_{k-\tau}]+1)+\E[\|\theta_k-\theta^\ast\|^2|\cF_{k-\tau}]\Big)+1\nonumber\\
    &\leq 4(\E[\|\theta_k-\theta^\ast\|^2|\cF_{k-\tau}]+1)\label{eq:tech-4}.
\end{align}

We next use the above four technical inequalities to analyze the four terms in \eqref{eq:terms-1}--\eqref{eq:terms-2}.

To bound \eqref{eq:terms-1}, we first note that
\begin{align*}
    &\|\E[\dotp{\theta_k-\theta_{k-\tau}, g(\theta_k,x_k)-\Bar{g}(\theta_k)}|\cF_{k-\tau}]\|\\
    &\leq \E[\|\theta_k-\theta_{k-\tau}\|\cdot2L(\|\theta_k-\theta^\ast\|+1)|\cF_{k-\tau}]\\
    &\overset{\text{(i)}}{\leq} 2L\sqrt{\E[\|\theta_k-\theta_{k-\tau}\|^2|\cF_{k-\tau}]}\sqrt{\E[(\|\theta_k-\theta^\ast\|+1)^2|\cF_{k-\tau}]}\\
    &\overset{\text{(ii)}}{\leq} 2L\sqrt{32\alpha^2\tau^2L^2(\E[\|\theta_k-\theta^\ast\|^2|\cF_{k-\tau}]+1)]}\sqrt{2(\E[\|\theta_k-\theta^\ast\|^2|\cF_{k-\tau}]+1)}\\
    &\leq 16\alpha\tau L^2(\E[\|\theta_k-\theta^\ast\|^2|\cF_{k-\tau}]+1),
\end{align*}
where (i) holds for the Cauchy-Schwarz inequality and (ii) holds for \eqref{eq:tech-3}.

To bound \eqref{eq:terms-2}, we next note that
\begin{align*}
    &\|\E[\dotp{\theta_{k-\tau}-\theta^\ast, g(\theta_{k-\tau}, x_k)-\Bar{g}(\theta_{k-\tau})}|\cF_{k-\tau}]\|\\
    &= \|\dotp{\theta_{k-\tau}-\theta^\ast,\E[ g(\theta_{k-\tau}, x_k)-\Bar{g}(\theta_{k-\tau})|\cF_{k-\tau}]}\|\\
    &\leq \|\theta_{k-\tau}-\theta^\ast\|\|\E[ g(\theta_{k-\tau}, x_k)-\Bar{g}(\theta_{k-\tau})|\cF_{k-\tau}]\|\\
    &\overset{\text{(iii)}}{\leq} \|\theta_{k-\tau}-\theta^\ast\|\cdot\Big(\alpha L(\|\theta_{k-\tau}-\theta^\ast\|+1)\Big)\\
    &\leq 2\alpha L (\|\theta_{k-\tau}-\theta^\ast\|^2+1)\\
    &\overset{\text{(iv)}}{\leq} 8\alpha L(\E[\|\theta_k-\theta^\ast\|^2|\cF_{k-\tau}]+1)\\
    &\leq 8\alpha\tau L^2(\E[\|\theta_k-\theta^\ast\|^2|\cF_{k-\tau}]+1),
\end{align*}
where (iii) holds due to the mixing property of Markov chain $(x_k)_{k\geq0}$ and (iv) holds for \eqref{eq:tech-4}.

To bound \eqref{eq:terms-3}, we have
\begin{align*}
    &\|\E[\dotp{\theta_{k-\tau}-\theta^\ast, g(\theta_k,x_k)-g(\theta_{k-\tau},x_k)}|\cF_{k-\tau}]\|\\
    &= \|\dotp{\theta_{k-\tau}-\theta^\ast,\E[ g(\theta_k,x_k)-g(\theta_{k-\tau},x_k)|\cF_{k-\tau}]}\|\\
    &\leq L\|\theta_{k-\tau}-\theta^\ast\|\cdot\E[\|\theta_k-\theta_{k-\tau}\||\cF_{k-\tau}]\\
    &\overset{\text{(v)}}{\leq} L\|\theta_{k-\tau}-\theta^\ast\|\cdot2\alpha\tau L(\|\theta_{k-\tau}-\theta^\ast\|+1)\\
    &\leq 4\alpha\tau L^2(\|\theta_{k-\tau}-\theta^\ast\|^2+1)\\
    &\overset{\text{(vi)}}{\leq} 16\alpha\tau L^2 (\E[\|\theta_k-\theta^\ast\|^2|\cF_{k-\tau}]+1),
\end{align*}
where (v) holds for \eqref{eq:tech-1} and (vi) holds for \eqref{eq:tech-4}.

Lastly, to bound \eqref{eq:terms-4}, we apply the similar technique used in bounding the third term in \eqref{eq:terms-3} and obtain a similar result
\begin{equation*}
    \|\E[\dotp{\theta_{k-\tau}-\theta^\ast,\Bar{g}(\theta_k)-\Bar{g}(\theta_{k-\tau})}|\cF_{k-\tau}]\|\leq 16\alpha\tau L^2 (\E[\|\theta_k-\theta^\ast\|^2|\cF_{k-\tau}]+1).
\end{equation*}

Combining all analyses above, we have
\begin{align}
    &\|2\alpha\E[\dotp{\theta_k-\theta^\ast}, g(\theta_k,x_k)-\Bar{g}(\theta_k)|\cF_{k-\tau}]\|\nonumber\\
    &\leq 2\alpha(16\alpha\tau L^2+8\alpha\tau L^2+32\alpha\tau L^2)(\E[\|\theta_k-\theta^\ast\|^2|\cF_{k-\tau}]+1)\nonumber\\
    &\leq 112\alpha^2\tau L^2(\E[\|\theta_k-\theta^\ast\|^2|\cF_{k-\tau}]+1).\label{eq:cross-term-upper}
\end{align}

Hence, making use of \eqref{eq:strongly-convex-ineq}, \eqref{eq:noise-upper-bound}, and \eqref{eq:cross-term-upper}, we obtain the following
\begin{align*}
    &\E[\|\theta_{k+1}-\theta^\ast\|^2|\cF_{k-\tau}]-\E[\|\theta_k-\theta^\ast\|^2|\cF_{k-\tau}]\\
    &\leq  -2\alpha\mu\E[\|\theta-\theta^\ast\|^2|\cF_{k-\tau}]+112\alpha^2\tau L^2(\E[\|\theta_k-\theta^\ast\|^2|\cF_{k-\tau}]+1)\\&+2\alpha^2L^2(\E[\|\theta_k-\theta^\ast\|^2|\cF_{k-\tau}]+1) \\
    &\leq -2\alpha\mu\E[\|\theta-\theta^\ast\|^2|\cF_{k-\tau}]+114\alpha^2\tau L^2\E[\|\theta_k-\theta^\ast\|^2|\cF_{k-\tau}] + 114\alpha^2\tau L^2\\
    &=-2\alpha(\mu-57\alpha\tau L^2)\E[\|\theta_k-\theta^\ast\|^2|\cF_{k-\tau}] + 114\alpha^2\tau L^2.
\end{align*}

Therefore, when we have $\alpha$ satisfying the constraint, i.e., $\alpha\tau L^2<c_{2,1}\mu,$
we obtain
\begin{equation*}
    \E[\|\theta_{k+1}-\theta^\ast\|^2|\cF_{k-\tau}]\leq (1-\alpha \mu)\E[\|\theta_k-\theta^\ast\|^2|\cF_{k-\tau}] + 114\alpha^2\tau L^2.
\end{equation*}

Recursively, we get
\begin{align*}
    \E[\|\theta_k-\theta^\ast\|^2]&\leq (1-\alpha\mu)^{k-\tau}\E[\|\theta_\tau-\theta^\ast\|^2] + \frac{114\alpha\tau L^2}{\mu}\\
    &\leq 2(1-\alpha\mu)^{k-\tau}\Big(\E[\|\theta_0-\theta^\ast\|^2] + \E[\|\theta_\tau-\theta_0\|^2]\Big)+ \frac{114\alpha\tau L^2}{\mu}\\
    &\leq 2(1-\alpha\mu)^{k-\tau}\Big(\E[\|\theta_0-\theta^\ast\|^2] + 8\alpha^2\tau^2L^2\Big(\E[\|\theta_0-\theta^\ast\|^2]+1\Big)+ \frac{114\alpha\tau L^2}{\mu}\\
    &\leq 4(1-\alpha\mu)^{k-\tau}\E[\|\theta_0-\theta^\ast\|^2] + \frac{122\alpha\tau L^2}{\mu}.
\end{align*}

Lastly, we note that
\begin{equation}
\label{eq:bound-fix-tau}
    \frac{1}{(1-\alpha\mu)^\tau}\overset{\text{(i)}}{\leq} \frac{1}{1-\alpha\tau\mu}\overset{\text{(ii)}}{\leq}\frac{1}{1-\alpha\tau L}\overset{\text{(iii)}}{\leq}2,
\end{equation}
where (i) holds by the Bernoulli inequality, that $(1+x)^r\geq 1+rx$ for $x\geq-1$ and $r\geq 1$; (ii) holds for $\mu\leq L$; (iii) holds for $\alpha\tau L<\mu/(114L)<\frac{1}{2}$.

Hence, for $k\geq \tau$, we have
\begin{equation*}
    \E[\|\theta_k-\theta^\ast\|^2]\leq c_{2,1}(1-\alpha\mu)^k\|\theta_0-\theta^\ast\|^2+ c_{2,2}\alpha\tau_\alpha \frac{L^2}{\mu},
\end{equation*}
for $c_{2,1}$ and $c_{2,2}$ some universal constants.
As such, we have completed the proof of base case for Proposition~\ref{prop:2n-convergence}.

\subsubsection{Proof of Lemma~\ref{lem:mse-tech}}
\label{sec:tech-lem-proof}
In this section, we provide the proofs of the four technical inequalities in Lemma~\ref{lem:mse-tech}. 

\paragraph*{Proof of \texorpdfstring{\eqref{eq:tech-1}}{(2.12)}.}
\begin{equation*}
    \E[\|\theta_k-\theta_{k-\tau}\||\cF_{k-\tau}]\leq  2\alpha\tau L\|\theta_{k-\tau}-\theta^\ast\|+2\alpha\tau L.
\end{equation*}
\begin{proof}
    Note that
\begin{equation*}
    \|\theta_k-\theta_{k-\tau}\|\leq\sum_{t=k-\tau}^{k-1}\|\theta_{t+1}-\theta_t\|,
\end{equation*}
so we start with analyzing $\|\theta_{t+1}-\theta_t\|$.
\begin{align*}
    \|\theta_{t+1}-\theta^\ast\|-\|\theta_t-\theta^\ast\|&\leq
    \|\theta_{t+1}-\theta_t\|
    =\alpha\|g(\theta_t,x_t)+\xi_{t+1}(\theta_t)\|\\
    &\leq\alpha\|g(\theta_t,x_t)\|+\alpha\|\xi_{t+1}(\theta_t)\|
    \leq \alpha L_1(\|\theta_t-\theta^\ast\|+1)+\alpha\|\xi_{t+1}(\theta_t)\|\\
    \|\theta_{t+1}-\theta^\ast\|&\leq (1+\alpha L_1)\|\theta_t-\theta^\ast\| + \alpha L_1 +\alpha\|\xi_{t+1}(\theta_t)\|.
\end{align*}

Recall that we assume
\begin{equation*}
    \E^{1/2}[\|\xi_{t+1}(\theta_t)\|^2|\cF_t]\leq L_2(\|\theta_t\|+1),
\end{equation*}
then we have for $k-\tau\leq t\leq k$,
\begin{align*}
    \E[\|\xi_{t+1}(\theta_t)\||\cF_{k-\tau}]&=\E[\E[\|\xi_{t+1}(\theta_k)|\cF_t]|\cF_{k-\tau}]\leq \E[L_2(\|\theta_k\|+1)|\cF_{k-\tau}]\\
    \E[\|\theta_{t+1}-\theta_t\||\cF_{k-\tau}]&\leq \alpha L(\E[\|\theta_t-\theta^\ast\||\cF_{k-\tau}]+1)\\
    \E[\|\theta_{k+1}-\theta^\ast\||\cF_{k-\tau}]&\leq (1+\alpha L)\E[\|\theta_k-\theta^\ast\||\cF_{k-\tau}]+\alpha L.
\end{align*}

Hence, for $0\leq n\leq \tau$,
\begin{align*}
    \E[\|\theta_{k-\tau+n}-\theta^\ast\||\cF_{k-\tau}]&\leq (1+\alpha L)^n\E[\|\theta_{k-\tau}-\theta^\ast\||\cF_{k-\tau}]+\alpha L\sum_{l=0}^{n-1}(1+\alpha L)^l\\
    &=(1+\alpha L)^n\|\theta_{k-\tau}-\theta^\ast\|+((1+\alpha L)^n-1).
\end{align*}

We next note that
\begin{equation*}
    (1+x)^y=e^{y\log(1+x)}\leq e^{xy}\leq 1 + 2xy, xy\in [0,1/2].
\end{equation*}
Hence, at this stage, if we require $\alpha\tau L<\mu/(4L)<1/4$, we have the following upper bound
\begin{equation*}
    (1+\alpha L)^n \leq (1+\alpha L)^\tau \leq 1 + 2\alpha\tau L\leq 2.
\end{equation*}

Therefore, for $0\leq n\leq \tau$,
\begin{align*}
    \E[\|\theta_{k-\tau+n}-\theta^\ast\||\cF_{k-\tau}]&\leq (1+2\alpha\tau L)\|\theta_{k-\tau}-\theta^\ast\|+2\alpha\tau L\\
    &\leq 2\|\theta_{k-\tau}-\theta^\ast\|+2\alpha\tau L.
\end{align*}
As such, we have
\begin{align*}
    \E[\|\theta_k-\theta_{k-\tau}\||\cF_{k-\tau}]&\leq \sum_{t=k-\tau}^{k-1}\E[\|\theta_{t+1}-\theta_t\||\cF_{k-\tau}]\\
    &\leq \alpha L\sum_{t=k-\tau}^{k-1}\E[\|\theta_t-\theta^\ast\||\cF_{k-\tau}] + \alpha\tau L\\
    &\leq \alpha\tau L(2\|\theta_{k-\tau}-\theta^\ast\|+2\alpha\tau L)+\alpha\tau L\\
    &\leq 2\alpha\tau L\|\theta_{k-\tau}-\theta^\ast\|+2\alpha\tau L,
\end{align*}
and prove the desired inequality.
\end{proof}

\paragraph*{Proof of \texorpdfstring{\eqref{eq:tech-2}}{(2.13)}.}
\begin{equation*}
    \E[\|\theta_k-\theta_{k-\tau}\||\cF_{k-\tau}]\leq  4\alpha\tau L\E[\|\theta_k-\theta^\ast\||\cF_{k-\tau}]+4\alpha\tau L.
\end{equation*}
\begin{proof}
We prove this inequality based on the claim that we have just shown, 
\begin{equation*}
    \E[\|\theta_k-\theta_{k-\tau}\||\cF_{k-\tau}]\leq 2\alpha\tau L\|\theta_{k-\tau}-\theta^\ast\|+2\alpha\tau L.
\end{equation*}

We simply note that
\begin{align*}
    \|\theta_{k-\tau}-\theta^\ast\|&=\E[\|\theta_{k-\tau}-\theta^\ast\||\cF_{k-\tau}]\\
    &\leq\E[\|\theta_k-\theta_{k-\tau}\||\cF_{k-\tau}] + \E[\|\theta_k-\theta^\ast\||\cF_{k-\tau}].
\end{align*}

Hence,
\begin{align*}
    \E[\|\theta_k-\theta_{k-\tau}\||\cF_{k-\tau}]&\leq2\alpha\tau L(\E[\|\theta_k-\theta_{k-\tau}\||\cF_{k-\tau}] + \E[\|\theta_k-\theta^\ast\||\cF_{k-\tau}]+1)\\
    (1-2\alpha\tau L)\E[\|\theta_k-\theta_{k-\tau}\||\cF_{k-\tau}]&\leq2\alpha\tau L\E[\|\theta_k-\theta^\ast\||\cF_{k-\tau}]+2\alpha\tau L.
\end{align*}
Therefore, we obtain
\begin{equation*}
    \E[\|\theta_k-\theta_{k-\tau}\||\cF_{k-\tau}]\leq4\alpha\tau L\E[\|\theta_k-\theta^\ast\||\cF_{k-\tau}]+4\alpha\tau L.
\end{equation*}   
\end{proof}

\paragraph*{Proof of \texorpdfstring{\eqref{eq:tech-3-prime}}{(2.14)}.}
\label{sec:tech-3-prime-proof}

\begin{equation*}
    \E[\|\theta_k-\theta_{k-\tau}\|^2|\cF_{k-\tau}]\leq 32\alpha^2\tau^2 L^2\E[\|\theta_k-\theta^\ast\|^2|\cF_{k-\tau}]+32\alpha^2\tau^2L^2.
\end{equation*}
\begin{proof}
    To analyze $\E[\|\theta_k-\theta_{k-\tau}\|^2|\cF_{k-\tau}]$, we consider the following attempt.
\begin{align*}
    \E[\|\theta_k-\theta_{k-\tau}\|^2|\cF_{k-\tau}]&\leq \tau\sum_{t=k-\tau}^{k-1}\E[\|\theta_{t+1}-\theta_t\|^2|\cF_{k-\tau}]\\
    &= \alpha^2\tau\sum_{t=k-\tau}^{k-1}\E[(\|g(\theta_t,x_t)\|+\|\xi_{t+1}(\theta_t)\|)^2|\cF_{k-\tau}]\\
    &\leq 2\alpha^2\tau\sum_{t=k-\tau}^{k-1}\Big(\E[\|g(\theta_t,x_t)\|^2|\cF_{k-\tau}]+\E[\|\xi_{t+1}(\theta_t)\|^2|\cF_{k-\tau}]\Big)\\
    &\leq 2\alpha^2\tau L^2\sum_{t=k-\tau}^{k-1}\E[(\|\theta_t-\theta^\ast\|+1)^2|\cF_{k-\tau}]\\
    &\leq 4\alpha^2\tau L^2\sum_{t=k-\tau}^{k-1}\E[\|\theta_t-\theta^\ast\|^2|\cF_{k-\tau}]+4\alpha^2\tau^2L^2.
\end{align*}

Next, we study $\E[\|\theta_t-\theta^\ast\|^2|\cF_{k-\tau}]$. We start with the following, for $k-\tau\leq t< k$,
\begin{align*}
    &\E[\|\theta_{t+1}-\theta^\ast\|^2|\cF_{k-\tau}]\\
    &=\E[\|\theta_t-\theta^\ast\|^2|\cF_{k-\tau}]+2\alpha\E[\dotp{\theta_t-\theta^\ast, g(\theta_t,x_t)}|\cF_{k-\tau}]+\alpha^2\E[\|g(\theta_t,x_t)+\xi_{t+1}(\theta_t)\|^2|\cF_{k-\tau}]\\
    &\leq \E[\|\theta_t-\theta^\ast\|^2|\cF_{k-\tau}]+2\alpha^2\Big(\E[\|g(\theta_t,x_t)\|^2|\cF_{k-\tau}]+\E[\|\xi_{t+1}(\theta_t)\|^2|\cF_{k-\tau}]\Big)\\
    &+2\alpha \E[\|\theta_t-\theta^\ast\|\|g(\theta_t,x_t)\||\cF_{k-\tau}].
\end{align*}
We note that
\begin{align*}
    2\E[\|\theta_t-\theta^\ast\|\|g(\theta_t,x_t)\||\cF_{k-\tau}]&\leq2\sqrt{\E[\|\theta_t-\theta^\ast\|^2|\cF_{k-\tau}]\E[\|g(\theta_t,x_t)\|^2|\cF_{k-\tau}]}\\
    &\leq 2\sqrt{\E[\|\theta_t-\theta^\ast\|^2|\cF_{k-\tau}]\E[(L(\|\theta_t-\theta^\ast\|+1))^2|\cF_{k-\tau}]}\\
    &\leq 2\sqrt{\E[\|\theta_t-\theta^\ast\|^2|\cF_{k-\tau}]2L^2\E[\|\theta_t-\theta^\ast\|^2+1|\cF_{k-\tau}]}\\
    &\leq 4L(\E[\|\theta_t-\theta^\ast\|^2|\cF_{k-\tau}]+1).
\end{align*}
Substituting the above inequality back, we obtain
\begin{align*}
&\E[\|\theta_{t+1}-\theta^\ast\|^2|\cF_{k-\tau}]\\
&\leq \E[\|\theta_t-\theta^\ast\|^2|\cF_{k-\tau}] + 4\alpha^2L^2\Big(\E[\|\theta_t-\theta^\ast\|^2|\cF_{k-\tau}]+1\Big) + 4\alpha L\E[\|\theta_t-\theta^\ast\|^2|\cF_{k-\tau}] + 4\alpha L\\
&\leq (1+4\alpha^2L^2+4\alpha L)\E[\|\theta_t-\theta^\ast\|^2|\cF_{k-\tau}] + (4\alpha^2L^2+4\alpha L).
\end{align*}
We further recall that
\begin{equation*}
    4\alpha^2L^2\leq 4\alpha L (\alpha\tau L)\leq \alpha L,   
\end{equation*}
and hence we obtain the following upper bound
\begin{equation*}
    \E[\|\theta_{t+1}-\theta^\ast\|^2|\cF_{k-\tau}]
    \leq (1+5\alpha L)\E[\|\theta_t-\theta^\ast\|^2|\cF_{k-\tau}] + 5\alpha L.
\end{equation*}

Then, recursively, for $0\leq n\leq \tau$, we have
\begin{equation*}
    \E[\|\theta_{k-\tau+n}-\theta^\ast\|^2|\cF_{k-\tau}]\leq (1+5\alpha L)^n\|\theta_{k-\tau}-\theta^\ast\|^2+5\alpha L\sum_{l=0}^{n-1}(1+5\alpha L)^l.
\end{equation*}
As such, under the assumption that $4\alpha\tau L<\mu/(4L)<1/4$, 
then for $k-\tau\leq t\leq k$, we have
\begin{align*}
    \E[\|\theta_t-\theta^\ast\|^2|\cF_{k-\tau}]&\leq (1+10\alpha\tau L)\|\theta_{k-\tau}-\theta^\ast\|^2+10\alpha\tau L\\
    &\leq 2\|\theta_{k-\tau}-\theta^\ast\|^2+10\alpha\tau L.
\end{align*}

Combining all the analyses above, we have
\begin{align*}
     \E[\|\theta_k-\theta_{k-\tau}\|^2|\cF_{k-\tau}]&\leq 4\alpha^2\tau L^2\sum_{t=k-\tau}^{k-1}\E[\|\theta_t-\theta^\ast\|^2|\cF_{k-\tau}]+4\alpha^2\tau^2L^2\\
     &\leq  4\alpha^2\tau^2 L^2\Big(2\|\theta_{k-\tau}-\theta^\ast\|^2 + 10\alpha\tau L\Big)+4\alpha^2\tau^2L^2\\
     &\leq 8\alpha^2\tau^2 L^2\|\theta_{k-\tau}-\theta^\ast\|^2+8\alpha^2\tau^2L^2.
\end{align*}
\end{proof}

\paragraph*{Proof of \texorpdfstring{\eqref{eq:tech-3}}{(2.15)}.}
\label{sec:tech-3-proof}
\begin{equation*}
     \E[\|\theta_k-\theta_{k-\tau}\|^2|\cF_{k-\tau}]\leq 32\alpha^2\tau^2 L^2\E[\|\theta_k-\theta^\ast\|^2|\cF_{k-\tau}]+32\alpha^2\tau^2L^2.
\end{equation*}
\begin{proof}
This inequality simply extends the result from \eqref{eq:tech-3-prime},i.e.,
\begin{equation*}
    \E[\|\theta_k-\theta_{k-\tau}\|^2|\cF_{k-\tau}]\leq 8\alpha^2\tau^2 L^2\|\theta_{k-\tau}-\theta^\ast\|^2+8\alpha^2\tau^2L^2.
\end{equation*}

We first note that
\begin{align*}
    \|\theta_{k-\tau}-\theta^\ast\|^2&=\E[\|\theta_{k-\tau}-\theta^\ast\|^2|\cF_{k-\tau}]\\
    &\leq 2\E[\|\theta_{k}-\theta_{k-\tau}\|^2|\cF_{k-\tau}]+2\E[\|\theta_k-\theta^\ast\|^2|\cF_{k-\tau}].
\end{align*}
Hence,
\begin{align*}
    &\E[\|\theta_k-\theta_{k-\tau}\|^2|\cF_{k-\tau}]\leq 8\alpha^2\tau^2 L^2(2\E[\|\theta_{k}-\theta_{k-\tau}\|^2|\cF_{k-\tau}]+2\E[\|\theta_k-\theta^\ast\|^2|\cF_{k-\tau}]+1)\\
    &(1-16\alpha^2\tau^2L^2)\E[\|\theta_k-\theta_{k-\tau}\|^2|\cF_{k-\tau}]\leq 16\alpha^2\tau^2 L^2\E[\|\theta_k-\theta^\ast\|^2|\cF_{k-\tau}]+8\alpha^2\tau^2L^2.
\end{align*}
Again, under the assumption that $16\alpha\tau L^2<\mu/4$, we can conclude that
\begin{equation*}
    \E[\|\theta_k-\theta_{k-\tau}\|^2|\cF_{k-\tau}]\leq 32\alpha^2\tau^2 L^2\E[\|\theta_k-\theta^\ast\|^2|\cF_{k-\tau}]+32\alpha^2\tau^2L^2.
\end{equation*}
\end{proof}

\subsection{Induction Step}
\label{sec:proof-tech-fourth-moment-ineq}
In this step, assume that the moment bound in Proposition~\ref{prop:2n-convergence} has been proven for $k\leq n-1$, we now proceed to show that the desired moment convergence holds for $n$ with $2\leq n\leq p$.

We start with the following decomposition of $\|\theta_{k+1}-\theta^\ast\|^{2n}$
\begin{align*}
    \|\theta_{k+1}-\theta^\ast\|^{2n}&=\Big(\|\theta_k-\theta^\ast\|^2+2\alpha\dotp{\theta_k-\theta^\ast,g(\theta_k,x_k)+\xi_{k+1}(\theta_k)}+\alpha^2\|g(\theta_k,x_k)+\xi_{k+1}(\theta_t)\|^2\Big)^n\\
    &=\sum_{\substack{i,j,l\\i+j+l=n}}\binom{n}{i,j,l}\|\theta_k-\theta^\ast\|^{2i}\Big(2\alpha\dotp{\theta_k-\theta^\ast,g(\theta_k,x_k)+\xi_{k+1}(\theta_k)}\Big)^j\Big(\alpha\|g(\theta_k,x_k)+\xi_{k+1}(\theta_k)\|\Big)^{2l}
\end{align*}
We note the following cases.
\begin{enumerate}
    \item $i=n$, $j=l=0$. In this case, the summand is simply $\|\theta_k-\theta^\ast\|^{2i}$.
    \item When $i=n-1$, $j=1$ and $l=0$. In this case, the summand is of order $\alpha$, i.e., $\alpha2n\dotp{\theta_k-\theta^\ast,g(\theta_k,x_k)+\xi_{k+1}(\theta_k)}^j\|\theta_k-\theta^\ast\|^{2(n-1)}$. We can further compose it as
    \begin{align*}
        &2n\alpha\dotp{\theta_k-\theta^\ast,g(\theta_k,x_k)+\xi_{k+1}(\theta_k)}\|\theta_k-\theta^\ast\|^{2(n-1)}\\
        &=\underbrace{2n\alpha\dotp{\theta_k-\theta^\ast,g(\theta_k,x_k)-\Bar{g}(\theta_k)+\xi_{k+1}(\theta_k)}\|\theta_k-\theta^\ast\|^{2(n-1)}}_{T_1}+\underbrace{2n\alpha\dotp{\theta_k-\theta^\ast,\Bar{g}(\theta_k)}\|\theta_k-\theta^\ast\|^{2(n-1)}}_{T_2}.
    \end{align*}
    Note that, when $(x_k)$ is i.i.d.\ or from a martingale noise sequence, we have
    \begin{equation*}
        \E[T_1|\theta_k]=0.
    \end{equation*}
    However, when $(x_k)$ is Markovian, the above equality does not hold and $T_1$ requires careful analysis.

    Nonetheless, under the strong monotonicity assumption, we have
    \begin{equation*}
        T_2\leq -2n\alpha\mu\|\theta_k-\theta^\ast\|^{2n}.
    \end{equation*}
    \item For the remaining terms, we see that they are of higher orders of $\alpha$. Therefore, when $\alpha$ is selected sufficiently small, these terms do not raise concern.
\end{enumerate}

Therefore, to prove the desired moment bound, we spend the remaining section analyzing $T_1$. Immediately, we note that
\begin{align*}
    \E[T_1|\cF_{k-\tau}]&=\E\Big[2n\alpha\dotp{\theta_k-\theta^\ast,g(\theta_k,x_k)-\Bar{g}(\theta_k)+\E[\xi_{k+1}(\theta_k)|\theta_k]}\|\theta_k-\theta^\ast\|^{2(n-1)}|\cF_{k-\tau}\Big]\\
    &=\E\Big[\underbrace{2n\alpha\dotp{\theta_k-\theta^\ast,g(\theta_k,x_k)-\Bar{g}(\theta_k)}\|\theta_k-\theta^\ast\|^{2(n-1)}}_{T_1'}|\cF_{k-\tau}\Big].
\end{align*}
Subsequently, we focus on analyzing $T_1'$.

We start with the following decomposition of $T_1'$.
\begin{align}
    &2n\alpha\dotp{g(\theta_k,x_k)-\Bar{g}(\theta_k),\theta_k-\theta^\ast}\|\theta_k-\theta^\ast\|^{2(n-1)}\nonumber\\
    &\leq 2n\alpha\|g(\theta_{k-\tau},x_k)-\bar{g}(\theta_{k-\tau})\|\|\theta_{k-\tau}-\theta^\ast\|^{2n-1}\label{eq:32}\\
    &+2n\alpha\|g(\theta_k,x_k)-g(\theta_{k-\tau},x_k)\|\|\theta_{k-\tau}-\theta^\ast\|^{2n-1}\label{eq:33-a}\\
    &+2n\alpha\|\Bar{g}(\theta_{k-\tau})-\Bar{g}(\theta_k)\|\|\theta_{k-\tau}-\theta^\ast\|^{2n-1}\label{eq:33-b}\\
    &+2n\alpha\|g(\theta_k,x_k)-\Bar{g}(\theta_k)\|\|\theta_k-\theta^\ast\|^{2(n-1)}\|\theta_k-\theta_{k-\tau}\|\label{eq:34}\\
    &+2n\alpha\|g(\theta_k,x_k)-\Bar{g}(\theta_k)\|\|\theta_{k-\tau}-\theta^\ast\|\cdot\big(\|\theta_{k}-\theta^\ast\|^{2(n-1)}-\|\theta_{k-\tau}-\theta^\ast\|^{2(n-1)}\big).\label{eq:35}
\end{align}

We note the following technical lemma, which will offer significant help in the analysis of $T_1'$. We postpone the proof of the lemma to the end of this subsection.
\begin{lem}
\label{lem:tech-fourth-moment-ineq}
 For $\Tilde{c}_n\alpha\tau\leq \mu/(4L^2)$, where $\Tilde{c}_n$ denotes some constant dependent of the higher-moment $2n$, we have
    \begin{equation*}
        \label{eq:tech-fourth-moment-ineq}
            \E[\|\theta_k-\theta_{k-\tau}\|^{2n}|\cF_{k-\tau}]\leq c_n\alpha^{2n}\tau^{2n}L^{2n}(\|\theta_{k-\tau}-\theta^\ast\|^{2n}+1).
        \end{equation*}
\end{lem}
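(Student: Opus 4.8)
The plan is to mirror the $n=1$ argument used to establish~\eqref{eq:tech-3-prime}, lifting each step to the $2n$-th moment. First I would expand the telescoping sum $\theta_k-\theta_{k-\tau}=\sum_{t=k-\tau}^{k-1}(\theta_{t+1}-\theta_t)$ and apply the power-mean (Jensen) inequality to move the $2n$-th power inside: $\|\theta_k-\theta_{k-\tau}\|^{2n}\le\tau^{2n-1}\sum_{t=k-\tau}^{k-1}\|\theta_{t+1}-\theta_t\|^{2n}$. Since each one-step increment equals $\theta_{t+1}-\theta_t=\alpha(g(\theta_t,x_t)+\xi_{t+1}(\theta_t))$, the linear-growth bound from Assumption~\ref{assumption:linear-growth} together with the $2n$-th moment bound on the noise (Assumption~\ref{assumption:noise}$(2p)$, available since $n\le p$) yields, after conditioning and the tower property over $\cF_{k-\tau}\subseteq\cF_t$ and using $\xi_{t+1}\indep\cF_t$, a bound of the form $\E[\|\theta_{t+1}-\theta_t\|^{2n}\mid\cF_{k-\tau}]\le c_n\,\alpha^{2n}L^{2n}\,\E[\|\theta_t-\theta^\ast\|^{2n}+1\mid\cF_{k-\tau}]$.

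The crux is therefore a one-step moment-propagation estimate controlling $\E[\|\theta_t-\theta^\ast\|^{2n}\mid\cF_{k-\tau}]$ for $k-\tau\le t\le k$ in terms of $\|\theta_{k-\tau}-\theta^\ast\|^{2n}$. To obtain it I would write $\|\theta_{t+1}-\theta^\ast\|^2=\|\theta_t-\theta^\ast\|^2+\delta_t$ with $\delta_t=2\alpha\dotp{\theta_t-\theta^\ast,g(\theta_t,x_t)+\xi_{t+1}(\theta_t)}+\alpha^2\|g(\theta_t,x_t)+\xi_{t+1}(\theta_t)\|^2$, raise to the $n$-th power via the binomial theorem $\sum_{j=0}^n\binom{n}{j}\|\theta_t-\theta^\ast\|^{2(n-j)}\delta_t^{\,j}$, and bound each term. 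The $j=0$ term reproduces $\|\theta_t-\theta^\ast\|^{2n}$; for $j\ge1$, Cauchy--Schwarz and linear growth give $|\delta_t|\lesssim\alpha L(\|\theta_t-\theta^\ast\|+1)^2$ (using $\alpha L\le1$ to absorb the $\alpha^2$ contribution), whence $\|\theta_t-\theta^\ast\|^{2(n-j)}|\delta_t|^{\,j}\lesssim(\alpha L)^{j}(\|\theta_t-\theta^\ast\|^{2n}+1)$. Taking conditional expectations --- where the highest noise power appearing is $2j\le 2n\le 2p$, safely within the moment assumption --- produces a recursion $\E[\|\theta_{t+1}-\theta^\ast\|^{2n}\mid\cF_{k-\tau}]\le(1+C_n\alpha L)\E[\|\theta_t-\theta^\ast\|^{2n}\mid\cF_{k-\tau}]+C_n\alpha L$, with $C_n$ depending on $n$ through the binomial coefficients.

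Iterating this recursion over the at most $\tau$ steps and invoking the stepsize constraint $\tilde c_n\alpha\tau L^2\le\mu/4$ (which, since $L\ge1$, forces $(1+C_n\alpha L)^{\tau}\le e^{C_n\alpha\tau L}\le2$, exactly as in~\eqref{eq:tech-3-prime}) gives $\E[\|\theta_t-\theta^\ast\|^{2n}\mid\cF_{k-\tau}]\le 2\|\theta_{k-\tau}-\theta^\ast\|^{2n}+c_n'\alpha\tau L$ uniformly for $k-\tau\le t\le k$. Substituting this back into the telescoping bound collects a factor $\tau^{2n-1}\cdot\tau=\tau^{2n}$ from the sum over $t$, yielding $\E[\|\theta_k-\theta_{k-\tau}\|^{2n}\mid\cF_{k-\tau}]\le c_n\alpha^{2n}\tau^{2n}L^{2n}(\|\theta_{k-\tau}-\theta^\ast\|^{2n}+1)$, as claimed.

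I expect the main obstacle to be the bookkeeping in the binomial expansion of $\delta_t^{\,j}$: one must verify that every resulting monomial in $\|\theta_t-\theta^\ast\|$ and $\|\xi_{t+1}(\theta_t)\|$ carries a noise power not exceeding $2p$ and, after applying linear growth, a combined polynomial degree exactly $2n$, so that each term collapses cleanly to $(\alpha L)^{j}(\|\theta_t-\theta^\ast\|^{2n}+1)$ while retaining the correct powers of $\alpha$ and $\tau$. The $n$-dependence of $\tilde c_n$ enters precisely here, through the number of terms and the binomial coefficients, and ensuring that the leading power of $\alpha$ stays at $\alpha^{2n}$ --- rather than a lower power leaking in from the order-$\alpha$ cross term --- is the delicate point that the two-sided telescoping handles.
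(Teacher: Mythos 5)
Your proposal is correct and follows essentially the same route as the paper's proof: telescoping plus the power-mean inequality, a one-step increment bound from linear growth and the $2p$-th noise moments, a one-step drift recursion of the form $(1+C_n\alpha L)$ iterated over at most $\tau$ steps with the stepsize constraint forcing the geometric factor below $2$, and substitution back into the telescoped sum. The only (cosmetic) difference is that you expand $(\|\theta_t-\theta^\ast\|^2+\delta_t)^n$ binomially in $\delta_t$, whereas the paper expands $(\|\theta_{t+1}-\theta_t\|+\|\theta_t-\theta^\ast\|)^{2n}$ in the increment norm --- both collapse to the same recursion, provided (as you note in your final paragraph) the noise-power bookkeeping is done in conditional expectation rather than pathwise, since $\xi_{t+1}$ is only moment-bounded.
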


Following the lemma, we observe that a natural consequence is for any $m\leq 2n$, we have
\begin{align*}
    \E[\|\theta_k-\theta_{k-\tau}\|^m|\cF_{k-\tau}]&\leq \Big(\E[\|\theta_k-\theta_{k-\tau}\|^{2n}|\cF_{k-\tau}]\Big)^\frac{m}{2n}\\
    &\leq \Big(c_n\alpha^{2n}\tau^{2n}L^{2n}(\|\theta_{k-\tau}-\theta^\ast\|^{2n}+1)\Big)^\frac{m}{2n}\\
    &\leq c_m\alpha^m\tau^mL^m\Big(\|\theta_{k-\tau}-\theta^\ast\|^m+1\Big),
\end{align*}
where we use the inequality $a^p+b^p>(a+b)^p$ for $a,b>0$, $p\in(0,1)$ to obtain the final inequality.
        
Now, we are ready to analyze \eqref{eq:32}--\eqref{eq:35}.
Firstly, for \eqref{eq:32}, we make use of the mixing assumption of $\tau$, and have that
\begin{align*}
    \E[|\eqref{eq:32}||\cF_{k-\tau}]&\leq 2n\alpha\|\theta_{k-\tau}-\theta^\ast\|^{2n-1}\E[\|g(\theta_{k-\tau},x_k)-\Bar{g}(\theta_{k-\tau})\||\cF_{k-\tau}]\\
    &\leq 2n\alpha^2L\|\theta_{k-\tau}-\theta^\ast\|^{2n-1}(\|\theta_{k-\tau}-\theta^\ast\|+1)\\
    &\leq 2n\alpha^2L\|\theta_{k-\tau}-\theta^\ast\|^{2n}+2n\alpha^2L\|\theta_{k-\tau}-\theta^\ast\|^{2n-1}\\
    &\leq 3n\alpha^2L\|\theta_{k-\tau}-\theta^\ast\|^{2n}+n\alpha^2L\|\theta_{k-\tau}-\theta^\ast\|^{2(n-1)},
\end{align*}
where we make use of the inequality $2|x|^3\leq x^2+x^4$ to obtain the final step.

Next, we proceed to analyze \eqref{eq:33-a}. It is easy to see that
\begin{align*}
    \E[|\eqref{eq:33-a}||\cF_{k-\tau}]&=2n\alpha\|\theta_{k-\tau}-\theta^\ast\|^{2n-1}\E[\|g(\theta_k,x_k)-g(\theta_{k-\tau},x_k)|\cF_{k-\tau}]\\
    &\leq 2n\alpha \|\theta_{k-\tau}-\theta^\ast\|^{2n-1}\E[\|\theta_{k}-\theta_{k-\tau}\||\cF_{k-\tau}]\\
    &\leq 2n\alpha \|\theta_{k-\tau}-\theta^\ast\|^{2n-1}\Big(2\alpha\tau L(\|\theta_{k-\tau}-\theta^\ast\|+1)\Big)\\
    &\leq 4n\alpha^2\tau L\|\theta_{k-\tau}-\theta^\ast\|^{2n}+4n\alpha^2\tau L\|\theta_{k-\tau}-\theta^\ast\|^{2n-1}\\
    &\leq 6n\alpha^2\tau L\|\theta_{k-\tau}-\theta^\ast\|^{2n}+2n\alpha^2\tau L\|\theta_{k-\tau}-\theta^\ast\|^{2(n-1)}.
\end{align*}
The term in~\eqref{eq:33-b} can be analyzed in a similar fashion as the~\eqref{eq:33-a}.

For \eqref{eq:34}, we first derive the following
\begin{align*}
    &\E[|\eqref{eq:34}||\cF_{k-\tau}]\\
    &\leq 2n\alpha\E\Big[2L\Big(\|\theta_k-\theta^\ast\|+1\Big)\|\theta_k-\theta_{k-\tau}\|\|\theta_k-\theta^\ast\|^{2(n-1)}|\cF_{k-\tau}\Big]\\
    &=\underbrace{4n\alpha L\E[\|\theta_k-\theta_{k-\tau}\|\|\theta_k-\theta^\ast\|^{2n-1}|\cF_{k-\tau}]}_{T_a}+\underbrace{4n\alpha L\E[\|\theta_k-\theta_{k-\tau}\|\|\theta_k-\theta^\ast\|^{2(n-1)}|\cF_{k-\tau}]}_{T_b}.
\end{align*}
We next analyze the two terms $T_a$ and $T_b$ respectively. Starting with $T_a$, we have
\begin{align}
    &4n\alpha L\E[\|\theta_k-\theta_{k-\tau}\|\|\theta_k-\theta^\ast\|^{2n-1}|\cF_{k-\tau}]\\
    &\leq 4n\alpha L\E\Big[\|\theta_k-\theta_{k-\tau}\|\Big(\|\theta_k-\theta_{k-\tau}\|+\|\theta_{k-\tau}-\theta^\ast\|\Big)^{2n-1}|\cF_{k-\tau}\Big]\\
    &\leq 2^{2(n-1)}4n\alpha L\E[\|\theta_k-\theta_{k-\tau}\|(\|\theta_k-\theta_{k-\tau}\|^{2n-1}+\|\theta_{k-\tau}-\theta^\ast\|^{2n-1}|\cF_{k-\tau}]\\
    &=4^n n\alpha L\Big(\underbrace{\E[\|\theta_k-\theta_{k-\tau}\|^{2n}|\cF_{k-\tau}]}_\text{by Lemma~\ref{lem:tech-fourth-moment-ineq}}+\|\theta_{k-\tau}-\theta^\ast\|^{2n-1}\E[\|\theta_k-\theta_{k-\tau}\||\cF_{k-\tau}]\Big)\\
    &\leq 4^nn\alpha L\Big(\underbrace{c_n\alpha^{2n}\tau^{2n} L^{2n}}_{\leq 2\alpha\tau L}(\|\theta_{k-\tau}-\theta^\ast\|^{2n}+1)+\|\theta_{k-\tau}-\theta^\ast\|^{2n-1}(2\alpha\tau L(\|\theta_{k-\tau}-\theta^\ast\|+1))\Big)\\
    &\leq 4^nn\alpha L\Big(4\alpha\tau L\|\theta_{k-\tau}-\theta^\ast\|^{2n}+2\alpha\tau L\|\theta_{k-\tau}-\theta^\ast\|^{2n-1}+c_n\alpha^{2n}\tau^{2n} L^{2n}\Big)\\
    &\leq 4^nn\alpha L\Big(5\alpha\tau L\|\theta_{k-\tau}-\theta^\ast\|^{2n}+\alpha\tau L\|\theta_{k-\tau}-\theta^\ast\|^{2(n-1)}+c_n'\alpha^{2n-1}\tau^{2n-1} L^{2n-1}\Big).
\end{align}
For $T_b$, we have
\begin{align*}
    &4n\alpha L\E[\|\theta_k-\theta_{k-\tau}\|\|\theta_k-\theta^\ast\|^{2(n-1)}|\cF_{k-\tau}]\\
    &\leq 4n\alpha L\E\Big[\|\theta_k-\theta_{k-\tau}\|\Big(\|\theta_k-\theta_{k-\tau}\|+\|\theta_{k-\tau}-\theta^\ast\|\Big)^{2(n-1)}|\cF_{k-\tau}\Big]\\
    &\leq 2^{2n-1}n\alpha L\E[\|\theta_k-\theta_{k-\tau}\|(\|\theta_k-\theta_{k-\tau}\|^{2(n-1)}+\|\theta_{k-\tau}-\theta^\ast\|^{2(n-1)}|\cF_{k-\tau}]\\
    &=2^{2n-1} n\alpha L\Big(\underbrace{\E[\|\theta_k-\theta_{k-\tau}\|^{2n-1}|\cF_{k-\tau}]}_\text{by Lemma~\ref{lem:tech-fourth-moment-ineq}}+\|\theta_{k-\tau}-\theta^\ast\|^{2(n-1)}\E[\|\theta_k-\theta_{k-\tau}\||\cF_{k-\tau}]\Big)\\
    &\leq 2^{2n-1}n\alpha L\Big(\underbrace{c_{n-1}\alpha^{2n-1}\tau^{2n-1} L^{2n-1}}_{\leq 2\alpha\tau L}(\|\theta_{k-\tau}-\theta^\ast\|^{2n-1}+1)+\|\theta_{k-\tau}-\theta^\ast\|^{2(n-1)}(2\alpha\tau L(\|\theta_{k-\tau}-\theta^\ast\|+1))\Big)\\
    &\leq 2^{2n-1}n\alpha L\Big(4\alpha\tau L\|\theta_{k-\tau}-\theta^\ast\|^{2n-1}+2\alpha\tau L\|\theta_{k-\tau}-\theta^\ast\|^{2(n-1)}+c_{n-1}\alpha^{2n-1}\tau^{2n-1} L^{2n-1}\Big)\\
    &\leq 2^{2n-1}n\alpha L\Big(2\alpha\tau L\|\theta_{k-\tau}-\theta^\ast\|^{2n}+4\alpha\tau L\|\theta_{k-\tau}-\theta^\ast\|^{2(n-1)}+c_{n-1}\alpha^{2n-1}\tau^{2n-1} L^{2n-1}\Big).
\end{align*}
Combining the analyses of the two terms, we get the following upper bound to~\eqref{eq:34}
\begin{align*}
    &\E[|\eqref{eq:34}||\cF_{k-\tau}]\\
    &\leq4^nn\alpha L\Big(5\alpha\tau L\|\theta_{k-\tau}-\theta^\ast\|^{2n}+\alpha\tau L\|\theta_{k-\tau}-\theta^\ast\|^{2(n-1)}+c_n'\alpha^{2n-1}\tau^{2n-1} L^{2n-1}\Big)\\
    &+2^{2n-1}n\alpha L\Big(2\alpha\tau L\|\theta_{k-\tau}-\theta^\ast\|^{2n}+4\alpha\tau L\|\theta_{k-\tau}-\theta^\ast\|^{2(n-1)}+c_{n-1}\alpha^{2n-1}\tau^{2n-1} L^{2n-1}\Big)\\
    &=2^{2n-1}n\alpha L\Big(12\alpha\tau L\|\theta_{k-\tau}-\theta^\ast\|^{2n}+6\alpha\tau L\|\theta_{k-\tau}-\theta^\ast\|^{2(n-1)}+c''_{n-1}\alpha^{2n-1}\tau^{2n-1} L^{2n-1}\Big).
\end{align*}

Lastly, we analyze \eqref{eq:35}. We first make use of the mean-value theorem, with $a\in[0,1]$, we have
\begin{align*}
    &\|\theta_k-\theta^\ast\|^{2(n-1)}-\|\theta_{k-\tau}-\theta^\ast\|^{2(n-1)}\\
    &=\|\theta_k-\theta_{k-\tau}\|\cdot2(n-1)\|a(\theta_k-\theta^\ast)+(1-a)(\theta_{k-\tau}-\theta^\ast)\|^{2n-3}\\
    &=\|\theta_k-\theta_{k-\tau}\|\cdot2(n-1)\|a(\theta_k-\theta_{k-\tau})+\theta_{k-\tau}-\theta^\ast\|^{2n-3}\\
    &\leq 2^{2n-3}(n-1)\|\theta_k-\theta_{k-\tau}\|\Big(\|\theta_k-\theta_{k-\tau}\|^{2n-3}+\|\theta_{k-\tau}-\theta^\ast\|^{2n-3}\Big)
\end{align*}
Substituting the above upper bound back into~\eqref{eq:35}, we obtain
\begin{align*}
    &\E[|\eqref{eq:35}||\cF_{k-\tau}]\\
    &\leq2^{2n-1}n(n-1)\alpha L\|\theta_{k-\tau}-\theta^\ast\|\E\Big[(\|\theta_k-\theta^\ast\|+1)\|\theta_k-\theta_{k-\tau}\|\Big(\|\theta_k-\theta_{k-\tau}\|^{2n-3}+\|\theta_{k-\tau}-\theta^\ast\|^{2n-3}\Big)|\cF_{k-\tau}\Big]\\
    &\leq 2^{2n-1}n(n-1)\alpha L\Big(\|\theta_{k-\tau}-\theta^\ast\|E[\|\theta_k-\theta_{k-\tau}\|^{2n-1}|\cF_{k-\tau}]+\|\theta_{k-\tau}-\theta^\ast\|^2\E[\|\theta_k-\theta_{k-\tau}\|^{2n-2}|\cF_{k-\tau}]\\
    &\qquad\qquad\qquad\qquad\qquad
    +\|\theta_{k-\tau}-\theta^\ast\|\E[\|\theta_k-\theta_{k-\tau}\|^{2n-2}|\cF_{k-\tau}]+\|\theta_{k-\tau}-\theta^\ast\|^{2n-2}\E[\|\theta_k-\theta_{k-\tau}\|^2|\cF_{k-\tau}]\\
    &\qquad\qquad\qquad\qquad\qquad
    +\|\theta_{k-\tau}-\theta^\ast\|^{2n-1}\E[\|\theta_k-\theta_{k-\tau}\||\cF_{k-\tau}]+\|\theta_{k-\tau}-\theta^\ast\|^{2n-2}\E[\|\theta_k-\theta_{k-\tau}\||\cF_{k-\tau}]\Big)\\
    &\leq 2^{2n-1}n(n-1)\alpha L\Big(c_n\alpha\tau L\|\theta_{k-\tau}-\theta^\ast\|^{2n}+c_{n-1}\alpha\tau L\|\theta_{k-\tau}-\theta^\ast\|^{2(n-1)}+c_{n-1}\alpha^{2n-1}\tau^{2n-1}L^{2n-1}\Big).
\end{align*}

Combining the analyses above, we have the following bound for $T_1$,
\begin{align*}
    &\E[|T_1||\cF_{k-\tau}]\\
    &\leq \E[\|\eqref{eq:32}||\cF_{k-\tau}]+\E[\|\eqref{eq:33-a}||\cF_{k-\tau}]+\E[\|\eqref{eq:33-b}||\cF_{k-\tau}]
    \\&
    +\E[\|\eqref{eq:34}||\cF_{k-\tau}]+\E[\|\eqref{eq:35}||\cF_{k-\tau}]\\
    &\leq c_{n,1}\alpha^2\tau L^2\|\theta_{k-\tau}-\theta^\ast\|^{2n} + c_{n,2}\alpha^2\tau L^2\|\theta_{k-\tau}-\theta^\ast\|^{2(n-1)}+c_{n,3}\alpha^{2n}\tau^{2n-1}L^{2n},
\end{align*}
where $c_{n,1}$, $c_{n,2}$ and $c_{n,3}$ are some constants that depend on $n$.

Additionally, we note that
\begin{align*}
    \|\theta_{k-\tau}-\theta^\ast\|^{2n}&=\E[\|\theta_{k-\tau}-\theta^\ast\|^{2n}|\cF_{k-\tau}]\\
    &\leq \E\Big[\Big(\|\theta_k-\theta_{k-\tau}\| + \|\theta_k-\theta^\ast\|\Big)^{2n}|\cF_{k-\tau}\Big]\\
    &\leq 2^{2n-1}\E[\|\theta_k-\theta_{k-\tau}\|^{2n}|\cF_{k-\tau}] + 2^{2n-1}\E[\|\theta_k-\theta^\ast\|^{2n}|\cF_{k-\tau}]\\
    &\leq c_n\alpha^{2n}\tau^{2n}L^{2n}(\|\theta_{k-\tau}-\theta^\ast\|^{2n}+1)+ 2^{2n-1}\E[\|\theta_k-\theta^\ast\|^{2n}|\cF_{k-\tau}].
\end{align*}
Therefore, for sufficiently small $\alpha\tau L<\mu/(c_n' L)$, we have
\begin{align*}
    (1-c_n'\alpha^{2n}\tau^{2n}L^{2n})\|\theta_{k-\tau}-\theta^\ast\|^{2n}&\leq c_n''\E[\|\theta_k-\theta^\ast\|^{2n}|\cF_{k-\tau}] + c_n\alpha^{2n}\tau^{2n}L^{2n}\\
    \Rightarrow\quad\|\theta_{k-\tau}-\theta^\ast\|^{2n}&\leq 2c_n''\E[\|\theta_k-\theta^\ast\|^{2n}|\cF_{k-\tau}] + 2c_n\alpha^{2n}\tau^{2n}L^{2n}.
\end{align*}

As such, for sufficiently small $\alpha$, we have
\begin{align*}
\E[|T_1||\cF_{k-\tau}]&\leq c_{n,1}\alpha^2\tau L^2\Big(c_n\E[\|\theta_k-\theta^\ast\|^{2n}|\cF_{k-\tau}] + c_n'\alpha^{2n}\tau^{2n}L^{2n}\Big)\\
&+c_{n,2}\alpha^2\tau L^2\Big(c_{n-1}\E[\|\theta_k-\theta^\ast\|^{2(n-1)}|\cF_{k-\tau}] + c_{n-1}'\alpha^{2(n-1)}\tau^{2(n-1)}L^{2(n-1)}\Big) + c_{n,3}\alpha^{2n}\tau^{2n-1}L^{2n}\\
&=c_{n,1}\alpha^2\tau L^2\E[\|\theta_k-\theta^\ast\|^{2n}|\cF_{k-\tau}] + c_{4,2}\alpha^2\tau L^2\E[\|\theta_k-\theta^\ast\|^{2(n-1)}|\cF_{k-\tau}] + c_{4,3}\alpha^{2n}\tau^{2n-1}L^{2n}.
\end{align*}

Hence, up til this point, we have obtained
\begin{align*}
    &\E[\|\theta_{k+1}-\theta^\ast\|^{2n}|\cF_{k-\tau}]\\
    &\leq (1-2n\alpha\mu)\E[\|\theta_k-\theta^\ast\|^{2n}|\cF_{k-\tau}] \\
    &+c_{n,1}\alpha^2\tau L^2\E[\|\theta_k-\theta^\ast\|^{2n}|\cF_{k-\tau}] + c_{n,2}\alpha^2\tau L^2\E[\|\theta_k-\theta^\ast\|^{2(n-1)}|\cF_{k-\tau}]+c_{n,3}\alpha^{2n}\tau^{2n-1}L^{2n}\\
    &\leq (1-2n\alpha(\mu-c_{n,1}'\alpha\tau L^2 ))\E[\|\theta_k-\theta^\ast\|^{2n}|\cF_{k-\tau}]
    + c_{n,2}\alpha^2\tau L^2\E[\|\theta_k-\theta^\ast\|^{2(n-1)}]|\cF_{k-\tau}]
    +c_{n,3}\alpha^{2n}\tau^{2n-1} L^{2n}.
\end{align*}

Following the induction hypothesis,  when $k$ is sufficiently large, we have
\begin{equation*}
    \E[\|\theta_k-\theta^\ast\|^{2(n-1)}|\cF_{k-\tau}]\leq c_{n-1}\alpha^{n-1}\tau^{n-1}s(\theta_0,L,\mu).
\end{equation*}
Substituting the above upper bound back into our analysis of the $2n$-th moment bound, we obtain
\begin{align*}
    \E[\|\theta_{k+1}-\theta^\ast\|^{2n}|\cF_{k-\tau}]
    &\leq (1-2n\alpha(\mu-c_{n,1}'\alpha\tau L^2 ))\E[\|\theta_k-\theta^\ast\|^{2n}|\cF_{k-\tau}] \\
    &+ \alpha^{n+1}\tau^n L^2c_{n,2}\cdot c_{n-1}s(\theta_0,L,\mu) +c_{n,3}\alpha^{2n}\tau^{2n-1} L^{2n}.
\end{align*}

Subsequently, if we set $\alpha$ sufficiently small, such that
\begin{equation*}
    \alpha\tau L^2 <c_n\cdot\mu,
\end{equation*}
we obtain
\begin{equation*}
    \E[\|\theta_{k+1}-\theta^\ast\|^{2n}|\cF_{k-\tau}]\leq (1-\alpha\mu)\E[\|\theta_k-\theta^\ast\|^{2n}|\cF_{k-\tau}] + \alpha^{n+1}\tau^n c_{n,2}'\cdot s(\theta_0,L,\mu),
\end{equation*}
where $s(\theta_0, L,\mu)$ is some constant that may depend on the initialization $\theta_0$ and the problem primitives $\mu$ and $L$ but is independent of $\alpha$.

Recursively, we get
\begin{equation*}
    \E[\|\theta_{k}-\theta^\ast\|^{2n}]\leq (1-\alpha\mu)^{k-\tau}\E[\|\theta_{\tau}-\theta^\ast\|^{2n}] + \alpha^n\tau^n\cdot s(\theta_0, L,\mu).
\end{equation*}
Lastly, we recall that 
\begin{align*}
    \E[\|\theta_{\tau}-\theta^\ast\|^{2n}]&\leq 2^{2n-1}\E[\|\theta_{\tau}-\theta_0\|^{2n}] + 2^{2n-1}\E[\|\theta_0-\theta^\ast\|^{2n}]\\
    &\leq c_{n,1}\alpha^{2n}\tau^{2n}L^{2n}(\E[\|\theta_0-\theta^\ast\|^{2n}]+1)+c_{n,2}\|\theta_0-\theta^\ast\|^{2n}\\
    &\leq c_{n,1}\E[\|\theta_0-\theta^\ast\|^{2n}]+c_{n,2}\alpha^{2n}\tau^{2n}L^{2n}.
\end{align*}
Substituting back, we obtain for sufficiently large $k$,
\begin{equation*}
    \E[\|\theta_{k}-\theta^\ast\|^{2n}]\leq c_{n,1}(1-\alpha\mu)^{k-\tau}\E[\|\theta_{0}-\theta^\ast\|^{2n}] + \alpha^{2n}\tau^{2n}s(\theta_0, L,\mu).
\end{equation*}
As such, we have proven the desired $n$-th moment bound.

\subsubsection{Proof of Lemma~\ref{lem:tech-fourth-moment-ineq}}
We now come back to Lemma~\ref{lem:tech-fourth-moment-ineq} and provide the complete proof.
\begin{proof}
The proof follows a similar strategy as \eqref{eq:tech-3-prime} and \eqref{eq:tech-3} in Section~\ref{sec:tech-lem-proof}.
        
We start with the following relaxation and obtain that
\begin{align*}
    &\E[\|\theta_k-\theta_{k-\tau}\|^{2n}|\cF_{k-\tau}]\leq \E\Big[\Big(\sum_{t=k-\tau}^{k-1}\|\theta_{t+1}-\theta_t\|\Big)^{2n}|\cF_{k-\tau}\Big]\\
    &\leq \tau^{2n-1}\sum_{t=k-\tau}^{k-1}\E[\|\theta_{t+1}-\theta_t\|^{2n}|\cF_{k-\tau}]\\
    &=\alpha^{2n}\tau^{2n-1}\sum_{t=k-\tau}^{k-1}\E[\|g(\theta_t,x_t)+\xi_{t+1}(\theta_t)\|^{2n}|\cF_{k-\tau}]\\
    &\leq 2^{2n-1}\alpha^{2n}\tau^{2n-1}\sum_{t=k-\tau}^{k-1}\Big(\E[\|g(\theta_t,x_t)\|^{2n}|\cF_{k-\tau}]+\E[\|\xi_{t+1}(\theta_t)\|^{2n}|\cF_{k-\tau}]\Big)\\
    &\leq  2^{2n-1}\alpha^{2n}\tau^{2n-1}\sum_{t=k-\tau}^{k-1}\Big(L_1^{2n}\E[(\|\theta_t-\theta^\ast\|+1)^{2n}|\cF_{k-\tau}]+L_2^{2n}(\E[\|\theta_t-\theta^\ast\||\cF_{k-\tau}]+1)^{2n}\Big)\\
    &\leq 4^{2n-1}\alpha^{2n}\tau^{2n-1}L^{2n}\sum_{t=k-\tau}^{k-1}\E[\|\theta_t-\theta^\ast\|^{2n}|\cF_{k-\tau}] + 4^{2n-1}\alpha^{2n}\tau^{2n}L^{2n}.
\end{align*}

Next, in order to obtain a bound on $\|\theta_t-\theta^\ast\|^{2n}$, we study the following term.
\begin{align*}
    &\E[\|\theta_{t+1}-\theta^\ast\|^{2n}|\cF_{k-\tau}]\leq\E\Big[\Big(\|\theta_{t+1}-\theta_t\|+\|\theta_t-\theta^\ast\|\Big)^{2n}|\cF_{k-\tau}\Big]\\
    &=\sum_{i=0}^{2n}\binom{2n}{i}\E[\|\theta_{t+1}-\theta_t\|^i\|\theta_t-\theta^\ast\|^{2n-i}|\cF_{k-\tau}]\\
    &=\E[\|\theta_t-\theta^\ast\|^{2n}|\cF_{k-\tau}]+\sum_{i=1}^{2n}\alpha^i\E[\|g(\theta_t,x_t)+\xi_{t+1}(\theta_t)\|^i\|\theta_t-\theta^\ast\|^{2n-i}|\cF_{k-\tau}]
\end{align*}
Note that
\begin{align*}
    &\E[\|g(\theta_t,x_t)+\xi_{t+1}(\theta_t)\|^i\|\theta_t-\theta^\ast\|^{2n-i}|\cF_{k-\tau}]\\
    &\leq 2^{i-1}\E\Big[\Big(\|g(\theta_t,x_t)\|^i+\|\xi_{t+1}(\theta_t)\|^i\Big)\|\theta_t-\theta^\ast\|^{2n-i}|\cF_{k-\tau}\Big]\\
    &=2^{i-1}\E\Big[\E[(\|g(\theta_t,x_t)\|^i+\|\xi_{t+1}(\theta_t)\|^i)|\theta_t]\|\theta_t-\theta^\ast\|^{2n-i}|\cF_{k-\tau}\Big]\\
    &\leq 2^{i-1}L^i\E\Big[(\|\theta_t-\theta^\ast\|+1)^{2n}|\cF_{k-\tau}\Big]\\
    &\leq 2^{2(n-1)}2^{i}L^i\Big(\E[\|\theta_t-\theta^\ast\||\cF_{k-\tau}]+1\Big)
\end{align*}
Substituting back, we obtain
\begin{align*}
    &\sum_{i=1}^{2n}\alpha^i\E[\|g(\theta_t,x_t)+\xi_{t+1}(\theta_t)\|^i\|\theta_t-\theta^\ast\|^{2n-i}|\cF_{k-\tau}]\\
    &\leq2^{2(n-1)}\sum_{i=1}^{2n}2^{i}\alpha^iL^i\Big(\E[\|\theta_t-\theta^\ast\||\cF_{k-\tau}]+1\Big)\\
    &=2^{2(n-1)}\Big(\E[\|\theta_t-\theta^\ast\||\cF_{k-\tau}]+1\Big)\cdot2\alpha L(1+2\alpha L)^{2n-1}\\
    &\leq 4^{2n-1}\alpha L \Big(\E[\|\theta_t-\theta^\ast\||\cF_{k-\tau}]+1\Big)
\end{align*}
Consolidating the terms, we have
\begin{equation*}
    \E[\|\theta_{t+1}-\theta^\ast\|^{2n}|\cF_{k-\tau}]\leq(1+4^{2n-1}\alpha L )\Big(\E[\|\theta_t-\theta^\ast\||\cF_{k-\tau}]+1\Big)
\end{equation*}
Recursively, for $0\leq l\leq\tau$, we have
\begin{align*}
    \E[\|\theta_{k-\tau+l}-\theta^\ast\|^{2n}|\cF_{k-\tau}]&\leq (1+4^{2n-1}\alpha L)^l\|\theta_{k-\tau}-\theta^\ast\|^{2n}+4^{2n-1}\alpha L\sum_{i=0}^{l-1}(1+4^{2n-1}\alpha L)^i\\
    &=(1+4^{2n-1}\alpha L)^l\|\theta_{k-\tau}-\theta^\ast\|^{2n}+(1+4^{2n-1}\alpha L)^l
\end{align*}
Then, for 
\begin{equation*}
    4^{2n-1}\alpha\tau L\leq \mu/4L<1/4,
\end{equation*}
we have for $k-\tau\leq t\leq k$,
\begin{align*}
    \E[\|\theta_t-\theta^\ast\|^{2n}|\cF_{k-\tau}]&\leq (1+2^{4n-1}\alpha\tau L)\|\theta_{k-\tau}-\theta^\ast\|^{2n}+2^{4n-1}\alpha\tau L\\
    &\leq 2\|\theta_{k-\tau}-\theta^\ast\|^{2n}+2^{4n-1}\alpha\tau L
\end{align*}

Finally, we have
\begin{align*}
    \E[\|\theta_k-\theta_{k-\tau}\|^{2n}|\cF_{k-\tau}]&\leq 4^{2n-1}\alpha^{2n}\tau^{2n-1}L^{2n}\sum_{t=k-\tau}^{k-1}\E[\|\theta_t-\theta^\ast\|^{2n}|\cF_{k-\tau}] + 4^{2n-1}\alpha^{2n}\tau^{2n}L^{2n}\\
    &\leq 4^{2n-1}\alpha^{2n}\tau^{2n}L^{2n}\Big(2\|\theta_{k-\tau}-\theta^\ast\|^{2n}+2^{4n-1}\alpha\tau L+1\Big)\\
    &\leq 2^{4n-1}\alpha^{2n}\tau^{2n}L^{2n}\Big(\|\theta_{k-\tau}-\theta^\ast\|^{2n}+1\Big).
\end{align*}
As such, we have completed the proof. 
\end{proof}
        
\section{Proof of Theorem~\ref{thm:weak-convergence-psa}}
\label{sec:ergodicity-proof}

In this section, we prove the weak convergence result in Theorem~\ref{thm:weak-convergence-psa}. In fact, the proof of the projected SA weak convergence result can be seen as a special case of unprojected SA with the asymptotic linearity condition, which we have briefly discussed in Section~\ref{sec:main-results}. Therefore, the proof proceeds in the following two subsections. First, we formally define the asymptotic linearity condition and present our weak convergence result for unprojected SA under this additional assumption. Next, we relate this result for unprojected SA to projected SA and specialize the proof to obtain Theorem~\ref{thm:weak-convergence-psa}. 

\subsection{Asymptotic Linearity}

In this subsection, we formally introduce the asymptotic linearity condition, which is crucial for establishing weak convergence in the context of unprojected SA ($\beta=\infty$). Additionally, we explore the implications of this condition.

\begin{assumption}[Asymptotic Linearity]
\label{assumption:asy-linear}
The noise sequence $(\xi_k)_{k\geq1}$ is a collection of i.i.d.\ random fields satisfying the following conditions: (1) $\E[\xi_{k+1}(\theta)|\cF_k]=0$, (2) there exists a constant $L_3>0$ such that $\xi_1$ is $L_3$-Lipschitz, i.e., $\|\xi_1(\theta) - \xi_1(\theta^\prime)\| \leq L_3\|\theta-\theta^\prime\|$, for all $\theta, \theta^\prime \in \R^d$, and (3) $\|\xi_1(0)\| \leq L_3$.

Moreover, there exists a function $G(\cdot): \mathcal{X} \to \R^{d \times d}$ such that given $\epsilon>0$, define 
    \begin{equation*}
        \delta(\epsilon):= \min\big\{\delta: \|g'(\theta,x) - G(x)\| \leq \epsilon, \, \forall x \in \mathcal{X}\quad\text{and}\quad \forall \theta \in \{\theta: \|\theta\|\geq \delta\}\big\},
    \end{equation*}
     and we have $\lim_{\epsilon \to 0}\epsilon \delta(\epsilon) = 0.$
\end{assumption}

The first part of Assumption~\ref{assumption:asy-linear} states that the random field grows at most linearly in $\theta.$ The second part of Assumption~\ref{assumption:asy-linear} implies that $g'(\theta, x)$ converges to a limit $G(x)$ when $\|\theta\| \to \infty$ for all $x \in \mathcal{X}$, which shows the asymptotic linearity of $g(\theta,x)$. Furthermore, Assumption \ref{assumption:asy-linear} also requires how fast $g'(\theta,x)$ converges to $G(x)$. A sufficient condition under which the second part of Assumption~\ref{assumption:asy-linear} holds is that there exists $\omega>0$ such that $\|\theta\|^{1+\omega}\|g'(\theta,x) - G(x)\| < \infty$ for $\forall \theta\in \R^d \text{ and } x\in \mathcal{X}.$ We can verify that to ensure $\|g'(\theta,x) - G(x)\|< \epsilon$, we can set $\|\theta\| \in \Theta(\epsilon^{-\frac{1}{1+\omega}})$, which can ensure $\epsilon \delta(\epsilon) \in \mathcal{O}(\epsilon^{\frac{\omega}{1+\omega}}) \to 0$ as $\epsilon \to 0$. This sufficient condition implies that $g'(\theta,x)$ uniformly converge to $G(x)$ with convergence rate of $\mathcal{O}(\|\theta\|^{-(1+w)})$. By definition, we conclude that the structure of linear SA is also asymptotic linear. Besides that, the 1-dimensional logistic regression also satisfies Assumption \ref{assumption:asy-linear}. For 1-dimensional logistic regression, we have $g(\theta,x,y) = x\Big(\frac{1}{1+e^{-\theta x}}-y\Big)+\lambda \theta$, where $(x,y)$ presents the data. Therefore, we have $g'(\theta,x,y) =  \frac{x^2e^{-\theta x}}{(1+e^{-\theta x})^2}+\lambda$ and $g'(\theta,x,y)$ uniformly converges to $\lambda$ with geometric convergence rate, thereby satisfies the Assumption \ref{assumption:asy-linear}.

\subsection{Proof Under Assumption~\ref{assumption:asy-linear}}
\label{sec: ergodicity-proof-5b}

With the asymptotic linearity condition now formally defined, we proceed to prove the weak convergence for unprojected SA. For convenient reference, we state the theorem below.

\begin{thm}[Ergodicity of SA--Asymptotic Linearity]
\label{thm:weak-convergence-linear}
    Suppose that Assumption~\ref{assumption:uniform-ergodic}--Assumption~\ref{assumption:noise} hold. Additionally, assume~\ref{assumption:asy-linear}. For stepsize $\alpha>0$ that satisfies the constraint $\alpha\tau_\alpha L^2<\min(c_2\mu,\kappa_\mu)$, with $c_2$ formalized in Proposition~\ref{prop:2n-convergence} and $\kappa_\mu$ defined in~\eqref{eq:kappa-mu-def}, the Markov chain $(x_k,\theta_k)_{k\geq0}$ converges to a unique stationary distribution $\bar\nu_\alpha\in\cP_2(\cX\times\R^d)$. 

    Moreover, there exist $\kappa_\mu>0$ and some universal constant $c'$ such that 
    \begin{equation}
    \label{eq:kappa-mu-def}
        \epsilon \delta(\epsilon) \leq c'\mu,\quad \forall \epsilon \leq \kappa_\mu.
    \end{equation}
    We let $\nu_\alpha:=\law(\theta_{\infty})$ be the second marginal of $\Bar{\nu}_\alpha$.
    For $k\ge 2\tau_\alpha$, it holds that 
    \begin{equation}
    \label{eq:wasserstein-rate}
        W_2(\law(\theta_k),\nu_\alpha)\leq \Bar{W}_2(\law(x_k,\theta_k),\bar\nu_\alpha)\leq (1-\alpha\mu)^{k/2}\cdot s(\theta_0,L,\mu).
    \end{equation}

\end{thm}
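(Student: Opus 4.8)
The plan is to run a forward coupling, reduce everything to a mean-square contraction of the difference process, and then invoke completeness of $(\cP_2(\cX\times\R^d),\Bar W_2)$. Concretely, I would fix one data sequence $(x_k)_{k\geq0}$ and one noise field sequence $(\xi_{k+1})_{k\geq0}$, and run two copies $(\theta_k^\sampleOne)$ and $(\theta_k^\sampleTwo)$ of~\eqref{eq:sa-iterate} (with $\beta=\infty$) from arbitrary initializations, writing $w_k:=\theta_k^\sampleOne-\theta_k^\sampleTwo$. The goal is the per-step contraction $\E[\|w_{k+1}\|^2]\leq(1-\alpha\mu)\E[\|w_k\|^2]$ for $k\geq\tau_\alpha$. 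Since the two chains share $x_k$, the indicator term in $\Bar d$ vanishes and $\Bar W_2$ between two initial laws is governed by $\E[\|w_k\|^2]$ in the $\theta$-component, while the $x$-component equilibrates by uniform ergodicity (Assumption~\ref{assumption:uniform-ergodic}); combining the two gives geometric $\Bar W_2$-contraction at rate $(1-\alpha\mu)^{k/2}$. Throughout I would lean on Proposition~\ref{prop:2n-convergence} and Lemma~\ref{lem:mse-tech} to keep $\E[\|\theta_k-\theta^\ast\|^{2p}]$ and $\E[\|\theta_k-\theta_{k-\tau}\|^m\mid\cF_{k-\tau}]$ uniformly bounded, which both places all laws in $\cP_2$ and turns the conditional estimates below into bona fide expectation bounds.

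For the drift, conditioning on $\cF_{k-\tau}$ and using that the shared noise is zero-mean ($\E[\xi_{k+1}(\theta)\mid\cF_k]=0$) and Lipschitz (Assumption~\ref{assumption:asy-linear}), the square of the noise increment is absorbed into an $\bigO(\alpha^2)$ term, leaving
\begin{equation*}
\E[\|w_{k+1}\|^2\mid\cF_{k-\tau}]=\E[\|w_k\|^2\mid\cF_{k-\tau}]+2\alpha\E[\dotp{w_k,g(\theta_k^\sampleOne,x_k)-g(\theta_k^\sampleTwo,x_k)}\mid\cF_{k-\tau}]+\bigO(\alpha^2)\E[\|w_k\|^2\mid\cF_{k-\tau}].
\end{equation*}
Replacing $\theta_k^{[j]}$ by $\theta_{k-\tau}^{[j]}$ inside the inner product and using that, given $\cF_{k-\tau}$, the $\tau_\alpha$-mixing makes the law of $x_k$ be $\alpha$-close in total variation to $\pi$, the leading piece reduces to $\dotp{w_{k-\tau},\Bar g(\theta_{k-\tau}^\sampleOne)-\Bar g(\theta_{k-\tau}^\sampleTwo)}\leq-\mu\|w_{k-\tau}\|^2$ by strong monotonicity (Assumption~\ref{assumption:strong-convexity}); this is the negative/Hurwitz surrogate promised in Section~\ref{sec:challenges}. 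The error of this replacement is exactly
\begin{equation*}
\spadesuit=\dotp{w_{k-\tau},g(\theta_k^\sampleOne,x_k)-g(\theta_k^\sampleTwo,x_k)-g(\theta_{k-\tau}^\sampleOne,x_k)+g(\theta_{k-\tau}^\sampleTwo,x_k)},
\end{equation*}
and the whole argument hinges on showing $\E[\spadesuit\mid\cF_{k-\tau}]$ is of strictly higher order than the $-\alpha\mu\|w_{k-\tau}\|^2$ gain.

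The main obstacle is bounding $\spadesuit$ without a projection, where $T_1:=\min_{j\in\{1,2\},\,t\in\{k,k-\tau\}}\|\theta_t^{[j]}\|+1$ is no longer bounded by the radius $\beta$. I would apply the two Taylor expansions of Section~\ref{sec:challenges}: expanding across the two copies yields $\spadesuit\lesssim\|w_k\|^2(\|w_k\|+\alpha\tau T_1)$, while expanding across the two time indices yields $\spadesuit\lesssim\|w_k\|\alpha\tau T_1(\|w_k\|+\alpha\tau T_1)$, and combining them gives $\spadesuit\lesssim\alpha\tau\|w_k\|^2T_1$. To defuse the stray factor $T_1$ I would split on the threshold $\delta(\epsilon)$ of Assumption~\ref{assumption:asy-linear} with the calibrated choice $\epsilon:=\alpha\tau_\alpha L^2$: on $\{\|\theta_t^{[j]}\|\geq\delta(\epsilon)\}$ the increment $g'(\theta_k^{[j]},x_k)-g'(\theta_{k-\tau}^{[j]},x_k)$ is controlled by $\|g'-G\|\leq2\epsilon$ rather than by $g''\cdot\|\theta_k^{[j]}-\theta_{k-\tau}^{[j]}\|$, so the $\alpha\tau T_1$ factor collapses to $\epsilon$ and $\spadesuit\lesssim\epsilon\|w_k\|^2$; on $\{\|\theta_t^{[j]}\|\leq\delta(\epsilon)\}$ one simply bounds $T_1\leq\delta(\epsilon)+1$ to get $\spadesuit\lesssim\alpha\tau\delta(\epsilon)\|w_k\|^2$. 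Thus $\E[\alpha\spadesuit\mid\cF_{k-\tau}]\lesssim\alpha(\epsilon+\alpha\tau\delta(\epsilon))\E[\|w_k\|^2\mid\cF_{k-\tau}]$, and the stepsize constraint $\alpha\tau_\alpha L^2<\min(c_2\mu,\kappa_\mu)$ forces $\epsilon\leq\kappa_\mu$, whence $\epsilon\leq c_2\mu$ and $\alpha\tau\delta(\epsilon)=\epsilon\delta(\epsilon)/L^2\leq c'\mu$ by the definition~\eqref{eq:kappa-mu-def} of $\kappa_\mu$ together with $L\geq1$. This calibration is precisely what converts asymptotic linearity into a usable higher-order bound and is the delicate heart of the proof.

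Assembling the pieces, and using $w_k\approx w_{k-\tau}$ (made rigorous through Lemma~\ref{lem:mse-tech}) to swap $\|w_{k-\tau}\|^2$ for $\E[\|w_k\|^2\mid\cF_{k-\tau}]$, the conditional drift becomes
\begin{equation*}
\E[\|w_{k+1}\|^2\mid\cF_{k-\tau}]\leq\big(1-2\alpha\mu+c\alpha^2\tau L^2+c\alpha\epsilon+c\alpha^2\tau\delta(\epsilon)\big)\E[\|w_k\|^2\mid\cF_{k-\tau}],
\end{equation*}
which for $\alpha\tau_\alpha L^2<\min(c_2\mu,\kappa_\mu)$ is at most $(1-\alpha\mu)\E[\|w_k\|^2\mid\cF_{k-\tau}]$. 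Taking full expectations and unrolling over $k\geq\tau_\alpha$ gives $\E[\|w_k\|^2]\leq(1-\alpha\mu)^{k}\,s(\theta_0,L,\mu)$, hence $\Bar W_2$-contraction at rate $(1-\alpha\mu)^{k/2}$ for $k\geq2\tau_\alpha$. Applying this between two copies started from different initial laws (with the $x$-marginals coupled via uniform ergodicity) shows the marginal laws $(\law(x_k,\theta_k))_k$ form a Cauchy sequence; completeness of $(\cP_2(\cX\times\R^d),\Bar W_2)$ then yields a unique invariant limit $\Bar\nu_\alpha$ satisfying~\eqref{eq:wasserstein-rate}, and $W_2(\law(\theta_k),\nu_\alpha)\leq\Bar W_2(\law(x_k,\theta_k),\Bar\nu_\alpha)$ follows by projecting onto the second marginal.
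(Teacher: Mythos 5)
Your overall architecture is the same as the paper's: the shared-data forward coupling, the min of the two Taylor expansions yielding $\spadesuit\lesssim\alpha\tau\|w_k\|^2T_1$, the calibration $\epsilon:=\alpha\tau_\alpha L^2$ combined with $\epsilon\delta(\epsilon)\leq c'\mu$, and the Cauchy-sequence/completeness argument in $(\cP_2(\cX\times\R^d),\bar{W}_2)$. The gap is in your two-way case split for defusing $T_1$, which is exactly the step you call the delicate heart of the proof. Your first case --- all four of $\theta_{k-\tau}^{\sampleOne},\theta_k^{\sampleOne},\theta_{k-\tau}^{\sampleTwo},\theta_k^{\sampleTwo}$ have norm $\geq\delta(\epsilon)$, hence asymptotic linearity applies --- is not valid as stated, because the derivative evaluations produced by a correct expansion of $\spadesuit$ do not sit at the iterates. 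In the mean-value (remainder-free) form of the across-copies expansion, $g'$ is evaluated at $h_k=\lambda\theta_k^{\sampleOne}+(1-\lambda)\theta_k^{\sampleTwo}$, a point on the segment joining the two copies; when the copies are nearly antipodal ($\theta_k^{\sampleTwo}\approx-\theta_k^{\sampleOne}$ with large norm), all four iterates exceed $\delta(\epsilon)$ yet $h_k$ can be arbitrarily close to the origin, where $\|g'-G\|$ is only $\bigO(L)$, not $\bigO(\epsilon)$. If you instead expand at the iterates themselves, so that the increment $g'(\theta_k^{[j]},x_k)-g'(\theta_{k-\tau}^{[j]},x_k)$ really is evaluated at points of norm $\geq\delta(\epsilon)$, you reinstate a Lagrange remainder of order $\|w_k\|^3$ --- precisely the unbounded cubic term that the min-of-two-expansions device was introduced to remove. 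Either way, the regime ``all iterates large but the copies far apart relative to their size,'' i.e.\ $\|\theta_k^{\sampleOne}\|<2\|w_k\|$, is covered by neither of your cases, and the drift inequality you assert fails there.

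This is why the paper's proof of Proposition~\ref{prop: difference second moment} uses a \emph{three}-way split: (i) some iterate has norm $\leq4\delta(\epsilon)$ --- your second case; (ii) $\|\theta_k^{\sampleOne}\|<2\|w_k\|$ (hence $\|\theta_k^{\sampleTwo}\|\leq3\|w_k\|$) --- here asymptotic linearity is abandoned and one uses the direct Lipschitz bound $|\spadesuit|\leq2L\|w_k\|\big(\|\theta_k^{\sampleOne}-\theta_{k-\tau}^{\sampleOne}\|+\|\theta_k^{\sampleTwo}-\theta_{k-\tau}^{\sampleTwo}\|\big)\lesssim\alpha\tau L^2\|w_k\|^2$, the last step holding precisely because both iterate norms are $\lesssim\|w_k\|$ in this case; (iii) all iterates $\geq4\delta(\epsilon)$ \emph{and} $\|\theta_k^{\sampleOne}\|\geq2\|w_k\|$ --- only under this extra condition do the mean-value points satisfy $\|h_k\|,\|h_{k-\tau}\|\geq\delta(\epsilon)$, which is what legitimizes your asymptotic-linearity step. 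Adding case (ii) and the condition $\|\theta_k^{\sampleOne}\|\geq2\|w_k\|$ to your first case repairs the argument. A secondary, more minor omission: to extract a Cauchy sequence from the shared-data coupling one must compare $\law(x_k,\theta_k)$ with $\law(x_{k+1},\theta_{k+1})$; the paper does this by constructing the second copy through the time-reversed kernel $\Padj$ so that $(x_k,\theta_k^{\sampleTwo})\overset{\text{d}}{=}(x_{k+1},\theta_{k+1}^{\sampleOne})$, a step your appeal to ``coupling the $x$-marginals via uniform ergodicity'' does not supply --- uniform ergodicity is used later, only to pass from $x_0\sim\pi$ to arbitrary initialization.
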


The proof of Theorem~\ref{thm:weak-convergence-linear} consists of two major steps. Firstly, we assume that $x_0\sim\pi$, and show that $(x_k,\theta_k)_{k\geq0}$ converges to a unique limiting invariant distribution. Next, we relax the assumption of $x_0\sim\pi$, and prove that for arbitrary initialization $(x_0,\theta_0)\in\cX\times\R^d$, the Markov chain will converge to the same limit. 

\paragraph*{Step 1: Initialization with $x_0\sim\pi$.}
To prove the convergence of the Markov chain, we consider the following coupling construction. We have a pair of Markov chains $(x_k, \theta_k^\sampleOne)_{k\geq0}$ and $(x_k,\theta_k^\sampleTwo)_{k\geq0} $ sharing the same underlying process and noise, i.e., $(x_k, \xi_{k+1})_{k\geq0}$, i.e.,
\begin{equation}\label{eq: couple}
\begin{aligned}
\theta_{k+1}^{[1]} &= \theta_k^{[1]} + \alpha (g(\theta_k^{[1]},x_k) + \xi_{k+1}(\theta_k^{[1]})),\\
\theta_{k+1}^{[2]} &= \theta_k^{[2]} + \alpha (g(\theta_k^{[2]},x_k) + \xi_{k+1}(\theta_k^{[2]})).
\end{aligned}
\end{equation}
We assume that the initial iterates $\theta_0^\sampleOne$ and $\theta_0^\sampleTwo$ may depend on each other and on $x_0$, but are independent of subsequent $(x_k)_{k\geq1}$ given $x_0$. For the iterates difference $\theta_k^{[1]}- \theta_k^{[2]}$, we have the following Proposition \ref{prop: difference second moment}, whose proof is given at the end of this subsection. 
\begin{proposition}\label{prop: difference second moment} $\forall k \geq \tau$ and $\alpha\tau \leq \min(\frac{\mu}{908L^2}, \frac{\kappa_\mu}{L^2}),$ we have
\[\E[\|\theta_{k}^{[1]}-\theta_{k}^{[2]}\|^2] \leq 4(1-\mu\alpha)^{k-\tau}\E[\|\theta_{0}^{[1]}-\theta_{0}^{[2]}\|^2],\]
where $\kappa_\mu>0$ and $\epsilon \delta(\epsilon) \leq \frac{\mu}{768}, \forall \epsilon \leq \kappa_\mu.$
\end{proposition}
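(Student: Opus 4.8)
The plan is to control the squared norm of the coupled difference $w_k:=\theta_k^{[1]}-\theta_k^{[2]}$ and show it contracts at rate $1-\alpha\mu$. From the coupled recursion~\eqref{eq: couple} the difference obeys $w_{k+1}=w_k+\alpha(\Delta g_k+\Delta\xi_k)$ with $\Delta g_k:=g(\theta_k^{[1]},x_k)-g(\theta_k^{[2]},x_k)$ and $\Delta\xi_k:=\xi_{k+1}(\theta_k^{[1]})-\xi_{k+1}(\theta_k^{[2]})$. Expanding $\|w_{k+1}\|^2$ and taking $\E[\,\cdot\mid\cF_k]$, the cross term $\langle w_k,\Delta\xi_k\rangle$ vanishes because $\E[\xi_{k+1}(\theta)\mid\cF_k]=0$ and $\theta_k^{[1]},\theta_k^{[2]}$ are $\cF_k$-measurable, while the $\alpha^2$ term is bounded by $c\alpha^2L^2\|w_k\|^2$ using the $L_1$-Lipschitzness of $g$ and the $L_3$-Lipschitzness of $\xi_1$ (Assumption~\ref{assumption:asy-linear}). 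Since the noise thus enters only \emph{multiplicatively} in $\|w_k\|^2$---there is no additive floor as in Proposition~\ref{prop:2n-convergence}---the target is a purely geometric recursion, and everything reduces to showing $\E[\langle w_k,\Delta g_k\rangle\mid\cF_{k-\tau}]\le -\mu\,\E[\|w_k\|^2\mid\cF_{k-\tau}]+(\text{higher order})$.

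To produce the contraction I would split $\Delta g_k=\big(\bar g(\theta_k^{[1]})-\bar g(\theta_k^{[2]})\big)+D_k$, where $D_k$ is the centered fluctuation. Strong monotonicity (Assumption~\ref{assumption:strong-convexity}) gives $\langle w_k,\bar g(\theta_k^{[1]})-\bar g(\theta_k^{[2]})\rangle\le-\mu\|w_k\|^2$ directly, which is the sole source of decay. It then remains to bound $\E[\langle w_k,D_k\rangle\mid\cF_{k-\tau}]$. Following the conditioning idea of~\cite{huo23-td}, I would replace the slowly-varying $w_k$ by the $\cF_{k-\tau}$-measurable $w_{k-\tau}$---with the replacement error controlled by auxiliary bounds $\|w_k-w_{k-\tau}\|\lesssim\alpha\tau L\sup_{k-\tau\le t\le k}\|w_t\|$ proved exactly as in Lemma~\ref{lem:mse-tech}---and likewise replace the arguments $\theta_k^{[j]}$ inside $D_k$ by $\theta_{k-\tau}^{[j]}$. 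After these substitutions $D_k$ is evaluated at $\cF_{k-\tau}$-measurable points, so the mixing bound of Assumption~\ref{assumption:uniform-ergodic} (with $\tau=\tau_\alpha$) combined with the uniform boundedness of $g'$ yields $\|\E[D_k\mid\cF_{k-\tau}]\|\lesssim\alpha\|w_{k-\tau}\|$, hence an $O(\alpha)\|w_{k-\tau}\|^2$ contribution. The substitutions themselves, however, generate the correction object $g'(\theta_k^{[j]},x_k)-g'(\theta_{k-\tau}^{[j]},x_k)$ (the $\spadesuit$ term of Section~\ref{sec:challenges}), handled by the two Taylor expansions described there.

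The main obstacle is precisely this correction in the \emph{unprojected} setting: naively bounding it through the $g''$-bound and $\|\theta_k^{[j]}-\theta_{k-\tau}^{[j]}\|\lesssim\alpha\tau L(\|\theta_{k-\tau}^{[j]}-\theta^\ast\|+1)$ leaves the uncontrolled factor $T_1=\min_j\|\theta_{k-\tau}^{[j]}\|+1$, which can diverge without projection. Here Assumption~\ref{assumption:asy-linear} is essential: writing $g'(\theta,x)=G(x)+E(\theta,x)$ with $\|E(\theta,x)\|\le\epsilon$ whenever $\|\theta\|\ge\delta(\epsilon)$, the limit $G(x_k)$ cancels in the correction and leaves only the residual $E$. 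In the large-norm regime the residual is $\le 2\epsilon$ independently of $T_1$, while in the small-norm regime $\|\theta^{[j]}\|<\delta(\epsilon)$ the iterates are trapped in a ball of radius $\delta(\epsilon)$, so $\|\theta_k^{[j]}-\theta_{k-\tau}^{[j]}\|\lesssim\alpha\tau L\,\delta(\epsilon)$ and the $g''$-bound applies. Balancing the two regimes replaces the divergent $T_1$ by a correction of order $\epsilon+\alpha\tau L\,\delta(\epsilon)$; choosing $\epsilon=\kappa_\mu$ and invoking $\alpha\tau\le\kappa_\mu/L^2$ turns the second piece into $\kappa_\mu\delta(\kappa_\mu)/L\le\epsilon\delta(\epsilon)$, so that the hypothesis $\epsilon\delta(\epsilon)\le\mu/768$ forces the total correction strictly below the $\mu\|w_k\|^2$ contraction. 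I expect the delicate bookkeeping to lie in threading the two Taylor expansions so that the final bound reads $\lesssim(\epsilon+\alpha\tau L\delta(\epsilon))\|w_k\|^2$ rather than $\alpha\tau T_1\|w_k\|^2$.

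Assembling these estimates gives the one-step bound $\E[\|w_{k+1}\|^2\mid\cF_{k-\tau}]\le(1-\alpha\mu)\E[\|w_k\|^2\mid\cF_{k-\tau}]$ once $\alpha\tau$ is below the stated thresholds (the constant $908$ absorbing the accumulated higher-order coefficients). Taking total expectations and iterating from $\tau$ to $k$ yields $\E[\|w_k\|^2]\le(1-\alpha\mu)^{k-\tau}\E[\|w_\tau\|^2]$, and the pathwise one-step growth $\|w_{t+1}\|\le(1+\alpha L)\|w_t\|$ (valid since $\|\Delta g_t+\Delta\xi_t\|\le L\|w_t\|$ by Lipschitzness) gives $\E[\|w_\tau\|^2]\le(1+\alpha L)^{2\tau}\E[\|w_0\|^2]\le4\E[\|w_0\|^2]$ for $\alpha\tau L$ small, producing the stated prefactor $4$.
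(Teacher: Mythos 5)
Your outline follows the paper's proof almost step for step: the same coupling, the vanishing martingale cross term and the $O(\alpha^2L^2)\,\E[\|w_k\|^2]$ quadratic term, strong monotonicity as the sole source of contraction, conditioning on $\cF_{k-\tau}$ with the mixing and substitution bounds of Lemma~\ref{lemma: difference}, the two Taylor expansions for $\spadesuit$, and the same prefactor-$4$ assembly. The gaps are precisely in the two places where you invoke Assumption~\ref{assumption:asy-linear}, and both would make the argument fail as written. First, your regime split is on the norms of the \emph{iterates}, but the residual $E(\cdot,x_k)=g'(\cdot,x_k)-G(x_k)$ must be evaluated at the Taylor \emph{interpolation points}, e.g.\ $h_k=\lambda_1\theta_k^{[1]}+(1-\lambda_1)\theta_k^{[2]}$ and $h_{k-\tau}$. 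A convex combination of the two coupled iterates can sit near the origin even when both iterates have norm far above $\delta(\epsilon)$ (take $\theta_k^{[2]}\approx-\theta_k^{[1]}$), so the claim that ``in the large-norm regime the residual is $\le 2\epsilon$'' does not follow from the iterates being large. The paper closes this hole with a third case: when $\|\theta_k^{[1]}\|<2\|w_k\|$, all four iterate norms are $O(\|w_k\|)$ and a direct estimate via Lemma~\ref{lemma: difference} gives $O(\alpha\tau L^2)\|w_k\|^2$; only in the complementary case $\|\theta_k^{[1]}\|\ge 2\|w_k\|$ (together with all four norms $\ge 4\delta$) does one get $\|h_k\|\ge\|\theta_k^{[1]}\|-\|w_k\|\ge 2\delta$ and $\|h_{k-\tau}\|\ge\delta$, which is what licenses the cancellation against $G(x_k)$.

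Second, the parameter choice $\epsilon=\kappa_\mu$ does not close the argument: the hypothesis controls the product $\epsilon\,\delta(\epsilon)\le\mu/768$ for all $\epsilon\le\kappa_\mu$, not $\epsilon$ itself, so your large-norm contribution $2\epsilon=2\kappa_\mu$ need not be small compared with $\mu$. Indeed, for exactly linear $g$ one has $\delta\equiv 0$, so $\kappa_\mu$ may be taken arbitrarily large and the bound $2\kappa_\mu\|w_k\|^2$ is vacuous. The paper instead takes $\epsilon=\alpha\tau L^2$: the stepsize condition makes this choice simultaneously $\le\kappa_\mu$, so that the hypothesis bounds the term $\delta_{\alpha\tau L^2}\cdot\alpha\tau L^2=\epsilon\,\delta(\epsilon)$ produced by the small-norm case, and $\le\mu/908$, so that the bare $\epsilon$ term produced by the large-norm case is absorbed by the $-\mu$ contraction. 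With that substitution of $\epsilon$, and with the missing middle case added, your plan coincides with the paper's proof.
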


By Proposition \ref{prop: difference second moment} and the definition of $W_2$ and $\Bar{W}_2$, we have
\begin{equation}\label{eq: geo}
   \begin{aligned} 
W_2^2\left(\law(\theta_{k}^{[1]}), \law(\theta_{k}^{[2]})\right) &\overset{(\text{i})}{\leq} \bar{W}_2^2\left(\law(x_k, \theta_{k}^{[1]}), \law(x_k,\theta_{k}^{[2]})\right)\\
&\overset{(\text{ii})}{\leq} \E[\|\theta_{k}^{[1]}-\theta_{k}^{[2]}\|^2] \\
&\overset{(\text{iii})}{\leq} 4(1-\mu\alpha)^{k-\tau}\E[\|\theta_{0}^{[1]}-\theta_{0}^{[2]}\|^2],
\end{aligned} 
\end{equation}
where (i) and (ii) hold by the definition of $W_2$ and $\bar{W}_2$ and (iii) holds by applying Proposition \ref{prop: difference second moment}. 

Note that equation \eqref{eq: geo} always holds for any joint distribution of initial iterates ($x_0,\theta_0^{[1]}, \theta_0^{[2]}$). Recall that $P^*$ represents the transition kernel for the time-reversed Markov chain of $\{x_k\}_{k \geq 0}$, and the initial distribution of $x_0$ is assumed to be mixed already. Given a specific $x_0$, we sample $x_{-1}$ from $P^*(\cdot \mid x_0)$. Additionally, we use $\theta_{-1}^{[2]}$ to denote the random varible that satisfies $\theta_{-1}^{[2]} \overset{d}{=}\theta_0^{[1]}$ and is independent of $\{x_k\}_{k \geq 0}$. Finally, we set $\theta_0^{[2]}$ as 
\[\theta_0^{[2]} = \theta_{-1}^{[2]} + \alpha (g(x_{-1}, \theta_{-1}^{[2]}) + \xi_0(\theta_{-1}^{[2]})).\]
By the property of time-reversed Markov chain, we have $\{x_k\}_{k \geq -1} \overset{d}{=} \{x_k\}_{k \geq 0}$. Given that $\theta^{[2]}_{-1} \overset{d}{=} \theta^{[1]}_{0}$ and $\theta^{[2]}_{-1}$ is independent with $\{x_k\}_{k \geq -1}$, we can prove $(x_k, \theta_k^{[2]}) \overset{d}{=} (x_{k+1}, \theta_{k+1}^{[1]})$ for $k \geq 0$. We thus have for all $k \geq \tau$:
\begin{align*}
\bar{W}_2^2\left(\mathcal{L}\left(x_k, \theta^{[1]}_k\right), \mathcal{L}\left(x_{k+1}, \theta^{[1]}_{k+1} \right)\right) & =\bar{W}_2^2\left(\mathcal{L}\left(x_k, \theta^{[1]}_k\right), \mathcal{L}\left(x_k, \theta^{[2]}_k\right)\right) \\
& \overset{(\text{i})}{\leq} 4(1-\mu\alpha)^{k-\tau}\E[\|\theta_{0}^{[1]}-\theta_{0}^{[2]}\|^2],
\end{align*}
where (i) holds by inequality \eqref{eq: geo}. Then, we have
\begin{align*}
&\sum_{k = 0}^{\infty} \bar{W}_2^2\left(\mathcal{L}\left(x_k,\theta^{[1]}_k\right), \mathcal{L}\left(x_{k+1}, \theta^{[1]}_{k+1}\right)\right)\\
\leq & \sum_{k = 0}^{t_\alpha - 1} \bar{W}_2^2\left(\mathcal{L}\left(x_k,\theta^{[1]}_k\right), \mathcal{L}\left(x_{k+1}, \theta^{[1]}_{k+1} \right)\right) + 4\E[\|\theta_{0}^{[1]}-\theta_{0}^{[2]}\|^2] \sum_{k = 0}^{\infty} (1-\mu\alpha)^{k}\\
< & \infty.
\end{align*}
Consequently, $\{\mathcal{L}(x_k, \theta^{[1]}_k )\}_{k \geq 0}$ forms a Cauchy sequence w.r.t.\ the metric $\Bar{W}_2$. Since the space $\mathcal{P}_2(\mathcal{X} \times \mathbb{R}^d  )$ endowed with $\Bar{W}_2$ is a Polish space, every Cauchy sequence converges \cite[Theorem 6.18]{Villani08-ot_book}. Furthermore, convergence
in Wasserstein 2-distance also implies weak convergence  \cite[Theorem 6.9]{Villani08-ot_book}. Therefore, we conclude that the sequence $(\mathcal{L}(x_k, \theta^{[1]}_k))_{k \geq 0}$ converges weakly to a limit distribution $\Bar{\mu} \in \mathcal{P}_2(\mathcal{X} \times \mathbb{R}^d  )$.

Now that we have established the existence of a limiting distribution, we next proceed to show the uniqueness. We prove this by contradiction. Note that we currently assume that $x_0\sim\pi$, hence to show that the limit $(x_\infty,\theta_\infty)$ is unique, we only need to show that the limit is independent of the initial distribution of $\theta_0$, which can be correlated to $x_0$.

Consider two Markov chains $(x_k,\theta_k)_{k\geq0}$ and $(x_k,\theta_k')_{k\geq0}$, sharing $(x_k,\xi_{k+1})_{k\geq0}$ but with arbitrary initialization of $\theta_0$ and $\theta_0'$. For the sake of contradiction, we assume that $(x_0,\theta_0)\Rightarrow (x_\infty,\theta_\infty)$ and $(x_0, \theta_0')\Rightarrow (x_\infty, \theta_\infty')$ respectively. 
Then, by the triangle inequality, we have that
\begin{align*}
    &\bar{W}_2\Big((x_\infty,\theta_\infty),(x_\infty,\theta_\infty')\Big)\\
    &\leq \bar{W}_2\Big((x_\infty,\theta_\infty),(x_k,\theta_k)\Big) + \bar{W}_2\Big((x_k,\theta_k),(x_k,\theta_k')\Big) + \bar{W}_2\Big((x_k,\theta_k'),(x_\infty,\theta_\infty')\Big) \\
    &\rightarrow0.
\end{align*}
As such, we have shown that the limit $\bar\nu$ is unique.

Lastly, we prove that $\bar\nu$ is invariant. Suppose that we initialize the joint process at its limit, i.e., $(x_0,\theta_0)\sim\bar\nu$. 
    We first apply the triangle inequality, and we obtain
    \begin{equation*}
        \Bar{W}_2\Big((x_1,\theta_1),(x_0,\theta_0)\Big)\leq \Bar{W}_2\Big((x_1,\theta_1),(x_{k+1},\theta_{k+1})\Big) + \Bar{W}_2\Big((x_{k+1},\theta_{k+1}),(x_0,\theta_0)\Big).
    \end{equation*}
    Clearly, as $k\to\infty$, $\Bar{W}_2\Big((x_{k+1},\theta_{k+1}),(x_0,\theta_0)\Big)\rightarrow0$.
    To bound $\Bar{W}_2\Big((x_1,\theta_1),(x_{k+1},\theta_{k+1})\Big)$, we need the following lemma.
    \begin{lem}\label{lemma: inv}
    Consider two copies of the SA trajectory, where $\law(x_0,\theta_0)=\bar\nu$ and $\law(x_0',\theta_0')$ is allowed to be arbitrary.
        \begin{equation*}
            \Bar{W}_2^2\Big(\law(x_1,\theta_1),\law(x_1',\theta_1')\Big)\leq \rho_1\cdot \Bar{W}_2^2\Big(\law(x_0,\theta_0),\law(x_0',\theta_0')\Big)+\rho_2\cdot\sqrt{ \Bar{W}_2^2\Big(\law(x_0,\theta_0),\law(x_0',\theta_0')\Big)},
        \end{equation*}
        where
        \begin{equation*}
            \rho_1:=1+2(1+\alpha L)^2+16\alpha^2L^2<\infty\quad\text{and}\quad
            \rho_2:=16\alpha^2L^2\sqrt{\E[\|\theta_0\|^4]}<\infty
        \end{equation*}
        are independent of $\law(x_0',\theta_0')$.
    \end{lem}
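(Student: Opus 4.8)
The plan is to prove this one-step estimate by exhibiting an explicit coupling of $\law(x_1,\theta_1)$ and $\law(x_1',\theta_1')$ and bounding $\E[\indic\{x_1\neq x_1'\}+\|\theta_1-\theta_1'\|^2]$ under it; since $\Bar W_2^2$ is an infimum over couplings, any such construction yields an upper bound on the left-hand side. First I would take the optimal coupling of the initial laws $\law(x_0,\theta_0)=\bar\nu$ and $\law(x_0',\theta_0')$ attaining $\Bar W_2$, so that $\Prob(x_0\neq x_0')+\E[\|\theta_0-\theta_0'\|^2]=\Bar W_2^2(\law(x_0,\theta_0),\law(x_0',\theta_0'))$, and in particular $\Prob(x_0\neq x_0')\le\Bar W_2^2$. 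I then propagate it one step using the same \emph{forward coupling} as elsewhere in the paper: a shared noise draw $\xi_1$ for both $\theta$-updates, and a synchronous transition $x_1=x_1'$ on $\{x_0=x_0'\}$ while letting $x_1,x_1'$ evolve arbitrarily on $\{x_0\neq x_0'\}$. This has the correct marginals, and the synchronous choice immediately gives $\E[\indic\{x_1\neq x_1'\}]\le\Prob(x_0\neq x_0')\le\Bar W_2^2$, which supplies the leading $1$ in $\rho_1$.

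For the continuous component I would split $\E[\|\theta_1-\theta_1'\|^2]$ according to whether $x_0=x_0'$. On $\{x_0=x_0'\}$ both iterates are images of $\theta_0,\theta_0'$ under the \emph{same} map $\theta\mapsto\theta+\alpha(g(\theta,x_0)+\xi_1(\theta))$, which is $(1+\alpha L)$-Lipschitz by Lipschitzness of $g(\cdot,x)$ (Assumption~\ref{assumption:linear-growth}) and of $\xi_1$ (Assumption~\ref{assumption:asy-linear}); this event therefore contributes at most $(1+\alpha L)^2\E[\|\theta_0-\theta_0'\|^2]$.

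The crux is the event $\{x_0\neq x_0'\}$, where the two updates see different data and the synchronous Lipschitz bound fails for the $g$-terms. Here I would insert the intermediate value $g(\theta_0,x_0')$ and write
\begin{equation*}
\theta_1-\theta_1'
=\underbrace{(\theta_0-\theta_0')+\alpha\big[(g(\theta_0,x_0')+\xi_1(\theta_0))-(g(\theta_0',x_0')+\xi_1(\theta_0'))\big]}_{A}
+\underbrace{\alpha\big(g(\theta_0,x_0)-g(\theta_0,x_0')\big)}_{B},
\end{equation*}
so that $\|A\|\le(1+\alpha L)\|\theta_0-\theta_0'\|$ by the same-data Lipschitz bound, and $\|B\|\le 2\alpha L(\|\theta_0-\theta^\ast\|+1)$ by the linear-growth bound $\|g(\theta_0,x)\|\le L(\|\theta_0-\theta^\ast\|+1)$ applied at the \emph{stationary} iterate $\theta_0$. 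Linearizing about $\theta_0$ rather than $\theta_0'$ is the essential design choice: it keeps only moments of the stationary chain in play, whereas $\law(\theta_0')$ is arbitrary and may have infinite moments. Using $\|A+B\|^2\le 2\|A\|^2+2\|B\|^2$, this event contributes $2(1+\alpha L)^2\E[\|\theta_0-\theta_0'\|^2]$ together with a term proportional to $\alpha^2L^2\,\E[(\|\theta_0-\theta^\ast\|+1)^2\indic\{x_0\neq x_0'\}]$. Combined with the $\{x_0=x_0'\}$ contribution, the $\E[\|\theta_0-\theta_0'\|^2]\le\Bar W_2^2$ terms account for the $2(1+\alpha L)^2$ coefficient in $\rho_1$ (after bounding both events' quadratic parts by the larger coefficient).

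Finally, to reach the advertised form I would apply Cauchy--Schwarz to decouple the growth factor from the rare indicator:
\begin{equation*}
\E\big[(\|\theta_0-\theta^\ast\|+1)^2\indic\{x_0\neq x_0'\}\big]
\le \sqrt{\E[(\|\theta_0-\theta^\ast\|+1)^4]}\,\sqrt{\Prob(x_0\neq x_0')}
\le c\sqrt{\E[\|\theta_0\|^4]}\,\sqrt{\Bar W_2^2},
\end{equation*}
where $\E[\|\theta_0\|^4]<\infty$ because $\law(\theta_0)$ is the stationary second marginal of $\bar\nu$, whose fourth moment is finite by the uniform-in-$k$ bound of Proposition~\ref{prop:2n-convergence} (with $p=2$) together with lower semicontinuity of moments under weak convergence. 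Collecting the discrete term, the two quadratic $\Bar W_2^2$ contributions, and this $\sqrt{\Bar W_2^2}$ contribution — and absorbing numerical constants into $\rho_1,\rho_2$ — gives the claimed inequality; taking the infimum over couplings on the left completes the bound. The main obstacle is exactly the $\{x_0\neq x_0'\}$ analysis: one must simultaneously avoid the (possibly infinite) moments of $\law(\theta_0')$, which forces expanding about the stationary iterate $\theta_0$, and exploit that this event is rare, which is what Cauchy--Schwarz accomplishes by trading $\sqrt{\Prob(x_0\neq x_0')}$ against the finite fourth moment of $\theta_0$. This is precisely why the estimate is not a pure contraction but carries the extra non-quadratic term $\rho_2\sqrt{\Bar W_2^2}$.
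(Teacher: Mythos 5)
Your proposal is correct and follows essentially the same route as the paper's proof: the identical coupling (optimal initial coupling, shared noise, synchronous $x$-transitions on $\{x_0=x_0'\}$), the same insertion of the intermediate term $g(\theta_0,x_0')$ so that only moments of the \emph{stationary} iterate enter, the same Cauchy--Schwarz trade of the rare indicator against the fourth moment, and the same appeal to Proposition~\ref{prop:2n-convergence} plus Fatou for $\E[\|\theta_0\|^4]<\infty$. The only (cosmetic) deviations are that you center the linear-growth bound at $\theta^\ast$ rather than at $0$ and apply Cauchy--Schwarz to the whole $(\|\theta_0-\theta^\ast\|+1)^2$ factor instead of splitting off the constant part, so your coefficients differ from the stated $\rho_1,\rho_2$ by absorbed numerical constants, which is harmless since the lemma is used only through finiteness and independence of $\law(x_0',\theta_0')$.
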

    \begin{proof}
        Consider the following coupling between the two processes $(x_k,\theta_0)_{k\geq0}$ and $(x_k',\theta_k')_{k\geq0}$
        \begin{align*}
            \Bar{W}_2^2\Big(\law(x_0,\theta_0),\law(x_0',\theta_0')\Big)&=\E\Big[d_0(x_0,x_0')+\|\theta_0-\theta_0'\|^2\Big]\quad\text{and}\\
            x_{k+1}&=x_{k+1}'\quad\text{if } x_k=x_k',\quad\forall k\geq0.
        \end{align*}
        Then, it is clear that
        \begin{equation*}
            \Bar{W}_2^2\Big(\law(x_1,\theta_1),\law(x_1',\theta_1')\Big)\leq\E\Big[d_0(x_1,x_1')+\|\theta_1-\theta_1'\|^2\Big].
        \end{equation*}

        Recall the metric $d_0(x,x')=\mathbbm{1}\{x\neq x'\}$ and hence, we have
        \begin{equation*}
            g(\theta_0,x_0)=g(\theta_0,x_0')+d_0(x_0',x_0)(g(\theta_0,x_0)-g(\theta_0,x_0')).
        \end{equation*}
        Therefore, it is easy to see that
        \begin{align*}
            \theta_1-\theta_1'&=\theta_0-\theta_0'+\alpha(g(\theta_0,x_0)-g(\theta_0',x_0')) + \alpha (\xi_1(\theta_0)-\xi_1(\theta_0'))\\
            &=\theta_0-\theta_0'+\alpha(g(\theta_0,x_0)\mp g(\theta_0,x_0') -g(\theta_0',x_0'))+ \alpha (\xi_1(\theta_0)-\xi_1(\theta_0'))\\
            &=\theta_0-\theta_0'+\alpha(g(\theta_0,x_0')-g(\theta_0',x_0'))+ \alpha (\xi_1(\theta_0)-\xi_1(\theta_0'))\\
            &+\alpha d_0(x_0',x_0)(g(\theta_0,x_0)-g(\theta_0,x_0')),
        \end{align*}
        whence
        \begin{align*}
            \|\theta_1-\theta_1'\|&\leq (1+\alpha L)\|\theta_0-\theta_0'\|+\alpha d_0(x_0',x_0)\|g(\theta_0,x_0)-g(\theta_0,x_0')\|\\
            &\leq (1+\alpha L)\|\theta_0-\theta_0'\|+\alpha d_0(x_0',x_0)\cdot 2L(\|\theta_0\|+1).
        \end{align*}

        As such, we see that
        \begin{align*}
            \E[d_0(x_1,x_1')+\|\theta_1-\theta_1'\|^2]&\leq\E[d_0(x_0,x_0')]+2(1+\alpha L)^2\cdot\E[\|\theta_0-\theta_0'\|^2]\\
            &+16\alpha^2L^2\cdot\E[d_0(x_0',x_0)(\|\theta_0\|^2+1)].
        \end{align*}

        Next, we make use of Cauchy-Schwarz inequality and obtain
        \begin{equation*}
            \E[d_0(x_0',x_0)\cdot\|\theta_0\|^2]\leq\sqrt{\E[d_0(x_0',x_0)]}\sqrt{\E_{\theta_0\sim\mu}[\|\theta_0\|^4]}.
        \end{equation*}
        
        Because Assumption \ref{assumption:asy-linear} implies Assumption \ref{assumption:noise}$(p=2)$, by Proposition \ref{prop:2n-convergence} and Fatou's lemma, we have
        \[\E[\|\theta_\infty -\theta^*\|^4 ] \leq \lim \inf_{k \to \infty} \mathbb{E}[\|\theta_k - \theta^*\|_\infty^4] < \infty,\]
        which implies $\E[\|\theta_\infty\|^4 ] < \infty.$
        Hence, the desired inequality follows through.
        \end{proof}

By Lemma \ref{lemma: inv}, we can set $\mathcal{L}(x_0^{\prime}, \theta_0^{\prime}) = \mathcal{L}(x_k, \theta_k)$, then
\[\Bar{W}^2_2\left(\mathcal{L}\left(x_1, \theta_1\right), \mathcal{L}(x_{k+1}, \theta_{k+1})\right) \leq \rho_1\Bar{W}^2_2\left(\Bar{\mu}, \mathcal{L}(x_k , \theta_k )\right) + \rho_2\sqrt{\Bar{W}^2_2\left(\Bar{\nu}, \mathcal{L}(x_k , \theta_k )\right)}.\]
Therefore, $\Bar{W}^2_2\left(\mathcal{L}\left(x_1, \theta_1\right), \mathcal{L}(x_{k+1}, \theta_{k+1})\right) \to 0$ as $k \to 0$, which implies $\Bar{W}_2\Big((x_1,\theta_1),(x_0,\theta_0)\Big)  = 0$. As such, we have proved the joint sequence $(x_k,\theta_k)_{k\geq0}$ converges weakly to the unique invariant distribution $\bar\nu\in\cP_2(\cX\times\R^d)$.  As a result, $\{\theta_k\}_{k \geq 0}$ converges weakly to $\mu \in \mathcal{P}_2(\mathbb{R}^d)$, where $\mu$ is the second marginal of $\bar{\mu}$ over $\mathbb{R}^d$. 

Lastly, before proceeding to the next step, in which we remove the assumption $x_0\sim\pi$, we first derive the convergence rate of $\{\theta_k\}_{k \geq 0}$ under $x_0\sim\pi$ as presented in the following lemma. This lemma will help us to establish the convergence rate without $x_0\sim\pi$. 
\begin{lem}\label{lemma:stationary_initial_x}
Under $x_0 \sim \pi$, Assumption~\ref{assumption:uniform-ergodic}--\ref{assumption:noise} and \ref{assumption:asy-linear} and the same setting as Proposition \ref{prop: difference second moment}, 
\begin{equation*}
    W_2^2\left(\mathcal{L}(\theta_k), \nu_\alpha\right) \leq\Bar{W}_2^2\left(\mathcal{L}(x_k,\theta_k), \bar\nu_\alpha\right)\leq 16 (1-\mu\alpha)^{k}\cdot\left(\mathbb{E}\left[\Vert \theta_0^{[1]}-\theta^\ast\Vert^2\right] + c_{2,2}'\right).
\end{equation*}
\end{lem}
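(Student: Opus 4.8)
The plan is to reduce the claim to Proposition~\ref{prop: difference second moment} by coupling the chain of interest against a \emph{stationary} copy that shares the same data. I first record that the first marginal of $\bar\nu_\alpha$ equals $\pi$: since $\bar\nu_\alpha$ is invariant for the joint chain $(x_k,\theta_k)$ and the $x$-coordinate evolves autonomously as $(x_k)$, its $x$-marginal is invariant for $P$ and hence equals $\pi$ by the uniqueness in Assumption~\ref{assumption:uniform-ergodic}. This is what allows a stationary copy to be built on the given data stream initialized at $x_0\sim\pi$.

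Concretely, I would run two trajectories driven by the \emph{same} $(x_k)_{k\ge0}$ and $(\xi_{k+1})_{k\ge0}$ as in~\eqref{eq: couple}: the first, $(x_k,\theta_k^{[1]})$, starts from the prescribed $\theta_0^{[1]}=\theta_0$; the second, $(x_k,\theta_k^{[2]})$, is a stationary copy, realized as the time-$0$ slice of a two-sided stationary joint process so that $(x_0,\theta_0^{[2]})\sim\bar\nu_\alpha$ with $\theta_0^{[2]}$ measurable w.r.t.\ the past and therefore conditionally independent of $(x_k)_{k\ge1}$ given $x_0$ --- precisely the hypothesis of Proposition~\ref{prop: difference second moment}. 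Invariance gives $\law(x_k,\theta_k^{[2]})=\bar\nu_\alpha$ for all $k$, and since the two trajectories share $x_k$, the indicator in $\Bar d$ vanishes identically, yielding
\begin{equation*}
\Bar W_2^2\big(\law(x_k,\theta_k),\bar\nu_\alpha\big)
=\Bar W_2^2\big(\law(x_k,\theta_k^{[1]}),\law(x_k,\theta_k^{[2]})\big)
\le \E\big[\indic\{x_k\ne x_k\}+\|\theta_k^{[1]}-\theta_k^{[2]}\|^2\big]
=\E\big[\|\theta_k^{[1]}-\theta_k^{[2]}\|^2\big].
\end{equation*}

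Applying Proposition~\ref{prop: difference second moment} bounds the right-hand side by $4(1-\mu\alpha)^{k-\tau}\E[\|\theta_0^{[1]}-\theta_0^{[2]}\|^2]$ for $k\ge\tau$. I would then split $\E[\|\theta_0^{[1]}-\theta_0^{[2]}\|^2]\le 2\E[\|\theta_0^{[1]}-\theta^\ast\|^2]+2\E[\|\theta_0^{[2]}-\theta^\ast\|^2]$ and control the stationary moment: since $\theta_0^{[2]}\sim\nu_\alpha$ and Assumption~\ref{assumption:asy-linear} implies Assumption~\ref{assumption:noise}$(p=1)$, Proposition~\ref{prop:2n-convergence} combined with Fatou's lemma gives $\E[\|\theta_0^{[2]}-\theta^\ast\|^2]=\E[\|\theta_\infty-\theta^\ast\|^2]\le c_{2,2}'$, a constant independent of $\alpha$. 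Finally I absorb $(1-\mu\alpha)^{-\tau}\le 2$ from~\eqref{eq:bound-fix-tau} to turn $(1-\mu\alpha)^{k-\tau}$ into $2(1-\mu\alpha)^k$; collecting the constants ($4$ from Proposition~\ref{prop: difference second moment}, $2$ from the split, and $2$ from the $\tau$-factor) gives the overall factor $16$ and hence the stated bound. The first inequality $W_2^2(\law(\theta_k),\nu_\alpha)\le \Bar W_2^2(\law(x_k,\theta_k),\bar\nu_\alpha)$ is immediate, since any coupling attaining $\Bar W_2$ projects onto a coupling of the $\theta$-marginals and $\Bar d\ge\|\theta-\theta'\|$.

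Once Proposition~\ref{prop: difference second moment} is granted, the argument is largely bookkeeping; the genuine checkpoints are (i) that $\theta_0^{[2]}$ can be realized so as to be conditionally independent of the future data given $x_0$, which legitimizes invoking Proposition~\ref{prop: difference second moment} against a stationary copy, and (ii) the finiteness of the stationary second moment $\E[\|\theta_\infty-\theta^\ast\|^2]$, which I obtain by Fatou from the finite-time bound rather than directly, since $\bar\nu_\alpha$ arises only as a weak limit. I expect (ii) to be the main technical point, though it follows routinely from Proposition~\ref{prop:2n-convergence}.
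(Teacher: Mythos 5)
Your proposal is correct and follows essentially the same route as the paper's proof: couple the given trajectory with a stationary copy $(x_0,\theta_0^{[2]})\sim\bar\nu_\alpha$ sharing the same data stream, invoke Proposition~\ref{prop: difference second moment} to get the geometric contraction, split $\E[\|\theta_0^{[1]}-\theta_0^{[2]}\|^2]$ around $\theta^\ast$, bound the stationary second moment via Proposition~\ref{prop:2n-convergence} and Fatou's lemma, and absorb $(1-\alpha\mu)^{-\tau}\le 2$ to collect the factor $16$. Your added care in checkpoint (i) — realizing $\theta_0^{[2]}$ as conditionally independent of the future data given $x_0$ so that the hypothesis of Proposition~\ref{prop: difference second moment} genuinely applies — is a worthwhile point of rigor that the paper leaves implicit, but it does not change the argument.
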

\begin{proof}
Let us consider the coupled processes defined as equation \eqref{eq: couple}. Suppose that the initial iterate $(x_0, \theta_0^{[2]})$ follows the stationary distribution $\Bar{\nu}$, thus $\mathcal{L}(x_k, \theta_k^{[2]}) = \Bar{\nu}$ and $\mathcal{L}(\theta_k^{[2]}) = \nu$ for all $k \geq 0$. By equation \eqref{eq: geo}, we have for all $k \geq \tau:$
\begin{equation}\label{eq:convergencerate}
\begin{aligned}
W_2^2\left(\mathcal{L}(\theta_k^{[1]}), \mu\right) &= W_2^2\left(\mathcal{L}(\theta_k^{[1]}), \mathcal{L}(\theta_k^{[2]})\right)\\
&\leq \Bar{W}_2^2\left(\mathcal{L}(x_k,\theta_k^{[1]}), \mathcal{L}(x_k,\theta_k^{[2]})\right)\\
&\leq 4(1-\mu\alpha)^{k-\tau}\E[\|\theta_{0}^{[1]}-\theta_{0}^{[2]}\|^2]\\
&\leq 8 (1-\mu\alpha)^{k-\tau}\cdot\left(\mathbb{E}\left[\Vert \theta_0^{[1]}-\theta^\ast\Vert^2\right] + \mathbb{E}\left[\Vert \theta_\infty-\theta^\ast\Vert^2\right]\right)\\
&\leq 16 (1-\mu\alpha)^{k}\cdot\left(\mathbb{E}\left[\Vert \theta_0^{[1]}-\theta^\ast\Vert^2\right] + \mathbb{E}\left[\Vert \theta_\infty-\theta^\ast\Vert^2\right]\right),
\end{aligned}
\end{equation}
where we make use of the derivation in~\eqref{eq:bound-fix-tau} to obtain the last inequality.

We note that
\[\E[\|\theta_\infty -\theta^*\|^2 ] \leq \lim \inf_{k \to \infty} \mathbb{E}[\|\theta_k - \theta^*\|_\infty^2] \leq c_{2,2}\cdot\alpha \tau L^2/\mu \leq c_{2,2}',\]
the last inequality holds for $\alpha\tau \leq \min(\frac{\mu}{908L^2}, \frac{\kappa_\mu}{L^2}). $ Therefore, we prove the desired inequality
\begin{equation*}
    W_2^2\left(\mathcal{L}(\theta_k^{[1]}), \mu\right) \leq 16 (1-\mu\alpha)^{k}\cdot\left(\mathbb{E}\left[\Vert \theta_0^{[1]}-\theta^\ast\Vert^2\right] + c_{2,2}'\right).
\end{equation*}
\end{proof}

\paragraph*{Step 2: Arbitrary Initialization for $(x_0,\theta_0)$.}

In this step, we remove the assumption of $x_0\sim\pi$ needed in the previous step. We need the following lemma to prove our result.

\begin{lem}\label{lemma:any_initial_x}
    Consider two trajectories $(x_k,\theta_k)_{k\geq0}$ and $(x'_k,\theta'_k)_{k\geq0}$. Suppose that $\theta_0=\theta_0',$ $x'_0\sim\pi$ and $x_0$ is initialized from some arbitrary distribution that satisfies $\|\law(x_0)-\pi\|_{\TV}=\epsilon.$
Then for $k\geq \tau,$ we have
\begin{equation*}    
    \Bar{W}_2(\law(x_k,\theta_k),\law(x'_k,\theta_k'))
    \leq \epsilon\Big(4c_{2,1}(1-\alpha\mu)^k\E[\|\theta_0-\theta^\ast\|^2] + 4c_{s,2}\alpha\tau\cdot\frac{L^2}{\mu}+1\Big)^{1/2}.
\end{equation*}
\end{lem}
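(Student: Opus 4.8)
The plan is to bound $\Bar W_2$ by exhibiting one explicit coupling of the two trajectories and estimating $\E[\Bar d^{\,2}]$ under it, exploiting that the \emph{only} source of discrepancy is the mismatch of the two initial states. First I would take the maximal coupling of $x_0$ and $x_0'$: since $\law(x_0')=\pi$ and $\|\law(x_0)-\pi\|_{\TV}=\epsilon$, there is a coupling with $\Prob[x_0\neq x_0']=\epsilon$ under which, conditionally on $\{x_0\neq x_0'\}$, the marginals of $x_0$ and $x_0'$ are the residual probability measures $(\law(x_0)-\pi)_+/\epsilon$ and $(\pi-\law(x_0))_+/\epsilon$. I then extend this to a coupling of the full processes by letting both chains use the common noise fields $(\xi_k)_{k\ge1}$ and, on the event $\{x_0=x_0'\}$, the common data stream $(x_k)_{k\ge0}$. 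Because $\theta_0=\theta_0'$, on $\{x_0=x_0'\}$ the recursions in \eqref{eq: couple} are driven by identical inputs, so $x_k=x_k'$ and $\theta_k=\theta_k'$ for all $k$; off this event I let the two $x$-chains evolve arbitrarily, which guarantees $\{x_k\neq x_k'\}\subseteq\{x_0\neq x_0'\}$ for every $k$.

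Under this coupling $\Bar d^{\,2}\big((x_k,\theta_k),(x_k',\theta_k')\big)=0$ on $\{x_0=x_0'\}$, so
\[
\Bar W_2^2\big(\law(x_k,\theta_k),\law(x_k',\theta_k')\big)\le \E\big[\big(\indic\{x_k\neq x_k'\}+\|\theta_k-\theta_k'\|^2\big)\indic\{x_0\neq x_0'\}\big].
\]
The indicator contribution is immediate: $\E[\indic\{x_k\neq x_k'\}]=\Prob[x_k\neq x_k']\le\Prob[x_0\neq x_0']=\epsilon$, using the containment above. For the quadratic contribution I would condition on $\{x_0\neq x_0'\}$ and pull out its probability $\epsilon$, reducing matters to bounding $\E[\|\theta_k-\theta_k'\|^2\mid x_0\neq x_0']$, which I split via $\|\theta_k-\theta_k'\|^2\le 2\|\theta_k-\theta^\ast\|^2+2\|\theta_k'-\theta^\ast\|^2$.

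The decisive observation is that, conditionally on $\{x_0\neq x_0'\}$, each of the two chains is again an instance of the SA recursion \eqref{eq:sa-iterate} started from the \emph{same} initial iterate $\theta_0$ (whose law is unaffected by the conditioning) but from an initial data law equal to one of the residual probability measures above. Since the second-moment bound in Proposition~\ref{prop:2n-convergence} with $p=1$ holds for an arbitrary initial distribution of $x_0$, it applies verbatim to each conditional chain and yields $\E[\|\theta_k-\theta^\ast\|^2\mid x_0\neq x_0']\le c_{2,1}(1-\alpha\mu)^k\E[\|\theta_0-\theta^\ast\|^2]+c_{2,2}\,\alpha\tau L^2/\mu$, and likewise for the primed chain. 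Combining the two moment bounds with the factor-of-two split shows the quadratic contribution is at most $\epsilon\big(4c_{2,1}(1-\alpha\mu)^k\E[\|\theta_0-\theta^\ast\|^2]+4c_{2,2}\,\alpha\tau L^2/\mu\big)$, and adding the indicator's $\epsilon$ gives
\[
\Bar W_2^2\le \epsilon\Big(4c_{2,1}(1-\alpha\mu)^k\E[\|\theta_0-\theta^\ast\|^2]+4c_{2,2}\,\alpha\tau\tfrac{L^2}{\mu}+1\Big),
\]
from which the claimed estimate follows by taking square roots (with $c_{s,2}=c_{2,2}$). The step I expect to be the main obstacle is the conditioning bookkeeping: one must check that $\theta_0$ is independent of the initial $x$-coupling so that $\E[\|\theta_0-\theta^\ast\|^2\mid x_0\neq x_0']=\E[\|\theta_0-\theta^\ast\|^2]$, and that the conditional initial laws of $x_0,x_0'$ are exactly the residual measures of the maximal coupling, so that Proposition~\ref{prop:2n-convergence} can be quoted without change; the containment $\{x_k\neq x_k'\}\subseteq\{x_0\neq x_0'\}$, which keeps the indicator term at the level $\epsilon$ rather than a constant, must also be verified for the chosen extension of the coupling.
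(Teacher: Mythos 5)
Your coupling construction and overall strategy mirror the paper's proof exactly---maximal coupling of $(x_0,x_0')$, identical evolution on $\{x_0=x_0'\}$, decoupled evolution on the complement, the split $\|\theta_k-\theta_k'\|^2\le 2\|\theta_k-\theta^\ast\|^2+2\|\theta_k'-\theta^\ast\|^2$, and Proposition~\ref{prop:2n-convergence} applied to each trajectory---and your bookkeeping of the conditional initial laws is, if anything, more careful than the paper's. The genuine gap is your final sentence. From your (correct) inequality
\begin{equation*}
\Bar{W}_2^2\big(\law(x_k,\theta_k),\law(x_k',\theta_k')\big)\le \epsilon\Big(4c_{2,1}(1-\alpha\mu)^k\E[\|\theta_0-\theta^\ast\|^2]+4c_{2,2}\,\alpha\tau\tfrac{L^2}{\mu}+1\Big),
\end{equation*}
taking square roots produces the prefactor $\sqrt{\epsilon}$, not $\epsilon$: one gets $\sqrt{\epsilon C}=\sqrt{\epsilon}\,C^{1/2}$, which (since $\epsilon\le1$) is \emph{weaker} than the stated bound $\epsilon\,C^{1/2}$. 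So your argument, carried out honestly, proves the lemma with $\sqrt{\epsilon}$ in place of $\epsilon$, and does not prove the lemma as stated.

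It is worth understanding why your (valid) route cannot be patched to recover the first power of $\epsilon$. The paper obtains it by writing $\Bar{W}_2$ itself---not $\Bar{W}_2^2$---as a probability-weighted average over the events $\{x_0=x_0'\}$ and $\{x_0\neq x_0'\}$, i.e.\ $\Bar{W}_2=\epsilon\,\E[\Bar{W}_2\mid x_0\neq x_0']$. For Wasserstein-2 distances only the \emph{squared} distance is convex under mixtures with matched weights (which is exactly the decomposition you used); the linear decomposition of $\Bar W_2$ is false in general: with components $(\mu_0,\nu_0)=(\delta_0,\delta_0)$ of weight $1-\epsilon$ and $(\mu_1,\nu_1)=(\delta_0,\delta_1)$ of weight $\epsilon$, the mixture satisfies $W_2=\sqrt{\epsilon}>\epsilon$. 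Moreover, since $\Bar d^{\,2}\ge\indic\{x\neq x'\}$, every coupling gives $\Bar{W}_2^2\ge\|\law(x_k)-\pi\|_{\TV}$, and for a slowly mixing chain this can equal $\epsilon r^k$, so that $\Bar{W}_2\ge\sqrt{\epsilon r^k}$, which exceeds $\epsilon$ times any fixed constant once $\epsilon\ll r^k$; hence a first-power-$\epsilon$ bound of the stated form is not attainable in general, and your $\sqrt{\epsilon}$ conclusion is essentially sharp. The weaker bound still suffices for the lemma's only use (Step 2 of the proof of Theorem~\ref{thm:weak-convergence-linear}, where $\epsilon=Rr^{t_0}$, so $\sqrt{\epsilon}$ remains geometrically small and only constants in the exponent change), but you should state the $\sqrt{\epsilon}$ version and flag the discrepancy rather than assert that the claimed inequality follows by taking square roots.
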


\begin{proof}
 We consider the following coupling between two joint processes $(x_k,\theta_k)_{k\geq0}$ and $(x'_k,\theta'_k)_{k\geq0}$. We first apply the maximal coupling on $x_0$ and $x'_0$ such that 
\[
\Prob(x_0\neq x'_0)=\|\law(x_0)-\law(x'_0)\|_{\TV}=\epsilon.
\]

For the case $x_0=x'_0,$ we can couple the two Markov chains $\{x_k\}_{k\geq 0}$ and $\{x'_k\}_{k\geq 0}$ such that 
\[
x_k \equiv x'_k,\forall k\geq 0. 
\]
Under this coupling, we have $\theta_k \equiv \theta'_k,\forall k\geq 0. $

For the case $x_0 \neq x'_0,$ we let the two processes $(x_k,\theta_k)_{k\geq 1}$ and $(x'_k,\theta'_k)_{k\geq 1}$ evolve independently.

Given the above coupling, we first observe that
\begin{align*}
    \Bar{W}_2(\law(x_k,\theta_k),\law(x'_k,\theta_k'))
&=\E[\Bar{W}_2(\law(x_k,\theta_k),\law(x'_k,\theta_k'))|x_0=x'_0]\Prob(x_0=x'_0)\\
    &+\E[\Bar{W}_2(\law(x_k,\theta_k),\law(x'_k,\theta_k'))|x_0\neq x'_0]\Prob(x_0\neq x'_0)\\    &=\epsilon\E[\Bar{W}_2(\law(x_k,\theta_k),\law(x'_k,\theta_k'))|x_0\neq x'_0].
\end{align*}
The second equality holds since $\E[\Bar{W}_2(\law(x_k,\theta_k),\law(x_k,\theta_k'))|x_0=x'_0]=0$ and $\Prob(x_0\neq x'_0)=\epsilon.$ 

Next, we note the following upper bound of the Wasserstein distance,
\begin{align*}
\Bar{W}_2^2(\law(x_k,\theta_k),\law(x'_k,\theta_k'))&=\inf\E\Big[\indic\{x_k\neq x_k'\}+\|\theta_k-\theta'_k\|^2\Big]\\
    &\leq 1+2\Big(\E[\|\theta_k-\theta^\ast\|^2]+\E[\|\theta_k'-\theta^\ast\|^2]\Big).
\end{align*}
Making use of Proposition~\ref{prop:2n-convergence}, we have
\begin{align*}    &\Bar{W}_2^2(\law(x_k,\theta_k),\law(x'_k,\theta_k'))\nonumber\\
    \leq& 2c_{2,1}(1-\alpha\mu)^k\Big(\E[\|\theta_0-\theta^\ast\|^2]+\E[\|\theta_0'-\theta^\ast\|^2]\Big) + 4c_{2,2}\alpha\tau\cdot\frac{L^2}{\mu}+1\\
    \leq& 4c_{2,1}(1-\alpha\mu)^k\E[\|\theta_0-\theta^\ast\|^2] + 4c_{2,2}\alpha\tau\cdot\frac{L^2}{\mu}+1,
\end{align*}
where the second inequality holds for $\theta_0=\theta'_0$ by assumption.

Note that the above upper bound to the Wasserstein distance is independent of the choice of $(x_0,x_0')$. Hence, we can conclude that
\begin{align*}
    \Bar{W}_2(\law(x_k,\theta_k),\law(x'_k,\theta_k'))
&=\epsilon\E[\Bar{W}_2(\law(x_k,\theta_k),\law(x'_k,\theta_k'))|x_0\neq x'_0]\\
    &\leq \epsilon\Big(4c_{2,1}(1-\alpha\mu)^k\E[\|\theta_0-\theta^\ast\|^2] + 4c_{2,2}\alpha\tau\cdot\frac{L^2}{\mu}+1\Big)^{1/2}.
\end{align*}
We complete the proof of the lemma.   
\end{proof}

By Lemma \ref{lemma:any_initial_x}, we see that when $x_0$ is close to its stationary distribution $\pi$, $\theta_k$ would not deviate too much from $\theta'_k$, as if it were initialized from the stationary distribution.

Now we consider a joint process $(x_k,\theta_k)_{k\geq0}$ with arbitrary initialization. By the property of uniform ergodicity of $(x_k)_{k\geq0}$, we know that $\|\law(x_k)-\pi\|_{\TV}\leq Rr^k.$
Choose time $t_0 \geq 0$. We construct a second Markov chain $(x'_k,\theta_k')_{k\geq t_0}$ with the following properties: (1) $x'_{t_0}\sim\pi$ and is maximally coupled to $x_{k_0}$, i.e., $\|\law(x_{t_0})-\law(x_{t_0}')\|_{\TV}=\Prob(x_{t_0}\neq x'_{t_0})$ and (2) $\theta_{t_0}'=\theta_{k_0}$. Under this construction, for $k\geq t_0+\tau$, we have
\begin{align*}
    \Bar{W}_2(\law(x_k,\theta_k),\bar\nu) 
    \leq& \Bar{W}_2(\law(x_k,\theta_k),\law(x_k',\theta_k')) + \Bar{W}_2(\law(x_k',\theta_k'),\bar\nu)\\
    \leq& Rr^{t_0}\Big(4c_{2,1}(1-\alpha\mu)^{k-t_0}\E[\|\theta_{t_0}-\theta^\ast\|^2] + 4c_{2,2}\alpha\tau\cdot\frac{L^2}{\mu}+1\Big)^{1/2}\\
    &+16(1-\alpha\mu)^{k-t_0}\Big(\mathbb{E}\left[\Vert \theta_{t_0}-\theta^\ast\Vert^2\right] + c_{2,2}'\Big),
\end{align*}
where the last inequality follows from Lemma \ref{lemma:stationary_initial_x} and Lemma \ref{lemma:any_initial_x}.

For each $t$ with $t_0\geq \tau $, set $t_0=t/2$. From the above inequality, we obtain that
\begin{align*}
    &\Bar{W}_2(\law(x_t,\theta_t),\bar\nu)\nonumber \\
    \leq & Rr^{t/2}\Big(4c_{2,1}(1-\alpha\mu)^{t/2}\E[\|\theta_{t_0}-\theta^\ast\|^2] + 4c_{2,2}\alpha\tau\cdot\frac{L^2}{\mu}+1\Big)^{1/2}\\
    &+16(1-\alpha\mu)^{t/2}\Big(\mathbb{E}\left[\Vert \theta_{t_0}-\theta^\ast\Vert^2\right] + c_{2,2}'\Big)\\
    \leq & Rr^{t/2}\Big(4c_{2,1}(1-\alpha\mu)^{t/2}\cdot\Big(8(1-\alpha\mu)^{t_0}\E[\|\theta_0-\theta^\ast\|^2]+c_{2,2}\alpha\tau\cdot\frac{L^2}{\mu}\Big) + 4c_{2,2}\alpha\tau\cdot\frac{L^2}{\mu}+1\Big)^{1/2}\\
    &+16(1-\alpha\mu)^{t/2}\Big(\Big(8(1-\alpha\mu)^{t_0}\E[\|\theta_0-\theta^\ast\|^2]+c_{2,2}\alpha\tau\cdot\frac{L^2}{\mu}\Big) + c_{2,2}'\Big)\\
    \leq & \max(r, 1-\alpha\mu)^{t/2}\cdot s(\theta_0,\theta^*,\mu,L,R)\\
    \leq & (1-\alpha\mu)^{t/2}\cdot s(\theta_0,\theta^*,\mu,L,R)
\end{align*}
where $s(\theta_0,L,\mu)$ denote some constant that depends on the initialization of $\theta_0$, and problem primitives $L,\mu$, but independent of stepsize $\alpha$ and iteration index $t$. Last inequality holds because $\mu \leq 1-r$ and $\alpha \leq \frac{\mu}{908L^2} \leq 1.$

Therefore, as $t\to\infty$, we obtain that $\Bar{W}_2(\law(x_k,\theta_k),\bar\nu)\to0$,
which implies that the Markov chain $(x_k,\theta_k)_{k\geq0}$ with arbitrary initailization converges to the same $\bar\nu$. As such, we have proved the desired weak convergence result without the assumption on $x_0\sim\pi$ initialization.

Additionally, we obtain the following convergence rate. For any initialization $(x_0,\theta_0)\in\cX\times\R^d$, we have
\begin{equation*}
    W_2(\law(\theta_t),\mu)\leq \Bar{W}_2(\law(x_t,\theta_t),\bar\nu)\leq (1-\alpha\mu)^{t/2}\cdot s(\theta_0,L,\mu).
\end{equation*}

\subsubsection{Proof of Proposition \ref{prop: difference second moment}}

First, we present the following lemma that is similar to \cite[Lemma 2.3]{chen-nonlinear-sa}.
\begin{lem}\label{lemma: difference}
For any $k_1 < k_2$ satisfying $\alpha(k_2-k_1) \leq \frac{1}{8L}$, the following six inequalities hold:
\begin{align*}
\|\theta_{k_2}^{[1]} - \theta_{k_2}^{[2]} - \theta_{k_1}^{[1]} + \theta_{k_1}^{[2]}\| &\leq 8\alpha L (k_2-k_1) \|\theta_{k_2}^{[1]} - \theta_{k_2}^{[2]} \|\\
\|\theta_{k_2}^{[1]} - \theta_{k_2}^{[2]} - \theta_{k_1}^{[1]} + \theta_{k_1}^{[2]}\| &\leq 8\alpha L (k_2-k_1) \|\theta_{k_1}^{[1]} - \theta_{k_1}^{[2]} \|\\
\|\theta_{k_2}^{[1]}  - \theta_{k_1}^{[1]} \| &\leq 8\alpha L (k_2-k_1) \left(\|\theta_{k_2}^{[1]}  \|+1\right)\\
\|\theta_{k_2}^{[1]}  - \theta_{k_1}^{[1]} \| &\leq 8\alpha L (k_2-k_1) \left(\|\theta_{k_1}^{[1]}  \|+1\right)\\
\|\theta_{k_2}^{[2]}  - \theta_{k_1}^{[2]} \| &\leq 8\alpha L (k_2-k_1) \left(\|\theta_{k_2}^{[2]}  \|+1\right)\\
\|\theta_{k_2}^{[2]}  - \theta_{k_1}^{[2]} \| &\leq 8\alpha L (k_2-k_1) \left(\|\theta_{k_1}^{[2]}  \|+1\right) .
\end{align*}
\end{lem}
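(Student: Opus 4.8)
The plan is to prove all six inequalities in Lemma~\ref{lemma: difference} by a common telescoping-and-Lipschitz argument. The key observation is that each quantity on the left is a sum of one-step increments, and the recurrence~\eqref{eq: couple} lets us express each increment in terms of $g$ and $\xi$, both of which are Lipschitz: $g(\theta,x)$ is $L_1$-Lipschitz in $\theta$ uniformly in $x$ by Assumption~\ref{assumption:linear-growth}, and $\xi_1$ is $L_3$-Lipschitz by Assumption~\ref{assumption:asy-linear}, so (recalling $L = L_1 + L_2 \geq L_1 + L_3$ up to constants and $L\geq 1$) the combined increment map contracts/expands by at most a factor controlled by $\alpha L$. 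I would treat the first two inequalities (the ones involving the \emph{difference} process $\theta^{[1]}-\theta^{[2]}$) together, and the last four (the single-process displacement bounds) together, since within each group the proofs are identical up to relabeling.

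For the first two inequalities, I would set $w_k := \theta_k^{[1]}-\theta_k^{[2]}$ and write, by telescoping,
\begin{equation*}
w_{k_2}-w_{k_1} = \alpha\sum_{t=k_1}^{k_2-1}\Big(g(\theta_t^{[1]},x_t)-g(\theta_t^{[2]},x_t)+\xi_{t+1}(\theta_t^{[1]})-\xi_{t+1}(\theta_t^{[2]})\Big).
\end{equation*}
Applying the triangle inequality and the two Lipschitz bounds gives $\|w_{k_2}-w_{k_1}\| \leq \alpha(L_1+L_3)\sum_{t=k_1}^{k_2-1}\|w_t\|$, which is at most $\alpha\cdot 2L\sum_{t=k_1}^{k_2-1}\|w_t\|$. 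Next I would show that over the short horizon $\alpha(k_2-k_1)\leq \tfrac{1}{8L}$ the norms $\|w_t\|$ for $t\in[k_1,k_2]$ are all comparable to $\|w_{k_2}\|$ (resp.\ $\|w_{k_1}\|$), differing by at most a constant factor; this follows from the one-step bound $\big|\|w_{t+1}\|-\|w_t\|\big|\leq 2\alpha L\|w_t\|$, which yields $\|w_t\|\leq (1+2\alpha L)^{|t-t'|}\|w_{t'}\|$ and, under the horizon constraint, $(1+2\alpha L)^{k_2-k_1}\leq e^{2\alpha L(k_2-k_1)}\leq e^{1/4}\leq 2$. Substituting $\|w_t\|\leq 2\|w_{k_2}\|$ (or $2\|w_{k_1}\|$) into the telescoped bound produces $\|w_{k_2}-w_{k_1}\|\leq 4\alpha L(k_2-k_1)\|w_{k_2}\|$, and absorbing the constant $4$ into the stated $8$ gives the claimed bounds with room to spare.

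For the last four inequalities, the argument is the same but applied to a single trajectory, using the \emph{linear growth} of $g$ and $\xi$ rather than their Lipschitz difference: from~\eqref{eq: couple}, $\|\theta_{t+1}^{[j]}-\theta_t^{[j]}\| = \alpha\|g(\theta_t^{[j]},x_t)+\xi_{t+1}(\theta_t^{[j]})\|\leq \alpha\cdot 2L(\|\theta_t^{[j]}\|+1)$, and telescoping together with the comparability $\|\theta_t^{[j]}\|+1\leq 2(\|\theta_{k_2}^{[j]}\|+1)$ (again obtained from the one-step growth factor $1+2\alpha L$ over the short horizon) yields the result. I would simply state the $j=1$ case with $\theta_{k_2}^{[1]}$ on the right, note the $j=1$ case with $\theta_{k_1}^{[1]}$ follows identically by running the comparability inequality in the other direction, and remark that the $j=2$ cases are verbatim with the superscript changed.

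The main obstacle, and the only place requiring genuine care, is the norm-comparability step: one must verify that the one-step multiplicative drift $(1+2\alpha L)$ accumulated over at most $k_2-k_1$ steps stays bounded by a constant, which is precisely where the horizon hypothesis $\alpha(k_2-k_1)\leq \tfrac{1}{8L}$ enters and keeps the exponential factor below $2$. Everything else is routine triangle-inequality bookkeeping, and the generous constant $8$ in the statement (versus the $4$ my computation produces) leaves ample slack to absorb the comparability factors without sharp tracking. I would note that this lemma is the coupled analogue of Lemma~\ref{lem:mse-tech} and of \cite[Lemma 2.3]{chen-nonlinear-sa}, the only new ingredient being the use of the Lipschitz difference bound for the $w$-process in the first two inequalities.
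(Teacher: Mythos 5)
Your proposal is correct and shares its core mechanism with the paper -- telescoping the one-step increments and using the Lipschitz/linear-growth bounds together with norm comparability over the short horizon -- but it diverges from the paper in how the $k_2$-anchored inequalities (the first, third, and fifth) are obtained. The paper first proves only the \emph{forward} comparability $\|w_t\|\leq 2\|w_{k_1}\|$ for $t\in[k_1,k_2]$, telescopes to get the $k_1$-anchored bound with constant $4$, and then deduces the $k_2$-anchored bound by a self-bounding rearrangement: it applies the triangle inequality $\|w_{k_1}\|\leq \|w_{k_2}-w_{k_1}\|+\|w_{k_2}\|$ inside the bound, absorbs the resulting $\tfrac12\|w_{k_2}-w_{k_1}\|$ term into the left-hand side, and this absorption is exactly where the constant doubles from $4$ to $8$. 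You instead run the comparability \emph{backward in time}, bounding $\|w_t\|$ by a constant multiple of $\|w_{k_2}\|$ via the one-step lower bound $\|w_{t+1}\|\geq(1-2\alpha L)\|w_t\|$, which yields all six inequalities symmetrically with constant $4$. Both routes are sound; the paper's trick avoids needing any per-step lower bound (and hence any positivity condition on $1-2\alpha L$), while yours gives a more uniform treatment and, incidentally, a self-contained proof of the last four inequalities that the paper merely delegates to \cite[Lemma 2.3]{chen-nonlinear-sa}.

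One imprecision to fix before this is airtight: the two-sided comparability you assert, $\|w_t\|\leq(1+2\alpha L)^{|t-t'|}\|w_{t'}\|$, is false in the backward direction, since the one-step bound only gives $\|w_t\|\leq(1-2\alpha L)^{-1}\|w_{t+1}\|$ and $(1-x)^{-1}>1+x$. The argument survives because $k_2-k_1\geq 1$ forces $2\alpha L\leq \tfrac14$, so $(1-2\alpha L)^{-1}\leq 1+4\alpha L$ and hence $(1-2\alpha L)^{-(k_2-k_1)}\leq e^{4\alpha L(k_2-k_1)}\leq e^{1/2}<2$, which is all you need; but you should state the backward factor as $(1-2\alpha L)^{-(t'-t)}$ and verify its boundedness separately rather than fold it into the forward factor.
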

\begin{proof}
Consider the coupling given by equation \eqref{eq: couple}, by Assumption \ref{assumption:linear-growth} and \ref{assumption:asy-linear}, we have
\begin{align*}
\|\theta_{k+1}^{[1]} - \theta_{k+1}^{[2]}\| - \|\theta_{k}^{[1]} - \theta_{k}^{[2]}\| &\leq \|\theta_{k+1}^{[1]} - \theta_{k+1}^{[2]} - \theta_{k}^{[1]} + \theta_{k}^{[2]}\| \\
&= \alpha\|g(\theta_k^{[1]},x_k) + \xi_{k+1}(\theta_k^{[1]})-g(\theta_k^{[1]},x_k) - \xi_{k+1}(\theta_k^{[1]})\|\\
&\leq 2\alpha L\|\theta_{k}^{[1]} - \theta_{k}^{[2]}\|.
\end{align*}
Given $k_1<k_2$, for $\forall t \in [k_1,k_2]$, since $1+x \leq e^x $ for $\forall x \in R$, we have
\begin{align*}
\|\theta_{t}^{[1]} - \theta_{t}^{[2]}\| &\leq \prod_{j = k_1}^{t-1}(1+2\alpha L)\|\theta_{k_1}^{[1]} - \theta_{k_1}^{[2]}\|\\
&\leq\exp(2\alpha(k_2-k_1)L)\|\theta_{k_1}^{[1]} - \theta_{k_1}^{[2]}\| \\
&\overset{(\text{i})}{\leq} (1+4\alpha(k_2-k_1)L)\|\theta_{k_1}^{[1]} - \theta_{k_1}^{[2]}\| \\
&\leq 2\|\theta_{k_1}^{[1]} - \theta_{k_1}^{[2]}\|,
\end{align*}
where (i) holds for $e^x \leq 1+2x$ $\forall x \in [0,\frac{1}{2}]$ and $\alpha(k_2-k_1) \leq \frac{1}{8L}$.

Then, we have
\[\|\theta_{k_2}^{[1]} - \theta_{k_2}^{[2]} - \theta_{k_1}^{[1]} + \theta_{k_1}^{[2]}\| \leq \sum_{t = k_1}^{k_2-1}\|\theta_{t+1}^{[1]} - \theta_{t+1}^{[2]} - \theta_{t}^{[1]} + \theta_{t}^{[2]}\| \leq 4\alpha (k_2-k_1)L \|\theta_{k_1}^{[1]} - \theta_{k_1}^{[2]}\|.\]

Therefore, following $\alpha(k_2-k_1) \leq \frac{1}{8L}$, we have
\begin{align*}
\|\theta_{k_2}^{[1]} - \theta_{k_2}^{[2]} - \theta_{k_1}^{[1]} + \theta_{k_1}^{[2]}\| &\leq 4\alpha (k_2-k_1)L \|\theta_{k_1}^{[1]} - \theta_{k_1}^{[2]}\|\\
&\leq 4\alpha (k_2-k_1)L(\|\theta_{k_2}^{[1]} - \theta_{k_2}^{[2]} - \theta_{k_1}^{[1]} + \theta_{k_1}^{[2]}\| + \|\theta_{k_2}^{[1]} - \theta_{k_2}^{[2]}\|)\\
&\leq \frac{1}{2}\|\theta_{k_2}^{[1]} - \theta_{k_2}^{[2]} - \theta_{k_1}^{[1]} + \theta_{k_1}^{[2]}\| + 4\alpha (k_2-k_1)L\|\theta_{k_2}^{[1]} - \theta_{k_2}^{[2]}\|.
\end{align*}
Then, by rearranging the terms, we have
\[\|\theta_{k_2}^{[1]} - \theta_{k_2}^{[2]} - \theta_{k_1}^{[1]} + \theta_{k_1}^{[2]}\| \leq 8\alpha (k_2-k_1)L\|\theta_{k_2}^{[1]} - \theta_{k_2}^{[2]}\|,\]
thereby we have proved the first two inequalities of Lemma \ref{lemma: difference}.

By Assumption \ref{assumption:linear-growth} and \ref{assumption:asy-linear} and \cite[Lemma 2.3]{chen-nonlinear-sa}, we can prove the last four inequalities and we omit the details here.
\end{proof}

Now, we are ready to prove Proposition~\ref{prop: difference second moment}. We start with the following decomposition.
By equation \eqref{eq: couple}, we first have
\begin{align*}
\E[\|\theta_{k+1}^{[1]}-\theta_{k+1}^{[2]}\|^2] =& \E\left[\|\theta_k^{[1]} - \theta_k^{[2]} + \alpha \left(g(\theta_k^{[1]},x_k)-g(\theta_k^{[2]},x_k) +\xi_{k+1}(\theta_k^{[1]}) - \xi_{k+1}(\theta_k^{[2]})\right)\|^2\right]\\
=&  \E\left[\|\theta_k^{[1]} - \theta_k^{[2]}\|^2\right] + 2\alpha \underbrace{\E\left[\langle\theta_k^{[1]} - \theta_k^{[2]}, g(\theta_k^{[1]},x_k)-g(\theta_k^{[2]},x_k)\rangle\right]}_{T_1}\\
&+ \alpha^2\underbrace{\E\left[\|g(\theta_k^{[1]},x_k)-g(\theta_k^{[2]},x_k) +\xi_{k+1}(\theta_k^{[1]}) - \xi_{k+1}(\theta_k^{[2]})\|^2\right]}_{T_2}.
\end{align*}
For $T_2$, by Assumption \ref{assumption:linear-growth} and \ref{assumption:asy-linear}, we have
\[
T_2 \leq 4L^2\E\left[\|\theta_k^{[1]} - \theta_k^{[2]}\|^2\right].
\]
We denote $\varepsilon(\cdot,x_k):= g(\cdot, x_k) - \Bar{g}(\cdot)$ to be the noise function. Then, by Assumption \ref{assumption:linear-growth}, we conclude that $\varepsilon(\cdot,x_k)$ is $2L$-Lipschitz continuous. Therefore, $T_1$ can be rewritten as:
\[
T_1 = \E\left[\langle\theta_k^{[1]} - \theta_k^{[2]}, \varepsilon(\theta_k^{[1]},x_k)-\varepsilon(\theta_k^{[2]},x_k)\rangle\right]+\E\left[\langle\theta_k^{[1]} - \theta_k^{[2]}, \bar{g}(\theta_k^{[1]})-\bar{g}(\theta_k^{[2]})\rangle\right].
\]
For $\E\left[\langle\theta_k^{[1]} - \theta_k^{[2]}, \bar{g}(\theta_k^{[1]})-\bar{g}(\theta_k^{[2]})\rangle\right]$, by Assumption \ref{assumption:strong-convexity}, we have
\[\E\left[\langle\theta_k^{[1]} - \theta_k^{[2]}, \bar{g}(\theta_k^{[1]})-\bar{g}(\theta_k^{[2]})\rangle\right] \leq -\mu\E\left[\|\theta_k^{[1]} - \theta_k^{[2]}\|^2\right].\]
Let $\mathcal{F}_k := \sigma\left((\theta_t^{[1]},\theta_t^{[2]},x_t)\mid t\leq k\right)$. For $\E\left[\langle\theta_k^{[1]} - \theta_k^{[2]}, \varepsilon(\theta_k^{[1]},x_k)-\varepsilon(\theta_k^{[2]},x_k)\rangle\right]$, we have
\begin{align}
&\E\left[\langle\theta_k^{[1]} - \theta_k^{[2]}, \varepsilon(\theta_k^{[1]},x_k)-\varepsilon(\theta_k^{[2]},x_k)\rangle\right] \nonumber\\
=& \E\left[\E\left[\langle\theta_k^{[1]} - \theta_k^{[2]}, \varepsilon(\theta_k^{[1]},x_k)-\varepsilon(\theta_k^{[2]},x_k)\rangle\mid \cF_{k - \tau}\right]\right]\nonumber\\
=& \E\left[\E\left[\langle\theta_{k - \tau}^{[1]} - \theta_{k - \tau}^{[2]}, \varepsilon(\theta_{k - \tau}^{[1]},x_k)-\varepsilon(\theta_{k - \tau}^{[2]},x_k)\rangle\mid \cF_{k - \tau}\right]\right] \tag{$T_{3}$}\\
&+\E\left[\langle\theta_{k}^{[1]} - \theta_{k}^{[2]}-\theta_{k - \tau}^{[1]} - \theta_{k - \tau}^{[2]}, \varepsilon(\theta_{k - \tau}^{[1]},x_k)-\varepsilon(\theta_{k - \tau}^{[2]},x_k)\rangle\right] \tag{$T_{4}$}\\
&+\E\Big[\underbrace{\langle\theta_k^{[1]} - \theta_k^{[2]}, \varepsilon(\theta_k^{[1]},x_k)-\varepsilon(\theta_k^{[2]},x_k)-\varepsilon(\theta_{k - \tau}^{[1]},x_k)+\varepsilon(\theta_{k - \tau}^{[2]},x_k)\rangle}_{\spadesuit}\Big]. \nonumber
\end{align}
We assume $\alpha \tau \leq \frac{1}{8L}$. For $T_{3}$, by definition of mixing time $\tau$, we obtain
\begin{align*}
T_{3} &= \E\left[\E\left[\langle\theta_{k - \tau}^{[1]} - \theta_{k - \tau}^{[2]}, \varepsilon(\theta_{k - \tau}^{[1]},x_k)-\varepsilon(\theta_{k - \tau}^{[2]},x_k)\rangle\mid \theta_{k - \tau}^{[1]},\theta_{k - \tau}^{[2]},x_{k-\tau}\right]\right]\\
&\leq 2\alpha L\E\left[\|\theta_{k - \tau}^{[1]} - \theta_{k - \tau}^{[2]}\|^2\right]\\
&\leq 4\alpha L\E\left[\|\theta_{k}^{[1]} - \theta_{k }^{[2]}\|^2\right] + 4\alpha L\E\left[\|\theta_{k - \tau}^{[1]} - \theta_{k - \tau}^{[2]} - \theta_{k}^{[1]} + \theta_{k }^{[2]}\|^2\right]\\
&\leq (4\alpha L + 256\alpha^3\tau^2L^3)\E\left[\|\theta_{k}^{[1]} - \theta_{k }^{[2]}\|^2\right],
\end{align*}
where the last inequality holds by Lemma \ref{lemma: difference}.

For $T_4$, we obtain
\begin{align*}
T_4 &\leq \E\left[\|\theta_{k}^{[1]} - \theta_{k}^{[2]}-\theta_{k - \tau}^{[1]} +\theta_{k - \tau}^{[2]}\|\|\varepsilon(\theta_{k - \tau}^{[1]},x_k)-\varepsilon(\theta_{k - \tau}^{[2]},x_k)\|\right]\\
&\leq 16\alpha\tau L^2 \E\left[\|\theta_{k}^{[1]} - \theta_{k}^{[2]}\|\|\theta_{k - \tau}^{[1]} -\theta_{k - \tau}^{[2]}\|\right]\\
&\leq (16\alpha\tau L^2 + 128\alpha^2\tau^2 L^3)\E\left[\|\theta_{k}^{[1]} - \theta_{k }^{[2]}\|^2\right].
\end{align*}
Below, we bound term $\spadesuit$ by two different Taylor expansions. One the one hand, there exist $\lambda_1, \lambda_2 \in [0,1]$ such that $h_k = \lambda_1\theta_{k}^{[1]} + (1-\lambda_1)\theta_{k}^{[2]}$, $h_{k - \tau} = \lambda_2\theta_{k - \tau}^{[1]} + (1-\lambda_2)\theta_{k - \tau}^{[2]}$ and
\begin{align}
|\spadesuit| =& |\langle\theta_k^{[1]} - \theta_k^{[2]}, \varepsilon^\prime(h_k,x_k)(\theta_k^{[1]} - \theta_k^{[2]})-\varepsilon^\prime(h_{k - \tau},x_k)(\theta_{k - \tau}^{[1]}-\theta_{k - \tau}^{[2]})\rangle| \nonumber\\ 
=& |\langle\theta_k^{[1]} - \theta_k^{[2]}, \varepsilon^\prime(h_{k-\tau},x_k)(\theta_k^{[1]} - \theta_k^{[2]} - \theta_{k - \tau}^{[1]}+\theta_{k - \tau}^{[2]})\rangle \nonumber\\ 
&+ \langle\theta_k^{[1]} - \theta_k^{[2]}, (\varepsilon^\prime(h_k,x_k)-\varepsilon^\prime(h_{k - \tau},x_k))(\theta_{k }^{[1]}-\theta_{k}^{[2]})\rangle| \nonumber\\ 
\leq&16\alpha\tau L^2\|\theta_{k}^{[1]} - \theta_{k }^{[2]}\|^2 + |\langle\theta_k^{[1]} - \theta_k^{[2]}, (\varepsilon^\prime(h_k,x_k)-\varepsilon^\prime(h_{k - \tau},x_k))(\theta_{k }^{[1]}-\theta_{k}^{[2]})\rangle| \label{eq: where we use assumption}\\
\leq& 16\alpha\tau L^2\|\theta_{k}^{[1]} - \theta_{k }^{[2]}\|^2 + 2L\|\theta_{k}^{[1]} - \theta_{k }^{[2]}\|^2\|h_k - h_{k-\tau}\|\nonumber\\
\leq& 16\alpha\tau L^2\|\theta_{k}^{[1]} - \theta_{k }^{[2]}\|^2 + 2L\|\theta_{k}^{[1]} - \theta_{k }^{[2]}\|^2(\|h_k - \theta_{k}^{[2]}\|+\|\theta_{k}^{[2]} - \theta_{k-\tau}^{[2]}\|+\|\theta_{k-\tau}^{[2]} - h_{k-\tau}\|) \label{eq: symmetric}\\
\leq& 16\alpha\tau L^2\|\theta_{k}^{[1]} - \theta_{k }^{[2]}\|^2 + 2L\|\theta_{k}^{[1]} - \theta_{k }^{[2]}\|^2(\|\theta_{k}^{[1]} - \theta_{k }^{[2]}\|+\|\theta_{k}^{[2]} - \theta_{k-\tau}^{[2]}\|+\|\theta_{k-\tau}^{[1]} - \theta_{k-\tau}^{[2]}\|)\nonumber\\
\leq& 16\alpha\tau L^2\|\theta_{k}^{[1]} - \theta_{k }^{[2]}\|^2 + 2L\|\theta_{k}^{[1]} - \theta_{k }^{[2]}\|^2(2\|\theta_{k}^{[1]} - \theta_{k }^{[2]}\|+8\alpha \tau L(\|\theta_{k}^{[2]}\|+1)+8\alpha \tau L \|\theta_{k}^{[1]} - \theta_{k }^{[2]}\|)\label{eq: symmetric2}\\
\leq& 16\alpha\tau L^2\|\theta_{k}^{[1]} - \theta_{k }^{[2]}\|^2 + 2L\|\theta_{k}^{[1]} - \theta_{k }^{[2]}\|^2\Big(3\|\theta_{k}^{[1]} - \theta_{k}^{[2]}\| + \underbrace{8\alpha \tau L(\|\theta_{k}^{[2]}\|+1)}_{A}\Big):=\operatorname{(Bound 1)}.\nonumber
\end{align}
where we note that choosing the second iterates for triangle inequality in equation \eqref{eq: symmetric} and choosing $\theta_k^{[2]}$ for bounding $\theta_k^{[2]} - \theta_{k-\tau}^{[2]}$ in equation \eqref{eq: symmetric2} are both symmetric, which implies that we can replace the $\theta_{k}^{[2]}$ in term $A$ with arbitrary one in $\{\theta_{k-\tau}^{[1]},\theta_{k}^{[1]},\theta_{k-\tau}^{[2]},\theta_{k}^{[2]}\}$.

On the other hand, there exist $\bar{\lambda}_1, \bar{\lambda}_2 \in [0,1]$ such that $p_k = \bar{\lambda}_1\theta_{k}^{[1]} + (1-\bar{\lambda}_1)\theta_{k-\tau}^{[1]}$, $q_{k} = \bar{\lambda}_2\theta_{k}^{[2]} + (1-\bar{\lambda}_2)\theta_{k-\tau}^{[2]}$ and
\begin{align}
|\spadesuit| =& |\langle\theta_k^{[1]} - \theta_k^{[2]}, \varepsilon^\prime(p_k,x_k)(\theta_k^{[1]} - \theta_{k-\tau}^{[1]})-\varepsilon^\prime(q_k,x_k)(\theta_k^{[2]} - \theta_{k-\tau}^{[2]})\rangle |\nonumber\\ 
=& |\langle\theta_k^{[1]} - \theta_k^{[2]}, \varepsilon^\prime(p_k,x_k)(\theta_k^{[1]} - \theta_k^{[2]} - \theta_{k - \tau}^{[1]}+\theta_{k - \tau}^{[2]})\rangle \label{eq: symmetric3}\\ 
&+ \langle\theta_k^{[1]} - \theta_k^{[2]}, (\varepsilon^\prime(p_k,x_k)-\varepsilon^\prime(q_k,x_k))(\theta_{k }^{[2]}-\theta_{k-\tau}^{[2]})\rangle |\nonumber\\ 
\leq& 16\alpha\tau L^2\|\theta_{k}^{[1]} - \theta_{k }^{[2]}\|^2 + 2L\|\theta_{k}^{[1]} - \theta_{k }^{[2]}\|\|p_k - q_k\|\|\theta_{k }^{[2]}-\theta_{k-\tau}^{[2]}\| \nonumber\\
\leq& 16\alpha\tau L^2\|\theta_{k}^{[1]} - \theta_{k }^{[2]}\|^2 + 16\alpha \tau L^2\|\theta_{k}^{[1]} - \theta_{k }^{[2]}\|\|p_k - q_k\|\|(\|\theta_{k}^{[2]}\|+1), \label{eq: symmetric4}
\end{align}
where adding and subtracting the second iterates in equation \eqref{eq: symmetric} and choosing $\theta_k^{[2]}$ for bounding $\theta_k^{[2]} - \theta_{k-\tau}^{[2]}$ in equation \eqref{eq: symmetric2} are both symmetric. We have
\begin{align*}
\|p_k - q_k\| &= \|\bar{\lambda}_1(\theta_{k}^{[1]} - \theta_{k}^{[2]}) + (1-\bar{\lambda}_1)(\theta_{k-\tau}^{[1]} - \theta_{k-\tau}^{[2]}) + (\bar{\lambda}_1-\bar{\lambda}_2)(\theta_{k }^{[2]}-\theta_{k-\tau }^{[2]})\|\\
&\leq (2+8\alpha \tau L)\|\theta_{k}^{[1]} - \theta_{k}^{[2]}\| + 8\alpha \tau L(\|\theta_{k}^{[2]}\|+1)\\
&\leq 3\|\theta_{k}^{[1]} - \theta_{k}^{[2]}\| + 8\alpha \tau L(\|\theta_{k}^{[2]}\|+1).
\end{align*}

Therefore, we obtain
\begin{align*}
|\spadesuit|\leq&16\alpha\tau L^2\|\theta_{k}^{[1]} - \theta_{k }^{[2]}\|^2\\
 &+ 2L \|\theta_{k}^{[1]} - \theta_{k }^{[2]}\|\Big(3\|\theta_{k}^{[1]} - \theta_{k}^{[2]}\| + \underbrace{8\alpha \tau L(\|\theta_{k}^{[2]}\|+1)}_{B}\Big)\underbrace{8\alpha \tau L(\|\theta_{k}^{[2]}\|+1)}_C := \operatorname{(Bound 2)},
\end{align*}
and we can replace the $\theta_{k}^{[2]}$ in term $B$ and $C$ with arbitrary one in $\{\theta_{k-\tau}^{[1]},\theta_{k}^{[1]},\theta_{k-\tau}^{[2]},\theta_{k}^{[2]}\}$.

By $\operatorname{(Bound 1)}$ and $\operatorname{(Bound 2)}$, we have
\begin{align*}
|\spadesuit| &\leq \min(\operatorname{Bound1},\operatorname{Bound2})\\
&\leq 16\alpha\tau L^2\|\theta_{k}^{[1]} - \theta_{k }^{[2]}\|^2  +48\alpha \tau L^2 \|\theta_{k}^{[1]} - \theta_{k }^{[2]}\|^2(\|\theta_{k}^{[2]}\|+1).
\end{align*} 
Below, we discuss the upper bound for $|\spadesuit|$ by three cases.

\textbf{Case 1: } If one of $\{\theta_{k-\tau}^{[1]},\theta_{k}^{[1]},\theta_{k-\tau}^{[2]},\theta_{k}^{[2]}\}$ has norm that is less than $4\delta_{\alpha \tau L^2}$, where we define $\delta_{\alpha \tau L^2}:=\delta(\alpha \tau L^2)$, without loss of generality, we assume $\|\theta_{k}^{[2]}\| \leq 4\delta_{\alpha \tau L^2}$. Multiply $\mathbbm{1}(\|\theta_{k}^{[2]}\| \leq 4\delta_{\alpha \tau L^2})$ to both sides of the above inequality, and we have
\begin{align*}
&|\spadesuit|\mathbbm{1}(\|\theta_{k}^{[2]}\| \leq 4\delta_{\alpha \tau L^2})\\
\leq& \Big(16\alpha\tau L^2\|\theta_{k}^{[1]} - \theta_{k }^{[2]}\|^2  +48\alpha \tau L^2 \|\theta_{k}^{[1]} - \theta_{k }^{[2]}\|^2(\|\theta_{k}^{[2]}\|+1)\Big)\mathbbm{1}(\|\theta_{k}^{[2]}\| \leq 4\delta_{\alpha \tau L^2})\\
\leq& (64+192\delta_{\alpha \tau L^2})\alpha\tau L^2\|\theta_{k}^{[1]} - \theta_{k }^{[2]}\|^2,
\end{align*}
where we can replace the $\theta_{k}^{[2]}$ in the left hand of the above inequality with an arbitrary one in $\{\theta_{k-\tau}^{[1]},\theta_{k}^{[1]},\theta_{k-\tau}^{[2]},\theta_{k}^{[2]}\}$ by the similar argument under (Bound1) and (Bound2).

\textbf{Case 2:} If $\|\theta_k^{[1]}\| < 2\|\theta_k^{[1]}-\theta_k^{[2]}\|$, we obtain
\[
\|\theta_{k}^{[2]}\| \leq \|\theta_{k}^{[1]} - \theta_{k}^{[2]}\| + \|\theta_k^{[1]}\| \leq 3\|\theta_{k}^{[1]} - \theta_{k}^{[2]}\|.
\]

Therefore, by definition of $\spadesuit$ and Lemma \ref{lemma: difference}, we obtain
\begin{align*}
|\spadesuit|\mathbbm{1}(\|\theta_k^{[1]}\| \leq 2\|\theta_k^{[1]}-\theta_k^{[2]}\|) &\leq 2L\|\theta_k^{[1]}-\theta_k^{[2]}\|(\|\theta_k^{[1]}-\theta_{k-\tau}^{[1]}\|+\|\theta_k^{[2]}-\theta_{k-\tau}^{[2]}\|)\mathbbm{1}(\|\theta_k^{[1]}\| \leq 2\|\theta_k^{[1]}-\theta_k^{[2]}\|)\\
&\leq 16\alpha\tau L^2\|\theta_k^{[1]}-\theta_k^{[2]}\|(\|\theta_k^{[1]}\|+\|\theta_k^{[2]}\|)\mathbbm{1}(\|\theta_k^{[1]}\| \leq 2\|\theta_k^{[1]}-\theta_k^{[2]}\|)\\
&\leq 80\alpha\tau L^2\|\theta_k^{[1]}-\theta_k^{[2]}\|.
\end{align*}

\textbf{Case 3: }If all four variables in $\{\theta_{k-\tau}^{[1]},\theta_{k}^{[1]},\theta_{k-\tau}^{[2]},\theta_{k}^{[2]}\}$ have a norm larger than $4\delta_{\alpha \tau L^2}$, and $\|\theta_k^{[1]}\| \geq 2\|\theta_k^{[1]}-\theta_k^{[2]}\|$, we obtain
\begin{align*}
\|h_k\| &= \|\lambda_1\theta_{k}^{[1]} + (1-\lambda_1)\theta_{k}^{[2]}\| \geq \|\theta_{k}^{[1]}\| - \|\theta_k^{[1]}-\theta_k^{[2]}\| \geq \frac{\|\theta_k^{[1]}\|}{2}\geq 2\delta_{\alpha \tau L^2}\\
\|h_{k-\tau}\| &= \|\lambda_2\theta_{k - \tau}^{[1]} + (1-\lambda_2)\theta_{k - \tau}^{[2]}\|\\
&\geq \|\theta_{k- \tau}^{[1]}\| - \|\theta_{k- \tau}^{[1]}-\theta_{k- \tau}^{[2]}\|\\
&\overset{(i)}{\geq} \delta_{\alpha \tau L^2} - (1+8\alpha\tau L) \|\theta_k^{[1]}-\theta_k^{[2]}\|\\
&\geq \delta_{\alpha \tau L^2},
\end{align*}
where (i) holds by choosing $\alpha\tau \leq \frac{1}{16L}$. Therefore, we have $\|h_k\| \geq \delta_{\alpha \tau L^2}$ and $\|h_{k-\tau}\| \geq \delta_{\alpha \tau L^2}$. Therefore, by equation \eqref{eq: where we use assumption} and Assumption \ref{assumption:asy-linear}, we obtain 
\begin{align*}
&|\spadesuit|\mathbbm{1}(\text{Case 3})\\
\leq& 16\alpha\tau L^2\|\theta_{k}^{[1]} - \theta_{k }^{[2]}\|^2 + \|\theta_{k}^{[1]} - \theta_{k }^{[2]}\|^2\left(\|\varepsilon^\prime(h_k,x_k)-G(x_k)\|+\|\varepsilon^\prime(h_{k - \tau},x_k)-G(x_k)\|\right)\\
\leq& 18\alpha\tau L^2\|\theta_{k}^{[1]} - \theta_{k }^{[2]}\|^2.
\end{align*}

Therefore, we obtain
\begin{align*}
|\E[\spadesuit]| \leq \E[|\spadesuit|]
\leq& \E[|\spadesuit|\mathbbm{1}(\|\theta_{k-\tau}^{[1]}\| \leq \delta_{\alpha \tau L^2})]+\E[|\spadesuit|\mathbbm{1}(\|\theta_{k}^{[1]}\| \leq \delta_{\alpha \tau L^2})]\\
&+\E[|\spadesuit|\mathbbm{1}(\|\theta_{k-\tau}^{[2]}\| \leq \delta_{\alpha \tau L^2})]+\E[|\spadesuit|\mathbbm{1}(\|\theta_{k}^{[2]}\| \leq \delta_{\alpha \tau L^2})]\\
&+ \E[|\spadesuit|\mathbbm{1}(\|\theta_k^{[1]}\| \leq 2\|\theta_k^{[1]}-\theta_k^{[2]}\|)]+\E[|\spadesuit|\mathbbm{1}(\text{Case 3})] \\
\leq& (354+768\delta_{\alpha \tau L^2})\alpha\tau L^2\E[\|\theta_{k}^{[1]} - \theta_{k }^{[2]}\|^2].
\end{align*}

By Assumption \ref{assumption:asy-linear}, there exists $\kappa_\mu>0$ such that $\epsilon \delta(\epsilon) \leq \frac{\mu}{3072}$ for $\forall \epsilon \leq \kappa_\mu$. By the above bounds, when $\alpha \tau \leq \min(\frac{c\mu}{L^2}, \frac{\kappa_\mu}{L^2})$, where we specify the constant $c<1$ later, we obtain
\begin{align*}
&\E[\|\theta_{k+1}^{[1]}-\theta_{k+1}^{[2]}\|^2]\\
\leq& \left(1 +2\alpha(-\mu + 4\alpha L+256\alpha^3\tau^2L^3  + 128\alpha^2\tau^2L^3 + 784\alpha\tau L^2 + 768\delta_{\alpha \tau L^2}\alpha\tau L^2)+ 4\alpha^2L^2\right)\E[\|\theta_{k}^{[1]}-\theta_{k}^{[2]}\|^2]\\
=& \Big(1 +\alpha(-2\mu  + 1568\alpha \tau L^2 + 256\alpha^2\tau^2L^3 + 1536\delta_{\alpha \tau L^2}\alpha\tau L^2)+ \alpha^2(8L+512\alpha^2\tau^2L^3 + 4L^2)\Big)\E[\|\theta_{k}^{[1]}-\theta_{k}^{[2]}\|^2]\\
\leq& \left(1 +\alpha\mu(-2  + 1568c + 256c + \frac{1}{2} + 524c) \right)\E[\|\theta_{k}^{[1]}-\theta_{k}^{[2]}\|^2]\\
\leq& (1-\mu\alpha)\E[\|\theta_{k}^{[1]}-\theta_{k}^{[2]}\|^2],
\end{align*}
where the last inequality holds by choosing $c = \frac{1}{4696}$.

Therefore, for $k \geq \tau$ and $\alpha \tau \leq \min(\frac{\mu}{906L^2}, \frac{\kappa_\mu}{L^2})$ we have
\begin{align*}
\E[\|\theta_{k}^{[1]}-\theta_{k}^{[2]}\|^2] &\leq (1-\mu\alpha)^{k-\tau}\E[\|\theta_{\tau}^{[1]}-\theta_{\tau}^{[2]}\|^2]\\
&\leq 2(1-\mu\alpha)^{k-\tau}\E[\|\theta_{0}^{[1]}-\theta_{0}^{[2]}\|^2+\|\theta_{\tau}^{[1]}-\theta_{\tau}^{[2]} - \theta_{0}^{[1]}+\theta_{0}^{[2]}\|^2]\\
&\leq 2(1+8\alpha \tau L)(1-\mu\alpha)^{k-\tau}\E[\|\theta_{0}^{[1]}-\theta_{0}^{[2]}\|^2]\\
&\leq 4(1-\mu\alpha)^{k-\tau}\E[\|\theta_{0}^{[1]}-\theta_{0}^{[2]}\|^2].
\end{align*}

\subsection{Proof of Projected SA}

Now, we specialize the proof of Theorem~\ref{thm:weak-convergence-linear} to the projected SA iterates.

We consider the same coupling:
\begin{align*}
\theta^{[1]}_{k+\frac{1}{2}}&=\theta^{[1]}_k+\alpha\big(g(\theta^{[1]}_k,x_k)+\xi_{k+1}(\theta^{[1]}_k)\big),\\
\theta^{[1]}_{k+1}&=\proj_{\ball{\beta}}\Big[\theta^{[1]}_{k+\frac{1}{2}}\Big],\\
\theta^{[2]}_{k+\frac{1}{2}}&=\theta^{[2]}_k+\alpha\big(g(\theta^{[2]}_k,x_k)+\xi_{k+1}(\theta^{[2]}_k)\big),\\
\theta^{[2]}_{k+1}&=\proj_{\ball{\beta}}\Big[\theta^{[2]}_{k+\frac{1}{2}}\Big].
\end{align*}

We first need to verify that Proposition \ref{prop: difference second moment} holds for projected SA. By the non-expansion property of $\proj_{\ball{\beta}}$ with respect to $\|\cdot\|$, we obtain the following inequality:
\begin{align*}
\|\theta_{k+1}^{[1]} - \theta_{k+1}^{[2]}\| - \|\theta_{k}^{[1]} - \theta_{k}^{[2]}\| &\leq \|\theta_{k+\frac{1}{2}}^{[1]} - \theta_{k+\frac{1}{2}}^{[2]}\| - \|\theta_{k}^{[1]} - \theta_{k}^{[2]}\|,
\end{align*}
which implies Lemma \ref{lemma: difference} still holds for projected SA. For Proposition \ref{prop: difference second moment}, we can notice that the iterates of projected SA will always satisfy Case 1 with a finite bound $\beta$ and we do not need to discuss Cases 2 and 3.  

Therefore, when $\alpha \tau \leq \min(\frac{c\mu}{L^2}, \frac{1}{8L})$, where we specify the constant $c<1$ later, we obtain 
\begin{align*}
&\E[\|\theta_{k+1}^{[1]}-\theta_{k+1}^{[2]}\|^2]\\
\leq& \Big(1 +\alpha(-2\mu  + (160+96\beta))\alpha \tau L^2 + 256\alpha^2\tau^2L^3 )+ \alpha^2(8L+512\alpha^2\tau^2L^3 + 4L^2)\Big)\E[\|\theta_{k}^{[1]}-\theta_{k}^{[2]}\|^2]\\
\leq& \left(1 +\alpha\mu(-2  + (160+96\beta)c + 256c + 524c) \right)\E[\|\theta_{k}^{[1]}-\theta_{k}^{[2]}\|^2]\\
=& (1-\mu\alpha)\E[\|\theta_{k}^{[1]}-\theta_{k}^{[2]}\|^2],
\end{align*}
where we set $c = \frac{1}{940+96\beta}$.

Therefore, $\forall k \geq \tau$ and $\alpha\tau \leq \frac{\mu}{(940+96\beta)L^2},$ we have
\[\E[\|\theta_{k}^{[1]}-\theta_{k}^{[2]}\|^2] \leq 4(1-\mu\alpha)^{k-\tau}\E[\|\theta_{0}^{[1]}-\theta_{0}^{[2]}\|^2].\]

Then, still by the non-expansion of $\proj_{\ball{\beta}}$ with respect to $\|\cdot\|$, the rest of the proof simply follows the same proof for non-projected SA. As such, we have proven Theorem~\ref{thm:weak-convergence-psa}.

\section{Proof of Corollary~\ref{cor:non-asymptotic}}
\label{sec:proof-geo-rate}

In this section, we present the proof of Corollary~\ref{cor:non-asymptotic}.

Recall that by Theorem \ref{thm:weak-convergence}, we obtain for $k\geq2\tau$,
\[W_{2}^{2}\Big(\law(\theta_{k}),\nu_\alpha\Big)\leq (1-\alpha\mu)^{k}\cdot s(\theta_0,\theta^*,\mu,L,R).\]
By \cite[Theorem 4.1]{Villani08-ot_book}, there exists a coupling between $\theta_k$ and $\theta_\infty$ such that 
\[W^2_2(\mathcal{L}(\theta_k), \nu_\alpha) = \mathbb{E}[\Vert \theta_k - \theta_\infty\Vert^2].\]
Applying Jensen's inequality twice, we obtain that 
\begin{align*}
\Vert\mathbb{E}[\theta_k - \theta_\infty]\Vert^2 &\leq \left(\mathbb{E}[\Vert \theta_k - \theta_\infty\Vert]\right)^2
\leq \mathbb{E}[\Vert \theta_k - \theta_\infty\Vert^2]
\leq (1-\alpha\mu)^{k}\cdot s(\theta_0,\theta^*,\mu,L,R).
\end{align*}
We thus have for all $k \geq 2\tau$,
\[\Vert\mathbb{E}[\theta_k] - \mathbb{E}[\theta_\infty]\Vert \leq \mathbb{E}[\Vert \theta_k - \theta_\infty\Vert] \leq (1-\alpha\mu)^{k/2}\cdot s'(\theta_0,\theta^*,\mu,L,R).\]

For the second moment, 
we first note that
\begin{align}
    &\|\E[\theta_k\theta_k^\top]-\E[\theta_\infty\theta_\infty^\top]\|\nonumber\\
    &=\|\E[(\theta_k-\theta_\infty)(\theta_k-\theta_\infty)^\top]+\E[\theta_\infty(\theta_k-\theta_\infty)^\top] + \E[(\theta_k-\theta_\infty)\theta_\infty^\top]\|\nonumber\\
    &\leq\|\E[(\theta_k-\theta_\infty)(\theta_k-\theta_\infty)^\top]\|+\|\E[\theta_\infty(\theta_k-\theta_\infty)^\top]\|+\|\E[(\theta_k-\theta_\infty)\theta_\infty^\top]\|\nonumber\\
    &\leq \E[\|\theta_k-\theta_\infty\|^2]\|+2\E[\|\theta_\infty\|\|\theta_k-\theta_\infty\|]\nonumber\\
    &\leq \E[\|\theta_k-\theta_\infty\|^2]\|+2\sqrt{\E[\|\theta_\infty\|^2]\E[\|\theta_k-\theta_\infty\|^2]},\label{secondmoment}
\end{align}
where we apply Cauchy-Schwarz to obtain the last inequality.

Meanwhile, we have
\[\mathbb{E}\left[\left\|\theta_k-\theta_{\infty}\right\|^2\right] \leq (1-\alpha\mu)^{k}\cdot s(\theta_0,\theta^*,\mu,L,R) \quad \text{and} \quad \mathbb{E}\left[\left\|\theta_{\infty}\right\|^2\right] = \mathcal{O}(1).\]
Substituting the above bounds into the right-hand side of inequality \eqref{secondmoment} yields
$$
\left\|\mathbb{E}\left[\theta_k \theta_k^{\top}\right]-\mathbb{E}\left[\theta_{\infty} \theta_{\infty}^{\top}\right]\right\|  \leq (1-\alpha\mu)^{k/2}\cdot s''(\theta_0,\theta^*,\mu,L,R).
$$

\section{Proof of Corollary~\ref{cor:clt}}
\label{sec:proof-clt}
In this section, we prove the CLT result.
\begin{proof}
Consider the following centered test function $\bar{h}:\cX\times\R^d\to\R^d$ defined as
\begin{equation*}
    \bar{h}(x, \theta)=\theta-\E[\theta_\infty].
\end{equation*}
To prove that the CLT for function $\bar{h}$, we need to verify the Maxwell-Woodroofe condition \cite{MaxwellWoodroofe-2000}, i.e.,
\begin{equation*}
    \sum_{n=1}^\infty n^{-3/2}\Big\|\sum_{t=0}^{n-1}Q^th\Big\|_{L^2(\bar\nu)}<\infty,
\end{equation*}
where $Q$ denotes the transition kernel of the joint Markov chain. If we can show the following 
\begin{equation}
\label{eq:order-maxwell}
    \Big\|\sum_{t=0}^{n-1}Q^th\Big\|_{L^2(\bar\nu)} = \bigO(n^r)
\end{equation}
with $r\in[0,1/2)$, then the Maxwell-Woodroofe condition is verified, as
\begin{equation*}
 \sum_{n=1}^\infty n^{-3/2}\Big\|\sum_{t=0}^{n-1}Q^th\Big\|_{L^2(\bar\nu)} =  \sum_{n=1}^\infty n^{-3/2}\bigO(n^r)<\infty.
\end{equation*}

We now proceed to prove the desired order in \eqref{eq:order-maxwell}. For sufficiently large $n\geq2\tau_\alpha$, we observe
\begin{align*}
    \Big\|\sum_{t=0}^{n-1}Q^th\Big\|_{L^2(\bar\nu)}=\E_{\bar\nu}\Big\|\sum_{t=0}^{n-1}Q^th\Big\|_2&\leq \sum_{t=0}^{n-1}\E_{\bar\nu}\|Q^th\|_2\\
    &=\underbrace{\sum_{t=0}^{2\tau_\alpha-1}\E_{\bar\nu}\|Q^th\|_2}_{T_1} + \underbrace{\sum_{t=2\tau_\alpha}^{n-1}\E_{\bar\nu}\|Q^th\|_2}_{T_2}.
\end{align*}

We now show that both terms $T_1$ and $T_2$ are of order $\bigO(1)$ with respect to the parameter $n$.

For $T_1$, since $Q$ is a transition kernel, so its $\|Q\|_{L^2(\mu)}$ operator norm equals to $1$. Hence, $T_1$ can be upper bounded as 
\begin{equation*}
    T_1\leq \tau_\alpha\E_{\bar\mu}[\|h(\theta,x)\|_2^2] = \tau_\alpha\tr(\var(\theta_\infty))<C_1,
\end{equation*}
where the last inequality follows from $\tr(\var(\theta_\infty))\leq \alpha\tau$ established in \eqref{eq:var-trace-order} and $\alpha \tau^2_\alpha \to 0$ by Definition \ref{def: mixing time}.

Before proceeding to analyze the summation in $T_2$, we first recall \eqref{eq:wasserstein-rate}, that for $t\geq2\tau_\alpha$,
\begin{equation*}
    \bar{W}_{2}(\law(x_t,\theta_t),\bar\nu)=\bigO((1-\alpha\mu)^{t/2}),
\end{equation*}
which holds for any $(x,\theta)\in\cX\times\R^d$.
Hence, by the property of Wasserstein distance \cite{Villani08-ot_book}, there always exists a coupling that attains the optimality, i.e.,
\begin{equation*}
\label{eq:opt-coupling}
    \E_{\Gamma((x_t,\theta_t),\bar\nu)}\Big[\|\theta_t-\theta'\|_2^2+\delta_0(x_t\neq x')\Big]=\bigO((1-\alpha\mu)^{t}).
\end{equation*}
Making use of this relationship, we can therefore bound $T_2$,
\begin{equation*}
    T_2=\sum_{t=\tau_\alpha}^{n-1}\E_{\bar\nu}\|Q^th\|_2
    \leq\sum_{t=\tau_\alpha}^\infty\E_{\bar\nu}\|Q^th\|_2
    =\bigO\Big(\frac{1}{1-(1-\alpha\mu)^{1/2}}\Big) = \bigO(1),
\end{equation*}
where the last $\bigO(\cdot)$ is asymptotic in $n$.

Combining the analysis of $T_1$ and $T_2$, we have shown the desired order in \eqref{eq:order-maxwell}. Therefore, the Maxwell-Woodroofe condition has been verified and we establish the CLT for averaged nonlinear iterates with constant stepsize and Markovian data.
\end{proof}

\section{Proofs under Minorization Condition}
\label{sec:minorization-discussion}
When assuming the perturbed continuous noise condition in Assumption~\ref{assumption:additional}, one takes the alternative route to prove weak convergence. This is achieved by establishing the satisfaction of both a minorization condition and a drift condition. 
In this section, we prove the weak convergence result in Theorem~\ref{thm:weak-convergence} by following this alternative approach. The subsequent corollaries of weak convergence, namely the non-asymptotic convergence rate in Corollary~\ref{cor:non-asymptotic} and the Central Limit Theorem (CLT) in Corollary~\ref{cor:clt}, also hold, and we will provide the proofs for these results as well.

\subsection{Proof of Theorem~\ref{thm:weak-convergence}}
In this section, we prove the weak convergence under Assumption~\ref{assumption:additional}(a). The proof consists of two major steps. Firstly, built upon the MSE convergence established in Proposition~\ref{prop:2n-convergence}, we derive a multi-state drift condition. Subsequently, we show that under the minorization condition, the Markov chain is $(x_k,\theta_k)_{k\geq0}$ is $\varphi$-irreducible. Then, follow \cite[Theorem 19.1.3]{Meyn12_book}, we can conclude that the Markov chain $(x_k,\theta_k)_{k\geq0}$ is geometrically ergodic.

For completeness, we include the Theorem 19.1.3 from \cite{Meyn12_book} below.
\begin{thm}
    \label{thm:mt-multi-step}
Suppose that $\Phi$ is a $\varphi$-irreducible chain on $\cX$, and let $n(x)$ be a measurable function from $\cX\to\Z_+$.
The chain is geometrically ergodic if it is aperiodic and there exists some petite set $C$, a nonnegative function $V\geq1$ and bounded on $C$, and positive constants $\lambda<1$ and $b$ satisfying
\begin{equation}
\label{eq:multi-step-lyp}
    \int P^{n(x)}(x,\dd y)V(y)\leq\lambda^{n(x)}[V(x)+b\mathbbm{1}_C(x)].
\end{equation}
\end{thm}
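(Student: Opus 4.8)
\textbf{Proof proposal for Theorem~\ref{thm:mt-multi-step}.}
The plan is to convert the state-dependent (multi-step) drift inequality~\eqref{eq:multi-step-lyp} into a uniform geometric bound on return times to the petite set $C$, and then invoke the standard geometric-recurrence characterization of geometric ergodicity. Recall that for a $\varphi$-irreducible aperiodic chain, geometric ergodicity is equivalent to the existence of a petite (equivalently, under aperiodicity, small) set $C$ and a constant $\kappa>1$ with $\sup_{x\in C}\E_x[\kappa^{\tau_C}]<\infty$, where $\tau_C:=\min\{k\ge1:\Phi_k\in C\}$ is the return time; the modulated version of this bound over all $x$ is what upgrades the conclusion to $V$-geometric ergodicity (see \cite{Meyn12_book}, Ch.~15--16). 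So it suffices to extract such moment bounds from~\eqref{eq:multi-step-lyp}.

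The key device is to sample the chain along its own look-ahead horizon $n(\cdot)$. I would set $T_0:=0$ and $T_{j+1}:=T_j+n(\Phi_{T_j})$, so that $(\Phi_{T_j})_{j\ge0}$ is again a Markov chain, and introduce the auxiliary process $W_j:=\lambda^{-T_j}V(\Phi_{T_j})$ with $\kappa:=\lambda^{-1}>1$. Conditioning on $\cF_{T_j}$ and applying~\eqref{eq:multi-step-lyp} at the point $\Phi_{T_j}$,
\[
\E[W_{j+1}\mid\cF_{T_j}]=\lambda^{-T_j-n(\Phi_{T_j})}\,\E\big[V(\Phi_{T_{j+1}})\mid\cF_{T_j}\big]\le\lambda^{-T_j}\big[V(\Phi_{T_j})+b\,\mathbbm{1}_C(\Phi_{T_j})\big]=W_j+\lambda^{-T_j}b\,\mathbbm{1}_C(\Phi_{T_j}),
\]
where the factor $\lambda^{n(\Phi_{T_j})}$ in~\eqref{eq:multi-step-lyp} exactly cancels the discount introduced by advancing the clock from $T_j$ to $T_{j+1}$. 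Hence $W_j$ is a supermartingale on the event $\{\Phi_{T_j}\notin C\}$, with a single controllable correction whenever the sampled chain visits $C$.

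Letting $\sigma:=\min\{j\ge1:\Phi_{T_j}\in C\}$ be the first (strictly positive) sampled index landing in $C$, I would handle the initial step from $C$ separately to absorb the $b$ term, and then apply the optional stopping theorem to the stopped supermartingale $W_{j\wedge\sigma}$, obtaining $\E_x[\lambda^{-T_\sigma}]\le\E_x[W_\sigma]\le V(x)+b\,\mathbbm{1}_C(x)$ using $V\ge1$. Since $\Phi_{T_\sigma}\in C$, the original chain has entered $C$ by time $T_\sigma$, so $\tau_C\le T_\sigma$ and $\kappa^{\tau_C}\le\lambda^{-T_\sigma}$, giving $\E_x[\kappa^{\tau_C}]\le V(x)+b\,\mathbbm{1}_C(x)$ for every $x$. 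Because $V$ is bounded on $C$, this specializes to $\sup_{x\in C}\E_x[\kappa^{\tau_C}]\le\sup_{x\in C}V(x)+b<\infty$, and combined with $\varphi$-irreducibility and aperiodicity the cited characterization yields geometric ergodicity.

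The main obstacle is the optional-stopping step. I must verify that $\sigma<\infty$ almost surely — which itself requires a preliminary recurrence argument, typically obtained by first taking plain expectations in~\eqref{eq:multi-step-lyp} to get positive (Harris) recurrence and finiteness of the sampling times — and I must confine attention to the full-measure set $\{V<\infty\}$ so that $W_j$ is well defined, together with checking the uniform integrability needed to pass the supermartingale inequality through the (possibly unbounded) random stopping time $\sigma$. A secondary technical point is promoting the petite set $C$ to a genuine small set under aperiodicity, so that the geometric-recurrence criterion applies verbatim; this is standard but should be invoked with care given that $n(\cdot)$ may be unbounded.
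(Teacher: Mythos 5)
The paper offers no proof of this statement to compare against: Theorem~\ref{thm:mt-multi-step} is quoted verbatim from \cite{Meyn12_book} (Theorem 19.1.3) ``for completeness'' and is used as a black box in the minorization-based ergodicity argument. Your proposal is, in substance, the standard proof of that textbook result, and it is sound: you sample the chain along its own horizon via $T_0=0$, $T_{j+1}=T_j+n(\Phi_{T_j})$, note that $W_j=\lambda^{-T_j}V(\Phi_{T_j})$ is a nonnegative supermartingale off $C$ because the factor $\lambda^{n(\Phi_{T_j})}$ in \eqref{eq:multi-step-lyp} exactly cancels the discount, stop at the first sampled visit $\sigma$ to $C$, deduce $\E_x[\kappa^{\tau_C}]\leq \E_x[\lambda^{-T_\sigma}]\leq V(x)+b\mathbbm{1}_C(x)$ from $\tau_C\leq T_\sigma$ and $V\geq1$, and feed the resulting bound $\sup_{x\in C}\E_x[\kappa^{\tau_C}]\leq\sup_C V+b<\infty$ into the geometric-recurrence characterization of geometric ergodicity (Theorem 15.0.1 of \cite{Meyn12_book}). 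Two of the obstacles you flag at the end are in fact already resolved by your own construction, so no separate argument is needed: since $n\geq1$ forces $T_J\geq J$ and $V\geq1$, on $\{\sigma>J\}$ one has $W_{J\wedge\sigma}\geq\lambda^{-J}$, so the bound $\E_x[W_{J\wedge\sigma}]\leq V(x)+b$ gives $\Prob_x(\sigma>J)\leq\lambda^{J}(V(x)+b)\to0$, i.e.\ $\sigma<\infty$ a.s.\ on $\{V<\infty\}$ with no preliminary Harris-recurrence step; and because the stopped process is a nonnegative supermartingale, Fatou's lemma yields $\E_x[W_\sigma\mathbbm{1}\{\sigma<\infty\}]\leq V(x)+b\mathbbm{1}_C(x)$ directly, so no uniform-integrability verification is required to pass through the random time $\sigma$. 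The only point worth stating explicitly when you invoke the characterization is that the petite set produced here is nonempty and recurrent from itself (emptiness of $C$ is impossible, since iterating $P^{n(x)}V\leq\lambda^{n(x)}V$ would contradict $V\geq1$), which is what licenses applying the equivalence in the form you cite.
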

We note that the function $n(x)$ can be interpreted as the number of steps we must wait, starting from any $x$, for the drift to become negative.

\paragraph*{Step 1: Deriving the Drift Condition}

Given the iteration step
\begin{equation*}
    \theta_{k+1}=\theta_k+\alpha (g(\theta_k,x_k)+\xi_{t+1}(\theta_k)),
\end{equation*}
we have already shown the following convergence rate on the MSE in Proposition~\ref{prop:2n-convergence}, that
\begin{equation*}
    \E[\|\theta_k-\theta^\ast\|^2]\leq c_{2,1}(1-\alpha\mu)^k\|\theta_0-\theta^\ast\|^2+ c_{2,2}\alpha\tau_\alpha \frac{L^2}{\mu}.
\end{equation*}

Inspired by the MSE convergence bound, we define the Lyapunov function $V:\cX\times\R^d\to[1,\infty]$
\begin{equation}
\label{eq:v-func}
    V(x,\theta)=\|\theta-\theta^\ast\|^2+1.
\end{equation}
Therefore, the major goal in this step is to obtain the desired drift condition as shown in \eqref{eq:multi-step-lyp}.

Therefore, from the above MSE convergence rate, we first obtain that
\begin{equation}
\label{eq:lyapunov-1st-step}
    Q^kV(x_0,\theta_0)\leq c_{2,1}(1-\alpha\mu)^kV(x_0,\theta_0)+c_{2,2}\frac{L^2}{\mu}\alpha\tau+1.
\end{equation}

Consider $k=\min\{t\geq0: c_{2,1}(1-\alpha\mu)^t<1\}.$
Then, set $\eta=8(1-\alpha\mu)^k$, $\beta = \frac{1-\eta}{2}$, and $m=c_{2,2}\frac{L^2}{\mu}\alpha\tau+1$, and we consider the following bounded sublevel set,
\begin{equation*}
    C_\Theta = \{\theta:(\|\theta-\theta^\ast\|^2+1)\leq m/\beta\}.
\end{equation*}
From~\eqref{eq:lyapunov-1st-step}, we derive that
\begin{equation*}
    Q^kV(x_0,\theta_0)-V(x_0,\theta_0)\leq -\beta V(x_0,\theta_0)+m\mathbbm{1}_{\Bar{C}}(x_0,\theta_0),
\end{equation*}
where $\Bar{C}=\cX\times C_\Theta$.
Rewriting the above Lyapunov drift condition, with $b=m/(1-\beta)$, we obtain
\begin{equation*}
    Q^kV(x_0,\theta_0)\leq (1-\beta)\Big(V(x_0,\theta_0)+b\mathbbm{1}_{\Bar{C}}(x_0,\theta_0)\Big).
\end{equation*}
Setting $\lambda$ such that $ \lambda^k=1-\beta$, we have
\begin{equation*}
    Q^k V(x_0,\theta_0)\leq \lambda^k\Big(V(x_0,\theta_0)+b\mathbbm{1}_{\Bar{C}}(x_0,\theta_0)\Big),
\end{equation*}
which gives the desired multi-step drift condition.

\paragraph*{Step 2: Proving the Minorization Condition}

Now that we have established the desired multi-step drift condition, it remains for us to show that $\bar{C}$ is accessible, small, and aperiodic.

Under this setup of $\xi_t(\theta)$ in Assumption~\ref{assumption:additional}(a), it is straightforward to verify the accessibility of $\Bar{C}$.
For any $(x,\theta)\in\cX\times\R^d$, we have
\begin{align*}
    Q((x,\theta), \Bar{C})&=\Prob((x',\theta')\in\cX\times C_\Theta|(x,\theta))\\
    &=\Prob(\theta'\in C_\Theta|(x,\theta))
    \geq \int_{\theta'\in C_\Theta}\frac{1}{\alpha^d}p_\theta\Big(\frac{\theta'-\theta}{\alpha}-g(x,\theta)\Big)\dd\theta'>0.
\end{align*}
As such, we have shown that $\Bar{C}$ is accessible.
 
Assuming $\Bar{C}$ is small, we can directly conclude aperiodicity following the definition of period of an accessible small set, 
$d(C) = \text{g.c.d.}\Big\{n\in\N^*:\inf_{x\in C}P^n(x,C)>0\Big\}.$

Therefore, what remains to show is that $\Bar{C}$ is small.
For $(x,\theta)\in\Bar{C}$ and $\Bar{A}\in\cB(\cX)\times\cB(\R^d)$,
we define the following projection sets,
\begin{equation*}
    A_x=\{\theta\in\R^d|(x,\theta)\in\Bar{A}\}\quad\text{and}\quad
    A^\theta=\{x\in\cX|(x,\theta)\in\Bar{A}\}.
\end{equation*}
Therefore,
\begin{align*}
    &Q^m((x,\theta),\Bar{A})
    =\int_{\substack{\{(x^{(k)},\theta^{(k)})\}_{k=1}^{m-1}\\\in(\cX\times\R^d)^{m-1}}}\Prob((x_m,\theta_m)\in\Bar{A}|(x_{m-1},\theta_{m-1})=(x^{(m-1)},\theta^{(m-1)}))
    \\&\qquad\qquad\qquad\qquad\qquad\qquad\qquad
    \cdots\Prob((x_1,\theta_1)=\dd(x^{(1)}, \theta^{(1)})|(x_0,\theta_0)=(x,\theta))\\
    &\geq\int_{\substack{\{(x^{(k)},\theta^{(k)})\}_{k=1}^{m-1}\\\in(\cX\times C_\Theta)^{m-1}}}\Prob((x_m,\theta_m)\in\Bar{A}|(x_{m-1},\theta_{m-1})=(x^{(m-1)},\theta^{(m-1)}))
    \\&\qquad\qquad\qquad\qquad\qquad
    \cdots\Prob((x_1,\theta_1)=\dd(x^{(1)}, \theta^{(1)})|(x_0,\theta_0)=(x,\theta))\\
    &=\int_{\substack{\{(x^{(k)},\theta^{(k)})\}_{k=1}^{m-1}\\\in(\cX\times C_\Theta)^{m-1}}}\Big(\int_{x'\in\cX}\Prob(x_m=\dd x'|x_{m-1}=x^{(m-1)})
    \Prob(\theta_m\in A_{x'}|(x_{m-1},\theta_{m-1})=(x^{(m-1)},\theta^{(m-1)}))\Big)\\
    &\qquad\Prob(x_{m-1}=\dd x^{(m-1)}|x_{m-2}=x^{(m-2)})\Prob(\theta_{m-1}=\dd\theta^{(m-1)}|(x_{m-2},\theta_{m-2})=(x^{(m-2)},\theta^{(m-2)}))\Big)\\
    &\qquad\cdots
    \\&\qquad
    \Prob(x_1=\dd x^{(1)}|x_0=x)\Prob(\theta_1=\dd\theta^{(1)}|(x_0,\theta_0)=(x,\theta))\\
    &=\int_{x'\in\cX}\int_{\substack{\{x^{(k)}\}_{k=1}^{m-1}\\\in\cX^{m-1}}}\Prob(x_m=\dd x'|x_{m-1}=x^{(m-1)})
    \Prob(x_{m-1}=\dd x^{(m-1)}|x_{m-2}=x^{(m-2)})
    \cdots\Prob(x_1=\dd x^{(1)}|x_0=x)\\
    &\qquad\Bigg(\int_{\substack{\{\theta^{(k)}\}_{k=1}^{m-1}\\\in C_\Theta^{m-1}}}\Prob(\theta_m\in A_{x'}|(x_{m-1},\theta_{m-1})=(x^{(m-1)},\theta^{(m-1)}))
    \\&\qquad\qquad\qquad\quad~
    \Prob(\theta_{m-1}=\dd\theta^{(m-1)}|(x_{m-2},\theta_{m-2})=(x^{(m-2)},\theta^{(m-2)}))\Big)
    \\&\qquad\qquad\qquad
    \cdots\Prob(\theta_1=\dd\theta^{(1)}|(x_0,\theta_0)=(x,\theta))\Bigg).
\end{align*}
   
Next, for $(x,\theta)\in \Bar{C}$, we observe that
\begin{align*}
     \Prob(\theta_k=\dd\theta'|(x_{k-1},\theta_{k-1})=(x,\theta))&=\Prob\Big(\xi_k(\theta)=\dd\big(\frac{\theta'-\theta}{\alpha}-g(x,\theta)\big)|(x_k,\theta_k)=(x,\theta)\Big)\\
     &\geq \frac{1}{\alpha^d}p_\theta\Big(\frac{\theta'-\theta}{\alpha}-g(x,\theta)\Big)\dd\theta'\\
     &\geq \frac{1}{\alpha^d}\inf_{\hat\theta\in C_\Theta}p_{\hat\theta}\Big(\frac{\theta'-\theta}{\alpha}-g(x,\theta)\Big)\dd\theta'.
\end{align*}

 We next recall the linear growth assumption in Assumption~\ref{assumption:linear-growth} that $\|g(x,\theta)\|\leq L(\|\theta\|+1).$
Hence, given $\theta\in C_\Theta$, $\forall x\in\cX$, we have 
\begin{align*}
    \|\theta+\alpha g(x,\theta)\|\leq \|\theta\|+\alpha\|g(x,\theta)\|
    &\leq (1+\alpha L)(\|\theta\|+1)\\
    &\leq (1+\alpha L)(\sqrt{M/\beta}+\|\theta^\ast\|+1)\leq B,
\end{align*}
for some bounded value $B$.

We now define the measure $\varsigma^\dagger$ on $\R^d$ as $\varsigma^\dagger(A)=\frac{1}{\alpha^d}\inf_{\|z\|\leq B}\int_{\theta'\in A\cap C_\Theta}\inf_{\hat\theta\in C_\Theta}p_{\hat\theta}\Big(\frac{\theta'-z}{\alpha}\Big)\dd\theta'.$
Hence, it is easy to see that $\varsigma^\dagger(C_\Theta^c)=0$. Moreover, we note the following property of measure $\varsigma^\dagger$.
\begin{claim}
\label{clm:prop-nu-dagger}
    For $A\subseteq C_\Theta$ and $\lambda(A)>0$, $\varsigma^\dagger(A)>0$, where $\lambda$ denotes the Lebesgue measure.
\end{claim}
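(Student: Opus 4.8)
The plan is to reduce the positivity of $\varsigma^\dagger(A)$ to a statement about a single scalar envelope function and then promote pointwise positivity into a uniform bound over the compact set of admissible shifts. First, since $A\subseteq C_\Theta$ we have $A\cap C_\Theta=A$, so
\[
\varsigma^\dagger(A)=\frac{1}{\alpha^d}\inf_{\|z\|\leq B}\int_{A}\inf_{\hat\theta\in C_\Theta}p_{\hat\theta}\Big(\frac{\theta'-z}{\alpha}\Big)\dd\theta'.
\]
I would introduce the envelope density $\phi(t):=\inf_{\hat\theta\in C_\Theta}p_{\hat\theta}(t)$. Because $C_\Theta$ is a bounded set, Assumption~\ref{assumption:additional} gives $\phi(t)>0$ for every $t\in\R^d$. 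Moreover, fixing any $\theta_0\in C_\Theta$ we have $0\leq\phi\leq p_{\theta_0}$, and since $p_{\theta_0}$ is a sub-probability density, $\phi\in L^1(\R^d)$ with $\|\phi\|_{L^1}\leq 1$.

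Next set $F(z):=\int_A\phi\big((\theta'-z)/\alpha\big)\dd\theta'$, so that $\varsigma^\dagger(A)=\alpha^{-d}\inf_{\|z\|\leq B}F(z)$. For each fixed $z$ the integrand is strictly positive everywhere and $\lambda(A)>0$, hence $F(z)>0$ pointwise (a strictly positive measurable function integrated over a set of positive Lebesgue measure is strictly positive). The substantive step, and the main obstacle, is to upgrade this to $\inf_{\|z\|\leq B}F(z)>0$, since an infimum of positive numbers over a compact set can still vanish. I would close this gap by showing $F$ is continuous on the compact ball $\{z:\|z\|\leq B\}$ and then invoking Weierstrass: a continuous, strictly positive function on a compact set attains a strictly positive minimum. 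Continuity follows from $L^1$-continuity of translation: writing $\psi_z(\theta')=\phi((\theta'-z)/\alpha)=(S_\alpha\phi)(\theta'-z)$ with $S_\alpha\phi(y):=\phi(y/\alpha)\in L^1(\R^d)$, one has $\|\psi_{z_n}-\psi_z\|_{L^1}\to 0$ as $z_n\to z$, whence $|F(z_n)-F(z)|\leq\|\psi_{z_n}-\psi_z\|_{L^1}\to 0$. Compactness then yields $z^\star$ with $\inf_{\|z\|\leq B}F(z)=F(z^\star)>0$, and therefore $\varsigma^\dagger(A)>0$.

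The remaining technical point I would address is the measurability of the envelope $\phi$, since it is defined as an infimum over the uncountable set $C_\Theta$. I would handle this by taking the densities $p_{\hat\theta}$ to be jointly measurable in $(\hat\theta,t)$, which we may assume without loss of generality, and, if needed, replacing the infimum over $C_\Theta$ by an infimum over a countable dense subset using lower semicontinuity of $\hat\theta\mapsto p_{\hat\theta}(t)$. This preserves both the pointwise positivity inherited from Assumption~\ref{assumption:additional} and the integrability bound $\phi\leq p_{\theta_0}$, so the continuity-plus-compactness argument above goes through unchanged and the claim follows.
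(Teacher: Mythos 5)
Your proposal is correct and takes essentially the same route as the paper's own proof: both define the envelope $\inf_{\hat\theta\in C_\Theta}p_{\hat\theta}(\cdot)$, observe that integrating this strictly positive function over a set of positive Lebesgue measure gives a positive value for each fixed shift $z$, and then pass to a uniform bound over the ball of shifts via continuity in $z$ plus compactness. If anything, your write-up is more complete than the paper's, which simply asserts that the shifted integral $h(z)=\int_{t\in B}\inf_{\theta\in C_\Theta}p_\theta(t-z)\,\dd t$ is ``easy to verify'' continuous and does not discuss measurability of the envelope, whereas you supply the $L^1$-continuity-of-translation argument and handle the measurability of the uncountable infimum explicitly.
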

We delay the proof to the end. Taking the claim as true, we derive that
\begin{align*}
    &\int_{\substack{\{\theta^{(k)}\}_{k=1}^{m-1}\\\in C_\Theta^{m-1}}}\Prob(\theta_m\in A_{x'}|(x_{m-1},\theta_{m-1})=(x^{(m-1)},\theta^{(m-1)}))\\
    &\qquad\Prob(\theta_{m-1}=\dd\theta^{(m-1)}|(x_{m-2},\theta_{m-2})=(x^{(m-2)},\theta^{(m-2)}))\Big)\cdots\Prob(\theta_1=\dd\theta^{(1)}|(x_0,\theta_0)=(x,\theta))\\
    &=\int_{\theta^{(m)}\in A_{x'}}\int_{\theta^{(m-1)}\in C_\Theta}\Prob(\theta_m \dd\theta^{(m)}|(x_{m-1},\theta_{m-1})=(x^{(m-1)},\theta^{(m-1)}))\\
    &\qquad\qquad\qquad\int_{\theta^{(m-2)}\in C_\Theta}\Prob(\theta_{m-1}=\dd\theta^{(m-1)}|(x_{m-2},\theta_{m-2})=(x^{(m-2)},\theta^{(m-2)}))\Big)\\
    &\qquad\qquad\qquad\cdots\\
    &\qquad\qquad\qquad\int_{\theta^{(1)}\in C_\Theta}\Prob(\theta_2=\dd\theta^{(2)}|(x_1,\theta_1)=(x^{(1)},\theta^{(1)}))\Prob(\theta_1=\dd\theta^{(1)}|(x_0,\theta_0)=(x,\theta))\\
    &\geq\int_{\theta^{(m)}\in A_{x'}}\int_{\theta^{(m-1)}\in C_\Theta}\frac{1}{\alpha^d}\inf_{\hat\theta\in C_\Theta} p_{\hat\theta}\Big(\frac{\theta^{(m)}-\theta^{(m-1)}}{\alpha} - g(x^{(m-1)},\theta^{(m-1)})\Big)\dd\theta^{(m)}\\
    &\qquad\qquad\qquad\int_{\theta^{(m-2)}\in C_\Theta}\Prob(\theta_{m-1}=\dd\theta^{(m-1)}|(x_{m-2},\theta_{m-2})=(x^{(m-2)},\theta^{(m-2)}))\Big)\\
    &\qquad\qquad\qquad\cdots\\
    &\qquad\qquad\qquad\int_{\theta^{(1)}\in C_\Theta}\Prob(\theta_2=\dd\theta^{(2)}|(x_1,\theta_1)=(x^{(1)},\theta^{(1)}))\Prob(\theta_1=\dd\theta^{(1)}|(x_0,\theta_0)=(x,\theta))
\\
    &\geq \inf_{(\Tilde{x},\Tilde{\theta})\in\Bar{C}}\int_{\theta^{(m)}\in A_{x'}}\int_{\theta^{(m-1)}\in C_\Theta}\frac{1}{\alpha^d}\inf_{\hat\theta\in C_\Theta} p_{\hat\theta}\Big(\frac{\theta^{(m)}-\tilde\theta}{\alpha} - g(\Tilde{x},\tilde\theta)\Big)\dd\theta^{(m)}\\
    &\qquad\qquad\qquad\qquad\int_{\theta^{(m-2)}\in C_\Theta}\Prob(\theta_{m-1}=\dd\theta^{(m-1)}|(x_{m-2},\theta_{m-2})=(x^{(m-2)},\theta^{(m-2)}))\Big)
    \\&\qquad\qquad\qquad\qquad\cdots\\
    &\qquad\qquad\qquad\qquad\int_{\theta^{(1)}\in C_\Theta}\Prob(\theta_2=\dd\theta^{(2)}|(x_1,\theta_1)=(x^{(1)},\theta^{(1)}))\Prob(\theta_1=\dd\theta^{(1)}|(x_0,\theta_0)=(x,\theta))\\
    &\geq \inf_{\|z\|\leq B}\int_{\theta^{(m)}\in A_{x'}}\int_{\theta^{(m-1)}\in C_\Theta}\frac{1}{\alpha^d}\inf_{\hat\theta\in C_\Theta} p_{\hat\theta}\Big(\frac{\theta^{(m)}-z}{\alpha}\Big)\dd\theta^{(m)}\\
    &\qquad\qquad\qquad\qquad\int_{\theta^{(m-2)}\in C_\Theta}\Prob(\theta_{m-1}=\dd\theta^{(m-1)}|(x_{m-2},\theta_{m-2})=(x^{(m-2)},\theta^{(m-2)}))\Big)\cdots\\
    &\qquad\qquad\qquad\qquad\int_{\theta^{(1)}\in C_\Theta}\Prob(\theta_2=\dd\theta^{(2)}|(x_1,\theta_1)=(x^{(1)},\theta^{(1)}))\Prob(\theta_1=\dd\theta^{(1)}|(x_0,\theta_0)=(x,\theta))\\
    &\geq\varsigma^\dagger(A_{x'})\int_{\theta^{(m-1)}\in C_\Theta}\int_{\theta^{(m-2)}\in C_\Theta}\Prob(\theta_{m-1}=\dd\theta^{(m-1)}|(x_{m-2},\theta_{m-2})=(x^{(m-2)},\theta^{(m-2)}))\Big)\cdots\\
    &\qquad\qquad\qquad\qquad\qquad\int_{\theta^{(1)}\in C_\Theta}\Prob(\theta_2=\dd\theta^{(2)}|(x_1,\theta_1)=(x^{(1)},\theta^{(1)}))\Prob(\theta_1=\dd\theta^{(1)}|(x_0,\theta_0)=(x,\theta))\\\\
    &\geq \varsigma^\dagger(A_{x''})\varsigma^\dagger(C_\Theta)^{m-1}.
\end{align*}

Therefore, by combining all the analyses, we obtain
\begin{align*}
    &Q^m((x,\theta),\Bar{A})\\
    &\geq\int_{x'\in\cX}\int_{\substack{\{x^{(k)}\}_{k=1}^{m-1}\\\in\cX^{m-1}}}\Prob(x_m=\dd x'|x_{m-1}=x^{(m-1)})\Prob(x_{m-1}=\dd x^{(m-1)}|x_{m-2}=x^{(m-2)})\\
    &\qquad\qquad\qquad\qquad\qquad
    \cdots\Prob(x_1=\dd x^{(1)}|x_0=x)\\
        &\qquad\Bigg(\int_{\substack{\{\theta^{(k)}\}_{k=1}^{m-1}\\\in C_\Theta^{m-1}}}\Prob(\theta_m\in A_{x'}|(x_{m-1},\theta_{m-1})=(x^{(m-1)},\theta^{(m-1)}))\\
        &\qquad\qquad\qquad\quad\,~\Prob(\theta_{m-1}=\dd\theta^{(m-1)}|(x_{m-2},\theta_{m-2})=(x^{(m-2)},\theta^{(m-2)}))\Big)\\
        &\qquad\qquad\qquad\quad\cdots\Prob(\theta_1=\dd\theta^{(1)}|(x_0,\theta_0)=(x,\theta))\Bigg)\\
        &\geq  \varsigma^\dagger(C_\Theta)^{m-1}\int_{x'\in \cX}\varsigma^\dagger(A_{x'})
        \\&\qquad\qquad\quad\quad
        \int_{\substack{\{x^{(k)}\}_{k=1}^{m-1}\\\in\cX^{m-1}}}\Prob(x_m=\dd x'|x_{m-1}=x^{(m-1)})
        \cdots\Prob(x_1=\dd x^{(1)}|x_0=x)\\
        &\geq \varsigma^\dagger(C_\Theta)^{m-1}\int_{x'\in \cX}\varsigma^\dagger(A_{x'})\phi(\dd x')\\
        &=\zeta\cdot(\phi\times\varsigma^\dagger)(\Bar{A}),
\end{align*}
where $\zeta = \varsigma^\dagger(C_\Theta)^{m-1}$ and $\phi\times\varsigma^\dagger$ being the unique induced product measure on $\cX\times\R^d$.

As such, we have proven that $\Bar{C}$ is $(m,\phi\times\varsigma^\dagger)$-small, and hence $(\delta_m, \phi\times\varsigma^\dagger)$-petite, and subsequently shown that $(x_k,\theta_k)_{k\geq0}$ is geometrically ergodic.

By \cite[Theorem 16.0.1 (iv)]{Meyn12_book}, we can further conclude that the geometrically ergodic $(\theta_t,x_t)_{t\geq0}$ is also $V$-uniformly ergodic with the same $V$ as defined in~\eqref{eq:v-func}. Therefore, we have the following convergence rate in the $V$-norm $\big\|\law(x_k,\theta_k),-\bar\nu_\alpha\big\|_V\leq \kappa \rho^k,$
where $\kappa$ and $\rho$ implicitly depend on the stepsize $\alpha$.

Lastly, we provide the proof of Claim~\ref{clm:prop-nu-dagger}.
\begin{proof}
We first recall that for any set $B\subseteq\R^d$ such that $\lambda(B)>0$, where $\lambda$ refers to the Lebesgue measure, then for $\theta\in C_\Theta$, we know that $\int_{t\in B}p_\theta(t)\dd t\geq \int_{t\in B}\inf_{\theta\in C_\Theta}p_\theta(t)\dd t>0.$
Moreover, for a given (translation) $z\in \R^d$ and $\theta\in C_\Theta$, we have 
\begin{equation*}
    \int_{t\in B} p_\theta(t-z)\dd t=\int_{t'\in B-z}p_\theta(t')\dd t'\geq\int_{t'\in B-z}\inf_{\theta\in C}p_\theta(t')\dd t'>0.
\end{equation*}

Following the properties stated above, we define $h(z)=\int_{t\in B}\inf_{\theta\in C}p_\theta(t-z)\dd t,$
and it is easy to verify that $h(z)$ is a continuous function. Hence, for a bounded set $D$, we have $\inf_{z\in D}h(z)>0$.
Subsequently, we define the measure $\varsigma^\dagger$ induced by $h$ and we verify that
    \begin{equation*}
        \varsigma^\dagger(A)=\inf_{\|z\|\leq B}\int_{t\in A\cap C_\Theta}\inf_{\theta\in C}p_\theta(t-z)\dd t>0.
    \end{equation*}
\end{proof}

\subsection{Proof of Corollary~\ref{cor:non-asymptotic}}

As shown in the previous section, the joint process $(x_k,\theta_k)_{k\geq0}$ is $V$-uniformly ergodic and hence also exhibits a geometric non-asymptotic convergence rate under the $V$-weighted norm. Subsequently, this corresponds to a version of Corollary~\ref{cor:non-asymptotic} resulting in a different set of convergence rate coefficients. Thus, we restate Corollary~\ref{cor:non-asymptotic} in the context of the minorization setting and provide the proof below.

\begin{cor}[Non-Asymptotic Convergence Rate]
For any initialization of $\theta_0\in\R^d$, under the setting of Theorem~\ref{thm:weak-convergence}, we have
\begin{equation*}
    \big\|\E[\theta_k]-\E[\theta_\infty^{(\alpha)}]\big\|\leq \kappa\cdot\rho^k\cdot s'(\theta_0,L,\mu),\quad\text{and}\quad
    \big\|\E[\theta_k\theta_k^\top]-\E[\theta_\infty^{(\alpha)}(\theta_\infty^{(\alpha)})^\top]\big\|\leq\kappa\cdot\rho^k\cdot s''(\theta_0,L,\mu),
\end{equation*}
where $\kappa$ and $\rho$ are defined in~\eqref{eq:v-norm-geo} and implicitly depend on $\alpha$.
\end{cor}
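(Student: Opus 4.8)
The plan is to deduce both bounds directly from the $V$-uniform ergodicity established in Theorem~\ref{thm:weak-convergence}, namely $\big\|\law(x_k,\theta_k)-\bar\nu_\alpha\big\|_V\leq\kappa\rho^k$ with $V(x,\theta)=\|\theta-\theta^\ast\|^2+1$. The key observation is the standard duality for the $V$-norm: writing $\|f\|_V:=\sup_{(x,\theta)}|f(x,\theta)|/V(x,\theta)$ for a scalar test function $f$, one has $\big|\E[f(x_k,\theta_k)]-\E_{\bar\nu_\alpha}[f]\big|\leq\|f\|_V\cdot\big\|\law(x_k,\theta_k)-\bar\nu_\alpha\big\|_V$, which follows from $|\nu(f)|\leq\int|f|\,\dd|\nu|\leq\int V\,\dd|\nu|=\|\nu\|_V$ applied to the signed measure $\nu=\law(x_k,\theta_k)-\bar\nu_\alpha$. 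Because every test function I use depends only on $\theta$ and $\bar\nu_\alpha$ has second marginal $\nu_\alpha=\law(\theta_\infty^{(\alpha)})$, integrating against the joint law automatically produces the marginal expectations $\E[\theta_\infty^{(\alpha)}]$ and $\E[\theta_\infty^{(\alpha)}(\theta_\infty^{(\alpha)})^\top]$.

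First I would handle the first moment by fixing an arbitrary unit vector $u\in\R^d$ and taking $f(x,\theta)=u^\top\theta$. Since $\|\theta\|\leq\|\theta-\theta^\ast\|+\|\theta^\ast\|\leq(1+\|\theta^\ast\|)V(x,\theta)^{1/2}\leq(1+\|\theta^\ast\|)V(x,\theta)$, where the last step uses $V\geq1$, we get $\|f\|_V\leq1+\|\theta^\ast\|$. Hence $\big|u^\top(\E[\theta_k]-\E[\theta_\infty^{(\alpha)}])\big|\leq(1+\|\theta^\ast\|)\kappa\rho^k$, and taking the supremum over unit $u$ yields the claimed bound with $s'$ absorbing the constant $1+\|\theta^\ast\|$.

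For the second moment I would repeat the argument with the bilinear test functions $f(x,\theta)=(u^\top\theta)(v^\top\theta)$ for unit vectors $u,v\in\R^d$. Here $\big|(u^\top\theta)(v^\top\theta)\big|\leq\|\theta\|^2\leq(\|\theta-\theta^\ast\|+\|\theta^\ast\|)^2\leq2\|\theta-\theta^\ast\|^2+2\|\theta^\ast\|^2\leq(2+2\|\theta^\ast\|^2)V(x,\theta)$, so $\|f\|_V\leq2+2\|\theta^\ast\|^2$. This gives $\big|u^\top(\E[\theta_k\theta_k^\top]-\E[\theta_\infty^{(\alpha)}(\theta_\infty^{(\alpha)})^\top])v\big|\leq(2+2\|\theta^\ast\|^2)\kappa\rho^k$; taking the supremum over unit $u,v$ bounds the operator norm and gives the second inequality with $s''$ absorbing the constant.

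The argument is essentially mechanical once $V$-uniform ergodicity is in hand; the only point requiring care is that the quadratic Lyapunov function $V$ strictly dominates the linear and quadratic test functions, so that their $V$-norms are finite, which is exactly why the moment gaps inherit the geometric rate $\rho^k$. The finiteness of $\E_{\bar\nu_\alpha}[\theta\theta^\top]$, needed for the right-hand expectations to be well-defined, follows from the second-moment bound in Proposition~\ref{prop:2n-convergence} together with Fatou's lemma, exactly as invoked earlier in the excerpt. I do not anticipate a substantive obstacle beyond bookkeeping the constants into $s'$ and $s''$.
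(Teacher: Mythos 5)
Your proposal is correct and follows essentially the same route as the paper: both results are deduced from the $V$-uniform ergodicity bound \eqref{eq:v-norm-geo} by integrating test functions dominated by a constant multiple of $V(x,\theta)=\|\theta-\theta^\ast\|^2+1$ against the signed measure $\law(x_k,\theta_k)-\bar\nu_\alpha$. The only difference is cosmetic: the paper uses the centered test functions $\theta-\theta^\ast$ and $(\theta-\theta^\ast)(\theta-\theta^\ast)^\top$ (dominated by $V$ with constant $1$) and then recovers $\E[\theta_k\theta_k^\top]-\E[\theta_\infty^{(\alpha)}(\theta_\infty^{(\alpha)})^\top]$ through a decomposition whose cross terms are controlled by the first-moment bound, whereas you work with the uncentered scalar functions $u^\top\theta$ and $(u^\top\theta)(v^\top\theta)$, absorb $\|\theta^\ast\|$ into the domination constants, and thereby avoid that decomposition step entirely.
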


\begin{proof}
For $V$-uniformly ergodic Markov chain $(x_k,\theta_k)_{k\geq0}$, when functions $f:\cX\times\R^d\to\R^d$ is dominated by the Lyapunov function, i.e., $\|f\|\leq V$, it enjoys the following convergence property,
\begin{equation*}
    \|Q^nf(x_0,\theta_0)-\pi f\|\leq \kappa\rho^n V(\theta_0).
\end{equation*}

Consider test function $f(\theta,x)=\theta-\theta^\ast.$
It is easy to see that $\|f\|\leq V$. Hence, we obtain 
\begin{equation*}
\label{eq:nonasymp-first-order}
    \|\E[\theta_n]-\E[\theta_\infty]\|=\|Q^nf(x,\theta)-\pi f\|\leq \kappa\rho^nV(\theta_0).
\end{equation*}

Next, consider $f'(x)=(\theta-\theta^\ast)(\theta-\theta^\ast)^{\top}$. Clearly, $\|f'\|\leq V$. Therefore,
\begin{equation*}
    \|\E[(\theta_t-\theta^\ast)(\theta_t-\theta^\ast)^{\top}]-\E[(\theta_\infty-\theta^\ast)(\theta_\infty-\theta^\ast)^{\top}]\|\leq \kappa\rho^nV(\theta_0).
\end{equation*}
For the LHS, we have 
\begin{align*}
    &\|\E[(\theta_t-\theta^\ast)(\theta_t-\theta^\ast)^{\top}]-\E[(\theta_\infty-\theta^\ast)(\theta_\infty-\theta^\ast)^{\top}]\|\\
    &=\|\E[\theta_t\theta_t^{\top}]-\E[\theta_\infty\theta_\infty^{\top}]-\E[\theta_t-\theta_\infty](\theta^\ast)^\top - \theta^\ast\E[(\theta_t-\theta_\infty)^\top]\|\\
    &\geq \|\E[\theta_t\theta_t^{\top}]-\E[\theta_\infty\theta_\infty^{\top}]\|-2\|\E[\theta_t-\theta_\infty]\|\|\theta^\ast\|.
\end{align*}
Subsequently,
\begin{equation*}
    \|\E[\theta_t\theta_t^{\top}]-\E[\theta_\infty\theta_\infty^{\top}]\|\leq (2\|\theta^\ast\|+1)\kappa\rho^nV(\theta_0).
\end{equation*}
\end{proof}

The above results imply the convergence of the first two moments. Moreover, we conclude that
\begin{equation*}
    \var(\theta_\infty)=\var(\theta_\infty-\theta^\ast)\leq\E[\|\theta_\infty-\theta^\ast\|^2]=\lim_{t\to\infty}\E[\|\theta_t-\theta^\ast\|^2]\lesssim\alpha\tau.
\end{equation*}
Additionally,
\begin{equation}
\label{eq:var-trace-order}
    \E[\|\theta_\infty-\theta^\ast\|]^2 \leq\E[\|\theta_\infty-\theta^\ast\|^2]\lesssim\alpha\tau.
\end{equation}

\subsection{Proof of Corollary~\ref{cor:clt}}

After establishing the $V$-uniform ergodicity of the joint process $(x_k,\theta_k)_{k\geq0}$, the central limit theorem for averaged iterates follows as a straightforward consequence.

\begin{proof}
    For any test function $h:\cX\times\R^d\to\R^d$ that satisfies $\|h\|^2\leq V$, by Theorem 17.0.1 in~\cite{Meyn12_book}, it has the following CLT results,
    \begin{equation*}
        \frac{1}{\sqrt{k}}\Big[\sum_{t=0}^{k-1}\big(h_t-\E[h_\infty]\big)\Big]\Rightarrow \cN(0,\Sigma^{(a)}).
    \end{equation*}
    Therefore, consider $h(x,\theta)=\theta-\theta^\ast$, it is easy to see that $\|h\|^2\leq V$, and hence we naturally obtain the desired CLT result, $\frac{1}{\sqrt{k}}\Big[\sum_{t=0}^{k-1}\big(\theta_t-\E[\theta_\infty]\big)\Big]\Rightarrow \cN(0,\Sigma^{(a)})$ as $k\to\infty.$
\end{proof}

\section{Proof of Theorem~\ref{thm:bias}}
\label{sec:bias-proof}
We now provide the proof of Theorem~\ref{thm:bias} on characterizing the asymptotic bias of nonlinear SA.

\subsection{BAR and Preliminaries}
The proof utilizes the basic adjoint relationship (BAR) approach to study the stationary distribution
\begin{equation*}
    \E_{\bar\nu}[(P-I)h(\theta, x)]=0,
\end{equation*}
via carefully designed test functions $h$.
We refer readers to~\cite{huo23-td} for the derivation of the following properties of Markovian SA at stationarity,
\begin{align}
    \E[\indic\{\theta_\infty\in S\}\mid x_\infty](x)&=\E[\indic\{\theta_{\infty+1}\in S\}\mid x_{\infty+1}](x),\label{eq:indic-bar}\\
    \E[\theta_\infty\mid x_\infty](x)&=\E[\theta_{\infty+1}\mid x_{\infty+1}](x),\label{eq:delta-bar}\\
    \E[(\theta_\infty)^{\otimes2}\mid x_\infty](x)&=\E[(\theta_{\infty+1})^{\otimes2}\mid x_{\infty+1}](x)\label{eq:delta2-bar}.
\end{align}
Following the Borel state space assumption in~\ref{assumption:uniform-ergodic}, $\E[\indic\{\theta_\infty\in\cdot\}| x_\infty=x]$ induce a regular conditional probability measure, which we denote as $\tilde\nu(\cdot,x_\infty=x)$, and hence \eqref{eq:indic-bar} can be reformulated as
\begin{equation*}
    \tilde\nu(\theta_\infty\in S, x_\infty=x)=\tilde\nu(\theta_{\infty+1}\in S, x_{\infty+1}=x).
\end{equation*}

Before proceeding to the proof, we introduce the following shorthands and notations. For $x\in\cX$,
    \begin{align*}
        z_i(x)&:=\E[(\theta_\infty-\theta^\ast)^{\otimes i}|x_\infty=x],\\
        \delta_i(x)&:=z_i(x)-\pi z_i,
    \end{align*}
Following the differentiability assumption of $g$ in Assumption~\ref{assumption:strong-convexity}, and we can apply Taylor expansion to $g$ and we note the following notation on residuals.
\begin{align}
    g(\theta,x) &=g(\theta^\ast, x) + g'(\theta^\ast,x)(\theta-\theta^\ast) +\frac{1}{2}g''(\theta^\ast,x)(\theta-\theta^\ast)^{\otimes 2}+R_3(\theta,x)\label{eq:taylor-g-3rd}\\
    &=g(\theta^\ast, x) + g'(\theta^\ast,x)(\theta-\theta^\ast)+R_2(\theta,x).\label{eq:taylor-g-2nd}
\end{align}
By Assumption~\ref{assumption:strong-convexity} and results from Proposition~\ref{prop:2n-convergence} and~\ref{prop:2n-convergence},
we note that the residual $R_n(\theta,x)$ satisfies
\begin{equation*}
    \sup_{x\in\cX,\theta\in\R^d}\Big\{\|R_n(\theta,x)\|/\|\theta-\theta^\ast\|^n\Big\}<+\infty.
\end{equation*}
Hence, we have
\begin{equation*}
    \|R_n(\theta,x_\infty)\|_{\ltwopi}\lesssim \E[\|\theta_\infty-\theta^\ast\|^n]=\bigO((\alpha\tau)^{n/2}),\quad n=2,3,4.\label{eq:residual-order}
\end{equation*}
Lastly, we denote
 \begin{align*}
        \Bar{g}(\theta)&:=\E_{x\sim\pi}[g(\theta,x)],\quad
        \Bar{g}_2(\theta):=\E_{x\sim\pi}[(g(\theta,x))^{\otimes2}]\\
        \Bar{g}^{(1)}(\theta)&:=\E_{x\sim\pi}[g'(\theta,x)],\quad
        \Bar{g}^{(1)}_2(\theta):=\E_{x\sim\pi}[(g'(\theta,x))^{\otimes2}],\quad
        \Bar{g}^{(2)}(\theta):=\E_{x\sim\pi}[g''(\theta,x)].
    \end{align*}
    
We are now ready to present our proof. The proof consists of two major steps. For the complexity of this problem, we first set aside the projection constraint and focus on the BAR analysis, and we shall present the asymptotic bias characterization without the projection step. The analysis in this step thus shall work for the minorization proof technique as well. Then, in the second step, we elaborate on the impact brought along by the projection analysis and conclude our proof.

\subsection{Step 1: Bias Characterization without Projection}
\subsubsection{Step 1: First Moment Analysis}
\label{sec:bias-step1}

Consider test function $h_1(x,\theta)=\theta-\theta^\ast$. Therefore, we first have
\begin{equation*}
    \E[\theta_{\infty+1}-\theta^\ast] = \E[\theta_\infty - \theta^\ast] + \alpha\Big(\E[g(\theta_\infty,x_\infty)] + \E[\xi_{\infty+1}(\theta_\infty)]\Big),
\end{equation*}
which immediately implies that
\begin{equation}
    0  =\E[g(\theta_\infty,x_\infty)].\label{eq:zero-first-mom}
\end{equation}

Substituting the Taylor expansion~\eqref{eq:taylor-g-3rd} back into~\eqref{eq:zero-first-mom}, we have
\begin{align}
    0&=\E[g(\theta^\ast,x)]+\E[g'(\theta^\ast,x_\infty)(\theta_\infty-\theta^\ast)]+\frac{1}{2}\E[g''(\theta^\ast,x_\infty)(\theta_\infty-\theta^\ast)^{\otimes2}]+\E[R_3(\theta_\infty,x_\infty)]\nonumber\\
    &=\E[g'(\theta^\ast,x_\infty)(\theta_\infty-\theta^\ast)]+\frac{1}{2}\E[g''(\theta^\ast,x_\infty)(\theta_\infty-\theta^\ast)^{\otimes2}]+\bigO((\alpha\tau)^{3/2}),\label{eq:taylor-step-1}
\end{align}
where we make use of $\E[g(\theta^\ast,x)]=\Bar{g}(\theta^\ast)=0$ by definition and the order or $\E[R_3(\theta_\infty,x_\infty)]=\bigO((\alpha\tau)^{3/2})$ to obtain the second equality. 

Next, we proceed to analyze the two terms in~\eqref{eq:taylor-step-1}. For the first term, we have
\begin{align}
    \E[g'(\theta^\ast, x_\infty)(\theta_\infty-\theta^\ast)]&=\E[g'(\theta^\ast,x_\infty)z_1(x_\infty)]\nonumber\\
    &=\E[g'(\theta^\ast,x_\infty)\delta_1(x_\infty)]+\Bar{g}^{(1)}(\theta^\ast)\E[\theta_\infty-\theta^\ast].\label{eq:first-term-decomposition}
\end{align}
We now move on to analyze the second term, and obtain
\begin{align}
    \E[g''(\theta^\ast,x_\infty)(\theta_\infty-\theta^\ast)^{\otimes2}]&=\E[g''(\theta^\ast,x_\infty)z_2(x_\infty)]\nonumber\\
    &=\E[g''(\theta^\ast, x_\infty)\delta_2(x_\infty)] + \Bar{g}^{(2)}(\theta^\ast)\E[(\theta_\infty-\theta^\ast)^{\otimes2}].\label{eq:second-term-decomposition}
\end{align}

Hence, substituting \eqref{eq:first-term-decomposition} and \eqref{eq:second-term-decomposition} back into~\eqref{eq:taylor-step-1}, we reorganize the terms and arrive at
\begin{align}
    &\E[\theta_\infty-\theta^\ast]\nonumber\\
    &=-(\Bar{g}^{(1)}(\theta^\ast))^{-1}\Big(\E[g'(\theta^\ast,x_\infty)\delta_1(x_\infty)]\label{eq:step1-three terms-1}\\
    &\qquad\qquad\qquad\qquad+\frac{1}{2}\Big(\E[g''(\theta^\ast, x_\infty)\delta_2(x_\infty)] + \Bar{g}^{(2)}(\theta^\ast)\E[(\theta_\infty-\theta^\ast)^{\otimes2}]\Big)\Big)\label{eq:step1-three terms-2}\\
    &+\bigO((\alpha\tau)^{3/2}).\nonumber
\end{align}

To obtain a refined characterization of the asymptotic bias, we carefully analyze the remaining three terms in~\eqref{eq:step1-three terms-1}--\eqref{eq:step1-three terms-2}. We focus on each term in the next three sections respectively.

\subsubsection{Step 2: Second Moment Analysis}
We start with analyzing $\E[(\theta_\infty-\theta^\ast)^{\otimes2}]$ in this section. 

Following the BAR approach, we consider the test function $h_2(x,\theta)=(\theta-\theta^\ast)^{\otimes2}$ and obtain
\begin{align*}
    \E[(\theta_{\infty + 1}-\theta^\ast)^{\otimes2}]&=\E[(\theta_\infty-\theta^\ast+\alpha (g(\theta_\infty,x_\infty)+\xi_{\infty+1}(\theta_\infty))^{\otimes2}]\\
    &=\E[(\theta_\infty-\theta^\ast)^{\otimes2}] + \alpha^2(\E[(g(\theta_\infty, x_\infty))^{\otimes2}]+\E[(\xi_{\infty+1}(\theta_\infty))^{\otimes2}])\\
    &+ \alpha(\E[g(\theta_\infty,x_\infty)\otimes (\theta_\infty-\theta^\ast)]+\E[(\theta_\infty-\theta^\ast)\otimes g(\theta_\infty,x_\infty)]).
\end{align*}
Simplifying the above expression, we have
\begin{equation}
\begin{aligned}
    0&=\alpha(\E[(g(\theta_\infty,x_\infty))^{\otimes2}]+\E[(\xi_{\infty+1}(\theta_\infty))^{\otimes2}])\\
    &+(\E[(\theta_\infty-\theta^\ast)\otimes g(\theta_\infty,x_\infty)]+\E[g(\theta_\infty,x_\infty)\otimes (\theta_\infty-\theta^\ast)]).
\end{aligned}
    \label{eq:2nd-mom-with-g}
\end{equation}

We adopt a similar approach in analyzing the above relationship that contains $g(\theta_\infty, x_\infty)$ as in the previous step. We make use of the Taylor expansion of $g$ at $\theta^\ast$ but at a lower order. We substitute the Taylor expansion~\eqref{eq:taylor-g-2nd} into~\eqref{eq:2nd-mom-with-g} and obtain
\begin{align}
        0&=\alpha\E[(g(\theta^\ast,x_\infty)+g'(\theta^\ast,x_\infty)(\theta_\infty-\theta^\ast)+R_2(\theta_\infty,x_\infty))^{\otimes2}] 
        +\alpha\E[(\xi_{\infty+1}(\theta_\infty))^{\otimes2}]\nonumber\\
        &+\E[(g(\theta^\ast,x_\infty)+g'(\theta^\ast,x_\infty)(\theta_\infty-\theta^\ast)+R_2(\theta_\infty,x_\infty))\otimes(\theta_\infty-\theta^\ast)]\nonumber\\
        &+\E[(\theta_\infty-\theta^\ast)\otimes(g(\theta^\ast,x_\infty)+g'(\theta^\ast,x_\infty)(\theta_\infty-\theta^\ast)+R_2(\theta_\infty,x_\infty))]\nonumber\\
        &=\E[g(\theta^\ast,x_\infty)\otimes (\theta_\infty-\theta^\ast)]+\E[(\theta_\infty-\theta^\ast)\otimes g(\theta^\ast,x_\infty)]\label{eq:analysis-3}\\
        &+\E[(g'(\theta^\ast,x_\infty)(\theta_\infty-\theta^\ast))\otimes (\theta_\infty-\theta^\ast)]+\E[(\theta_\infty-\theta^\ast)\otimes(g'(\theta^\ast,x_\infty)(\theta_\infty-\theta^\ast))]\label{eq:analysis-4}\\
        &+\alpha\E[(g'(\theta^\ast,x_\infty)(\theta_\infty-\theta^\ast))^{\otimes2}]\label{eq:analysis-1}\\
        &+\alpha\E[g(\theta^\ast,x_\infty)\otimes (g'(\theta^\ast,x_\infty)(\theta_\infty-\theta^\ast))] + \alpha \E[ (g'(\theta^\ast,x_\infty)(\theta_\infty-\theta^\ast))\otimes g(\theta^\ast,x_\infty)] \label{eq:analysis-2}\\
        &+\E[R_2(\theta_\infty,x_\infty)\otimes(\theta_\infty-\theta^\ast)] + \E[(\theta_\infty-\theta^\ast)\otimes R_2(\theta_\infty,x_\infty)]\nonumber\\
        &+\alpha\E[(g(\theta^\ast,x_\infty))^{\otimes2}]+\alpha\E[(\xi_{\infty+1}(\theta_\infty))^{\otimes2}]+\bigO(\alpha^2\tau).\nonumber
    \end{align}

    Therefore, we proceed to analyze the terms in~\eqref{eq:analysis-3}--\eqref{eq:analysis-2}.

    Starting with the terms in \eqref{eq:analysis-3}, we have
    \begin{align*}
        \E[g(\theta^\ast,x_\infty)\otimes(\theta_\infty-\theta^\ast)]&=\E[g(\theta^\ast,x_\infty)\otimes(\delta_1(x_\infty)+\pi z_1)]\\
        &=\E[g(\theta^\ast, x_\infty)\otimes \delta_1(x_\infty)] + \underbrace{\E[g(\theta^\ast, x_\infty)]}_{=0}\otimes \E[\theta_\infty-\theta^\ast]\\
        &=\E[g(\theta^\ast, x_\infty)\otimes \delta_1(x_\infty)].
    \end{align*}
    Similarly,
    \begin{equation*}
        \E[(\theta_\infty-\theta^\ast)\otimes g(\theta^\ast,x_\infty)]=\E[\delta_1(x_\infty)\otimes g(\theta^\ast, x_\infty)].
    \end{equation*}

    Next, for the terms in \eqref{eq:analysis-4}, we have
    \begin{align*}
        \E[(g'(\theta^\ast,x_\infty)(\theta_\infty-\theta^\ast))\otimes (\theta_\infty-\theta^\ast)]&= \E[g'(\theta^\ast,x_\infty)(\theta_\infty-\theta^\ast)^{\otimes2}]\\
        &=\E[g'(\theta^\ast,x_\infty)(\delta_2(x_\infty)+\pi z_2)]\\
        &=\E[g'(\theta^\ast,x_\infty)\delta_2(x_\infty)] + \Bar{g}^{(1)}(\theta^\ast)\E[(\theta_\infty-\theta^\ast)^{\otimes2}].
    \end{align*}
    Similarly,
    \begin{equation*}
        \E[(\theta_\infty-\theta^\ast)\otimes(g'(\theta^\ast,x_\infty)(\theta_\infty-\theta^\ast))]=\E[\delta_2(x_\infty)g'(\theta^\ast,x_\infty)] + \E[(\theta_\infty-\theta^\ast)^{\otimes2}]\Bar{g}^{(1)}(\theta^\ast).
    \end{equation*}

    Moving on to the second term in \eqref{eq:analysis-1}
        \begin{align*}
        \E[(g'(\theta^\ast,x_\infty)(\theta_\infty-\theta^\ast))^{\otimes2}]&=\E[g'(\theta^\ast,x_\infty)(\theta_\infty-\theta^\ast)^{\otimes2}g'(\theta^\ast,x_\infty)]\\
        &=\E[g'(\theta^\ast,x_\infty)(\delta_2(x_\infty)+\pi z_2)g'(\theta^\ast,x_\infty)]\\
        &=\E[g'(\theta^\ast,x_\infty)\delta_2(x_\infty)g'(\theta^\ast,x_\infty)] + \Bar{g}^{(1)}_2(\theta^\ast)\E[(\theta_\infty-\theta^\ast)^{\otimes2}].
    \end{align*}
    
   Last, for the terms in \eqref{eq:analysis-2}
    \begin{align*}
        \E[g(\theta^\ast,x_\infty)\otimes(g'(\theta^\ast,x_\infty)(\theta_\infty-\theta^\ast))]&=\E[g(\theta^\ast, x_\infty)\otimes(g'(\theta^\ast,x_\infty)(\delta_1(x_\infty)+\pi z_1))]\\
        &=\E[g(\theta^\ast, x_\infty)\otimes (g'(\theta^\ast,x_\infty)\delta_1(x_\infty))]\\
        &+\E[g(\theta^\ast, x_\infty)\otimes (g'(\theta^\ast,x_\infty)\E[\theta_\infty-x_\infty])].
    \end{align*}
    Similarly,
    \begin{align*}
        \E[ (g'(\theta^\ast,x_\infty)(\theta_\infty-\theta^\ast))\otimes g(\theta^\ast,x_\infty)] &=\E[(g'(\theta^\ast,x_\infty)\delta_1(x_\infty))\otimes g(\theta^\ast, x_\infty)]\\
        &+\E[(g'(\theta^\ast,x_\infty)\E[\theta_\infty-x_\infty])\otimes g(\theta^\ast, x_\infty)].
    \end{align*}

Lastly, by a second order Taylor expansion around $\theta^\ast$ of $C(\theta_\infty)=\E[(\xi_{\infty+1}(\theta_\infty))^{\otimes2}]$, we have
\begin{equation*}
    C(\theta_\infty)=C(\theta^\ast)+\C'(\theta^\ast)\E[\theta_\infty-\theta^\ast]+\E[R_2'(\theta_\infty)],
\end{equation*}
where $R_2'(\theta)$ satisfies $\sup_{x\in\R^d}\Big\{\|R_2'(\theta)\|/(\|\theta-\theta^\ast\|^2+\|\theta-\theta^\ast\|^{k_\epsilon+2})\Big\}<\infty.$

    Substituting the above analyses of the terms and consolidating the terms, we obtain
    \begin{equation}
    \label{eq:mse-expansion}
    \begin{aligned}
        &-(\Bar{g}^{(1)}(\theta^\ast)\otimes I+ I \otimes\Bar{g}^{(1)}(\theta^\ast))\E[(\theta_\infty-\theta^\ast)^{\otimes 2}]\\
        &=\alpha\Bar{g}_2(\theta^\ast) + \alpha\E[(\xi_{\infty+1}(\theta^\ast))^{\otimes2}] + \alpha\Bar{g}^{(1)}_2(\theta^\ast)\E[(\theta_\infty-\theta^\ast)^{\otimes2}]\\
        &+\E[g(\theta^\ast,x_\infty)\otimes \delta_1(x_\infty)]+\E[\delta_1(x_\infty)\otimes g(\theta^\ast,x_\infty)]\\
        &+ \E[g'(\theta^\ast,x_\infty)\delta_2(x_\infty)] + \E[\delta_2(x_\infty)g'(\theta^\ast,x_\infty)]\\ 
        &+\alpha\E[g(\theta^\ast, x_\infty)\otimes (g'(\theta^\ast,x_\infty)\delta_1(x_\infty))]+\alpha\E[g(\theta^\ast, x_\infty)\otimes (g'(\theta^\ast,x_\infty)\E[\theta_\infty-x_\infty])]\\
        &+\alpha\E[(g'(\theta^\ast,x_\infty)\delta_1(x_\infty))\otimes g(\theta^\ast, x_\infty)]+\alpha\E[(g'(\theta^\ast,x_\infty)\E[\theta_\infty-x_\infty])\otimes g(\theta^\ast, x_\infty)]\\
        &+\alpha\E[g'(\theta^\ast,x_\infty)\delta_2(x_\infty)g'(\theta^\ast,x_\infty)] \\
        &+\E[R_2(\theta_\infty,x_\infty)\otimes(\theta_\infty-\theta^\ast)] + \E[(\theta_\infty-\theta^\ast)\otimes R_2(\theta_\infty,x_\infty)]+\alpha C'(\theta^\ast)\E[\theta_\infty-\theta^\ast]\\
        &+\bigO(\alpha^2\tau).
    \end{aligned}
    \end{equation}

By far, we observe that the remaining terms all contain $\delta_1$ and $\delta_2$. Therefore, we conclude our analysis of $\E[(\theta_\infty-\theta^\ast)^{\otimes2}]$ at this step and leave the analysis of $\delta_1$ and $\delta_2$ to the next section.

\subsubsection{Step 3: Analysis of the \texorpdfstring{$\delta$}{delta}-System}

In this section, we analyze $\delta_1$ and $\delta_2$.

\paragraph*{Analysis of $\delta_1$.}
Starting with $\delta_1$, we first consider the following recursive relationship induced by~\eqref{eq:delta-bar}.
\begin{align*}
    &\E[\theta_{\infty+1}-\theta^\ast|x_{\infty+1}=s]
    =\int_\R\Padj(s,\dd s')\E[\theta_{\infty+1}-\theta^\ast|x_{\infty+1}=s,x_\infty=s']\\
    &\overset{\text{(i)}}{=}\int_\R\Padj(s,\dd s')\E[\theta_\infty-\theta^\ast+\alpha g(\theta_\infty,x_\infty)|x_\infty=s']\\
    &\overset{\text{(ii)}}{=}\int_\R\Padj(s,\dd s')\E\Big[\theta_\infty-\theta^\ast+\alpha \Big(g(\theta^\ast, x_\infty) + g'(\theta^\ast,x_\infty)(\theta_\infty-\theta^\ast)+R_2(\theta_\infty,x_\infty)\Big)|x_\infty=s'\Big]\\
    &=\int_\R\Padj(s,\dd s')\E\Big[\theta_\infty-\theta^\ast|x_\infty=s'\Big]\\
    &+\alpha \int_\R\Padj(s,\dd s')\Big(g(\theta^\ast, s') + g'(\theta^\ast,s')\E[\theta_\infty-\theta^\ast|x_\infty=s']+\E[R_2(\theta_\infty,x_\infty)|x_\infty=s']\Big),
\end{align*}
where in (i) we make use of the update rule in~\eqref{eq:sa-iterate}, $\E[\xi_{\infty+1}(\theta_\infty)]=0$ and conditional independence $x_{\infty+1}\indep \theta_\infty|x_\infty$. Next, we substitute the Taylor expansion~\eqref{eq:taylor-g-2nd} to obtain (ii).

Writing with notation shorthands $z$ and $\delta$, we have
\begin{equation}
\begin{aligned}
    z_1(s)&=\int_\R\Padj(s,\dd s')z_1(s')\\
    &+\alpha\int_R\Padj(s,\dd s')\Big(g(\theta^\ast,s')+ g'(\theta^\ast,s')z_1(s')+\E[R_2(\theta_\infty,s')|x_\infty=s']\Big).
\end{aligned}
    \label{eq:z-self-expression}
\end{equation}

If we apply $\pi$ to both sides of \eqref{eq:z-self-expression}, we obtain
\begin{equation*}
    \int\pi(\dd s)\Padj(s,\dd s')\Big(g(\theta^\ast,s')+ g'(\theta^\ast,s')z_1(s')+\E[R_2(\theta_\infty,s')|x_\infty=s']\Big)=0.
\end{equation*}
Analyzing the three terms closely, we observe that
\begin{align*}
    &\int\pi(\dd s)\int\Padj(s,\dd s')g(\theta^\ast,s')=\int g(\theta^\ast,s')\underbrace{\int\pi(\dd s)\Padj(s,\dd s')}_{=\pi(\dd s')}=\bar{g}(\theta^\ast)=0\\
    &\int\pi(\dd s)\int\Padj(s,\dd s')g'(\theta^\ast,s')z_1(s')=\E[g'(\theta^\ast,x_\infty)z_1(x_\infty)]\\
    &\int\pi(\dd s)\int\Padj(s,\dd s')\E[R_2(\theta_\infty,s')|x_\infty=s']=\E[R_2(\theta_\infty,x_\infty)],
\end{align*}
and hence we first obtain
\begin{equation}
\label{eq:zero-relationship-from-bar}
    \E[g'(\theta^\ast,x_\infty)z_1(x_\infty)]+\E[R_2(\theta_\infty,x_\infty)]=0.
\end{equation}

We now subtract $\pi z_1$ on both sides of \eqref{eq:z-self-expression}, 
and obtain
\begin{align*}
    \delta_1(s)&=\int_\R\Padj(s,\dd s')\delta_1(s')\\
    &+\alpha\int_R\Padj(s,\dd s')\Big(g(\theta^\ast,s')+ g'(\theta^\ast,s')z_1(s')+\E[R_2(\theta_\infty,s')|x_\infty=s']\Big)\\
    &=\int_\R\Big(\Padj(s,\dd s')-\pi(\dd s')\Big)\delta_1(s')\\
    &+\alpha\int_R\Big(\Padj(s,\dd s')-\pi(\dd s')\Big)\Big(g(\theta^\ast,s')+ g'(\theta^\ast,s')z_1(s')+\E[R_2(\theta_\infty,s')|x_\infty=s']\Big).
\end{align*}
Consolidating the terms, we have
\begin{equation}
\label{eq:delta-to-z-relationship}
\begin{aligned}
    &(I-\Padj+\Pi)\delta_1(s)\\
    &=\alpha\int_R\Big(\Padj(s,\dd s')-\pi(\dd s')\Big)\Big(g(\theta^\ast,s')+ g'(\theta^\ast,s')z_1(s')+\E[R_2(\theta_\infty,s')|x_\infty=s']\Big).
\end{aligned}
\end{equation}

We next note the following properties,
\begin{align*}
    &\|\E[R_2(\theta_\infty,x_\infty)|x_\infty=s]\|_{\ltwopi}^2
    \leq \E[\|R_2(\theta_\infty,x_\infty)^2\|]]=\bigO((\alpha\tau)^2),\\
    &\|z_1(s)\|_{\ltwopi}^2
    \leq \E[\|\theta_\infty-\theta^\ast\|^2]
    =\bigO(\alpha\tau)=\bigO(1).
\end{align*}
Therefore, we can first conclude that $\|\delta\|_{\ltwopi}=\bigO(\alpha)$.

Subsequently, from \eqref{eq:zero-relationship-from-bar}, we can derive that
\begin{align}
    0&=\E[g'(\theta^\ast,x_\infty)z_1(x_\infty)]+\E[R_2(\theta_\infty,x_\infty)]\nonumber\\
    &=\E[g'(\theta^\ast,x_\infty)\delta_1(x_\infty)]+\Bar{g}^{(1)}(\theta^\ast)\E[\theta_\infty-\theta^\ast]+\E[R_2(\theta_\infty,x_\infty)]\nonumber\\
    \E[\theta_\infty-\theta^\ast]&=-(\Bar{g}^{(1)}(\theta^\ast))^{-1}\Big(\E[g'(\theta^\ast,x_\infty)\delta_1(x_\infty)]+\E[R_2(\theta_\infty,x_\infty)]\Big).\label{eq:first-mom-delta1-fprime}
\end{align}
Hence, together with the relationship between $\delta_1$ and $z_1$, we have
\begin{equation}
\label{eq:z-to-delta-relationship}
    z_1=\delta_1-(\Bar{g}^{(1)}(\theta^\ast))^{-1}\Big(\E[g'(\theta^\ast,x_\infty)\delta_1(x_\infty)]+\E[R_2(\theta_\infty,x_\infty)]\Big).
\end{equation}

Lastly, we substitute \eqref{eq:z-to-delta-relationship} back into \eqref{eq:delta-to-z-relationship} and obtain
\begin{align*}
    &(I-\Padj+\Pi)\delta_1(s)\\
    &=\alpha\int_R\Big(\Padj(s,\dd s')-\pi(\dd s')\Big)g(\theta^\ast,s')\\
    &+ \alpha\int_R\Big(\Padj(s,\dd s')-\pi(\dd s')\Big)g'(\theta^\ast,s')\Big(\delta_1(s')-(\Bar{g}^{(1)}(\theta^\ast))^{-1}\E[g'(\theta^\ast,x_\infty)\delta_1(x_\infty)]\Big)\\
    &- \alpha\int_R\Big(\Padj(s,\dd s')-\pi(\dd s')\Big)g'(\theta^\ast,s')\Big((\Bar{g}^{(1)}(\theta^\ast))^{-1}\E[R_2(\theta_\infty,x_\infty)]\Big)\\
    &+\alpha\int_R\Big(\Padj(s,\dd s')-\pi(\dd s')\Big)\E[R_2(\theta_\infty,x_\infty)|x_\infty=s']\Big)\\
    &=\alpha v(\theta^\ast,s)+\bigO(\alpha^2\tau),
\end{align*}
where 
\begin{equation*}
    v(\theta^\ast,s)=(I-\Padj+\Pi)^{-1}(\Padj-\Pi)g_{\theta^\ast}(s)=\int_R(I-\Padj+\Pi)^{-1}(\Padj-\Pi)(s,\dd s')g(\theta^\ast,s').
\end{equation*}

Therefore, for the terms that involve $\delta_1$, we can conclude that
\begin{align}
    \E[g(\theta^\ast,x_\infty)\otimes\delta_1(x_\infty)]&=\alpha M+\bigO(\alpha^2\tau)\quad\text{and}\quad\E[\delta_1(x_\infty)\otimes g(\theta^\ast,x_\infty)]=\alpha M+\bigO(\alpha^2\tau),\nonumber\\
    \E[g'(\theta^\ast,x_\infty)\delta_1(x_\infty)]&=\alpha v'+\bigO(\alpha^2\tau),\label{eq:delta1-fprime}
\end{align}
where $M$ and $v$ are independent of $\alpha$.

Note that \eqref{eq:delta1-fprime} together with \eqref{eq:first-mom-delta1-fprime} implies that
\begin{equation*}
    \E[\theta_\infty-\theta^\ast]=\alpha v^{(1)}+\bigO(\alpha\tau).
\end{equation*}

\paragraph*{Analysis of $\delta_2$.}

For $z_2$, we first note that
\begin{equation*}
    |\E[(\theta_\infty-\theta^\ast)^{\otimes2}|x_\infty]\|_{\ltwopi}^2\leq \E[\|\theta_\infty-\theta^\ast\|^4]=\bigO((\alpha\tau)^2),
\end{equation*}
and hence this implies that
\begin{equation*}
    \|z_2\|_{\ltwopi}=\bigO(\alpha\tau).
\end{equation*}

Next, following~\eqref{eq:delta2-bar}, we obtain the following recursive relationship.
\begin{align*}
    &\E[(\theta_{\infty+1}-\theta^\ast)^{\otimes2}|x_{\infty+1}=s']
    =\int\Padj(s',\dd s)\E[(\theta_{\infty+1}-\theta^\ast)^{\otimes2}|x_{\infty+1}=s', x_\infty=s]\\
    &\overset{\text{(i)}}{=}\int\Padj(s',\dd s)\E[(\theta_\infty-\theta^\ast+\alpha (g(\theta_\infty,x_\infty)+\xi_{\infty+1}(\theta_\infty)))^{\otimes2}|x_\infty=s]\\
    &\overset{\text{(ii)}}{=}\int\Padj(s',\dd s)\E[(\theta_\infty-\theta^\ast)^{\otimes2}|x_\infty=s]\\
    &+\alpha^2 \int\Padj(s',\dd s)\E[(g(\theta_\infty,s))^{\otimes2}|x_\infty=s]+\alpha^2 \int\Padj(s',\dd s)\E[(\xi_{\infty+1}(\theta_\infty))^{\otimes2}|x_\infty=s]\\
    &+\alpha\int\Padj(s',\dd s)\Big(\E[(\theta_\infty-\theta^\ast)\otimes g(\theta_\infty,s)|x_\infty=s]
    +\E[g(\theta_\infty,s)\otimes(\theta_\infty-\theta^\ast)|x_\infty=s]\Big),
\end{align*}
where in (i) we make use of the update rule in~\eqref{eq:sa-iterate} and conditional independence $x_{\infty+1}\indep \theta_\infty|x_\infty$. Next, we substitute the Taylor expansion~\eqref{eq:taylor-g-2nd} to obtain (ii).

Writing with $z_2$ shorthand, we have
\begin{align*}
    z_2(s')&=\int\Padj(s',\dd s)z_2(s)\\
    &+\alpha^2 \int\Padj(s',\dd s)\E[(g(\theta_\infty,s))^{\otimes2}|x_\infty=s]
    +\alpha^2 \int\Padj(s',\dd s)\E[(\xi_{\infty+1}(\theta_\infty))^{\otimes2}|x_\infty=s]\\
    &+\alpha\int\Padj(s',\dd s)\Big(\E[(\theta_\infty-\theta^\ast)\otimes g(\theta_\infty,s)|x_\infty=s]
    +\E[g(\theta_\infty,s)\otimes(\theta_\infty-\theta^\ast)|x_\infty=s]\Big).
\end{align*}
    
    Making use of the relationship $(\Padj-\Pi)z_2=(\Padj-\Pi)\delta_2$, we have
    \begin{align*}
        \delta_2(s')&=\int\Big(\Padj(s',\dd s)-\pi(\dd s)\Big)\delta_2(s)\\
        &+\alpha^2 \int\Padj(s',\dd s)\E[(g(\theta_\infty,s))^{\otimes2}|x_\infty=s]+\alpha^2 \int\Padj(s',\dd s)\E[(\xi_{\infty+1}(\theta_\infty))^{\otimes2}|x_\infty=s]\\
        &+\alpha\int\Padj(s',\dd s)\Big(\E[(\theta_\infty-\theta^\ast)\otimes g(\theta_\infty,s)|x_\infty=s]+\E[g(\theta_\infty,s)\otimes(\theta_\infty-\theta^\ast)|x_\infty=s]\Big).
    \end{align*}
    Hence,
    \begin{align*}
        &(I-\Padj+\Pi)\delta_2(s')\\
        &=\alpha\int\Padj(s',\dd s)\Big(\E[(\theta_\infty-\theta^\ast)\otimes g(\theta_\infty,s)|x_\infty=s]+\E[g(\theta_\infty,s)\otimes(\theta_\infty-\theta^\ast)|x_\infty=s]\\
        &+\alpha^2\int\Padj(s',\dd s)\Big(\E[g((\theta_\infty,s))^{\otimes2}|x_\infty=s]+ \E[(\xi_{\infty+1}(\theta_\infty))^{\otimes2}|x_\infty=s]\Big).
    \end{align*}
    To analyze the above system of $\delta_2$, we make use of the Taylor expansion of $g(\theta_\infty,s)$ and $\xi_{\infty+1}(\theta_\infty)$.
    
    Starting with the first term, we have
    \begin{align*}
        &\E[(\theta_\infty-\theta^\ast)\otimes g(\theta_\infty,s)|x_\infty=s]\\
        &=\E\Big[\Big(\theta_\infty-\theta^\ast\Big)\otimes\Big(g(\theta^\ast, s) + g'(\theta^\ast,s)(\theta_\infty-\theta^\ast)+R_2(\theta,s)\Big)|x_\infty=s\Big]\\
        &=z_1(s)\otimes g(\theta^\ast,s)+z_2(s)g'(\theta^\ast,s) + \E[(\theta_\infty-\theta^\ast)\otimes R_2(\theta,s)|x_\infty=s]\\
        &=\Big(\delta_1(s)-(\Bar{g}^{(1)}(\theta^\ast))^{-1}\Big(\E[g'(\theta^\ast,x_\infty)\delta_1(x_\infty)]+\E[R_2(\theta_\infty,x_\infty)]\Big)\Big)\otimes g(\theta^\ast,s)\\
        &+z_2(s)g'(\theta^\ast,s) + \E[(\theta_\infty-\theta^\ast)\otimes R_2(\theta,s)|x_\infty=s].
    \end{align*}
    Similarly, we have the following relationship for the second term,
    \begin{align*}
        &\E[g(\theta_\infty,s)\otimes(\theta_\infty-\theta^\ast)|x_\infty=s]\\
        &=g(\theta^\ast,s)\otimes\Big(\delta_1(s)-(\Bar{g}^{(1)}(\theta^\ast))^{-1}\Big(\E[g'(\theta^\ast,x_\infty)\delta_1(x_\infty)]+\E[R_2(\theta_\infty,x_\infty)]\Big)\Big)\\
        &+g'(\theta^\ast,s)z_2(s) + \E[R_2(\theta,s)\otimes(\theta_\infty-\theta^\ast)|x_\infty=s].
    \end{align*}
    
    Next, we proceed to analyze the third term.
    \begin{align*}
        &\E[(g(\theta_\infty,s))^{\otimes2}|x_\infty=s]\\
        &=\E\Big[\Big(g(\theta^\ast, s) + g'(\theta^\ast,s)(\theta_\infty-\theta^\ast)+R_2(\theta_\infty,s)\Big)^{\otimes2}|x_\infty=s\Big]\\
        &=\E[(g(\theta^\ast, s))^{\otimes2}|x_\infty=s] + \E[(g'(\theta^\ast,s)(\theta_\infty-\theta^\ast))^{\otimes2}|x_\infty=s]\\
        &+\E[g(\theta^\ast,s)\otimes(g'(\theta^\ast,s)(\theta_\infty-\theta^\ast))|x_\infty=s] + \E[(g'(\theta^\ast,s)(\theta_\infty-\theta^\ast))\otimes g(\theta^\ast,s)|x_\infty=s] \\
        &+\E[g(\theta^\ast,s)\otimes R_2(\theta_\infty,s)|x_\infty=s] + \E[R_2(\theta_\infty,s)\otimes g(\theta^\ast,s)|x_\infty=s]\\
        &+\E[(g'(\theta^\ast,s)(\theta_\infty-\theta^\ast))\otimes R_2(\theta_\infty,s)|x_\infty=s] \\
        &+ \E[R_2(\theta_\infty,s)\otimes(g'(\theta^\ast,s)(\theta_\infty-\theta^\ast))|x_\infty=s]+\E[(R_2(\theta_\infty,s))^{\otimes2}|x_\infty=s].
    \end{align*}
    Lastly, for the noise term, we derive that
    \begin{equation*}
        \E[(\xi_{\infty+1}(\theta_\infty))^{\otimes2}|x_\infty=s] = \E[(\xi_{\infty+1}(\theta^\ast))^{\otimes2}] + C'(\theta^\ast)\E[\theta_\infty-\theta^\ast|x_\infty=s]+\E[R_2'(\theta_\infty)|x_\infty=s].
    \end{equation*}

    Leveraging on the respective orders, we can conclude that
    \begin{equation*}
        \|\delta_2\|_{\ltwopi}=\bigO(\alpha^2\tau).
    \end{equation*}
    
    Therefore, for second-moment cross-terms, we have the following orders
    \begin{align}
    \E[g'(\theta^\ast,x_\infty)\delta_2(x_\infty)]&=\bigO(\alpha^2\tau)\quad\text{and}\quad \E[\delta_2(x_\infty)g'(\theta^\ast,x_\infty)]=\bigO(\alpha^2\tau),\nonumber\\
    \E[g''(\theta^\ast,x_\infty)\delta_2(x_\infty)]&=\bigO(\alpha^2\tau).\label{eq:delta2-g-doubleprime-order}
    \end{align}
    
\subsubsection{Step 4: Bias Characterization}
Finally, we are ready to consolidate the above analyses and conclude the characterization of the asymptotic bias.

We recall that we have already shown the following expansion of the asymptotic bias
\begin{align*}
    \E[\theta_\infty-\theta^\ast]
    &=-(\Bar{g}^{(1)}(\theta^\ast))^{-1}\Big(\E[g'(\theta^\ast,x_\infty)\delta_1(x_\infty)]\\
    &\qquad\qquad\qquad\qquad+\frac{1}{2}\Big(\E[g''(\theta^\ast, x_\infty)\delta_2(x_\infty)] + \Bar{g}^{(2)}(\theta^\ast)\E[(\theta_\infty-\theta^\ast)^{\otimes2}]\Big)\Big)\\
    &+\bigO((\alpha\tau)^{3/2}).
\end{align*}

By our analyses above, we have shown that
\begin{equation*}
    \delta_1=\alpha(I-\Padj+\Pi)^{-1}(\Padj-\Pi)g_\theta^\ast + \bigO(\alpha^2\tau).
\end{equation*}
Hence, we derive that
\begin{equation*}
    \E[g'(\theta^\ast,x_\infty)\delta_1(x_\infty)]=\alpha\E[g'(\theta^\ast,x_\infty)(I-\Padj+\Pi)^{-1}(\Padj-\Pi)g_\theta^\ast(x_\infty)]+\bigO(\alpha^2\tau).
\end{equation*}

For the second term, we simply use the order shown in~\eqref{eq:delta2-g-doubleprime-order}.

Lastly, we substitute our analyses of $\delta_1$ and $\delta_2$ into the expansion of MSE~\eqref{eq:mse-expansion} and derive that
\begin{align*}
    &-(\Bar{g}^{(1)}(\theta^\ast)\otimes I+ I \otimes\Bar{g}^{(1)}(\theta^\ast))\E[(\theta_\infty-\theta^\ast)^{\otimes 2}]\\
        &=\alpha\Bar{g}_2(\theta^\ast) + \alpha\E[(\xi_{\infty+1}(\theta^\ast))^{\otimes2}] \\
        &+\alpha\E[g(\theta^\ast,x_\infty)\otimes (I-\Padj+\Pi)^{-1}(\Padj-\Pi)g_{\theta^\ast}(x_\infty)]\\
        &+\alpha\E[(I-\Padj+\Pi)^{-1}(\Padj-\Pi)g_{\theta^\ast}(x_\infty)\otimes g(\theta^\ast,x_\infty)] + \bigO(\alpha^2\tau).
\end{align*}

Therefore, combining all the analyses, we have shown the bias characterization in Theorem~\ref{thm:bias},
\begin{align*}
    &\E[\theta_\infty-\theta^\ast]\\
    &=-\alpha\cdot(\Bar{g}^{(1)}(\theta^\ast))^{-1}\E[g'(\theta^\ast,x_\infty)h(\theta^\ast,x_\infty)]\\
    &+\alpha\cdot\frac{1}{2}(\Bar{g}^{(1)}(\theta^\ast))^{-1}(\Bar{g}^{(1)}A\Big(\Bar{g}_2(\theta^\ast) + \alpha\E[(\xi_{\infty+1}(\theta^\ast))^{\otimes2}]\Big)\\
    &+\alpha \cdot\frac{1}{2}(\Bar{g}^{(1)}(\theta^\ast))^{-1}A\Big(\E[g(\theta^\ast,x_\infty)\otimes h(\theta^\ast,x_\infty)]+\E[h(\theta^\ast,x_\infty)\otimes g(\theta^\ast,x_\infty)]\Big) + \bigO((\alpha\tau)^{3/2}).
\end{align*}
where
\begin{align*}
    A&=(\Bar{g}^{(1)}(\theta^\ast)\otimes I+ I \otimes\Bar{g}^{(1)}(\theta^\ast))^{-1},\\
    h(\theta^\ast,s)&=(I-\Padj+\Pi)^{-1}(\Padj-\Pi)g_{\theta^\ast}(s)\\
    &=\int_\cX(I-\Padj+\Pi)^{-1}(\Padj-\Pi)(s,\dd s')g(\theta^\ast,s').
\end{align*}

Therefore, we see that assuming weak convergence without projection, the bias admits a leading term of order $\alpha$. We emphasize that the expansion holds as equality, rather than an upper bound.

\subsection{Step 2: Impact of Projection on Bias}

Now, we proceed to analyze the impact of having the additional projection step on the asymptotic bias characterization. In the following, we use the shorthand $\theta_{t+1/2}$ to denote the iterate we obtain before the projection step, i.e.,
\begin{equation*}
    \theta_{t+1/2}=\theta_t+\alpha\big(g(\theta_t,x_t)+\xi_{t+1}(\theta_t)\big)\quad\text{and}\quad\theta_{t+1}=\proj_{\ball{\beta}}\theta_{t+1/2}.
\end{equation*}
Therefore, we see that our analysis from Step 1 can be understood as the analysis for $\theta_{\infty+1/2}$.

Starting with the first moment analysis with test function $h_1(x,\theta)=\theta-\theta^\ast$, we have
\begin{align*}
    \E[\theta_{\infty+1}-\theta^\ast]&=\E[\theta_{\infty+1/2}-\theta^\ast]+\E[\theta_{\infty+1}-\theta_{\infty+1/2}]\\
    &=\E[\theta_\infty-\theta^\ast]+ \alpha\big(\E[g(\theta_\infty,x_\infty)]+\E[\xi_{\infty+1}(\theta_\infty)])+\E[\theta_{\infty+1}-\theta_{\infty+1/2}],
\end{align*}
which implies that
\begin{equation}
\label{eq:first-mom-res}
    -\frac{1}{\alpha}\E[\theta_{\infty+1}-\theta_{\infty+1/2}]=\E[g'(\theta^\ast,x_\infty)(\theta_\infty-\theta^\ast)]+\frac{1}{2}\E[g''(\theta^\ast,x_\infty)(\theta_\infty-\theta^\ast)^{\otimes2}]+\bigO((\alpha\tau)^{3/2}).
\end{equation}

Therefore, we turn our focus to analyzing $\E[\theta_{\infty+1}-\theta_{\infty+1/2}]$.
\begin{align*}
    &\E[\theta_{\infty+1}-\theta_{\infty+1/2}]\\
    &=\E[(\theta_{\infty+1}-\theta_{\infty+1/2})\mathbbm1\{\|\theta_{t+1/2}-\theta^\ast\|< \beta\}]
    +\E[(\theta_{\infty+1}-\theta_{\infty+1/2})\mathbbm1\{\|\theta_{t+1/2}-\theta^\ast\|\geq \beta\}]\\
    &=\E[(\theta_{\infty+1}-\theta_{\infty+1/2})\mathbbm1\{\|\theta_{t+1/2}-\theta^\ast\|\geq \beta\}],
\end{align*}
where we note that when $\|\theta_{t+1/2}-\theta^\ast\|\leq \beta$ implies that $\|\theta_{t+1/2}\|\leq \beta +\|\theta^\ast\|\leq 2\beta$ and hence $\theta_{\infty+1}=\theta_{t+1/2}$ in this case.
To analyze the remaining term, we use H\"{o}lder's inequality and obtain
\begin{align*}
    &\|\E[(\theta_{\infty+1}-\theta_{\infty+1/2})\mathbbm1\{\|\theta_{t+1/2}-\theta^\ast\|\geq \beta\}]\|\\
    &\leq \E^{1/p}[\|(\theta_{\infty+1}-\theta^\ast)-(\theta_{\infty+1/2}-\theta^\ast)\|^p]\E^{1/q}[\mathbbm1\{\|\theta_{t+1/2}-\theta^\ast\|\geq \beta\}]\\
    &\leq 2\E^{1/p}[\|\theta_{\infty+1/2}-\theta^\ast\|^p]\big(\Prob(\|\theta_{t+1/2}-\theta^\ast\|\geq \beta)\big)^{1/q}.
\end{align*}
Setting $p=6$ and $q=6/5$, and making use of the property that $\E[\|\theta_{\infty+1/2}-\theta^\ast\|^6]=\bigO((\alpha\tau)^3)$ from Proposition~\ref{prop:2n-convergence}, we have
\begin{align*}
    \E^{1/6}[\|\theta_{\infty+1/2}-\theta^\ast\|^6]\big(\Prob(\|\theta_{t+1/2}-\theta^\ast\|\geq R)\big)^{5/6}
    &\lesssim (\alpha\tau)^{3/6}\Big(\E[\|\theta_{t+1/2}-\theta^\ast\|^6]/R^6)^{5/6}\\
    &\lesssim (\alpha\tau)^3.
\end{align*}
Hence, we can conclude that
\begin{equation*}
    \E[\theta_{\infty+1}-\theta_{\infty+1/2}]=\bigO((\alpha\tau)^3).
\end{equation*}
Substituting this order information back into~\eqref{eq:first-mom-res}, we can see that $\E[\theta_{\infty+1}-\theta_{\infty+1/2}]/\alpha=\bigO(\alpha^2\tau^3)$. Recall that $\tau=\bigO(\log(1/\alpha))$, and hence we can assimilate this residual order from projection into the existing $\bigO((\alpha\tau)^{3/2})$ residual term.

For the remaining terms, we follow the existing analysis in Section~\ref{sec:bias-step1} and again obtain
\begin{align*}
    \E[\theta_\infty-\theta^\ast]
    &=-(\Bar{g}^{(1)}(\theta^\ast))^{-1}\Big(\E[g'(\theta^\ast,x_\infty)\delta_1(x_\infty)]\\
    &\qquad\qquad\qquad\qquad+\frac{1}{2}\Big(\E[g''(\theta^\ast, x_\infty)\delta_2(x_\infty)] + \Bar{g}^{(2)}(\theta^\ast)\E[(\theta_\infty-\theta^\ast)^{\otimes2}]\Big)\Big)\\
    &+\bigO((\alpha\tau)^{3/2}).
\end{align*}

Now, we proceed to analyze $\E[(\theta_\infty-\theta^\ast)^{\otimes2}]$ and examine the impact of projection. Consider test function $h_2(x,\theta)=(\theta-\theta^\ast)^{\otimes2}$ and follow a similar strategy as the first moment analysis, we have
\begin{align*}
    \E[(\theta_{\infty+1}-\theta^\ast)^{\otimes2}]&=\E[(\theta_{\infty+1/2}-\theta^\ast)^{\otimes2}] + \E[(\theta_{\infty+1}-\theta^\ast)^{\otimes2}-(\theta_{\infty+1/2}-\theta^\ast)^{\otimes2}]\\
     &=\E[(\theta_\infty-\theta^\ast)^{\otimes2}] + \alpha^2(\E[(g(\theta_\infty, x_\infty))^{\otimes2}]+\E[(\xi_{\infty+1}(\theta_\infty))^{\otimes2}])\\
    &+ \alpha(\E[g(\theta_\infty,x_\infty)\otimes (\theta_\infty-\theta^\ast)]+\E[(\theta_\infty-\theta^\ast)\otimes g(\theta_\infty,x_\infty)])\\
    &+\E[(\theta_{\infty+1}-\theta^\ast)^{\otimes2}-(\theta_{\infty+1/2}-\theta^\ast)^{\otimes2}].
\end{align*}
Hence, by reorganizing the terms, we have
\begin{align*}
    &-\frac{1}{\alpha}\E[(\theta_{\infty+1}-\theta^\ast)^{\otimes2}-(\theta_{\infty+1/2}-\theta^\ast)^{\otimes2}]\\
    &=\alpha(\E[(g(\theta_\infty, x_\infty))^{\otimes2}]+\E[(\xi_{\infty+1}(\theta_\infty))^{\otimes2}])\\
    &+ (\E[g(\theta_\infty,x_\infty)\otimes (\theta_\infty-\theta^\ast)]+\E[(\theta_\infty-\theta^\ast)\otimes g(\theta_\infty,x_\infty)]).
\end{align*}

Therefore, we turn our focus to analyzing $\E[(\theta_{\infty+1}-\theta^\ast)^{\otimes2}-(\theta_{\infty+1/2}-\theta^\ast)^{\otimes2}]$. Similar as the first moment analysis, we have 
\begin{align*}
    &\E[(\theta_{\infty+1}-\theta^\ast)^{\otimes2}-(\theta_{\infty+1/2}-\theta^\ast)^{\otimes2}]\\
    &=\E[((\theta_{\infty+1}-\theta^\ast)^{\otimes2}-(\theta_{\infty+1/2}-\theta^\ast)^{\otimes2})\mathbbm1\{\|\theta_{t+1/2}-\theta^\ast\|\geq \beta\}].
\end{align*}
To analyze the term on the right hand side, we again make use of H\"{o}lder's inequality and obtain
\begin{align*}
    &\|\E[((\theta_{\infty+1}-\theta^\ast)^{\otimes2}-(\theta_{\infty+1/2}-\theta^\ast)^{\otimes2})\mathbbm1\{\|\theta_{t+1/2}-\theta^\ast\|\geq \beta\}]\|\\
    &\leq \E^{1/p}[\|(\theta_{\infty+1}-\theta^\ast)^{\otimes2}-(\theta_{\infty+1/2}-\theta^\ast)^{\otimes2}\|^p]\E^{1/q}[\mathbbm1\{\|\theta_{t+1/2}-\theta^\ast\|\geq \beta\}]\\
    &\leq 2\E^{1/p}[\|\theta_{\infty+1/2}-\theta^\ast\|^{2p}]\big(\Prob(\|\theta_{t+1/2}-\theta^\ast\|\geq \beta)\big)^{1/q}.
\end{align*}
Setting $p=3$ and $q=3/2$, and making use of the property that $\E[\|\theta_{\infty+1/2}-\theta^\ast\|^6]=\bigO((\alpha\tau)^3)$ from Proposition~\ref{prop:2n-convergence}, we have
\begin{align*}
    \E^{1/3}[\|\theta_{\infty+1/2}-\theta^\ast\|^6]\big(\Prob(\|\theta_{t+1/2}-\theta^\ast\|\geq R)\big)^{2/3}
    &\lesssim (\alpha\tau)\Big(\E[\|\theta_{t+1/2}-\theta^\ast\|^6]/R^6\Big)^{2/3}\\
    &\lesssim (\alpha\tau)^3.
\end{align*}
Hence, we can conclude that
\begin{equation*}
    \E[(\theta_{\infty+1}-\theta^\ast)^{\otimes2}-(\theta_{\infty+1/2}-\theta^\ast)^{\otimes2}]=\bigO((\alpha\tau)^3).
\end{equation*}

From the analyses above, we can also conclude that 
\begin{align*}
     &\|\E[\theta_{\infty+1}-\theta_{\infty+1/2}|x_\infty]\|_{\ltwopi}=\bigO((\alpha\tau)^{3/2})\\
     &\|\E[(\theta_{\infty+1}-\theta^\ast)^{\otimes2}-(\theta_{\infty+1/2}-\theta^\ast)^{\otimes2}|x_\infty]\|_{\ltwopi}=\bigO((\alpha\tau)^{3/2}).
\end{align*}

Therefore, combining the analyses above, we see that the projection only introduces error terms of order $\bigO(\alpha^2\tau^3)$. Hence, combining the analysis from Section~\ref{sec:bias-step1}, we can conclude the same desired order that
\begin{equation*}
    \E[\theta_\infty^{(\alpha)}-\theta^\ast]=\alpha b+\bigO((\alpha\tau)^{3/2}).
\end{equation*}

\section{Additional Insights on TA and RR}
\label{sec:rr-proof}

In this section, we present more detailed results that characterize the first and second moment of Polayk-Ruppert (PR) tail-averaged iterates and Richardson-Romberg (RR) extrapolated iterates.

The following corollary provides non-asymptotic characterization for the first two moments of PR tail-averaged iterates $\bar\theta_{k_0,k}$.
\begin{cor}[Tail Averaging]
    Under the setting of Theorem~\ref{thm:bias}, the tail-averaged iterates satisfy the following bounds for all $k>k_0+2\tau$ and $k_0\geq\tau+\frac{1}{\alpha\mu}\log\big(\frac{1}{\alpha\tau_\alpha}\big)$:
    \begin{align}
        \E[\bar\theta_{k_0,k}^{(\alpha)}-\theta^\ast]&=\alpha b+\bigO\big((\alpha\tau_\alpha)^{3/2}\big)+\mathcal{O}\bigg(\frac{(1-\alpha\mu)^{k_0/2}}{\alpha(k - k_0)} \bigg)\quad\text{and} \label{eq: TA1}\\
        \E\Big[(\bar\theta^{(\alpha)}_{k_0,k}-\theta^\ast)(\bar\theta^{(\alpha)}_{k_0,k}-\theta^\ast)^\top\Big]&=\alpha^{2 }bb^T + \mathcal{O}(\alpha\cdot(\alpha\tau_\alpha)^{3/2}) + \mathcal{O}\bigg(\frac{\tau_\alpha}{k-k_0}+\frac{(1-\alpha\mu)^{k_0/2}}{\alpha\left(k-k_0\right)^2} \bigg).\label{eq:2nd-mom-ta}
    \end{align}
\end{cor}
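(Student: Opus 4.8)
The plan is to center the tail average at the stationary mean and separate the bias from the fluctuation. Write
\[
\bar\theta_{k_0,k}^{(\alpha)}-\theta^\ast = m + \bar u_{k_0,k},\qquad m:=\E[\theta_\infty^{(\alpha)}]-\theta^\ast,\quad \bar u_{k_0,k}:=\frac{1}{k-k_0}\sum_{t=k_0}^{k-1}\big(\theta_t-\E[\theta_\infty^{(\alpha)}]\big).
\]
The three inputs I would use are: (i) the bias expansion $m=\alpha b+\bigO((\alpha\tau_\alpha)^{3/2})$ from Theorem~\ref{thm:bias}; (ii) the first- and second-moment transient rates $\|\E[\theta_t]-\E[\theta_\infty^{(\alpha)}]\|\le(1-\alpha\mu)^{t/2}s'$ and the analogous bound for $\E[\theta_t\theta_t^\top]$ from Corollary~\ref{cor:non-asymptotic}; and (iii) the steady-state scale $\E[\|\theta_\infty-\theta^\ast\|^2]=\bigO(\alpha\tau_\alpha)$ from \eqref{eq:var-trace-order} together with the geometric contraction of coupled iterates in Proposition~\ref{prop: difference second moment}. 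The burn-in hypothesis $k_0\ge\tau_\alpha+\frac{1}{\alpha\mu}\log\frac{1}{\alpha\tau_\alpha}$ enters repeatedly through its consequence $(1-\alpha\mu)^{k_0}\le\alpha\tau_\alpha$.

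\textbf{First moment.} Averaging the per-iterate identity $\E[\theta_t-\theta^\ast]=m+(\E[\theta_t]-\E[\theta_\infty^{(\alpha)}])$ gives $\E[\bar\theta_{k_0,k}^{(\alpha)}-\theta^\ast]=m+\frac{1}{k-k_0}\sum_{t=k_0}^{k-1}(\E[\theta_t]-\E[\theta_\infty^{(\alpha)}])$. I would bound the residual sum by $\frac{s'}{k-k_0}\sum_{t\ge k_0}(1-\alpha\mu)^{t/2}$ and evaluate the geometric series using $1-(1-\alpha\mu)^{1/2}=\Theta(\alpha\mu)$, obtaining $\bigO\big((1-\alpha\mu)^{k_0/2}/(\alpha(k-k_0))\big)$. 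Substituting $m=\alpha b+\bigO((\alpha\tau_\alpha)^{3/2})$ yields \eqref{eq: TA1}.

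\textbf{Second moment.} Expanding the outer product gives $\E[(\bar\theta-\theta^\ast)(\bar\theta-\theta^\ast)^\top]=mm^\top+m\,\E[\bar u_{k_0,k}]^\top+\E[\bar u_{k_0,k}]\,m^\top+\E[\bar u_{k_0,k}\bar u_{k_0,k}^\top]$. The first term expands as $mm^\top=\alpha^2 bb^\top+\bigO(\alpha\cdot(\alpha\tau_\alpha)^{3/2})$ directly from Theorem~\ref{thm:bias}. For the cross terms, $\|m\|=\bigO(\alpha)$ and $\|\E[\bar u_{k_0,k}]\|=\bigO((1-\alpha\mu)^{k_0/2}/(\alpha(k-k_0)))$ from the first-moment step, so the burn-in consequence $(1-\alpha\mu)^{k_0/2}\le(\alpha\tau_\alpha)^{1/2}\le\tau_\alpha$ makes them $\bigO(\tau_\alpha/(k-k_0))$, i.e.\ absorbed into $T_2$. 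The crux is the variance matrix $\E[\bar u_{k_0,k}\bar u_{k_0,k}^\top]=\frac{1}{(k-k_0)^2}\sum_{s,t=k_0}^{k-1}\E[u_s u_t^\top]$ with $u_t:=\theta_t-\E[\theta_\infty^{(\alpha)}]$. I would split each $\E[u_su_t^\top]$ into the mean product $\E[u_s]\E[u_t]^\top$—whose contribution is again absorbed by $T_2$ and $T_3$—and the covariance $\mathrm{Cov}(\theta_s,\theta_t)$. For $s<t$ I would write $\mathrm{Cov}(\theta_s,\theta_t)=\E[v_s(\E[\theta_t\mid\cF_s]-\E[\theta_t])^\top]$ with $v_s=\theta_s-\E[\theta_s]$, and control the inner conditional deviation by the coupling contraction of Proposition~\ref{prop: difference second moment}, producing a geometric factor $(1-\alpha\mu)^{(t-s)/2}$ times the steady-state scale $\bigO(\alpha\tau_\alpha)$ from \eqref{eq:var-trace-order}. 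Summing the double series—diagonal terms of size $\bigO(\alpha\tau_\alpha)$ and off-diagonal terms whose lag-sum contributes a factor $\Theta(1/\alpha)$—gives the leading variance $\frac{1}{k-k_0}\Sigma^{(\alpha)}=\bigO(\tau_\alpha/(k-k_0))$ (with $\Sigma^{(\alpha)}$ the limiting covariance of Corollary~\ref{cor:clt}, of order $\tau_\alpha$), while the finite-horizon boundary and transient corrections, damped by $(1-\alpha\mu)^{k_0/2}$ through the burn-in, assemble into $T_3=\bigO\big((1-\alpha\mu)^{k_0/2}/(\alpha(k-k_0)^2)\big)$. This establishes \eqref{eq:2nd-mom-ta}, and taking traces recovers Corollary~\ref{cor:ta-mse}.

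\textbf{Main obstacle.} The hard part is the covariance double sum: the iterates $(\theta_t)$ are neither stationary nor independent, so no clean stationary-average variance formula is available. The geometric decay of $\mathrm{Cov}(\theta_s,\theta_t)$ in $|s-t|$ has to be earned by combining three mechanisms—the contraction of two coupled SA trajectories (Proposition~\ref{prop: difference second moment}), the uniform moment bounds (Proposition~\ref{prop:2n-convergence}), and the mixing of the data chain $(x_t)$—while simultaneously tracking the transient in which $\var(\theta_t)$ has not yet reached its $\bigO(\alpha\tau_\alpha)$ steady-state level. Bookkeeping these errors and showing, via the precise burn-in threshold, that they consolidate exactly into the stated $T_2$ and $T_3$ orders (rather than leaking a spurious factor of $1/\alpha$) is where the care is required.
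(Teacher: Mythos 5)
Your proposal follows the paper's own proof essentially step for step: the same centering $\bar\theta_{k_0,k}^{(\alpha)}-\theta^\ast=\big(\E[\theta_\infty^{(\alpha)}]-\theta^\ast\big)+\bar u_{k_0,k}$ with the geometric-series bound from Corollary~\ref{cor:non-asymptotic} for the first moment, and the same four-term expansion of the second moment in which the variance double sum is split into diagonal terms (handled by the moment bounds and $\var(\theta_\infty)=\bigO(\alpha\tau_\alpha)$) and off-diagonal terms controlled by a lag-geometric covariance decay with prefactor $\alpha\tau_\alpha$, which is exactly the paper's Claim~\ref{claim: TA}. The only differences are cosmetic (you absorb the cross terms into the $\tau_\alpha/(k-k_0)$ term rather than carrying $(1-\alpha\mu)^{k_0/2}/(k-k_0)$, and you center $\E[u_su_t^\top]$ before invoking the decay), so this is the same argument, with the covariance-decay lemma left at the same level of detail as the paper, which itself defers to prior work.
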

With this result, taking the trace on both sides of~\eqref{eq:2nd-mom-ta} recovers Corollary~\ref{cor:ta-mse}.

\begin{proof}
First, we have
$$
\E[\bar\theta^{(\alpha)}_{k_0,k}-\theta^\ast]=\left(\mathbb{E}\left[\theta_\infty\right]-\theta^*\right)+\frac{1}{k-k_0} \sum_{t=k_0}^{k-1} \mathbb{E}\left[\theta_t-\theta_\infty\right].
$$
By Corollary \ref{cor:non-asymptotic}, we obtain
\[\Vert\mathbb{E}[\theta_t] - \mathbb{E}[{\theta_\infty}]\Vert  \leq  (1-\alpha\mu)^\frac{t}{2}\cdot s'(\theta_0,L,\mu).\]
Hence, it follows that
\begin{equation*}
\begin{aligned}
 \left\|\sum_{t=k_0}^{k-1} \mathbb{E}\left[\theta_t-{\theta_\infty}\right]\right\| 
 &\leq \sum_{t=k_0}^{k-1}\left\|\mathbb{E}\left[\theta_t\right]-\mathbb{E}[\theta_\infty]\right\| \\
& \leq s'(\theta_0,L,\mu) \cdot (1-\alpha\mu)^\frac{k_0}{2}\frac{1}{1 - \sqrt{(1-\alpha\mu)}}\\
 &\leq s'(\theta_0,L,\mu) \cdot (1-\alpha \mu )^\frac{k_0}{2}\frac{2}{\alpha \mu} .
\end{aligned}
\end{equation*}

Together with Theorem \ref{thm:bias}, we have
    \[\mathbb{E}\left[\bar{\theta}^{(\alpha)}_{k_0, k}\right]-\theta^*= \alpha b+\bigO\big((\alpha\tau_\alpha)^{3/2}\big)+\mathcal{O}\left(\frac{(1-\alpha\mu)^{k_0/2}}{\alpha(k - k_0)}\right),\]
thereby finishing the proof of the first moment.

To bound the second moment of the tail-averaged iterate, we follow the proof technique in \cite[Section A.6.2]{huo23-td} 
. We notice that
$$
\begin{aligned}
& \mathbb{E}\left[\left(\bar{\theta}^{(\alpha)}_{k_0, k}-\theta^*\right)\left(\bar{\theta}^{(\alpha)}_{k_0, k}-\theta^*\right)^{\top}\right] \\
= & \mathbb{E}\left[\left(\bar{\theta}^{(\alpha)}_{k_0, k}-\mathbb{E}\left[\theta_\infty\right]+\mathbb{E}\left[\theta_\infty\right]-\theta^*\right)\left(\bar{\theta}^{(\alpha)}_{k_0, k}-\mathbb{E}\left[\theta_\infty\right]+\mathbb{E}\left[\theta_\infty\right]-\theta^*\right)^{\top}\right] \\
= & \underbrace{\mathbb{E}\left[\left(\bar{\theta}^{(\alpha)}_{k_0, k}-\mathbb{E}\left[\theta_\infty\right]\right)\left(\bar{\theta}^{(\alpha)}_{k_0, k}-\mathbb{E}\left[\theta_\infty\right]\right)^{\top}\right]}_{T_1}+\underbrace{\mathbb{E}\left[\left(\bar{\theta}^{(\alpha)}_{k_0, k}-\mathbb{E}\left[\theta_\infty\right]\right)\left(\mathbb{E}\left[\theta_\infty\right]-\theta^*\right)^{\top}\right]}_{T_2} \\
&+  \underbrace{\mathbb{E}\left[\left(\mathbb{E}\left[\theta_\infty\right]-\theta^*\right)\left(\bar{\theta}^{(\alpha)}_{k_0, k}-\mathbb{E}\left[\theta_\infty\right]\right)^{\top}\right]}_{T_3}+\underbrace{\mathbb{E}\left[\left(\mathbb{E}\left[\theta_\infty\right]-\theta^*\right)\left(\mathbb{E}\left[\theta_\infty\right]-\theta^*\right)^{\top}\right]}_{T_4}.
\end{aligned}
$$

For $T_2$, we have
\[
\begin{aligned}
T_2 &= \frac{1}{k-k_0}\left(\sum_{t=k_0}^{k-1} \mathbb{E}\left[\theta_t-{\theta_\infty}\right]\right)\left(\mathbb{E}[\theta_\infty]-\theta^*\right)^{\top}\\
&=\mathcal{O}\left(\frac{(1-\alpha\mu)^{k_0/2}}{\alpha(k - k_0)} \right) \cdot (\alpha b+\bigO\big((\alpha\tau_\alpha)^{3/2}\big))
=  \mathcal{O}\left(\frac{(1-\alpha\mu)^{k_0/2}}{(k - k_0)} \right).\\
\end{aligned}
\]
The term $T_3$ is similar to $T_2$ and obeys the same bound.

For $T_4$, we have
\[
\begin{aligned}
T_4 &= (\alpha b+\bigO\big((\alpha\tau_\alpha)^{3/2}\big))(\alpha b+\bigO\big((\alpha\tau_\alpha)^{3/2}\big))^T
= \alpha^{2}bb^T + \mathcal{O}(\alpha\cdot(\alpha\tau_\alpha)^{3/2}).
\end{aligned}\]

For $T_1$, we have
\begin{align}
 T_1=&\frac{1}{\left(k-k_0\right)^2} \mathbb{E}\bigg[\Big(\sum_{t=k_0}^{k-1}\big(\theta_t-\mathbb{E}[{\theta_\infty}]\big)\Big)\Big(\sum_{t=k_0}^{k-1}\big(\theta_t-\mathbb{E}[{\theta_\infty}]\big)\Big)^{\top}\bigg] \notag\\
 =&\frac{1}{\left(k-k_0\right)^2} \sum_{t=k_0}^{k-1} \mathbb{E}\left[\big(\theta_t-\mathbb{E}[\theta_\infty]\big)\big(\theta_t-\mathbb{E}[\theta_\infty]\big)^{\top}\right] \label{eq:T1_T_{11}}\\
&+\frac{1}{\left(k-k_0\right)^2} \sum_{t=k_0}^{k-1} \sum_{l=t+1}^{k-1}\mathbb{E}\left[\big(\theta_t-\mathbb{E}[\theta_\infty]\big)\big(\theta_l-\mathbb{E}[\theta_\infty]\big)^{\top}\right] \label{eq:T1_T_{12}} \\
& +\frac{1}{\left(k-k_0\right)^2} \sum_{t=k_0}^{k-1} \sum_{l=t+1}^{k-1}\mathbb{E}\left[\big(\theta_l-\mathbb{E}[\theta_\infty]\big)\big(\theta_t-\mathbb{E}[\theta_\infty]\big)^{\top}\right]. \label{eq:T1_T_{13}} 
\end{align}

By Corollary \ref{cor:non-asymptotic} and Proposition \ref{prop:2n-convergence} we have
$$
\begin{aligned}
& \mathbb{E}\Big[\big(\theta_t-\mathbb{E}[\theta_\infty]\big)\big(\theta_t-\mathbb{E}[\theta_\infty]\big)^{\top}\Big] \\
 =& \Big(\mathbb{E}\big[\theta_t \theta_t^{\top}\big]-\mathbb{E}\big[{\theta_\infty} {\theta_\infty}^{\top}\big]\Big)+\left(\mathbb{E}\big[{\theta_\infty} {\theta_\infty}^{\top}\big]-\mathbb{E}[\theta_\infty] \mathbb{E}\big[{\theta_\infty}^{\top}\big]\right)\\
&-\Big(\mathbb{E}\left[\theta_t\right] \mathbb{E}\big[{\theta_\infty}^{\top}\big]+\mathbb{E}[\theta_\infty] \mathbb{E}\left[\theta_t^{\top}\right]-2 \mathbb{E}[\theta_\infty] \mathbb{E}\big[{\theta_\infty}^{\top}\big]\Big) \\
 = &\Big(\mathbb{E}\left[\theta_t \theta_t^{\top}\right]-\mathbb{E}\big[{\theta_\infty} {\theta_\infty}^{\top}\big]\Big)+\var\big({\theta_\infty}\big)-\mathbb{E}\big[\theta_t-{\theta_\infty}\big] \mathbb{E}\big[{\theta_\infty}^{\top}\big]-\mathbb{E}[\theta_\infty] \mathbb{E}\Big[(\theta_t-{\theta_\infty})^{\top}\Big]\\
 =& \mathcal{O}\left((1-\alpha \mu)^{\frac{t}{2}} + \alpha\tau_\alpha\right),
\end{aligned}
$$
where we bound $\var\big({\theta_\infty}\big)$ with $\mathcal{O}(\alpha\tau_\alpha)$ by Proposition \ref{prop:2n-convergence} and Fatou's lemma.

Then, for \eqref{eq:T1_T_{11}}, we have
$$
\begin{aligned}
\eqref{eq:T1_T_{11}} & =\frac{1}{\left(k-k_0\right)^2} \sum_{t=k_0}^{k-1} \mathcal{O}\left((1-\alpha \mu)^{\frac{t}{2}} + \alpha\tau_\alpha\right) \\
& =\mathcal{O}\bigg(\frac{1}{\left(k-k_0\right)^2} \sum_{t=k_0}^{\infty}\left(1-\alpha \mu\right)^{\frac{t}{2}}\bigg)+\mathcal{O}\left(\frac{\alpha \tau_\alpha}{k-k_0}\right) \\
& = \mathcal{O}\bigg(\frac{(1-\alpha\mu)^{k_0/2}}{\alpha\left(k-k_0\right)^2}+\frac{\alpha \tau_\alpha}{k-k_0}\bigg) .
\end{aligned}
$$
We restate the following claim, whose proof closely resembles Claim 4 in \cite{huo23-td}.
\begin{claim}\label{claim: TA}
For $t \geq \tau+\frac{1}{\alpha \mu}\log\left(\frac{1}{\alpha\tau_\alpha}\right)$ and $l \geq t+2\tau_\alpha$, we have
$$
\left\|\mathbb{E}\Big[\big(\theta_t-\mathbb{E}[\theta^{(\alpha)}]\big)\big(\theta_l-\mathbb{E}[\theta^{(\alpha)}]\big)^{\top}\Big]\right\|= \mathcal{O}\left((\alpha \tau_\alpha) \cdot\left(1-\alpha \mu\right)^{\frac{(l-t)}{2}}\right).
$$
\end{claim}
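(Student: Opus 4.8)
The plan is to exploit the Markov property to turn the lagged cross-covariance into a product of a ``stationary-scale'' factor at time $t$ and a ``decay'' factor over the gap $l-t$. Writing $a:=\theta_t-\E[\theta_\infty]$ and noting that $a$ is $\cF_t$-measurable, the tower property gives
\[
\E\big[(\theta_t-\E[\theta_\infty])(\theta_l-\E[\theta_\infty])^\top\big]
=\E\big[a\,(\E[\theta_l\mid\cF_t]-\E[\theta_\infty])^\top\big],
\]
so by Cauchy--Schwarz in operator norm,
\[
\big\|\E[(\theta_t-\E[\theta_\infty])(\theta_l-\E[\theta_\infty])^\top]\big\|
\le \sqrt{\E\|\theta_t-\E[\theta_\infty]\|^2}\;\sqrt{\E\big\|\E[\theta_l\mid\cF_t]-\E[\theta_\infty]\big\|^2}.
\]
First I would control the left factor: since $t\ge\tau+\tfrac{1}{\alpha\mu}\log(1/(\alpha\tau_\alpha))$, Proposition~\ref{prop:2n-convergence} forces the transient term $(1-\alpha\mu)^t\lesssim\alpha\tau_\alpha$, whence $\E\|\theta_t-\theta^\ast\|^2\lesssim\alpha\tau_\alpha$; combined with $\|\E[\theta_\infty]-\theta^\ast\|=\bigO(\alpha)$ from Theorem~\ref{thm:bias} this yields $\E\|\theta_t-\E[\theta_\infty]\|^2\lesssim\alpha\tau_\alpha$. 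The whole claim then reduces to showing the right factor obeys $\E\|\E[\theta_l\mid\cF_t]-\E[\theta_\infty]\|^2\lesssim\alpha\tau_\alpha\,(1-\alpha\mu)^{\,l-t}$, which supplies the remaining $\sqrt{\alpha\tau_\alpha}\,(1-\alpha\mu)^{(l-t)/2}$.

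For this conditional-mean factor I would run, from time $t$ onward, a stationary companion chain $(x_s,\tilde\theta_s)_{s\ge t}$ sharing the data $(x_s)_{s\ge t}$ and innovations $(\xi_{s+1})_{s\ge t}$ with the actual chain, initialized with $\tilde\theta_t$ drawn from the stationary conditional law of $\theta_\infty$ given $x_t$ (independent of $\theta_t$ given $x_t$). Because $\|\law(x_t)-\pi\|_{\TV}\le Rr^t$ is super-polynomially small under the lower bound on $t$, this companion is stationary up to a negligible correction, so $\E[\tilde\theta_l]=\E[\theta_\infty]+\bigO(Rr^t)$. Splitting
\[
\E[\theta_l\mid\cF_t]-\E[\theta_\infty]
=\underbrace{\big(\E[\theta_l\mid\cF_t]-\E[\tilde\theta_l\mid\cF_t]\big)}_{\text{(I)}}
+\underbrace{\big(\E[\tilde\theta_l\mid\cF_t]-\E[\theta_\infty]\big)}_{\text{(II)}},
\]
term (I) is handled by the mean-square contraction of Proposition~\ref{prop: difference second moment}: conditionally on $\cF_t$, $\E\|\theta_l-\tilde\theta_l\|^2\lesssim(1-\alpha\mu)^{\,l-t}\|\theta_t-\tilde\theta_t\|^2$, and since both $\theta_t$ and $\tilde\theta_t$ sit at the stationary scale $\E\|\cdot-\theta^\ast\|^2\lesssim\alpha\tau_\alpha$, Jensen gives $\E[\,\|\text{(I)}\|^2\,]\lesssim\alpha\tau_\alpha(1-\alpha\mu)^{\,l-t}$.

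The hard part is term (II), which is exactly the residual dependence of the conditional mean on the \emph{particular value} of $x_t$; this is the point where Markovian memory and the $\theta$-dynamics interact. Here I would use the $\ge 2\tau_\alpha$ gap: couple the companion chain to a second stationary chain started from an independent $x_t'\sim\pi$ via the maximal coupling of the $x$-marginals from Lemma~\ref{lemma:any_initial_x}, so that the two $x$-trajectories coalesce within $\bigO(\tau_\alpha)$ steps, with the non-coalescence probability controlled by uniform ergodicity (Assumption~\ref{assumption:uniform-ergodic}); after coalescence the shared-data contraction of Proposition~\ref{prop: difference second moment} acts over the remaining $\ge l-t-\tau_\alpha$ steps. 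Absorbing the factor $(1-\alpha\mu)^{-\tau_\alpha}\le 2$ via~\eqref{eq:bound-fix-tau} and again using that all $\theta$-fluctuations live at the scale $\sqrt{\alpha\tau_\alpha}$ (stationarity together with Lemma~\ref{lemma:stationary_initial_x}) yields $\E[\,\|\text{(II)}\|^2\,]\lesssim\alpha\tau_\alpha(1-\alpha\mu)^{\,l-t}$, where I merge the two decay rates into the single factor $(1-\alpha\mu)^{\,l-t}$ using $1-\alpha\mu\ge r$ (which follows from the standing assumption $\mu\le 1-r$). Combining (I) and (II) with the first factor closes the argument; the main obstacle throughout is term (II)---showing that the correlation between $x_t$ and $\theta_l$ decays geometrically at the \emph{sharp} rate while staying at the $\sqrt{\alpha\tau_\alpha}$ stationary scale rather than merely $\bigO(1)$.
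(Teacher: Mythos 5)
The paper itself does not spell out a proof of Claim~\ref{claim: TA}---it defers to Claim~4 of \cite{huo23-td}---and your overall architecture is exactly the intended route: the tower-property/Cauchy--Schwarz reduction to showing $\E\big\|\E[\theta_l\mid\cF_t]-\E[\theta_\infty]\big\|^2\lesssim \alpha\tau_\alpha(1-\alpha\mu)^{l-t}$, the burn-in bound $\E\|\theta_t-\E[\theta_\infty]\|^2\lesssim\alpha\tau_\alpha$ forced by the lower bound on $t$, and the shared-data companion chain making your term (I) a direct conditional application of Proposition~\ref{prop: difference second moment} (your checks that the proposition applies from time $t$ with non-stationary $x_t$, and that the $2\tau_\alpha$ gap covers its $\tau$ burn-in, are correct). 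The gap is in term (II).

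The coupling you cite from Lemma~\ref{lemma:any_initial_x} is a \emph{single} maximal coupling of two fixed-time marginals, after which the chains share data on the agreement event and evolve separately otherwise. Under that construction the non-coalescence probability equals the TV distance at the one coupling time---about $\alpha$ when you couple $P^{\tau_\alpha}(x_t,\cdot)$ against $\pi$---and it is \emph{constant in $l-t$}. Uniform ergodicity gives geometric decay of the \emph{marginal} TV distance $\|P^{s}(x,\cdot)-\pi\|_{\TV}\le Rr^{s}$, not of the coalescence-failure probability of this particular coupling; conflating the two is what makes your final step (``merge the two decay rates into $(1-\alpha\mu)^{l-t}$ using $1-\alpha\mu\ge r$'') unjustified, since one of the two ``rates'' does not decay at all. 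What your argument actually delivers is $\E\|\mathrm{(II)}\|^2\lesssim \alpha\tau_\alpha(1-\alpha\mu)^{l-t}+\alpha\cdot\alpha\tau_\alpha$, hence a cross-covariance bound of order $\alpha\tau_\alpha(1-\alpha\mu)^{(l-t)/2}+\alpha^{3/2}\tau_\alpha$. The floor term violates the claim precisely when $l-t\gtrsim\frac{1}{\alpha\mu}\log\frac{1}{\alpha}$, and this is not cosmetic: carried into the double sum in the proof of Corollary~\ref{cor:ta-mse}, it leaves an extra $\bigO(\alpha^{3/2}\tau_\alpha)$ contribution to the variance term which is absorbed neither by $\bigO\big(\tau_\alpha/(k-k_0)\big)$ nor by the squared-bias terms once $k-k_0\gtrsim\alpha^{-3/2}$. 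Two standard repairs stay within your framework: (i) iterate the coupling attempt every $\tau_\alpha$ steps, so that the failure probability after $j$ attempts is at most $(2\alpha)^j$; since $\tau_\alpha\le K\log(1/\alpha)$, the quantity $(2\alpha)^{\lfloor (l-t)/\tau_\alpha\rfloor-1}$ decays strictly faster than $(1-\alpha\mu)^{l-t}$, and the usual bookkeeping (each failed attempt costs a factor $2\alpha$ but wastes only a factor $(1-\alpha\mu)^{-\tau_\alpha}\le 2$ of contraction, via \eqref{eq:bound-fix-tau}) gives the claimed rate; or (ii) keep the single coupling but, on the non-coalescence event, do not stop at the stationary-scale bound---apply the arbitrary-initialization geometric convergence behind Theorem~\ref{thm:weak-convergence-psa} and Lemma~\ref{lemma:stationary_initial_x} to both chains \emph{conditionally} on that event, so the factor $(1-\alpha\mu)^{(l-t)/2}$ still appears there, and the event probability $\le\alpha$ multiplying outside more than compensates the $1/\sqrt{\alpha}$ inflation of the conditional moments.
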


Then, by \cite[Claim 4]{huo23-td}, we have term 
\eqref{eq:T1_T_{12}} $= \mathcal{O}\big(\frac{\tau_\alpha}{k-k_0}\big).$
Similarly, we have term
\eqref{eq:T1_T_{13}}$= \mathcal{O}\big(\frac{\tau_\alpha}{k-k_0}\big) .
$
Therefore, we have
\begin{equation}\label{eq: TTTT}
T_1  = \mathcal{O}\bigg(\frac{(1-\alpha\mu)^{k_0/2}}{\alpha\left(k-k_0\right)^2} \Big)+\frac{\tau_\alpha}{k-k_0}\bigg) .
\end{equation}

By adding $T_1$--$T_4$ together, we obtain
\[
\begin{aligned}
\mathbb{E}\left[\left(\bar{\theta}^{(\alpha)}_{k_0, k}-\theta^*\right)\left(\bar{\theta}^{(\alpha)}_{k_0, k}-\theta^*\right)^{\top}\right] =& \alpha^{2 }bb^T + \mathcal{O}(\alpha\cdot(\alpha\tau_\alpha)^{3/2}) + \mathcal{O}\bigg(\frac{(1-\alpha\mu)^{k_0/2}}{(k - k_0)} \bigg)\\
&+\mathcal{O}\bigg(\frac{(1-\alpha\mu)^{k_0/2}}{\alpha\left(k-k_0\right)^2}+\frac{\tau_\alpha}{k-k_0}\bigg)\\
=& \alpha^{2 }bb^T + \mathcal{O}(\alpha\cdot(\alpha\tau_\alpha)^{3/2}) + \mathcal{O}\bigg(\frac{\tau_\alpha}{k-k_0}+\frac{(1-\alpha\mu)^{k_0/2}}{\alpha\left(k-k_0\right)^2} \bigg).
\end{aligned}
\]
\end{proof}

Next, we present the following corollary formalizes the non-asymptotic characterization for the first two moments of the RR-extrapolated iterate $\widetilde\theta_{k_0,k}^{(\alpha)}$.
\begin{cor}[Richardson-Romberg Extrapolation]
    Under the setting of Theorem~\ref{thm:bias}, the RR extrapolated iterates with stepsizes $\alpha$ and $2\alpha$ satisfy the following bounds for all $k>k_0+2\tau_\alpha$ and $k_0\geq \tau_\alpha + \frac{1}{\alpha\mu}\log\big(\frac{1}{\alpha\tau_\alpha}\big)$:
    \begin{align}
        \E[\tilde\theta_{k_0,k}^{(\alpha)}-\theta^\ast] & = \bigO\big((\alpha\tau_\alpha)^{3/2}\big)+\mathcal{O}\Big(\frac{(1-\alpha\mu)^{k_0/2}}{\alpha(k - k_0)} \Big),\quad\text{and}\\
        \E\Big[(\tilde\theta_{k_0,k}^{(\alpha)}-\theta^\ast)(\tilde\theta_{k_0,k}-\theta^\ast)^\top\Big]&= \bigO\big((\alpha\tau_\alpha)^{3}\big) + \mathcal{O}\Big(\frac{\tau_\alpha}{k-k_0}+\frac{(1-\alpha\mu)^{k_0/2}}{\alpha\left(k-k_0\right)^2} \Big).
    \end{align}
\end{cor}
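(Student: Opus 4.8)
The plan is to derive both moment bounds for $\widetilde\theta_{k_0,k}^{(\alpha)}=2\bar\theta_{k_0,k}^{(\alpha)}-\bar\theta_{k_0,k}^{(2\alpha)}$ by taking the linear combination $2(\cdot)-(\cdot)$ of the detailed tail-averaging corollary (equations \eqref{eq: TA1} and \eqref{eq:2nd-mom-ta}) applied separately to stepsizes $\alpha$ and $2\alpha$. The structural fact that makes extrapolation work is that Theorem~\ref{thm:bias} gives $\E[\theta_\infty^{(\alpha)}]-\theta^\ast=\alpha b+\bigO((\alpha\tau_\alpha)^{3/2})$ with the \emph{same} $\alpha$-independent vector $b$, so the leading $\bigO(\alpha)$ bias cancels in $2\cdot\alpha b-1\cdot 2\alpha b$. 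Throughout I would use the monotonicity facts $\tau_{2\alpha}\le\tau_\alpha$ and $(1-2\alpha\mu)^{k_0}\le(1-\alpha\mu)^{k_0}$, which guarantee that every error term generated by the $2\alpha$-process has the same order in $(\alpha,k,k_0)$ as its $\alpha$-counterpart; I would also note that the stepsize constraint of Theorem~\ref{thm:bias} holds simultaneously for both stepsizes after halving the constant $c_3$. For the first moment this already suffices: writing $\widetilde\theta_{k_0,k}^{(\alpha)}-\theta^\ast=2(\bar\theta_{k_0,k}^{(\alpha)}-\theta^\ast)-(\bar\theta_{k_0,k}^{(2\alpha)}-\theta^\ast)$ and inserting \eqref{eq: TA1} for each, the $\bigO(\alpha)$ terms cancel and the remaining error terms combine to $\bigO((\alpha\tau_\alpha)^{3/2})+\mathcal{O}((1-\alpha\mu)^{k_0/2}/(\alpha(k-k_0)))$, as claimed.

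For the second moment I would decompose $\widetilde\theta_{k_0,k}^{(\alpha)}-\theta^\ast=W+D$, where $D:=2(\E[\theta_\infty^{(\alpha)}]-\theta^\ast)-(\E[\theta_\infty^{(2\alpha)}]-\theta^\ast)$ is deterministic and $W:=2(\bar\theta_{k_0,k}^{(\alpha)}-\E[\theta_\infty^{(\alpha)}])-(\bar\theta_{k_0,k}^{(2\alpha)}-\E[\theta_\infty^{(2\alpha)}])$ is the centered fluctuation. By the bias cancellation in Theorem~\ref{thm:bias}, $D=\bigO((\alpha\tau_\alpha)^{3/2})$, hence $DD^\top=\bigO((\alpha\tau_\alpha)^3)$, which is precisely the leading term of the target. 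Since $D$ is deterministic, $\E[(\widetilde\theta-\theta^\ast)(\widetilde\theta-\theta^\ast)^\top]=\E[WW^\top]+\E[W]D^\top+D\E[W]^\top+DD^\top$, so the remaining work is to control $\E[WW^\top]$ and $\E[W]$.

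For $\E[WW^\top]$ I would expand into the four blocks $4\E[W_\alpha W_\alpha^\top]-2\E[W_\alpha W_{2\alpha}^\top]-2\E[W_{2\alpha}W_\alpha^\top]+\E[W_{2\alpha}W_{2\alpha}^\top]$, where $W_\alpha,W_{2\alpha}$ are the individual centered tail averages. The diagonal blocks are exactly the term $T_1$ treated in the tail-averaging proof (via Corollary~\ref{cor:non-asymptotic}, Proposition~\ref{prop:2n-convergence}, and Claim~\ref{claim: TA}), each of order $\mathcal{O}(\tau_\alpha/(k-k_0)+(1-\alpha\mu)^{k_0/2}/(\alpha(k-k_0)^2))$; the off-diagonal blocks are dominated by the same order through Cauchy--Schwarz, $\|\E[W_\alpha W_{2\alpha}^\top]\|\le(\E\|W_\alpha\|^2)^{1/2}(\E\|W_{2\alpha}\|^2)^{1/2}$, using $\E\|W_\alpha\|^2=\tr\E[W_\alpha W_\alpha^\top]$, which is already bounded. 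Finally $\E[W]=\mathcal{O}((1-\alpha\mu)^{k_0/2}/(\alpha(k-k_0)))$ by Corollary~\ref{cor:non-asymptotic}, so the cross terms $\E[W]D^\top$ are of strictly higher order and absorbed. Collecting the four pieces gives $\E[(\widetilde\theta-\theta^\ast)(\widetilde\theta-\theta^\ast)^\top]=\bigO((\alpha\tau_\alpha)^3)+\mathcal{O}(\tau_\alpha/(k-k_0)+(1-\alpha\mu)^{k_0/2}/(\alpha(k-k_0)^2))$.

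The main obstacle is the off-diagonal cross-process term $\E[W_\alpha W_{2\alpha}^\top]$: the two tail averages are built from the \emph{same} Markovian data stream $(x_k)$ and coupled noise, so they are genuinely correlated and cannot be treated as independent, nor can their joint second-order structure be read off from either single-process analysis. The clean resolution is to avoid computing the joint covariance altogether and instead dominate it by the geometric mean of the already-established diagonal variances via Cauchy--Schwarz, which keeps the cross term at the variance order $\mathcal{O}(\tau_\alpha/(k-k_0))$ rather than anything larger. A secondary but necessary bookkeeping point is the uniform matching of $\alpha$- and $2\alpha$-error orders, which hinges on $\tau_{2\alpha}\le\tau_\alpha$ and on the stepsize constraint being valid for both stepsizes.
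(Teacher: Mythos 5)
Your proposal is correct and follows essentially the same route as the paper: both take the combination $2(\cdot)-(\cdot)$ of the tail-averaging bounds \eqref{eq: TA1} and \eqref{eq:2nd-mom-ta}, rely on Theorem~\ref{thm:bias} supplying the same $\alpha$-independent vector $b$ so the leading bias cancels, and reduce the second moment to the per-stepsize fluctuation bounds together with $\|v\|^2=\bigO((\alpha\tau_\alpha)^3)$. The only cosmetic difference is in bookkeeping: the paper writes $\tilde\theta_{k_0,k}-\theta^\ast=2u_1-u_2+v$ and uses $\big\|\E[(\tilde\theta_{k_0,k}-\theta^\ast)(\tilde\theta_{k_0,k}-\theta^\ast)^\top]\big\|\le\E\|2u_1-u_2+v\|^2\le 3\,\E\|2u_1\|^2+3\,\E\|u_2\|^2+3\|v\|^2$, thereby never forming the cross-process covariance block that you control explicitly via Cauchy--Schwarz.
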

\begin{proof}
By equation \eqref{eq: TA1}, we obtain
$$
\begin{aligned}
\mathbb{E}\left[\tilde{\theta}_{k_0, k}^{(\alpha)}\right]-\theta^*
= & \mathbb{E}\left[2 \bar{\theta}_{k_0, k}^{(\alpha)}- \bar{\theta}_{k_0, k}^{(2 \alpha)}\right]-\theta^*
=2\E\left[\bar{\theta}_{k_0, k}^{(\alpha)}-\theta^*\right]- \E\left[\bar{\theta}_{k_0, k}^{(2 \alpha)}-\theta^*\right] \\
= & 2\left(\alpha b+\bigO\big((\alpha\tau_\alpha)^{3/2}\big)+\mathcal{O}\left(\frac{(1-\alpha\mu)^{k_0/2}}{\alpha(k - k_0)} \right)\right) \\
& - \left(2\alpha b+\bigO\big((2\alpha\tau_{2\alpha})^{3/2}\big)+\mathcal{O}\left(\frac{(1-2\alpha\mu)^{k_0/2}}{\alpha(k - k_0)} \right)\right) \\
= & \bigO\big((\alpha\tau_\alpha)^{3/2}\big)+\mathcal{O}\left(\frac{(1-\alpha\mu)^{k_0/2}}{\alpha(k - k_0)}\right).
\end{aligned}
$$

Let $u_1:=\bar{\theta}_{k_0, k}^{(\alpha)}-\mathbb{E}\left[{\theta}^{(\alpha)}_\infty\right]$, $u_2:=\bar{\theta}_{k_0, k}^{(2 \alpha)}-\mathbb{E}\left[{\theta}_\infty^{(2 \alpha)}\right]$ and $v:=2 \mathbb{E}\left[{\theta}^{(\alpha)}_\infty\right]-  \mathbb{E}\left[{\theta}^{(2 \alpha)}_\infty\right]-\theta^*.$

With these notations, $\tilde{\theta}_{k_0, k}-\theta^*=2 u_1-  u_2+v$. We then have the following bound
$$
\begin{aligned}
\left\Vert \mathbb{E}\left[\left(\tilde{\theta}_{k_0, k}^{(\alpha)}-\theta^*\right)\left(\tilde{\theta}_{k_0, k}^{(\alpha)}-\theta^*\right)^{\top}\right]\right\Vert  
 \leq& \mathbb{E}\left[\left\|2 u_1-  u_2+v\right\|^2\right] \\
 \leq& 3\mathbb{E}\left\|2 u_1\right\|^2+3 \mathbb{E}\left\| u_2\right\|^2+3\|v\|^2 .
\end{aligned}
$$

By equation \eqref{eq: TTTT}, we have
$$
\mathbb{E}\left\|u_1\right\|^2=\operatorname{Tr} \big(\mathbb{E}\left[u_1 u_1^{\top}\right]\big)=\mathcal{O}\left(\frac{(1-\alpha\mu)^{k_0/2}}{\alpha\left(k-k_0\right)^2} +\frac{\tau_\alpha}{k-k_0}\right).
$$
Similarly, we have
$$
\mathbb{E}\left\|u_2\right\|_2^2=\mathcal{O}\left(\frac{(1-2\alpha\mu)^{k_0/2}}{\alpha\left(k-k_0\right)^2} +\frac{\tau_{2\alpha}}{k-k_0}\right).
$$
By Theorem \ref{thm:bias}, we have $\|v\|_2^2 = \bigO\big((\alpha\tau_\alpha)^{3}\big).$

Combining these bounds, we have
\begin{align*}
\mathbb{E}\Big[\big(\tilde{\theta}_{k_0,k}-\theta^*\big)\big(\tilde{\theta}_{k_0,k}-\theta^*\big)^{\top}\Big] =\bigO\big((\alpha\tau_\alpha)^{3}\big) + \mathcal{O}\Big(\frac{\tau_\alpha}{k-k_0}+\frac{(1-\alpha\mu)^{k_0/2}}{\alpha\left(k-k_0\right)^2} \Big).
\end{align*}
\end{proof}

\newpage
\end{document}